\tikzset{>=latex,
	point/.style = {circle,draw=black,thick,minimum size=1.1mm,inner sep=0pt},
	hm/.style = {dotted,semithick},
	role/.style = {semithick,-latex},
	tree/.style = {rounded corners=10pt, dashed, fill opacity=0.5, fill=nullscolour},
	constant/.style = {fill},
}
\newcommand{\mn}[1]{\ensuremath{\mathsf{#1}}}
\newcommand{\sig}{\mn{sig}}   
\newcommand{\vp}{\varphi}     
\newcommand{\Amc}{\ensuremath{\mathcal{A}}\xspace}  
\newcommand{\Cmc}{\ensuremath{\mathcal{C}}\xspace}  
\newcommand{\Hmc}{\ensuremath{\mathcal{H}}\xspace}  
\newcommand{\Gmc}{\ensuremath{\mathcal{G}}\xspace}  
\newcommand{\Imc}{\ensuremath{\mathcal{I}}\xspace}  
\newcommand{\Jmc}{\ensuremath{\mathcal{J}}\xspace}  
\newcommand{\Lmc}{\ensuremath{\mathcal{L}}\xspace}  
\newcommand{\Kmc}{\ensuremath{\mathcal{K}}\xspace}  
\newcommand{\Mmc}{\ensuremath{\mathcal{M}}\xspace}  
\newcommand{\Omc}{\ensuremath{\mathcal{O}}\xspace}  
\newcommand{\Qmc}{\ensuremath{\mathcal{Q}}\xspace}  
\newcommand{\Tmc}{\ensuremath{\mathcal{T}}\xspace}  
\newcommand{\NC}{\ensuremath{{\sf N_C}}\xspace}     
\newcommand{\NI}{\ensuremath{{\sf N_I}}\xspace}     
\newcommand{\NR}{\ensuremath{{\sf N_R}}\xspace}     
\newcommand{\NV}{\ensuremath{{\sf N_V}}\xspace}     
\newcommand{\EL}{\ensuremath{{\cal EL}}\xspace}     
\newcommand{\ALC}{\ensuremath{{\cal ALC}}\xspace}
\newcommand{\ALCI}{\ensuremath{{\cal ALCI}}\xspace}
\newcommand{\ALCHI}{\mathcal{ALCHI}\xspace}
\newcommand{\ALCQ}{\ensuremath{{\cal ALCQ}}\xspace}
\newcommand{\ALCQI}{\ensuremath{{\cal ALCQI}}\xspace}
\newcommand{\ALCQIO}{\ensuremath{{\cal ALCQIO}}\xspace}
\newcommand{\DLite}{\textsl{DL-Lite}\xspace}
\newcommand{\DLc}{{\ensuremath\textsl{DL-Lite}_\textit{core}}\xspace}
\newcommand{\DLcH}{{\ensuremath\textsl{DL-Lite}_\textit{core}^{\smash{\mathcal{H}}}}\xspace}
\newcommand{\DLh}{{\ensuremath{\textsl{DL-Lite}_\textit{horn}}}\xspace}
\newcommand{\DLhH}{{\ensuremath{\textsl{DL-Lite}_\textit{horn}^{\smash{\mathcal{H}}}}}\xspace}
\newcommand{\hALCHI}{{\ensuremath\textsl{Horn-}{\cal ALCHI}}\xspace}
\newcommand{\hALCI}{{\ensuremath\textsl{Horn-}\mathcal{ALCI}}\xspace}
\newcommand{\hALC}{{\ensuremath\textsl{Horn-}\mathcal{ALC}}\xspace}
\newcommand{\ELI}{\mathcal{ELI}\xspace}
\newcommand{\qnrleq}[3]{\ensuremath{(\leqslant #1 \; #2 \; #3)}}  
\newcommand{\qnrgeq}[3]{\ensuremath{(\geqslant #1 \; #2 \; #3)}}  
\newcommand{\NP}{\textnormal{\sc NP}\xspace}             
\newcommand{\ExpTime}{\textnormal{\sc ExpTime}\xspace}
\newcommand{\PTime}{\textnormal{\sc PTime}\xspace}
\newcommand{\NExp}{\textnormal{\sc NExp}\xspace}
\newcommand{\coNExp}{\mbox{\sc coNExp}\xspace}
\newcommand{\Mod}{\boldsymbol{M}} 
\newcommand{\avec}[1]{\boldsymbol{#1}} 
\newcommand{\RCQ}{\text{rCQ}} 
\newcommand{\RUCQ}{\text{rUCQ}} 
\newcommand{\vertexfont}{\small}
\newcommand{\edgefont}{\scriptsize}
\tikzset{ %
  point/.style={thick,circle,draw=black,minimum size=1.3mm,inner sep=0pt},%
  constant/.style={fill=black},%
  rowind/.style={fill=gray},%
  cwitness/.style={circle,draw=black,minimum size=3.5mm,inner sep=0pt,
    label=center:{$\land$}},%
  dwitness/.style={minimum width=1.1cm, minimum height=0.3cm},%
  homomorphism/.style={line width=0.1cm,-latex},%
  role/.style={-latex, semithick},%
  wiggly/.style={role, decorate, decoration={snake, amplitude=0.3mm,segment
      length=2mm, post length=1mm}},%
  subtree/.style={isosceles triangle, draw, very thick, subtreecolor,
    anchor=north, outer sep=0.1, isosceles triangle stretches, minimum
    width=1cm, minimum height=0.6cm, shape border rotate=90}, %
  backward/.style={rectangle, draw=black, fill=gray!50, minimum size=1mm, inner
    ysep=4pt, inner xsep=7pt, outer sep=0.05cm, rounded corners=2mm},%
  sbound/.style={rectangle, draw=black, fill=gray!10, minimum size=1mm, inner
    ysep=4pt, inner xsep=7pt, outer sep=0.05cm},%
  infinite/.style={rectangle, draw=black, fill=gray!25, minimum size=1mm, inner
    ysep=4pt, inner xsep=7pt, outer sep=0.05cm, minimum width=1.5cm},%
  trans/.style={-stealth', semithick},%
  edge/.style={semithick},%
  strategy/.style={dotted},%
}
\newcommand{\homofont}{\scriptsize}
\colorlet{subtreecolor}{black!70}
\def \tikzdots[#1]{
  \begin{scope}[shift={#1}]
    \node at (0,0.15) {$\cdot$}; \node {$\cdot$}; \node at (0,-0.15) {$\cdot$};
  \end{scope}
}
\newcommand{\q}{\boldsymbol{q}}
\newcommand{\I}{\Imc}
\newcommand{\K}{\Kmc}
\newcommand{\A}{\Amc}
\newcommand{\T}{\Tmc}
\newcommand{\C}{\mathcal{C}}
\newcommand{\ind}{\mathsf{ind}}
\begin{document}
\mainmatter              

\title{Inseparability and Conservative Extensions of Description Logic Ontologies: A Survey\thanks{This is a post-peer-review, pre-copyedit version of an article published in RW2016. The final authenticated version is available online at: http://dx.doi.org/10.1007/978-3-319-49493-7\_2}}

\titlerunning{Inseparability Relations between Description Logic Ontologies: A Survey}  

\author{Elena Botoeva\inst{1} \and Boris Konev\inst{2} \and Carsten Lutz\inst{3} \and Vladislav Ryzhikov\inst{1} \and Frank Wolter\inst{2} \and Michael Zakharyaschev\inst{4}}
\authorrunning{Konev \textit{et~al.}}   


\institute{Free University of Bozen-Bolzano, Italy\\
\email{\{botoeva,ryzhikov\}@inf.unibz.it}
\and
University of Liverpool, UK\\
\email{\{konev,wolter\}@liverpool.ac.uk}
\and
University of Bremen, Germany\\
\email{clu@informatik.uni-bremen.de}
\and
Birkbeck, University of London, UK\\
\email{michael@dcs.bbk.ac.uk}
}

\maketitle   

\newcommand{\equivalently}{safely\xspace}

\begin{abstract}
  The question whether an ontology can \equivalently be replaced by
  another, possibly simpler, one is fundamental for many ontology
  engineering and maintenance tasks. It underpins, for example,
  ontology versioning, 
  ontology
  modularization, 
  forgetting, 
  and knowledge
  exchange. 
  What `safe replacement' means depends on the intended application
  of the ontology. If, for example, it is used to query data, then the
  answers to any relevant ontology-mediated query should be the same over any 
  relevant data set; if, in contrast, the ontology is used for
  conceptual reasoning, then the entailed subsumptions between concept
  expressions should coincide. This gives rise to different notions of
  ontology \emph{inseparability} such as query inseparability and
  concept inseparability, which generalize corresponding notions of
  conservative extensions. We survey results on various notions of
  inseparability in the context of description logic ontologies,
  discussing their applications, useful model-theoretic
  characterizations, algorithms for
  determining whether two ontologies are inseparable (and, sometimes,
  for computing the difference between them if they are not), and 
  the computational complexity of this problem.
\end{abstract}



\section{Introduction}

Description logic (DL) ontologies provide a common vocabulary for a domain of interest together with a formal modeling of the semantics of the vocabulary items (concept names and role names). In modern information systems, they are employed to capture domain knowledge and to promote interoperability.
Ontologies can become large and complex as witnessed, for example, by the widely used healthcare ontology {\sc Snomed CT}, which contains more than 300,000 concept names, and the National Cancer Institute ({\sc NCI}) Thesaurus ontology, which contains more than 100,000 concept names. Engineering, maintaining and deploying such ontologies is challenging and labour intensive; it crucially relies on extensive tool support for tasks such as ontology versioning, ontology modularization, ontology summarization, and forgetting parts of an ontology. At the core of many of these tasks lie notions of  \emph{inseparability} of two ontologies, indicating that inseparable  ontologies can safely be replaced by each other for the task at hand. The aim of this article is to survey the current research on  inseparability of DL ontologies. We present and discuss different types of inseparability, their applications and interrelation, model-theoretic characterizations, as well as results on the decidability and computational complexity of inseparability.

The exact formalization of when an ontology `can safely be replaced by another one' (that is, of inseparability) depends on the task for which the ontology is to be used. As we are generally going to abstract away from the syntactic presentation of an ontology, a  natural first candidate for the notion of inseparability between two ontologies is their logical equivalence.
%
However, this can be an unnecessarily strong requirement for most applications since also ontologies that are not logically equivalent can be replaced by each other without adverse effects. This is due to two main reasons.  First, applications of ontologies often make use of only a fraction of the vocabulary items. As an example, consider {\sc Snomed CT}, which contains a vocabulary for a multitude of domains related to health case, including clinical findings, symptoms, diagnoses, procedures, body structures, organisms, pharmaceuticals, and devices.  In a concrete application such as storing electronic patient records, only a small part of this vocabulary is going to be used.  Thus, two ontologies should be separable only if they differ with respect to the \emph{relevant} vocabulary items. Consequently, all our inseparability notions will be parameterized by a signature (set of concept and role names) $\Sigma$; when we want to emphasize $\Sigma$, we speak of $\Sigma$-inseparability. Second, even for the relevant vocabulary items, many applications do not rely on all details of the semantics provided by the ontology.  For example, if an ontology is employed for conjunctive query answering over data sets that use vocabulary items from the ontology, then only the existential positive aspects of the semantics are relevant since the queries
are positive existential, too.

A fundamental decision to be taken when defining an inseparability notion is whether the definition should be model-theoretic or in terms of logical consequences. Under the first approach, two ontologies are inseparable when the reducts of their models to the signature $\Sigma$ coincide. We call the resulting inseparability notion \emph{model inseparability}.  Under the second approach, two ontologies are inseparable when they have the same logical consequences in the  signature~$\Sigma$. This actually gives rise to potentially many notions of inseparability since we can vary the logical language in which the logical consequences are formulated. Choosing the same language as the one used for formulating the ontologies results in what we call \emph{concept inseparability}, which is appropriate when the ontologies are used for conceptual reasoning. Choosing a logical language that is based on database-style queries results in notions of \emph{query inseparability}, which are appropriate for querying applications.  Model inseparability implies all the resulting consequence-based notions of inseparability, but the converse does not hold for all standard DLs. The notion of query inseparability suggests some additional aspects. In particular, this type of inseparability is important both for ontologies that contain data as an integral part (\emph{knowledge bases or KBs}, in DL parlance) and for those that do not (\emph{TBoxes}, in DL parlance) and are maintained independently of the data.  In the latter case, two TBoxes should be regarded as inseparable if they give the same answers to any relevant query \emph{for any possible data}. One might then even want to work with two signatures: one for the data and one for the query. It turns out that, for both KBs and TBoxes, one obtains notions of inseparability that behave very differently from concept inseparability.

Inseparability generalizes conservative extensions, as known from classical logic.  In fact, conservative extensions can also be defined in a model-theoretic and in a consequence-based way, and they correspond to the special case of inseparability where one ontology is syntactically contained in the other and the signature is the set of vocabulary items in the smaller ontology. Note that none of these two additional assumptions is appropriate for many applications of inseparability, such as ontology versioning.  Instead of directly working with inseparability, we will often consider corresponding notions of entailment which, intuitively, is inseparability `in one direction'; for example, two ontologies are concept $\Sigma$-inseparable if and only if they concept $\Sigma$-entail each other. Thus, one could say that an ontology concept (or model) entails another ontology if it is \emph{sound} to replace the former by the latter in applications for which concept inseparability is the `right' inseparability notion. Algorithms and complexity upper bounds are most general when established for entailment instead of inseparability, as they carry over to inseparability and conservative extensions. Similarly, lower bounds for conservative extensions imply lower bounds for inseparability and for entailment.

In this survey, we provide an in-depth discussion of four inseparability relations, as indicated above. For TBoxes, we look at $\Sigma$-concept inseparability (do two TBoxes entail the same concept inclusions over $\Sigma$?), $\Sigma$-model inseparability (do the $\Sigma$-reducts of the models of two TBoxes coincide?), and $(\Sigma_{1},\Sigma_{2})$-$\mathcal{Q}$-inseparability (do the answers given by two TBoxes coincide for all $\Sigma_{1}$-ABoxes and all $\Sigma_{2}$-queries from the class $\mathcal{Q}$ of queries?). Here, we usually take $\mathcal{Q}$ to be the class of conjunctive queries (CQs) and unions thereof (UCQs), but some smaller classes of queries are considered as well.  For KBs, we consider $\Sigma$-$\mathcal{Q}$-inseparability (do the answers to $\Sigma$-queries in $\mathcal{Q}$ given by two KBs coincide?). When discussing proof techniques in detail, we focus on the standard expressive DL $\mathcal{ALC}$ and tractable DLs from the $\mathcal{EL}$ and {\sl DL-Lite} families.  We shall, however, also mention results for extensions of $\mathcal{ALC}$ and other
DLs such as Horn-$\mathcal{ALC}$.

The structure of this survey is as follows. In Section~\ref{sec:introdesc}, we introduce description logics.  In Section~\ref{sec:inseparability}, we introduce an abstract notion of inseparability and discuss applications of inseparability in ontology versioning, refinement, re-use, modularity, the design of ontology mappings, knowledge base exchange, and forgetting. In Section~\ref{sect:separability}, we discuss concept inseparability. We focus on the description logics $\mathcal{EL}$ and $\mathcal{ALC}$ and give model-theoretic characterizations of $\Sigma$-concept inseparability which are then used to devise automata-based approaches to deciding concept inseparability. We also present polynomial time algorithms for acyclic $\mathcal{EL}$ TBoxes.  In Section~\ref{sect:model-inseparability}, we discuss model inseparability. We show that
it is undecidable even in simple cases, but that by restricting the signature $\Sigma$ to concept names, it often becomes decidable.  We also consider model inseparability from the empty TBox, which is important for modularization and locality-based approximations of model inseparability. In Section~\ref{sect:query-separability}, we discuss query inseparability between KBs.  We develop model-theoretic criteria for query inseparability and use them to obtain algorithms for deciding query inseparability between KBs and their complexity.  We consider description logics from the $\mathcal{EL}$ and {\sl DL-Lite} families, as well as $\mathcal{ALC}$ and its Horn fragment.  In Section~\ref{sec:query-insep-tboxes}, we consider query inseparability between TBoxes and analyse in how far the techniques developed for KBs can be generalized to TBoxes. We again consider a wide range of DLs. Finally, in Section~\ref{sect:final} we discuss further inseparability relations, approximation algorithms and the computation of representatives of classes of inseparable TBoxes.
  


\section{Description Logic}
\label{sec:introdesc}
In description logic, knowledge is represented using concepts and
roles that are inductively defined starting from a set \NC of
\emph{concept names} and a set \NR of \emph{role names}, and using a
set of concept and role constructors~\cite{BCMNP03}.  Different sets
of concept and role constructors give rise to different DLs.

We start by introducing the description logic \ALC. 
The concept constructors available in \ALC
are shown in Table~\ref{tab:syntax-semantics}, where $r$ is a
role name and $C$ and $D$ are concepts.
A concept built from these
constructors is called an \emph{\ALC-concept}. \ALC does not have any role constructors.
\begin{table}[tb]
  \begin{center}
    \small
    \leavevmode
    \begin{tabular}{|l|c|c|}
      \hline Name &Syntax&Semantics\\
      \hline\hline   & &\\[-1em]
      top concept    & $\top$ & $\Delta^\Imc$\\
      \hline         & &\\[-1em]
      bottom concept & $\bot$ & $\emptyset$ \\
      \hline         & &\\[-1em]
      negation       & $\neg C$ & $\Delta^\Imc \setminus C^\Imc$\\
      \hline         & &\\[-1em]
      conjunction    & $C\sqcap D$ & $C^\Imc\cap D^\Imc$\\
      \hline         & &\\[-1em]
      disjunction    & $C\sqcup D$ & $C^\Imc\cup D^\Imc$\\
      \hline         & &\\[-1em]
      existential restriction & $\exists r . C$ & $\{ d \in \Delta^\Imc \mid \exists e \in C^\Imc \, (d,e) \in r^\Imc \}$ \\
      \hline         & &\\[-1em]
      universal restriction & $\forall r . C$ & $\{ d \in \Delta^\Imc \mid \forall e \in \Delta^\Imc \, ((d,e) \in r^\Imc \to e \in C^\Imc) \}$ 
      \\ \hline
    \end{tabular} 
    \caption{Syntax and semantics of \ALC.}
    \label{tab:syntax-semantics}
  \end{center}
\end{table}
An \emph{\ALC TBox} is a finite set of \emph{\ALC concept inclusions} (CIs) of the form $C \sqsubseteq D$ and \emph{\ALC concept equivalences} (CEs) of the form $C \equiv D$. (A CE $C \equiv D$ can be regarded as an abbreviation for the two CIs $C \sqsubseteq D$ and $D \sqsubseteq C$.)   
 The \emph{size} $|\Tmc|$ of a TBox $\Tmc$ is the number of occurrences of symbols in $\Tmc$.

DL semantics is given by \emph{interpretations}
$\Imc=(\Delta^\Imc,\cdot^\Imc)$ in which the \emph{domain} $\Delta^\Imc$ is a
non-empty set and the \emph{interpretation function} $\cdot^\Imc$ maps
each concept name $A\in\NC$ to a subset $A^\Imc$ of $\Delta^\Imc$,
and each role name $r\in\NR$ to a binary relation $r^\Imc$ on
$\Delta^\Imc$. The extension of $\cdot^\Imc$ to arbitrary
concepts is defined inductively as shown in the third column of
Table~\ref{tab:syntax-semantics}.
We say that an interpretation \Imc \emph{satisfies} a CI $C
\sqsubseteq D$ if $C^\Imc \subseteq D^\Imc$, and that \Imc is a
\emph{model} of a TBox $\Tmc$ if it satisfies all the CIs in $\Tmc$.
A TBox is \emph{consistent} (or \emph{satisfiable}) if it has a
model. A concept $C$ is \emph{satisfiable w.r.t.~\Tmc} if there exists
a model $\Imc$ of $\Tmc$ such that $C^{\Imc}\neq\emptyset$.  A concept
$C$ is \emph{subsumed by a concept $D$ w.r.t.~\Tmc} ($\Tmc \models C
\sqsubseteq D$, in symbols) if every model \Imc of $\Tmc$ satisfies
the CI $C \sqsubseteq D$. For TBoxes $\Tmc_1,\Tmc_2$, we write
$\Tmc_1\models \Tmc_2$ and say that \emph{$\Tmc_1$ entails $\Tmc_2$}
if $\Tmc_1\models \alpha$ for all $\alpha\in \Tmc_2$.  TBoxes
$\Tmc_{1}$ and $\Tmc_{2}$ are \emph{logically equivalent} if they have
the same models.  This is the case if and only if $\Tmc_{1}$ entails
$\Tmc_{2}$ and vice versa.

A \emph{signature} $\Sigma$ is a finite set of concept and role names.
The \emph{signature} $\sig(C)$ of a concept $C$ is the set of concept
and role names that occur in $C$, and likewise for TBoxes $\T$, CIs $C
\sqsubseteq D$, assertions $r(a,b)$ and $A(a)$, ABoxes $\Amc$, KBs
$\K$, UCQs~$\q$. Note that the universal role is not regarded as a
role name, and so does not belong in any signature.  Similarly,
individual names are not in any signature and, in particular, not in
the signature of an assertion, ABox, or KB.  We are often interested
in concepts, TBoxes, KBs, and ABoxes that are formulated using a
specific signature. Therefore, we talk of a $\Sigma$-TBox \Tmc if
$\sig(\Tmc) \subseteq \Sigma$, and likewise for $\Sigma$-concepts,
etc.

There are several extensions of \ALC relevant for this paper, which
fall into three categories: extensions with (\emph{i})~additional
concept constructors, (\emph{ii})~additional role constructors, and
(\emph{iii})~additional types of statements in TBoxes. These
extensions are detailed in Table~\ref{tab:language-extensions}, where
$\# \mathcal{X}$ denotes the size of a set $\mathcal{X}$ and double
horizontal lines delineate different types of extensions. The last
column gives an identifier for each extension, which is simply
appended to the name \ALC for constructing extensions of \ALC. For
example, \ALC extended with number restrictions, inverse roles, and
the universal role is denoted by $\ALCQI^{u}$.

\begin{table}[tb]
  \begin{center}
    \small
    \leavevmode
    \begin{tabular}{|l|c|c|c|}
      \hline
      Name                       & Syntax           & Semantics & Identifier\\
      \hline\hline & &\\[-1em]
      number restrictions & $\qnrleq n r C$  & $\{d \mid \#\{e\mid  (d,e)\in r^{\Imc} \wedge e\in C^{\Imc} \}\leq n\}$ & \Qmc
       \\
       & $\qnrgeq n r C$  & $\{d \mid \#\{e\mid  (d,e)\in r^{\Imc} \wedge e\in C^{\Imc} \}\geq n\}$ & \\
      \hline\hline & &\\[-1em]
      inverse role               & \ $r^-$          & $\{(d,e) \mid (e,d) \in r^\Imc\}$ & \Imc\\
			universal role             & \ $u$            & $\Delta^{\Imc} \times \Delta^{\Imc}$ & $\cdot^{u}$\\
      \hline\hline & &\\[-1em]
      role inclusions (RIs) & $r \sqsubseteq s$ & $r^\Imc \subseteq s^\Imc$ & \Hmc \\
	\hline
    \end{tabular}\\ 
    \caption{Additional constructors: syntax and semantics.}
    \label{tab:language-extensions}
  \end{center}
\end{table}

We next define a number of syntactic fragments of \ALC and its extensions,
which often have dramatically lower computational complexity. The
fragment of \ALC obtained by disallowing the constructors $\bot$,
$\neg$, $\sqcup$ and $\forall$ is known as \EL. Thus, $\mathcal{EL}$
concepts are constructed using $\top$, $\sqcap$ and $\exists$ only
\cite{BaBL05}. We also consider extensions of $\mathcal{EL}$ with the
constructors in Table~\ref{tab:language-extensions}. For example,
$\mathcal{ELI}^{u}$ denotes the extension of $\mathcal{EL}$ with
inverse roles and the universal role. The fragments of $\ALCI$ and
$\ALCHI$, in which CIs are of the form
\begin{equation*}
B_1 \sqsubseteq B_2\qquad\text{ and }\qquad B_1 \sqcap B_2 \sqsubseteq \bot,
\end{equation*}
and the $B_i$ are concept names, $\top$, $\bot$ or $\exists r.\top$, are denoted by $\DLc$ and
$\DLcH$ (or $\DLite_{\mathcal{R}}$), respectively \cite{CDLLR07,ACKZ09}. 
\begin{example}\label{alc-fragments} The CI $\forall {\sf
    childOf}^-.{\sf Tall} \sqsubseteq {\sf Tall}$ (saying that
  everyone with only tall parents is also tall) is in $\ALCI$ but not in  $\ALC$, $\mathcal{EL}$ or $\DLcH$. The RI ${\sf childOf}^- \sqsubseteq {\sf parentOf}$ is in both $\ALCHI$ and $\DLcH$.
\end{example}

$\mathcal{EL}$ and the {\sl DL-Lite} logics introduced above are examples of Horn DLs, that is, fragments of DLs in the $\mathcal{ALC}$
family that restrict the syntax in such a way that conjunctive query
answering (see below) becomes tractable in data
complexity. A few
additional Horn DLs have become important in recent years. Following
\cite{HuMS07,Kaza09}, we say that a concept $C$ occurs positively in
$C$ itself and, if $C$ occurs positively (negatively) in $C'$, then
\begin{itemize}
\item[--] $C$ occurs positively (respectively, negatively) in $C' \sqcup D$,
$C' \sqcap D$, $\exists r.C'$, $\forall r.C'$, $D \sqsubseteq C'$, and
\item[--] $C$ occurs
negatively (respectively, positively) in $\neg C'$ and $C' \sqsubseteq D$.
\end{itemize}
Now, we call a TBox $\T$ \emph{Horn} if no concept of the form $C
\sqcup D$ occurs positively in $\T$, and no concept of the form $\neg
C$ or $\forall R.C$ occurs negatively in $\T$. For any DL $\Lmc$ from
the $\mathcal{ALC}$ family introduced above (e.g., $\ALCHI$), the DL
\textsl{Horn-}$\Lmc$ only allows for Horn TBoxes in $\Lmc$.  Note that
$\forall \mn{childOf}^-.{\sf Tall}$ occurs negatively in the CI
$\alpha$ from Example~\ref{alc-fragments}, and so the TBox $\T = \{
\alpha \}$ is not Horn.

TBoxes $\Tmc$ used in practice often turn out to be \emph{acyclic} in the following sense:
\begin{itemize}

\item all CEs in $\Tmc$ are of the form $A \equiv C$ (\emph{concept
    definitions}) and all CIs in~$\Tmc$ are of the form $A \sqsubseteq
  C$ (\emph{primitive concept inclusions}), where $A$ is a concept name;

\item no concept name occurs more than once on the left-hand side of
  a statement in $\Tmc$;

\item $\Tmc$ contains no cyclic definitions, as detailed below.
\end{itemize}
Let \Tmc be a TBox that contains only concept definitions and
primitive concept inclusions. The relation ${\prec_{\Tmc}} \subseteq
\NC \times \mn{sig}(\Tmc)$ is defined by setting $A\prec_{\Tmc}X$ if
there exists a TBox statement $A \bowtie C$ such that $X$ occurs in
$C$, where $\bowtie$ ranges over $\{ {\sqsubseteq},{\equiv} \}$.  A
concept name $A$ \emph{depends} on a symbol $X \in \NC \cup \NR$ if $A
\prec_{\Tmc}^{+} X$, where $\cdot^+$ denotes transitive closure.  We
use $\mn{depend}_\Tmc(A)$ to denote the set of all symbols $X$ such
that $A$ depends on $X$. We can now make precise what it means for
\Tmc to \emph{contain no cyclic definitions}: $A \not\in
{\sf depend}_{\Tmc}(A)$, for all $A \in \NC$. Note that the TBox $\T = \{ \alpha \}$ with $\alpha$ from Example~\ref{alc-fragments} is cyclic.


In DL, data is represented in the form of ABoxes. To introduce ABoxes,
we fix a set \NI of \emph{individual names}, which correspond to
constants in first-order logic. An \emph{assertion} is an
expression of the form $A(a)$ or $r(a,b)$, where $A$ is a concept
name, $r$ a role name, and $a,b$ individual names.  An
\emph{ABox} \Amc is just a finite set of assertions. We call the pair
$\Kmc=(\Tmc,\Amc)$ of a TBox $\Tmc$ in a DL $\Lmc$ and an ABox $\Amc$
an \emph{$\Lmc$ knowledge base} (KB, for short).  By ${\sf ind}(\A)$
and ${\sf ind}(\K)$, we denote the set of individual names in $\A$ and
$\K$, respectively.

To interpret ABoxes $\Amc$, we consider interpretations $\Imc$ that map all
individual names $a\in {\sf ind}(\A)$ to elements $a^{\Imc}\in \Delta^{\Imc}$ in
such a way that $a^{\Imc}\not=b^{\Imc}$ if $a\not=b$ (thus, we adopt the
\emph{unique name assumption}).  We say that $\Imc$ \emph{satisfies} an
assertion $A(a)$ if $a^\Imc \in C^\Imc$, and $r(a,b)$ if
$(a^\Imc,b^\Imc) \in r^\Imc$.  It is a \emph{model} of an ABox \Amc if it
satisfies all assertions in \Amc, and of a KB $\Kmc=(\Tmc,\Amc)$ if it is a model
of both \Tmc and \Amc. We say that \Kmc is \emph{consistent} (or
\emph{satisfiable}) if it has a model.  We use the terminology introduced for
TBoxes for KBs as well. For example, KBs $\K_{1}$ and $\K_{2}$ are
\emph{logically equivalent} if they have the same models (or, equivalently,
entail each other).

We next introduce query answering for KBs, beginning with conjunctive queries~\cite{GliHoLuSa-JAIR08,CalvaneseGLLR06,CalvaneseE007}.
An \emph{atom} is of the form $A(x)$ or $r(x,y)$, where $x,y$ are
from a set of \emph{variables} \NV, $A$ is a concept name, and $r$  a role name.
A \emph{conjunctive query} (or CQ) is an expression of the form $\q(\vec{x}) = \exists
\vec{y} \, \vp(\vec{x},\vec{y})$, where $\vec{x}$ and $\vec{y}$ are
disjoint sequences of variables and $\vp$ is a conjunction of atoms
that only contain variables from $\vec{x} \cup \vec{y}$---we (ab)use
set-theoretic notation for sequences where
convenient. We often write $A(x) \in \q$ and $r(x,y) \in \q$ to
indicate that $A(x)$ and $r(x,y)$ are conjuncts of $\varphi$. We call
a CQ $\q$ \emph{rooted} (\RCQ) if every $y \in \avec{y}$ is connected
to some $x \in \avec{x}$ by a path in the undirected graph whose nodes are the variables in $\q$ and edges are the pairs $\{u,v\}$ with $r(u,v)\in\q$, for some $r$.
A \emph{union of CQs} (UCQ) is a disjunction $\q(\avec{x}) = \bigvee_i \q_i(\avec{x})$ of CQs $\q_i(\avec{x})$ with the same \emph{answer variables} $\avec{x}$; it is \emph{rooted} (\RUCQ) if all the $\q_i$ are rooted. If the sequence $\avec{x}$ is empty, $\q(\avec{x})$ is called a \emph{Boolean} CQ or UCQ.

Given a UCQ $\q(\vec{x}) = \bigvee_i \q_i(\avec{x})$ and a KB \Kmc,
a sequence $\vec{a}$ of individual names from $\Kmc$ of the same  length as $\vec{x}$ is called a
\emph{certain answer to $\q(\vec{x})$ over \Kmc} if, for every model \Imc of \Kmc, there exist a CQ $\q_i$ in $\q$ and a map (homomorphism) $h$ of its variables to $\Delta^\Imc$ such that
\begin{itemize}
\item if $x$ is the $j$-th element of $\vec{x}$ and $a$ the $j$-th element of $\vec{a}$, then $h(x)=a^\Imc$;

\item $A(z) \in \q$ implies $h(z) \in A^\Imc$, and $r(z,z') \in \q$ implies $(h(z),h(z')) \in r^\Imc$.
\end{itemize}
If this is the case, we write $\K \models\q(\vec{a})$. For a Boolean
UCQ $\q$, we also say that the certain answer over \Kmc is `yes' if
$\K \models\q$ and `no' otherwise.  \emph{CQ} or \emph{UCQ answering}
means to decide, given a CQ or UCQ $\q(\vec{x})$, a KB $\Kmc$ and a
tuple $\vec{a}$ from $\ind(\K)$, whether $\K \models\q(\vec{a})$.



\section{Inseparability}\label{sec:inseparability}

Since there is no single inseparability relation between ontologies
that is appropriate for all applications, we start by identifying
basic properties that any semantic notion of inseparability between
TBoxes or KBs should satisfy. 
We also introduce notation that
will be used throughout the survey and discuss a few
applications of inseparability.  

%
%

For uniformity, we assume that the term `ontology' refers to both
TBoxes and KBs.

\begin{definition}[inseparability]
\label{def:inseprel}
\em
Let $\mathcal{S}$ be the set of ontologies \textup{(}either TBoxes or
KBs\textup{)} formulated in a description logic $\mathcal{L}$. A
map that assigns to each  signature $\Sigma$ 
an equivalence relation $\equiv_{\Sigma}$ on $\mathcal{S}$ is an
\emph{inseparability relation on} $\mathcal{S}$ if the following conditions hold:
\begin{description}
\item[\rm (\emph{i})]
if $\mathcal{O}_{1}$ and $\mathcal{O}_{2}$
are logically equivalent, then $\mathcal{O}_{1} \equiv_{\Sigma}
\mathcal{O}_{2}$, for all signatures $\Sigma$ and
$\Omc_1,\Omc_2 \in \mathcal{S}$;

\item[\rm (\emph{ii})]$\Sigma_{1}\subseteq \Sigma_{2}$ implies $\equiv_{\Sigma_{1}}\;\supseteq\; \equiv_{\Sigma_{2}}$, for all
finite signatures $\Sigma_{1}$ and $\Sigma_{2}$.
\end{description}
\end{definition}

By condition (\emph{i}), an inseparability relations does not depend
on the syntactic presentation of an ontology, but only on its
semantics. Condition (\emph{ii}) formalizes the requirement that if
the set of relevant symbols increases
$(\Sigma_{2} \supseteq \Sigma_{1}$), then more ontologies become
separable. Depending on the intended application, additional
properties may also be required. For example, we refer the reader to
\cite{DBLP:series/lncs/KonevLWW09} for a detailed discussion of
robustness properties that are relevant for applications to modularity.
We illustrate inseparability relations by three very basic examples.

\begin{example}
\label{ex:simple}
  (1) Let $\mathcal{S}$ be the set of ontologies formulated in a
  description logic $\mathcal{L}$, and let
  $\mathcal{O}_{1} \equiv_{\sf equiv} \mathcal{O}_{2}$ if and only if
  $\mathcal{O}_{1}$ and $\mathcal{O}_{2}$ are logically equivalent,
  for any $\mathcal{O}_{1},\mathcal{O}_{2}\in \mathcal{S}$. Then
  $\equiv_{\sf equiv}$ is an inseparability relation that does not
  depend on the concrete signature. It is the finest inseparability
  relation possible. The inseparability relations considered in this
  survey are more coarse.

(2) Let $\mathcal{S}$ be the set of KBs in a description logic $\mathcal{L}$, and
let $\mathcal{K}_{1} \equiv_{\sf sat} \mathcal{K}_{2}$
if and only if $\mathcal{K}_{1}$ and $\mathcal{K}_{2}$ are  equisatisfiable, for any $\mathcal{K}_{1},\mathcal{K}_{2}\in \mathcal{S}$. Then $\equiv_{\sf sat}$ is
another inseparability relation that does not depend on the concrete
signature. It has
two equivalence classes---the satisfiable KBs and the unsatisfiable KBs---and is not sufficiently
fine-grained for most applications.

(3) Let $\mathcal{S}$ be the set of TBoxes in a description logic $\mathcal{L}$, and let $\mathcal{T}_{1} \equiv^{\sf hierarchy}_{\Sigma} \mathcal{T}_{2}$ if and only if
$$
\mathcal{T}_{1} \models A \sqsubseteq B \quad \Longleftrightarrow \quad \mathcal{T}_{2}\models A\sqsubseteq B, \quad \text{for all concept names $A,B\in \Sigma$}.
$$
Then each relation $\equiv^{\sf hierarchy}_{\Sigma}$ is an
inseparability relation. It distinguishes between two TBoxes if and
only if they do not entail the same subsumption hierarchy over the
concept names in $\Sigma$, and it is appropriate for applications that
are only concerned with subsumption hierarchies such as producing a
systematic catalog of vocabulary items, which is in fact the prime use
of {\sc Snomed CT}\footnote{\url{http://www.ihtsdo.org/snomed-ct}}\!.
\end{example}

As discussed in the introduction, the inseparability relations
considered in this paper are more sophisticated than those in
Example~\ref{ex:simple}. Details are given in the subsequent
sections. We remark that some versions of query inseparability
that we are going to consider are, strictly speaking, not covered
by Definition~\ref{def:inseprel} since two signatures are involved
(one for the query and one for the data). However, it is easy to
extend Definition~\ref{def:inseprel} accordingly.

We now present some important applications
of inseparability.

\smallskip
\noindent
\textit{\textbf{Versioning.}} Maintaining and updating ontologies is
very difficult without tools that support versioning. One can
distinguish \emph{three} approaches to
versioning~\cite{DBLP:journals/jair/KonevL0W12}: versioning based on
syntactic difference (syntactic diff), versioning based on structural
difference (structural diff), and versioning based on logical
difference (logical diff).  The \emph{syntactic diff} underlies most
existing version control systems used in software
development~\cite{ConradiWestfechtel98} such as RCS, CVS, SCCS. It
works with text files and represents the difference between versions
as blocks of text present in one version but not in the other.
As observed in~\cite{promptdiff}, ontology versioning cannot rely on a
purely syntactic diff operation since many syntactic differences
(e.g., the order of ontology axioms) do not affect the semantics.  The
\emph{structural diff} extends the syntactic diff by taking into
account information about the structure of
ontologies. 
Its main characteristic is that it regards ontologies as structured
objects, such as an \emph{is-a} taxonomy~\cite{promptdiff}, a set of
RDF triples~\cite{OntoView} or a set of class defining
axioms~\cite{RSDT08,HORROCKS}. Though helpful, the structural diff
still has no unambiguous semantic foundation and is syntax
dependent. Moreover, it is tailored towards applications of ontologies
that are based on the induced concept hierarchy (or some mild
extension thereof), but does not capture other applications such as
querying data under ontologies. In contrast, the \emph{logical diff}
\cite{KWW08,WoZakr08} completely abstracts away from the presentation
of the ontology and regards two versions of an ontology as identical
if they are inseparable with respect to an appropriate inseparability
relation such as concept inseparability or query inseparability.  The
result of the logical diff is then presented in terms of witnesses for
separability. 

\smallskip
\noindent
\textit{\textbf{Ontology refinement.}} When extending an ontology with
new concept inclusions or other statements, one usually wants to
preserve the semantics of a large part $\Sigma$ of its vocabulary.
For example, when extending {\sc Snomed CT} with 50 additional concept
names on top of the more than 300K existing ones, one wants to ensure
that the meaning of unrelated parts of the vocabulary does not
change. This preservation problem is formalized 
by demanding that the original ontology $\mathcal{O}_{\sf original}$
and the extended ontology
$\mathcal{O}_{\sf original}\cup \mathcal{O}_{\sf add}$ are
$\Sigma$-inseparable (for an appropriate notion of inseparability)
\cite{GhilardiLutzWolter-KR06}. It should be noted that ontology
refinement can be regarded as a versioning problem as discussed above,
where $\mathcal{O}_{\sf original}$ and
$\mathcal{O}_{\sf original}\cup \mathcal{O}_{\sf add}$ are versions of
an ontology that have to be compared.

\smallskip
\noindent
\textit{\textbf{Ontology reuse.}} A frequent operation in ontology
engineering is to import an existing ontology $\mathcal{O}_{\sf im}$
into an ontology $\mathcal{O}_{\sf host}$ that is currently being developed, with the
aim of reusing the vocabulary of $\mathcal{O}_{\sf im}$. Consider, for
example, a host ontology $\mathcal{O}_{\sf host}$ describing  research
projects that imports an ontology $\mathcal{O}_{\sf im}$, which defines medical terms
$\Sigma$ to be used in the definition of research projects
in $\mathcal{O}_{\sf host}$.  Then one typically wants to use the
medical terms $\Sigma$ exactly as defined in $\mathcal{O}_{\sf im}$.
However, using those terms to define concepts in $\mathcal{O}_{\sf
  host}$ might have unexpected consequences also for the terms
in $\mathcal{O}_{\sf im}$, that is, it might `damage' the modeling
of those terms. To avoid this, one wants to ensure that 
$\mathcal{O}_{\sf host}\cup \mathcal{O}_{\sf im}$ and
$\mathcal{O}_{\sf im}$ are $\Sigma$-inseparable
\cite{JairGrau}. Again, this can be regarded as a versioning problem
for the ontology $\mathcal{O}_{\sf im}$.

\smallskip
\noindent
\textit{\textbf{Modularity.}} Modular ontologies and the extraction of
modules are an important ontology engineering challenge
\cite{DBLP:series/lncs/5445,DBLP:journals/lu/KutzML10}.  Understanding
$\Sigma$-inseparability of ontologies is crucial for most approaches
to this problem. For example, a very natural and popular definition of
a module $\mathcal{M}$ of an ontology $\mathcal{O}$ demands that
$\mathcal{M}\subseteq \mathcal{O}$ and that $\mathcal{M}$ is
$\Sigma$-inseparable from $\mathcal{O}$ for the signature $\Sigma$ of
$\mathcal{M}$ (called self-contained module).  Under this definition,
the ontology $\mathcal{O}$ can be safely replaced by the module
$\mathcal{M}$ in the sense specified by the inseparability relation
and as far as the signature $\Sigma$ of $\mathcal{M}$ is concerned.  A
stronger notion of module (called depleting module \cite{KWZ10})
demands that $\mathcal{M} \subseteq \mathcal{O}$ and that
$\mathcal{O}\setminus\mathcal{M}$ is $\Sigma$-inseparable from the
empty ontology for the signature $\Sigma$ of $\mathcal{M}$.  The
intuition is that the ontology statements outside of $\mathcal{M}$
should not say anything non-tautological about signature items in the
module $\mathcal{M}$.

\smallskip
\noindent
\textit{\textbf{Ontology mappings.}} The construction of mappings (or
alignments) between ontologies is an important challenge in ontology
engineering and integration
\cite{DBLP:journals/tkde/ShvaikoE13}. Given two ontologies
$\mathcal{O}_{1}$ and $\mathcal{O}_{2}$ in different signatures
$\Sigma_{1}$ and $\Sigma_{2}$, the problem is to align the vocabulary
items in $\Sigma_{1}$ with those in $\Sigma_{2}$ using a
TBox $\Tmc_{12}$ that states logical relationships between
$\Sigma_{1}$ and $\Sigma_{2}$. For example, $\Tmc_{12}$ could consist
of statements of the form $A_{1} \equiv A_{2}$ or
$A_{1} \sqsubseteq A_{2}$, where $A_{1}$ is a concept name in
$\Sigma_{1}$ and $A_{2}$ is a concept name in $\Sigma_{2}$. When
constructing such mappings, we typically do not want one ontology to
interfere with the semantics of the other ontology via the mapping
\cite{DBLP:conf/semweb/SolimandoJG14,DBLP:conf/semweb/KharlamovHJLLPR15,DBLP:conf/kr/Jimenez-RuizPST16}.
This condition can (and has been) formalized using inseparability. In
fact, the non-interference requirement can be given by the condition
that $\Omc_{i}$ and $\Omc_{1} \cup \Omc_{2}\cup \Tmc_{12}$ are
$\Sigma_{i}$ inseparable, for $i=1,2$.

\smallskip
\noindent
\textit{\textbf{Knowledge base exchange.}} This application is a natural extension  of  \emph{data exchange}~\cite{DBLP:journals/tcs/FaginKMP05}, where the task is to transform a data instance $\mathcal{D}_1$ structured under a source schema $\Sigma_1$ into a data instance $\mathcal{D}_2$ structured under a target schema $\Sigma_2$ given a mapping $\mathcal{M}_{12}$ relating $\Sigma_1$ and $\Sigma_2$. 
In knowledge base exchange~\cite{DBLP:aij-kb-exchange}, we are interested in  translating a KB $\K_1$ in a source signature $\Sigma_1$ to a KB $\K_2$ in a target signature $\Sigma_2$ according to a mapping given by a TBox $\T_{12}$ that 
consists of CIs and RIs in $\Sigma_{1}\cup \Sigma_{2}$ defining concept and role names in $\Sigma_{2}$ in terms of concepts and roles in $\Sigma_{1}$. 
A good solution to this problem can be viewed as a KB $\Kmc_{2}$ that it is inseparable from $\Kmc_{1} \cup \Tmc_{12}$ with respect to a suitable $\Sigma_2$-inseparability relation.

%

\smallskip
\noindent
\textit{\textbf{Forgetting and uniform interpolation.}} When adapting
an ontology to a new application, it is often useful to eliminate
those symbols in its signature that are not relevant for the new
application while retaining the semantics of the remaining ones.
Another reason for eliminating symbols is predicate hiding, i.e., an
ontology is to be published, but some part of it should be concealed
from the public because it is confidential
\cite{DBLP:journals/jair/GrauM12}. Moreover, one can view the
elimination of symbols as an approach to ontology summary: the smaller
and more focussed ontology summarizes what the original ontology says
about the remaining signature items. The idea of eliminating symbols
from a theory has been studied in AI under the name of forgetting a
signature $\Sigma$ 
\cite{Lin}.
In mathematical logic and modal logic, forgetting has been
investigated under the dual notion of uniform interpolation
\cite{Pitts,DagostinoHollenberg2,Visser,GZ,French,DBLP:journals/jair/SuSLZ09}. Under both names, the
problem has been studied extensively in DL research \cite{DBLP:conf/ijcai/KonevWW09,DBLP:journals/amai/WangWTP10,DBLP:conf/ijcai/LutzW11,DBLP:journals/ci/WangWTPA14,DBLP:conf/cade/KoopmannS14,DBLP:journals/ai/NikitinaR14,DBLP:conf/aaai/KoopmannS15}. Using
inseparability, we can formulate the condition that the result
$\mathcal{O}_{\sf forget}$ of eliminating $\Sigma$ from $\mathcal{O}$ should not
change the semantics of the remaining symbols by demanding that $\mathcal{O}$
and $\mathcal{O}_{\sf forget}$ are ${\sf sig}(\mathcal{O})\setminus\Sigma$-inseparable for the signature
${\sf sig}(\mathcal{O})$ of $\mathcal{O}$.



\newcommand{\Amf}{\ensuremath{\mathfrak{A}}\xspace}

\section{Concept Inseparability}
\label{sect:separability}

We consider inseparability relations that distinguish TBoxes if and
only if they do not entail the same concept inclusions in a selected
signature.\footnote{For DLs that admit role inclusions, one
  additionally considers entailment of these.} The resulting
\emph{concept inseparability} relations are appropriate for
applications that focus on TBox reasoning.
%
%
We start by defining concept inseparability and the related notions of 
concept entailment and concept conservative extensions. We give
illustrating examples and discuss the relationship between the three
notions and their connection to logical equivalence. We then take a
detailed look at concept inseparability in $\mathcal{ALC}$ and in
\EL. In both cases, we first establish a model-theoretic
characterization and then show how this characterization can be used 
to decide concept entailment with the help of automata-theoretic techniques.
We also briefly discuss extensions of \ALC and the special case of
\EL with acyclic TBoxes.



\begin{definition}[concept inseparability, entailment and conservative extension]\label{def1}\em
Let $\Tmc_{1}$ and $\Tmc_{2}$ be TBoxes formulated in some DL $\Lmc$,
and let $\Sigma$ be a signature. Then
\begin{itemize}
\item the \emph{${\Sigma}$-concept difference between $\Tmc_{1}$ and
    $\Tmc_{2}$} is the set ${\sf cDiff}_{\Sigma}(\Tmc_{1},\Tmc_{2})$
  of all ${\Sigma}$-concept inclusions (and role inclusions, if
  admitted by \Lmc) $\alpha$ that are formulated in \Lmc and satisfy
  $\Tmc_{2}\models \alpha$ and $\Tmc_{1}\not\models\alpha$;

\item $\Tmc_1$ \emph{${\Sigma}$-concept entails} $\Tmc_2$ if
      ${\sf cDiff}_{\Sigma}(\Tmc_{1},\Tmc_{2})= \emptyset$;

\item $\Tmc_1$ and $\Tmc_2$ are \emph{${\Sigma}$-concept inseparable}
      if $\Tmc_{1}$ ${\Sigma}$-concept entails $\Tmc_{2}$ and vice
      versa;

\item $\Tmc_{2}$ is a \emph{concept conservative extension of} $\Tmc_{1}$
      if $\Tmc_{2} \supseteq \Tmc_{1}$ and $\Tmc_{1}$ and $\Tmc_{2}$ are ${\sig(\Tmc_{1})}$-inseparable.
\end{itemize}
\end{definition}
We illustrate this definition by a number of examples.

\begin{example}[concept entailment vs.\ logical entailment]\label{ex:ex1}
  If $\Sigma \supseteq \mn{sig}(\Tmc_1 \cup \Tmc_2)$, then
  ${\Sigma}$-concept entailment is equivalent to logical entailment,
  that is, $\Tmc_{1}$ ${\Sigma}$-concept entails $\Tmc_{2}$ iff
  $\Tmc_{1} \models \Tmc_{2}$. We recommend the reader to verify
  that this is a straightforward consequence of the definitions (it is
  crucial to observe that, because of our assumption on $\Sigma$, the
  concept inclusions in $\Tmc_2$ qualify as potential members of ${\sf
    cDiff}_{\Sigma}(\Tmc_{1},\Tmc_{2})$).
%
\end{example}

\begin{example}[definitorial extension]\label{ex:ex2}
  An important way to extend an ontology is to introduce definitions
  of new concept names. Let $\Tmc_{1}$ be a TBox, say formulated in
  \ALC, and let $\Tmc_2=\{ A\equiv C\} \cup \Tmc_{1}$, where $A$ is a
  fresh concept name.
  Then $\Tmc_{2}$ is called a \emph{definitorial extension} of
  $\Tmc_{1}$.  Clearly, unless $\Tmc_{1}$ is inconsistent, we have
  $\Tmc_{1}\not\models \Tmc_{2}$.  However, $\Tmc_{2}$ is a
  concept-conservative extension of $\Tmc_{1}$.  For the proof, assume
  that $\Tmc_{1}\not\models \alpha$ and ${\sf sig}(\alpha) \subseteq
  {\sf sig}(\Tmc_{1})$. We show that
  $\Tmc_{2}\not\models\alpha$. There is a model $\Imc_{1}$ of
  $\Tmc_{1}$ such that $\Imc\not\models \alpha$. Modify \Imc by
  setting $A^{\Imc}=C^{\Imc}$.  Then, since $A\not\in{\sf
    sig}(\Tmc_{1})$, the new $\Imc$ is still a model of $\Tmc_{1}$ and
  we still have $\Imc\not\models\alpha$. Moreover, $\Imc$ satisfies
  $A\equiv C$, and thus is a model of $\Tmc_{2}$. Consequently,
  $\Tmc_{2}\not\models \alpha$.
\end{example}

The notion of concept inseparability depends on the DL in which the
separating concept inclusions can be formulated. Note that, in
Definition~\ref{def1}, we assume that this DL is the one in which the
original TBoxes are formulated. Throughout this paper, we will thus
make sure that the DL we work with is always clear from the
context. We illustrate the difference that the choice of the `separating DL'
can make by two examples.
\begin{example}\label{ALCQ-ALC}
Consider the $\mathcal{ALC}$ TBoxes
$$
\Tmc_{1}=\{A \sqsubseteq \exists r.\top\} \quad \text{and} \quad 
\Tmc_{2}=\{A \sqsubseteq \exists r.B\sqcap \exists r.\neg B\}
$$
and the signature $\Sigma = \{A,r\}$. If we view $\Tmc_1$ and $\Tmc_2$
as \ALCQ TBoxes and consequently allow concept inclusions formulated
in \ALCQ to separate them, then $A \sqsubseteq (\geq 2 r.\top)\in {\sf
  cDiff}_{\Sigma}(\Tmc_{1},\Tmc_{2})$, and so $\Tmc_{1}$ and $\Tmc_{2}$
are ${\Sigma}$-concept separable.  However, $\Tmc_{1}$ and $\Tmc_{2}$
are ${\Sigma}$-concept inseparable when we only allow separation in
terms of \ALC-concept inclusions.  Intuitively, this is the case
because, in $\mathcal{ALC}$, one cannot count the number of
$r$-successors of an individual. We will later introduce the
model-theoretic machinery required to prove such statements in a
formal way.
\end{example}
\begin{example}\label{ex:e7}
Consider the $\mathcal{EL}$ TBoxes
\begin{align*}
\Tmc_{1} &= \{{\sf Human} \sqsubseteq \exists {\sf eats}.\top, \ 
{\sf Plant} \sqsubseteq \exists {\sf grows\_in}.{\sf Area}, \ 
{\sf Vegetarian} \sqsubseteq {\sf Healthy}\},\\
\Tmc_{2} &= \Tmc_{1} \cup
\{{\sf Human} \sqsubseteq \exists {\sf eats}.{\sf Food}, \ 
{\sf Food} \sqcap {\sf Plant} \sqsubseteq {\sf Vegetarian}\}.
\end{align*}
It can be verified that
$$
{\sf Human} \sqcap \forall {\sf eats}.{\sf Plant} \sqsubseteq \exists {\sf eats}.{\sf Vegetarian}
$$
is entailed by $\Tmc_{2}$ but not by $\Tmc_{1}$. If we view $\Tmc_1$
and $\Tmc_2$ as \ALC TBoxes, then $\Tmc_{2}$ is thus not a
concept conservative extension of $\Tmc_1$.  However, we will show later that
if we view $\Tmc_{1}$ and $\Tmc_{2}$ as $\mathcal{EL}$ TBoxes, then $\Tmc_{2}$
is a concept conservative extension of $\Tmc_{1}$ (i.e., $\Tmc_{1}$ and $\Tmc_{2}$
are $\mn{sig}(\Tmc_1)$-inseparable in terms of $\mathcal{EL}$-concept inclusions).
\end{example}

As remarked in the introduction, conservative extensions are a special case
of both inseparability and entailment. The former is by definition and
the latter since $\Tmc_2$ is a concept conservative extension of
$\Tmc_1 \subseteq \Tmc_2$ iff $\Tmc_1$ $\mn{sig}(\Tmc_1)$-entails
$\Tmc_2$.  Before turning our attention to specific DLs, we discuss a
bit more the relationship between entailment and inseparability. On
the one hand, inseparability is defined in terms of entailment and
thus inseparability can be decided by two entailment checks. One
might wonder about the converse direction, i.e., whether entailment
can be reduced in some natural way to inseparability. This question
is related to the following robustness condition.
\begin{definition}[robustness under joins]\label{robustjoin}\em 
  A DL $\Lmc$ is \emph{robust under joins} for concept inseparability
  if, for all $\mathcal{L}$ TBoxes $\Tmc_{1}$ and $\Tmc_{2}$ and
  signatures $\Sigma$ with ${\sf sig}(\Tmc_{1}) \cap {\sf
    sig}(\Tmc_{2}) \subseteq \Sigma$, the following are equivalent:
\begin{description}
\item[\rm (\emph{i})] $\Tmc_{1}$ $\Sigma$-concept entails $\Tmc_{2}$ in $\Lmc$;
\item[\rm (\emph{ii})] $\Tmc_{1}$ and $\Tmc_{1}\cup \Tmc_{2}$ are $\Sigma$-concept inseparable in $\Lmc$.
\end{description}
\end{definition}

Observe that the implication (\emph{ii})~$\Rightarrow$~(\emph{i}) is
trivial. The converse holds for many DLs such as $\mathcal{ALC}$,
$\mathcal{ALCI}$ and $\mathcal{EL}$;
see~\cite{DBLP:series/lncs/KonevLWW09} for details. However, there are
also standard DLs such as $\mathcal{ALCH}$ for which robustness under
joins fails.
\begin{theorem}
\label{thm:pipapo}
If a DL $\Lmc$ is robust under joins for concept inseparability, then concept entailment in $\Lmc$
can be polynomially reduced to concept inseparability in~$\Lmc$.
\end{theorem}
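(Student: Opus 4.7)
The plan is to exhibit a polynomial-time reduction from concept entailment to concept inseparability, directly applying the equivalence provided by robustness under joins. The only subtlety is that robustness under joins is stated only for pairs of TBoxes satisfying $\sig(\Tmc_1)\cap\sig(\Tmc_2)\subseteq \Sigma$, whereas a generic instance of concept entailment imposes no such restriction. So my first task is to massage the entailment instance into one that satisfies this side condition, without changing the answer and at polynomial cost.

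Given an entailment instance $(\Tmc_1,\Tmc_2,\Sigma)$ in $\Lmc$, I would let $\Sigma^{*} = (\sig(\Tmc_1)\cap\sig(\Tmc_2))\setminus\Sigma$ and form $\Tmc_{2}'$ from $\Tmc_2$ by uniformly replacing every symbol in $\Sigma^{*}$ with a fresh concept or role name. By construction $\sig(\Tmc_1)\cap\sig(\Tmc_{2}')\subseteq \Sigma$, and the renaming is clearly computable in polynomial time. The key observation I would then verify is that $\Tmc_1$ $\Sigma$-concept entails $\Tmc_2$ if and only if $\Tmc_1$ $\Sigma$-concept entails $\Tmc_{2}'$. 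This holds because for any $\Sigma$-concept inclusion $\alpha$, the renaming affects no symbol in $\sig(\alpha)$, and every model $\Imc$ of $\Tmc_2$ can be turned into a model $\Imc'$ of $\Tmc_{2}'$ with the same interpretation of all symbols outside $\Sigma^{*}$ (simply reinterpret the fresh names as the old ones), and vice versa; in particular $\Imc\models\alpha$ iff $\Imc'\models\alpha$. Thus $\Tmc_2\models\alpha$ iff $\Tmc_{2}'\models\alpha$ for every $\Sigma$-concept inclusion $\alpha$ of $\Lmc$, so the two $\Sigma$-concept differences w.r.t.\ $\Tmc_1$ coincide.

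Having achieved the signature condition, I would invoke robustness under joins to conclude that $\Tmc_1$ $\Sigma$-concept entails $\Tmc_{2}'$ iff $\Tmc_1$ and $\Tmc_1\cup\Tmc_{2}'$ are $\Sigma$-concept inseparable in $\Lmc$. The reduction therefore outputs the inseparability instance $(\Tmc_1,\;\Tmc_1\cup\Tmc_{2}',\;\Sigma)$, whose size is linear in the input. Chaining the two equivalences gives $\Tmc_1$ $\Sigma$-concept entails $\Tmc_2$ iff $\Tmc_1$ and $\Tmc_1\cup\Tmc_{2}'$ are $\Sigma$-concept inseparable, which is what the theorem asserts.

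The only step that requires actual care is the correctness of the renaming, i.e., verifying that $\Sigma$-concept entailment is invariant under renaming of symbols outside $\Sigma$; the rest is just an application of the hypothesis. Since the symbols introduced are fresh and every $\Sigma$-concept inclusion of $\Lmc$ mentions only symbols in $\Sigma$, this invariance is immediate from the semantic definition, so I do not expect any real obstacle—the theorem essentially packages the hypothesis into a constructive reduction, with the signature-disjointness condition handled by the renaming trick.
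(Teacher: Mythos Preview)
Your proposal is correct and follows essentially the same approach as the paper's proof: rename the shared non-$\Sigma$ symbols to enforce the side condition $\sig(\Tmc_1)\cap\sig(\Tmc_2)\subseteq\Sigma$, then invoke robustness under joins. The only cosmetic difference is that the paper renames the offending symbols in \emph{both} TBoxes (with distinct fresh names $X_1$ in $\Tmc_1$ and $X_2$ in $\Tmc_2$), whereas you rename only in $\Tmc_2$; your version is marginally simpler and equally valid.
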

\begin{proof}
  Assume that we want to decide whether $\Tmc_{1}$ ${\Sigma}$-concept
  entails $\Tmc_{2}$. By replacing every non-$\Sigma$-symbol $X$
  shared by $\Tmc_{1}$ and $\Tmc_{2}$ with a fresh symbol $X_1$ in
  $\Tmc_1$ and a distinct fresh symbol $X_2$ in $\Tmc_2$, we can
  achieve that $\Sigma \supseteq \sig(\Tmc_{1}) \cap \sig(\Tmc_{2})$
  without changing (non-)${\Sigma}$-concept entailment of $\Tmc_2$ by
  $\Tmc_1$. We then have, by robustness under joins, that $\Tmc_{1}$
  $\Sigma$-concept entails $\Tmc_{2}$ iff $\Tmc_{1}$ and
  $\Tmc_{1}\cup \Tmc_{2}$ are $\Sigma$-concept inseparable.
\qed
\end{proof}

For DLs $\Lmc$ that are not robust under joins for concept inseparability (such as $\mathcal{ALCH}$) 
it has not yet been investigated whether there exist natural polynomial reductions of concept entailment
to concept inseparability. 



\subsection{Concept inseparability for $\mathcal{ALC}$}
\label{sect:concinsepalc}

We first give a model-theoretic characterization of concept entailment
in \ALC in terms of bisimulations and then show how this characterization
can be used to obtain an algorithm for deciding concept entailment based
on automata-theoretic techniques. We also discuss the complexity,
which is 2\ExpTime-complete, and the size of minimal counterexamples
that witness inseparability. 

Bisimulations are a central tool for studying the expressive power of
\ALC and of modal logics; see for example
\cite{GorankoOtto,DBLP:conf/ijcai/LutzPW11}.  By a \emph{pointed
  interpretation} we mean a pair $(\Imc,d)$, where $\Imc$ is an
interpretation and $d\in \Delta^{\Imc}$.
\begin{definition}[$\Sigma$-bisimulation]\em
  Let $\Sigma$ be a finite signature and $(\Imc_{1},d_{1})$ and 
  $(\Imc_{2},d_{2})$ pointed interpretations. A relation $S \subseteq
  \Delta^{\Imc_{1}}\times \Delta^{\Imc_{2}}$ is a
  \emph{$\Sigma$-bisimulation} between $(\Imc_{1},d_{1})$ and
  $(\Imc_{2},d_{2})$ if $(d_{1},d_{2})\in S$ and, for all $(d,d') \in
  S$, the following conditions are satisfied:
\begin{description}
\item[\rm (base)] $d \in A^{\Imc_{1}}$ iff $d' \in A^{\Imc_{2}}$, for all $A \in \Sigma\cap \NC$;

  \item[\rm (zig)] if $(d,e) \in r^{\Imc_{1}}$, then there exists $e'\in \Delta^{\Imc_{2}}$ such
that $(d',e') \in r^{\Imc_{2}}$ and $(e,e')\in S$, for all $r \in \Sigma\cap \NR$;

  \item[\rm (zag)] if $(d',e') \in r^{\Imc_{2}}$, then there exists $e\in \Delta^{\Imc_{1}}$ such
that $(d,e) \in r^{\Imc_{1}}$ and $(e,e')\in S$, for all $r \in \Sigma\cap \NR$.
\end{description}
We say that $(\Imc_{1},d_{1})$ and $(\Imc_{2},d_{2})$ are
\emph{$\Sigma$-bisimilar} and write $(\Imc_{1},d_{1}) \sim_\Sigma^{{\sf bisim}}
(\Imc_{2},d_{2})$ if there exists a $\Sigma$-bisimulation between them.
\end{definition}

We now recall the main connection between bisimulations and \ALC.  Say
that $(\Imc_{1},d_{1})$ and $(\Imc_{2},d_{2})$ are
\emph{$\ALC_{\Sigma}$-equivalent}, in symbols
$(\Imc_{1},d_{1})\equiv_{\Sigma}^{\mathcal{ALC}}(\Imc_{2},d_{2})$, in
case $d_{1}\in C^{\Imc_{1}}$ iff $d_{2}\in C^{\Imc_{2}}$ for all
$\Sigma$-concepts $C$ in $\mathcal{ALC}$. An interpretation $\mathcal{I}$ is of
\emph{finite outdegree} if the set $\{ d' \mid (d,d') \in \bigcup_{r
  \in \NR} r^\Imc \}$ is finite, for any \mbox{$d\in
  \Delta^{\mathcal{I}}$}.
\begin{theorem}\label{theorem:bisimchar}
Let $(\Imc_{1},d_{1})$ and $(\Imc_{2},d_{2})$ be pointed
interpretations
and $\Sigma$ a signature. Then 
  $(\Imc_{1},d_{1}) \sim_\Sigma^{{\sf bisim}} (\Imc_{2},d_{2})$ implies
  $(\Imc_{1},d_{1})\equiv_{\Sigma}^{\mathcal{ALC}}(\Imc_{2},d_{2})$.  The converse
  holds if $\Imc_1$ and $\Imc_2$ are of finite outdegree.
\end{theorem}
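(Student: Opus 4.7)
The proof splits into two directions, matching the two claims.

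\textbf{Soundness direction (bisimilarity implies $\ALC_{\Sigma}$-equivalence).} I would fix a $\Sigma$-bisimulation $S$ between $(\Imc_{1},d_{1})$ and $(\Imc_{2},d_{2})$ and prove, by straightforward induction on the structure of $\Sigma$-concepts $C$ in $\mathcal{ALC}$, that for every $(e_1,e_2)\in S$ we have $e_1\in C^{\Imc_1}$ iff $e_2\in C^{\Imc_2}$. The cases $\top,\bot$ are trivial, the case $A\in\NC\cap\Sigma$ is the (base) clause, and the Boolean cases $\neg,\sqcap,\sqcup$ are immediate from the induction hypothesis. The only interesting case is $\exists r.D$ with $r\in\Sigma\cap\NR$: if $e_1\in(\exists r.D)^{\Imc_1}$, pick a witness $f_1$ and apply (zig) to obtain some $f_2$ with $(e_2,f_2)\in r^{\Imc_2}$ and $(f_1,f_2)\in S$, then use the induction hypothesis; the converse direction uses (zag). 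Since $\forall r.D$ is definable as $\neg\exists r.\neg D$, no separate clause is needed.

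\textbf{Hennessy--Milner direction (equivalence implies bisimilarity, under finite outdegree).} I would prove this by exhibiting a concrete bisimulation, namely the relation
\[
S \;=\; \{(e_1,e_2)\in\Delta^{\Imc_1}\times\Delta^{\Imc_2} \mid (\Imc_1,e_1)\equiv_{\Sigma}^{\mathcal{ALC}}(\Imc_2,e_2)\}.
\]
By assumption $(d_1,d_2)\in S$, and (base) is immediate because every $A\in\Sigma\cap\NC$ is itself a $\Sigma$-concept. The crux is (zig) (and symmetrically (zag)). Fix $(e_1,e_2)\in S$ and $r\in\Sigma\cap\NR$, and suppose $(e_1,f_1)\in r^{\Imc_1}$. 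Let
\[
F \;=\; \{f\in\Delta^{\Imc_2} \mid (e_2,f)\in r^{\Imc_2}\},
\]
which is finite by the finite outdegree assumption on $\Imc_2$. I want some $f_2\in F$ with $(f_1,f_2)\in S$; assume toward a contradiction that no such $f_2$ exists. Then for each $f\in F$ there is a $\Sigma$-concept $D_f$ distinguishing $f_1$ and $f$; by replacing $D_f$ with its negation if necessary, I may assume $f_1\in D_f^{\Imc_1}$ and $f\notin D_f^{\Imc_2}$. Setting $D=\bigsqcap_{f\in F}D_f$ (a finite conjunction, using $\top$ if $F=\emptyset$), we get $f_1\in D^{\Imc_1}$, hence $e_1\in(\exists r.D)^{\Imc_1}$, whereas no $f\in F$ satisfies $D$, hence $e_2\notin(\exists r.D)^{\Imc_2}$. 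This contradicts $(e_1,e_2)\in S$. Clause (zag) is handled symmetrically, using finite outdegree of $\Imc_1$.

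\textbf{Expected obstacle.} The forward direction is routine. The subtle step is the Hennessy--Milner direction, specifically the construction of the separating concept $D$: one must (i) justify the finiteness of $F$ from the finite outdegree hypothesis so that $D$ is a well-formed concept, and (ii) handle the asymmetry between ``$f_1$ satisfies and $f$ does not'' versus the opposite by passing to negations before taking the conjunction. Without the finite outdegree assumption, the conjunction $\bigsqcap_{f\in F}D_f$ would in general be infinite and not expressible in $\mathcal{ALC}$, which is exactly why the converse can fail on interpretations of infinite outdegree.
\qed
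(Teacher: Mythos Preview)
Your proof is correct and is the standard Hennessy--Milner argument for this result. The paper in fact does not supply a proof of this theorem at all: it states it as a known characterization, refers the reader to the modal-logic literature (Goranko--Otto), and only illustrates the necessity of the finite-outdegree hypothesis via the classical example with $r$-chains of every finite length versus an additional infinite chain. Your write-up is exactly the argument one would expect to find behind such a citation, and nothing needs to be changed.
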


\begin{example}\label{boundedoutdegree}
  The following classical example shows that without the condition of
  finite outdegree, the converse direction does not hold.

\begin{center}
  \begin{tikzpicture}[yscale=0.7, xscale=0.8]
    \begin{scope}
      \node at (-1,0.5) {$\Imc_1$};

      \foreach \al/\x/\y in {%
        d1/0/0,%
        x1/-2/-1,%
        x21/-1/-1,%
        x22/-1/-2,%
        x31/0/-1,%
        x32/0/-2,%
        x33/0/-3,%
        xn1/1.5/-1,%
        xn2/1.5/-2,%
        xnn1/1.5/-4,%
        xnn/1.5/-5%
      }{\node[point] (\al) at (\x,\y) {};}

      \node[anchor=south] at (d1.north) {$d_1$};

      \foreach \from/\to in { d1/x1,%
        d1/x21, x21/x22, %
        d1/x31, x31/x32, x32/x33,%
        d1/xn1, xn1/xn2, xnn1/xnn%
      }{ \draw[role] (\from) -- (\to); }

      \draw[dashed,semithick] (xn2) -- (xnn1);

      \draw[loosely dotted,thick] ($(x31)+(0.4,0)$) --
      ($(xn1)-(0.4,0)$) ($(xn1)+(0.3,0)$) -- +(0.8,0);
    \end{scope}

    \begin{scope}[xshift=6.5cm]
      \node at (1,0.5) {$\Imc_2$};

      \foreach \al/\x/\y in {%
        d2/0/0,%
        x1/-2/-1,%
        x21/-1/-1,%
        x22/-1/-2,%
        x31/0/-1,%
        x32/0/-2,%
        x33/0/-3,%
        xn1/1.5/-1,%
        xn2/1.5/-2,%
        xnn1/1.5/-4,%
        xnn/1.5/-5,%
        y1/3/-1,%
        y2/3/-2,%
        yn1/3/-4,%
        yn/3/-5%
      }{\node[point] (\al) at (\x,\y) {};}

      \node[anchor=south] at (d2.north) {$d_2$};

      \foreach \from/\to in { d2/x1,%
        d2/x21, x21/x22, %
        d2/x31, x31/x32, x32/x33,%
        d2/xn1, xn1/xn2, xnn1/xnn,%
        d2/y1, y1/y2, yn1/yn%
      }{ \draw[role] (\from) -- (\to); }

      \draw[dashed,semithick] (xn2) -- (xnn1)%
      (y2) -- (yn1) (yn) -- +(0,-1);

      \draw[loosely dotted,thick] ($(x31)+(0.4,0)$) --
      ($(xn1)-(0.4,0)$) ($(xn1)+(0.4,0)$) -- ($(y1)-(0.5,0)$);
    \end{scope}
  \end{tikzpicture}
\end{center}
Here, $(\Imc_1,d_1)$ is a pointed interpretation with an $r$-chain of
length $n$ starting from $d_1$, for each $n \geq 1$. $(\Imc_2,d_2)$
coincides with $(\Imc_1,d_1)$ except that it also contains an infinite $r$-chain
starting from $d_2$. Let $\Sigma=\{r\}$. It can be proved that
$(\Imc_{1},d_{1})\equiv_{\Sigma}^{\mathcal{ALC}}(\Imc_{2},d_{2})$. However,
$(\Imc_{1},d_{1}) \not\sim_\Sigma^{{\sf bisim}} (\Imc_{2},d_{2})$ due
to the infinite chain in $(\Imc_2,d_2)$.
\end{example}

As a first application of Theorem~\ref{theorem:bisimchar}, we note
that $\mathcal{ALC}$ cannot distinguish between an interpretation and
its \emph{unraveling} into a tree. An interpretation $\Imc$ is called
a \emph{tree interpretation} if $r^{\Imc} \cap s^{\Imc}=\emptyset$ for
any $r\not=s$ and the directed graph $(\Delta^{\Imc},\bigcup_{r\in
  \NR}r^{\Imc})$ is a (possibly infinite) tree. The root of $\Imc$ is
denoted by $\rho^{\Imc}$. By the unraveling technique
\cite{GorankoOtto}, one can show that every pointed interpretation
$(\Imc,d)$ is $\Sigma$-bisimilar to a pointed tree interpretation
$(\Imc^{\ast},\rho^{\Imc^{\ast}})$, for any signature~$\Sigma$. Indeed, suppose $(\Imc,d)$ is given. The domain
$\Delta^{\Imc^{\ast}}$ of $\Imc^{\ast}$ is the set of words
$w=d_{0}r_{0}d_{1}\cdots r_{n}d_{n}$ such that $d_{0}=d$ and
$(d_{i},d_{i+1})\in r_{i}^{\Imc}$ for all $i<n$ and roles names~$r_{i}$.  We set ${\sf tail}(d_{0}r_{0}d_{1}\cdots r_{n}d_{n})=d_{n}$
and define the interpretation $A^{\Imc^{\ast}}$ and $r^{\Imc^{\ast}}$
of concept names $A$ and role names $r$ by setting:
\begin{itemize}
\item $w\in A^{\Imc^{\ast}}$ if ${\sf tail}(w)\in A^{{\Imc}^{\ast}}$;
\item $(w,w')\in r^{\Imc^{\ast}}$ if $w'=wrd'$.
\end{itemize}
The following lemma can be proved by a straightforward induction.
\begin{lemma}\label{lem:bis}
The relation
$
S= \{ (w,{\sf tail}(w)) \mid w\in \Delta^{{\Imc}^{\ast}}\}
$
is a $\Sigma$-bisimulation between $(\Imc^{\ast},\rho^{\Imc^{\ast}})$ and $(\Imc,d)$, for any signature $\Sigma$.
\end{lemma}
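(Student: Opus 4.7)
The plan is to verify each of the four clauses in the definition of $\Sigma$-bisimulation directly from the definitions of $\Delta^{\Imc^{\ast}}$, $A^{\Imc^{\ast}}$, and $r^{\Imc^{\ast}}$. Since the unraveling is constructed precisely so as to mimic $\Imc$ step by step, each clause should reduce to a one-line observation; no induction on concept structure is needed.

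First I would check the initial condition: the root $\rho^{\Imc^{\ast}}$ is the length-one word $d$, so $\mn{tail}(\rho^{\Imc^{\ast}}) = d$, and therefore $(\rho^{\Imc^{\ast}}, d) \in S$. Next, for the (base) clause, let $(w, d') \in S$, so $d' = \mn{tail}(w)$. By the definition of $A^{\Imc^{\ast}}$, for any concept name $A$ we have $w \in A^{\Imc^{\ast}}$ iff $\mn{tail}(w) \in A^{\Imc}$, i.e.\ iff $d' \in A^{\Imc}$. This holds a fortiori for $A \in \Sigma \cap \NC$ (here I will note the apparent typo in the definition of $A^{\Imc^{\ast}}$ where $\Imc^{\ast}$ should be $\Imc$ on the right-hand side; the intended meaning is clear from context).

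For (zig), suppose $(w, d') \in S$ and $(w, w') \in r^{\Imc^{\ast}}$ for some $r \in \Sigma \cap \NR$. By the definition of $r^{\Imc^{\ast}}$, we have $w' = w r d''$ for some $d'' \in \Delta^{\Imc}$. Since $w'$ must belong to $\Delta^{\Imc^{\ast}}$, the pair $(\mn{tail}(w), d'') = (d', d'')$ lies in $r^{\Imc}$. Taking $e := d'' = \mn{tail}(w')$, we obtain $(d', e) \in r^{\Imc}$ and $(w', e) \in S$, as required. For (zag), suppose $(w, d') \in S$ and $(d', e) \in r^{\Imc}$ with $r \in \Sigma \cap \NR$. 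Setting $w' := w r e$, we have $w' \in \Delta^{\Imc^{\ast}}$ (the extension is admissible because $(\mn{tail}(w), e) = (d', e) \in r^{\Imc}$), $(w, w') \in r^{\Imc^{\ast}}$, and $\mn{tail}(w') = e$, so $(w', e) \in S$.

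There is no real obstacle here: the statement is essentially a sanity check that the tree unraveling construction has been defined correctly, and the only mildly delicate point is to observe that the tree witness $w'$ in the (zig) case is forced to have the form $w r e$ with $e = \mn{tail}(w')$ lying in the required $r^{\Imc}$-successor set, so that $S$ relates it back to the right element of $\Imc$. I would therefore present the proof as four short verifications, keeping the exposition brief since the result is invoked immediately afterwards only as a plug-in to Theorem~\ref{theorem:bisimchar}.
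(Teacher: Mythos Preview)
Your proposal is correct and complete. The paper does not actually give a proof of this lemma---it merely states that it ``can be proved by a straightforward induction'' and moves on---so there is nothing to compare against beyond that remark. Your direct verification of the four bisimulation clauses is exactly what is required, and you are right that no induction (on concept structure or on word length) is needed: the bisimulation conditions are purely local, and each one unwinds immediately from the definitions of $\Delta^{\Imc^{\ast}}$, $A^{\Imc^{\ast}}$, and $r^{\Imc^{\ast}}$. Your catch of the typo in the definition of $A^{\Imc^{\ast}}$ (where the right-hand side should read $A^{\Imc}$, not $A^{\Imc^{\ast}}$) is also correct.
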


We now characterize concept entailment (and thus also concept
inseparability and concept conservative extensions) in \ALC using
bisimulations, following~\cite{DBLP:conf/ijcai/LutzW11}.


\begin{theorem}\label{bisimuniform}
  Let $\Tmc_{1}$ and $\Tmc_{2}$ be $\mathcal{ALC}$ TBoxes and $\Sigma$
  a signature.  Then $\Tmc_{1}$ ${\Sigma}$-concept
  entails $\Tmc_{2}$ iff, for any model $\Imc_{1}$ of $\Tmc_{1}$ and
  any $d_{1}\in \Delta^{\Imc_{1}}$, there exist a model $\Imc_{2}$ of
  $\Tmc_{2}$ and $d_{2}\in\Delta^{\Imc_{2}}$ such that
  $(\Imc_{1},d_{1}) \sim_\Sigma^{{\sf bisim}} (\Imc_{2},d_{2})$.
\end{theorem}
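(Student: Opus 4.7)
The direction $(\Leftarrow)$ is straightforward from Theorem~\ref{theorem:bisimchar}. Assuming the bisimulation-realization property on the right-hand side, take any $\Sigma$-$\mathcal{ALC}$ CI $C \sqsubseteq D$ with $\Tmc_2 \models C \sqsubseteq D$, any model $\Imc_1$ of $\Tmc_1$, and any $d_1 \in C^{\Imc_1}$. By hypothesis there exist $\Imc_2 \models \Tmc_2$ and $d_2 \in \Delta^{\Imc_2}$ with $(\Imc_1,d_1) \sim^{{\sf bisim}}_\Sigma (\Imc_2,d_2)$. Theorem~\ref{theorem:bisimchar} yields $(\Imc_1,d_1) \equiv^{\mathcal{ALC}}_\Sigma (\Imc_2,d_2)$, and since $C$ and $D$ are $\Sigma$-concepts in $\mathcal{ALC}$, this forces $d_2 \in C^{\Imc_2} \subseteq D^{\Imc_2}$ and hence $d_1 \in D^{\Imc_1}$. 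Thus $\Tmc_1 \models C \sqsubseteq D$.

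For $(\Rightarrow)$, fix $\Imc_1 \models \Tmc_1$ and $d_1 \in \Delta^{\Imc_1}$. The first step is a compactness-based type-realization lemma: for every $e \in \Delta^{\Imc_1}$, the $\Sigma$-$\mathcal{ALC}$-type $T^\Sigma_e = \{C \mid e \in C^{\Imc_1}\}$ is satisfiable together with $\Tmc_2$. Indeed, if some finite $\{C_1,\ldots,C_n\} \subseteq T^\Sigma_e$ were not, then $\Tmc_2 \models C_1 \sqcap \cdots \sqcap C_n \sqsubseteq \bot$, which is a $\Sigma$-CI; $\Sigma$-concept entailment would then give $\Tmc_1 \models C_1 \sqcap \cdots \sqcap C_n \sqsubseteq \bot$, contradicting $e \in C_i^{\Imc_1}$ for all $i$. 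Next, I use Lemma~\ref{lem:bis} to pass to the tree unravelling $(\Imc_1^{*},\rho)$, which is $\Sigma$-bisimilar to $(\Imc_1,d_1)$ for every $\Sigma$ and, being $\Sigma$-bisimilar at every point to $\Imc_1$, still satisfies $\Tmc_1$. By transitivity of $\sim^{{\sf bisim}}_\Sigma$ it then suffices to produce $(\Imc_2,d_2) \models \Tmc_2$ with $(\Imc_1^{*},\rho) \sim^{{\sf bisim}}_\Sigma (\Imc_2,d_2)$.

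I then construct $\Imc_2$ so that its $\Sigma$-reduct on a distinguished core $\{\phi(w) \mid w \in \Delta^{\Imc_1^{*}}\}$ is isomorphic, via $\phi$, to the $\Sigma$-reduct of $\Imc_1^{*}$; setting $d_2 = \phi(\rho)$, the relation $\{(w,\phi(w)) \mid w \in \Delta^{\Imc_1^{*}}\}$ is then a $\Sigma$-bisimulation by design. The concrete construction proceeds along the tree: at each node $w$ the locally $\Tmc_2$-consistent type $T^\Sigma_w$ (guaranteed by the lemma above) is realized in a sufficiently $\omega$-saturated $\Tmc_2$-model whose non-$\Sigma$ part supplies any further witnesses demanded by $\Tmc_2$, and these local realizations are stitched along the tree edges of $\Imc_1^{*}$ to yield a single global $\Imc_2 \models \Tmc_2$. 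The main obstacle is the zag direction: I must prevent any non-core element of $\Imc_2$ from becoming a $\Sigma$-$r$-successor of a core element, i.e., the core must be closed under $\Sigma$-successors and match $\Imc_1^{*}$ exactly there. This is arranged by attaching all $\Tmc_2$-required witnesses via non-$\Sigma$ roles whenever possible and, where $\Tmc_2$ forces a $\Sigma$-successor at $\phi(w)$, using $\Tmc_2$-consistency of $T^\Sigma_w$ to redirect that successor to the $\phi$-image of an already existing $\Sigma$-successor of $w$ in $\Imc_1^{*}$ (such a successor must exist, for otherwise the corresponding $\exists r.\top$-type concept would lie in $T^\Sigma_w$ while being inconsistent with $\Tmc_2$). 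The overall argument is the $\mathcal{ALC}$-counterpart of the standard Hennessy--Milner/van Benthem machinery, adapted so that the target bisimulation lives above the original non-saturated tree $\Imc_1^{*}$ rather than above an $\omega$-saturated elementary extension of it.
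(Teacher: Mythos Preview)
Your $(\Leftarrow)$ direction is correct.

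For $(\Rightarrow)$, the compactness step showing that each $T^\Sigma_e$ is consistent with $\Tmc_2$ is correct and is indeed the key ingredient. But your construction of $\Imc_2$ has a genuine gap: you demand that the core be closed under $\Sigma$-successors and $\Sigma$-isomorphic to $\Imc_1^{*}$ via $\phi$, and this is not always achievable. Take $\Tmc_1 = \{A \sqsubseteq \exists r.\top\}$, $\Tmc_2 = \{A \sqsubseteq \exists r.B \sqcap \exists r.\neg B\}$, $\Sigma = \{A, r\}$ (this is Example~\ref{ex:charact}, where $\Tmc_1$ does $\Sigma$-concept entail $\Tmc_2$). If $\Imc_1$ has $d_1 \in A^{\Imc_1}$ with a \emph{single} $r$-successor $e$, then in your $\Imc_2$ the core element $\phi(d_1)$ has $\phi(e)$ as its only $r$-successor, so it cannot satisfy both $\exists r.B$ and $\exists r.\neg B$; hence $\Imc_2 \not\models \Tmc_2$. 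Your redirection argument only guarantees that $w$ has \emph{some} $\Sigma$-successor when $\Tmc_2$ forces one, not that it has \emph{enough} successors to accommodate mutually incompatible non-$\Sigma$ requirements. The paper's own treatment of this example first \emph{duplicates} successors before interpreting $B$---a step absent from your construction. In short, the correct $\Imc_2$ is $\Sigma$-bisimilar to $\Imc_1^{*}$ but in general has no substructure that is $\Sigma$-isomorphic to it.

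The paper itself does not give a full proof either: it notes that compactness together with Theorem~\ref{theorem:bisimchar} suffice when $\Imc_1$ has finite outdegree, and refers to~\cite{DBLP:conf/ijcai/LutzW11} for the general case. A repair in the spirit of your proposal is to realize $T^\Sigma_{d_1}$ in an $\omega$-saturated model $\Imc_2$ of $\Tmc_2$ and take the bisimulation to be $\{(e,f) : T^\Sigma_e = T^\Sigma_f\}$ rather than the graph of an injection $\phi$; saturation of $\Imc_2$ then handles the zig direction, and the zag direction follows by a pigeonhole argument when $\Imc_1$ has finite outdegree. The unrestricted case requires additional work.
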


For $\Imc_{1}$ of finite outdegree, one can prove this result directly by employing
compactness arguments and Theorem~\ref{theorem:bisimchar}. For the
general
case, we refer to~\cite{DBLP:conf/ijcai/LutzW11}.
%
We illustrate Theorem~\ref{bisimuniform} by sketching a proof of the statement from Example~\ref{ALCQ-ALC} (in a slightly more general form).

\begin{example}
\label{ex:charact} 
Consider the $\mathcal{ALC}$ TBoxes
$$
\Tmc_{1}= \{A\sqsubseteq \exists r.\top\}\cup \Tmc \quad \text{and} \quad \Tmc_{2} = \{A
\sqsubseteq \exists r.B\sqcap \exists r.\neg B\}\cup \Tmc,
$$
where $\Tmc$ is an $\mathcal{ALC}$ TBox and $B\not\in \Sigma=
\{A,r\}\cup {\sf sig}(\Tmc)$.  We use Theorem~\ref{bisimuniform} and
Lemma~\ref{lem:bis} to show that $\Tmc_{1}$ ${\Sigma}$-concept entails
$\Tmc_{2}$.  Suppose $\Imc$ is a model of $\Tmc_{1}$ and $d\in
\Delta^{\Imc}$.  Using tree unraveling, we construct a tree model
$\Imc^{\ast}$ of $\Tmc_{1}$ with $(\Imc,d) \sim_{\Sigma}^{{\sf bisim}}
(\Imc^{\ast},\rho^{\Imc^{\ast}})$.  As bisimilations and \ALC{} TBoxes
are oblivious to duplication of successors, we find a tree model
$\Jmc$ of $\Tmc_1$ such that $e \in A^\Jmc$ implies $\#\{d \mid (e,d)
\in r^\Jmc \} \geq 2$ for all $e\in \Delta^{\Jmc}$ and
$(\Imc^{\ast},\rho^{\Imc^{\ast}}) \sim_{\Sigma}^{{\sf bisim}}
(\Jmc,\rho^{\Jmc})$.  By reinterpreting $B\not\in \Sigma$, we can find
$\Jmc'$ that coincides with $\Jmc$ except that now we ensure that $e
\in A^\Jmc$ implies $e \in (\exists r . B \sqcap \exists r . \neg
B)^{\Jmc'}$ for all $e\in \Delta^{\Jmc}$.  But then $\Jmc'$ is a model
of $\Tmc_{2}$ and $(\Imc,d) \sim_{\Sigma}^{\sf bisim}
(\Jmc',\rho^{\Jmc})$, as required.

Below, we illustrate possible interpretations $\Imc^{\ast}$, $\Jmc$ and
$\Jmc'$ satisfying the above conditions, for a given interpretation $\Imc$.

\begin{center}
  \begin{tikzpicture}
    \begin{scope}
      \node at (-0.8,0.2) {$\Imc$};

      \foreach \al/\x/\y/\lab in {%
        d/0/0/d,%
        a/0/-1/%
      }{\node[point, label=right:{\vertexfont $\lab$}] (\al) at (\x,\y) {};}

      \node[anchor=east] at (d.west) {\edgefont $A$};

      \foreach \from/\to/\lab/\wh in {%
        d/a/{s,r}/left%
      }{ \draw[role] (\from) -- node[\wh] {\edgefont $\lab$} (\to); }
    \end{scope}

    \begin{scope}[xshift=2.5cm]
      \node at (-0.8,0.2) {$\Imc^\ast$};

      \foreach \al/\x/\y/\lab in {%
        d/0/0/{\rho^{\Imc^\ast}},%
        x1/-0.5/-1/,%
        x2/0.5/-1/%
      }{\node[point, label=right:{\vertexfont$\lab$}] (\al) at (\x,\y) {};}

      \node[anchor=east] at (d.west) {\edgefont $A$};

      \foreach \from/\to/\lab/\wh in {%
        d/x1/s/left, d/x2/r/right%
      }{ \draw[role] (\from) -- node[\wh] {\edgefont $\lab$} (\to); }
    \end{scope}

    \begin{scope}[xshift=5.5cm]
      \node at (1,0.2) {$\Jmc$};

      \foreach \al/\x/\y/\lab in {%
        d/0/0/{\rho^{\Jmc}},%
        x1/-1/-1/,%
        x2/0/-1/,%
        x3/1/-1/%
      }{\node[point, label=right:{\vertexfont$\lab$}] (\al) at (\x,\y) {};}

      \node[anchor=east] at (d.west) {\edgefont $A$};

      \foreach \from/\to/\lab/\wh in {%
        d/x1/s/left, d/x2/r/right, d/x3/r/right%
      }{ \draw[role] (\from) -- node[\wh] {\edgefont $\lab$} (\to); }
    \end{scope}

    \begin{scope}[xshift=9cm]
      \node at (1,0.2) {$\Jmc'$};

      \foreach \al/\x/\y/\lab in {%
        d/0/0/{\rho^{\Jmc}},%
        x1/-1/-1/,%
        x2/0/-1/,%
        x3/1/-1/%
      }{\node[point, label=right:{\vertexfont$\lab$}] (\al) at (\x,\y) {};}

      \node[anchor=east] at (d.west) {\edgefont $A$}; \node[anchor=east] at
      (x2.west) {\edgefont $B$};

      \foreach \from/\to/\lab/\wh in {%
        d/x1/s/left, d/x2/r/right, d/x3/r/right%
      }{ \draw[role] (\from) -- node[\wh] {\edgefont $\lab$} (\to); }
    \end{scope}

  \end{tikzpicture}
\end{center}
\end{example}

Theorem~\ref{bisimuniform} is a useful starting point for constructing
decision procedures for concept entailment in \ALC and related
problems. This can be done from first principles as in
\cite{GhilardiLutzWolter-KR06,LutzWW07}. Here we present an approach that uses tree automata.
We use amorphous alternating parity tree automata
\cite{Wilke-Automata}, which actually
run on unrestricted interpretations rather than on trees. They still
belong to the family of \emph{tree} automata as they are in the
tradition of more classical forms of such automata and cannot
distinguish between an interpretation and its unraveling into a tree
(which indicates a connection to bisimulations).
\begin{definition}[APTA]\em 
\label{def:apta}
  An (\emph{amorphous}) \emph{alternating parity tree automaton} (or APTA for short) is a tuple $\Amc
  = (Q,\Sigma_N,\Sigma_E,q_0, \delta, \Omega)$, where $Q$ is a finite
  set of \emph{states}, $\Sigma_N \subseteq \NC$ is the finite
  \emph{node alphabet}, $\Sigma_E \subseteq \NR$ is the finite
  \emph{edge alphabet}, $q_0 \in Q$ is the \emph{initial state},
  $
    \delta: Q  \rightarrow \mn{mov}(\Amc)
  $
  is the transition function with
  $
    \mn{mov}(\Amc)=\{ \mn{true}, \mn{false}, A, \neg A, q, q \wedge q',
    q \vee q', \langle r \rangle q, [r] q \mid 
     A \in \Sigma_N, q,q' \in Q, r \in \Sigma_E \}
  $
  the set of \emph{moves} of the automaton, and $\Omega:Q
  \rightarrow \mathbbm{N}$ is the \emph{priority function}.
\end{definition}

Intuitively, the move $q$ means that the automaton sends a copy of
itself in state $q$ to the element of the interpretation that it is
currently processing, $\langle r \rangle q$ means that a copy in state $q$ is
sent to an $r$-successor of the current element, and $[r] q$ means
that a copy in state $q$ is sent to every $r$-successor.

It will be convenient to use unrestricted modal logic formulas in
negation normal form when specifying the transition function of
APTAs. The more restricted form required by Definition~\ref{def:apta}
can then be attained by introducing intermediate
states. We next introduce the semantics of APTAs.

In what follows, a \emph{$\Sigma$-labelled tree} is a pair $(T,\ell)$
with $T$ a tree and $\ell:T \rightarrow \Sigma$ a node
labelling function. A \emph{path} $\pi$ in a tree $T$ is a subset of
$T$ such that $\varepsilon \in \pi$ and for each $x \in \pi$ that is
not a leaf in $T$, $\pi$ contains one child of $x$.
\begin{definition}[run]\em 
\label{def:altrun}
Let $(\Imc,d_0)$ be a pointed $\Sigma_N \cup \Sigma_E$-interpretation
and let $\Amf= (Q,\Sigma_N,\Sigma_E,q_0, \delta, \Omega)$ be an APTA. A
\emph{run} of \Amf on $(\Imc ,d_0)$ is a $Q \times \Delta^\Imc$-labelled tree
$(T,\ell)$ such that $\ell(\varepsilon)=(q_0,d_0)$ and for every
$x \in T$ with $\ell(x)=(q,d)$:
  \begin{itemize}

  \item $\delta(q) \neq \mn{false}$;
    
  \item if $\delta(q) = A$ ($\delta(q) = \neg A$), then $d \in A^\Imc$ ($d \notin A^\Imc$);

  \item if $\delta(q) = q' \wedge q''$, then there are
    children $y,y'$ of $x$ with $\ell(y)=(q',d)$ and $\ell(y')=(q'',d)$;

  \item if $\delta(q) = q' \vee q''$, then there is
    a child $y$ of $x$ such that $\ell(y)=(q',d)$ or $\ell(y')=(q'',d)$;

  \item if $\delta(q) =\langle r \rangle q'$, then
    there is a $(d,d') \in r^\Imc$ and a child $y$ of $x$ such that
    $\ell(y)=(q',d')$;

  \item if $\delta(q) =[ r ] q'$ and $(d,d') \in
    r^\Imc$, then there is a child $y$ of $x$ with $\ell(y)=(q',d')$.

  \end{itemize}
  A run $(T,\ell)$ is \emph{accepting} if, for every path $\pi$ of $T$,
  the maximal $i \in \mathbbm{N}$ with $\{ x \in \pi \mid \ell(x)=(q,d)
  \text{ with } \Omega(q)=i \}$ infinite is even.  We use $L(\Amf)$ to
  denote the language accepted by \Amf, i.e., the set of pointed $\Sigma_N
  \cup \Sigma_E$-interpretations $(\Imc,d)$ such that there is an
  accepting run of \Amf on $(\Imc,d)$.
\end{definition}
%

APTAs can easily be complemented in polynomial time in the same way as
other alternating tree automata, and for all APTAs $\Amf_1$ and
$\Amf_2$, one can construct in polynomial time an APTA that accepts
$L(\Amf_1) \cap L(\Amf_2)$. The emptiness problem for APTAs is
\ExpTime-complete \cite{Wilke-Automata}.

We now describe how APTAs can be used to decide concept entailment in
\ALC. Let $\Tmc_1$ and $\Tmc_2$ be \ALC TBoxes and $\Sigma$ a
signature. By Theorem~\ref{bisimuniform}, $\Tmc_1$ does not
$\Sigma$-concept entail $\Tmc_2$ iff there is a model $\Imc_1$ of
$\Tmc_1$ and a $d_1 \in \Delta^{\Imc_1}$ such that $(\Imc_1,d_1)
\not\sim^{\mn{bisim}}_\Sigma (\Imc_2,d_2)$ for all models $\Imc_2$ of
$\Tmc_2$ and $d_2 \in \Delta^{\Imc_2}$. We first observe that this
still holds when we restrict ourselves to \emph{rooted
  interpretations}, that is, to pointed interpretations $(\Imc_i,d_i)$
such that every $e \in \Delta^{\Imc_i}$ is reachable from $d_i$ by
some sequence of role names. In fact, whenever $(\Imc_1,d_1)
\not\sim^{\mn{bisim}}_\Sigma (\Imc_2,d_2)$, then also $(\Imc^r_1,d_1)
\not\sim^{\mn{bisim}}_\Sigma (\Imc^r_2,d_2)$ where $\Imc^r_i$ is the
restriction of $\Imc_i$ to the elements reachable from $d_i$. Moreover, if 
$\Imc_1$ and $\Imc_2$ are models of $\Tmc_1$ and $\Tmc_2$,
respectively, then the same is true for $\Imc^r_1$ and $\Imc^r_2$.
Rootedness is important because APTAs can obviously not speak about
unreachable parts of a pointed interpretation.  We now construct two
APTAs $\Amf_1$ and $\Amf_2$ such that for all rooted
$\mn{sig}(\Tmc_1)$-interpretations $(\Imc,d)$,
\begin{enumerate}

\item $(\Imc,d) \in L(\Amf_1)$ iff $\Imc \models \Tmc_1$;

\item $(\Imc,d) \in L(\Amf_2)$ iff there exist a model \Jmc of $\Tmc_2$
  and an $e \in \Delta^\Jmc$ such that $(\Imc,d) \sim^{\mn{bisim}}_\Sigma
  (\Jmc,e)$.

\end{enumerate}
Defining $\Amf$ as $\Amf_1 \cap \overline{\Amf_2}$, we then have
$L(\Amf)=\emptyset$ iff $\Tmc_1$ $\Sigma$-concept entails $\Tmc_2$. It
is easy to construct the automaton $\Amf_1$. We only illustrate the idea by an
example. Assume that $\Tmc_1 = \{A \sqsubseteq \neg \forall r
. B\}$. We first rewrite $\Tmc_1$ into the equivalent TBox $\{ \top \sqsubseteq \neg
A \sqcup \exists r . \neg B \}$ and then use an APTA with only state
$q_0$ and
$$
\delta(q_0) = \bigwedge_{s \in \NR} [s] q_0 \wedge (\neg A \vee \langle r \rangle
\neg B).
$$
The acceptance condition is trivial, that is, $\Omega(q_0)=0$. 
The construction of $\Amf_2$ is more interesting. 
We require the notion of a type, which occurs in many constructions
for \ALC. Let $\mn{cl}(\Tmc_2)$ denote the set of concepts used in
$\Tmc_2$, closed under subconcepts and single negation. A type
$t$ is a set $t \subseteq \mn{cl}(\Tmc_2)$ such that, for some model
\Imc of $\Tmc_2$ and some $d \in \Delta^\Imc$, we have $t=\{ C\in {\mn{cl}(\Tmc_2)} \mid d \in C^\Imc\}$.
Let $\mn{TP}(\Tmc_2)$ denote the set of all types for $\Tmc_2$.  For
$t,t' \in \mn{TP}(\Tmc_2)$ and a role name $r$, we write $t
\leadsto_{r} t'$ if (i)~$\forall r . C \in t$ implies $C \in t'$ and (ii)~$C\in
t'$ implies $\exists r . C\in t$ whenever $\exists r.C\in
{\mn{cl}(\Tmc)}$.  Now we define $\Amf_2$ to have state
set $Q = \mn{TP}(\Tmc_2) \uplus \{ q_0 \}$ and the following transitions:
$$
\begin{array}{r@{\;}c@{\;}l}
\delta(q_0)&~=~&\displaystyle \bigvee \mn{TP}(\Tmc_2), 
\\[1mm]
\delta(t)&~=~&\displaystyle \bigwedge_{A \in t \cap \NC \cap \Sigma} A
\wedge \bigwedge_{A \in (\NC \cap \Sigma)\setminus t} \neg A \\[5mm]
&&\wedge \; \displaystyle  \bigwedge_{r \in \Sigma \cap \NR} \!
  [r] \bigvee \{ t' \in \mn{TP}(\Tmc_2) \mid t \leadsto_{r} t'\} 
\\[5mm]
&& \wedge \; \displaystyle
\bigwedge_{\exists r . C \in t, r \in \Sigma} \!\!\!\!\!\! \langle r \rangle
   \bigvee \{ t' \in \mn{TP}(\Tmc_2) \mid t \leadsto_{r} t',~ C \in t' \}. 
\end{array}
$$
Here, the empty conjunction represents \mn{true} and the empty
disjunction represents \mn{false}.  The acceptance condition is again
trivial, but note that this might change with complementation.  The
idea is that $\Amf_2$ (partially) guesses a model $\Jmc$ that is
$\Sigma$-bisimilar to the input interpretation \Imc, represented as
types. Note that $\Amf_2$ verifies only the $\Sigma$-part of \Jmc on
\Imc, and that it might label the same element with different types
(which can then only differ in their non-$\Sigma$-parts). A detailed
proof that the above automaton works as expected is provided in
\cite{DBLP:conf/ijcai/LutzW11}. In summary, we obtain the upper bound in the following
theorem.
\begin{theorem}
  \label{thm:ceupper}
  In \ALC, concept entailment, concept inseparability, and concept
  conservative extensions are 2\ExpTime-complete.
\end{theorem}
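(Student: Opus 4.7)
The plan has two parts: finishing the upper bound that the automata construction above already sets up, and supplying a matching lower bound.

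For the upper bound I would simply combine the APTAs $\Amf_{1}$ and $\Amf_{2}$ from the excerpt into the product automaton $\Amf = \Amf_{1} \cap \overline{\Amf_{2}}$. By properties (1) and (2) together with Theorem~\ref{bisimuniform}, a rooted $\mn{sig}(\Tmc_{1})$-interpretation $(\Imc,d)$ lies in $L(\Amf)$ iff $\Imc \models \Tmc_{1}$ and no model of $\Tmc_{2}$ is $\Sigma$-bisimilar to $(\Imc,d)$, i.e.\ iff $(\Imc,d)$ is a counterexample to $\Sigma$-concept entailment of $\Tmc_{2}$ by $\Tmc_{1}$. Hence deciding concept entailment reduces to emptiness of $\Amf$. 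Counting sizes, $\Amf_{1}$ is polynomial in $|\Tmc_{1}|$, while $\Amf_{2}$ has an exponential state set (one state per type in $\mn{cl}(\Tmc_{2})$, plus $q_{0}$) and its transition function is of at most exponential size; complementation and intersection add only polynomial overhead, so $\Amf$ is of single-exponential size in $|\Tmc_{1}|+|\Tmc_{2}|$. The \ExpTime{} emptiness test for APTAs then yields the 2\ExpTime{} upper bound for concept entailment. Concept inseparability reduces to two entailment checks, and concept conservative extensions are the special case $\Sigma = \mn{sig}(\Tmc_{1})$ with $\Tmc_{1}\subseteq\Tmc_{2}$, so the same bound propagates to all three problems.

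For the matching lower bound I would invoke and adapt the 2\ExpTime-hardness proof for \ALC concept conservative extensions of Ghilardi, Lutz and Wolter. That reduction starts from the word problem of an exponentially space-bounded alternating Turing machine (equivalently, from a suitable exponential tiling problem): one builds $\Tmc_{1} \subseteq \Tmc_{2}$ so that the $\Sigma$-part, with $\Sigma = \mn{sig}(\Tmc_{1})$, encodes acceptance of the input, while auxiliary symbols outside $\Sigma$ are used to lay out an exponentially large grid and to enforce the transition relation inside $\Tmc_{2}$ only. A separating $\Sigma$-concept inclusion then exists precisely when the machine rejects. Because every conservative-extension instance is also an inseparability instance and, via Theorem~\ref{thm:pipapo} (\ALC{} is robust under joins), an entailment instance, the lower bound transfers uniformly to all three decision problems.

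The genuinely delicate step is the lower-bound reduction: the auxiliary vocabulary has to be arranged so that \ALC{} $\Sigma$-bisimulations cannot detect the exponential grid used to simulate the machine, while $\Tmc_{2}$ still forces every model of $\Tmc_{1}$ to be $\Sigma$-bisimilar to a model of $\Tmc_{2}$ exactly when the machine accepts. Everything else — the polynomial inter-reductions between the three problems and the complexity bookkeeping for the automaton — is routine.
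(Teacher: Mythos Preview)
Your proposal is correct and follows essentially the same approach as the paper: the upper bound is obtained exactly as you describe from the automaton $\Amf=\Amf_{1}\cap\overline{\Amf_{2}}$ together with the \ExpTime{} emptiness test, and the lower bound is taken from the Ghilardi--Lutz--Wolter reduction of the word problem for exponentially space-bounded ATMs to concept conservative extensions. One minor remark: you do not need Theorem~\ref{thm:pipapo} to transfer the lower bound to entailment, since conservative extensions are already a special case of entailment (when $\Tmc_{1}\subseteq\Tmc_{2}$, the extension is conservative iff $\Tmc_{1}$ $\mn{sig}(\Tmc_{1})$-concept entails $\Tmc_{2}$), so hardness propagates directly.
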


The sketched APTA-based decision procedure actually yields an upper
bound that is slightly stronger than what is stated in
Theorem~\ref{thm:ceupper}: the algorithm for concept entailment (and
concept conservative extensions) actually runs in time $2^{p(|\Tmc_1|
  \cdot 2^{|\Tmc_2|})}$ for some polynomial $p()$
and is thus only single exponential in $|\Tmc_1|$. For simplicity, in
the remainder of the paper we will typically not explicitly report on
such fine-grained upper bounds that distinguish between different
inputs.

The lower bound stated in Theorem~\ref{thm:ceupper} is proved (for
concept conservative extensions) in \cite{GhilardiLutzWolter-KR06} using a rather
intricate reduction of the word problem of exponentially space bounded
alternating Turing machines (ATMs).  An interesting issue that is
closely related to computational hardness is to analyze the size of
the smallest concept inclusions that witness non-$\Sigma$-concept
entailment of a TBox $\Tmc_2$ by a TBox $\Tmc_1$, that is, of the
members of $\mn{cDiff}_\Sigma(\Tmc_1,\Tmc_2)$. It is shown in
\cite{GhilardiLutzWolter-KR06} for the case of concept conservative extensions in \ALC
(and thus also for concept entailment) that smallest witness
inclusions can be triple exponential in size, but not larger. An
example that shows why witness inclusions can get large is given in
Section~\ref{sect:el}.

\subsection{Concept inseparability for extensions of $\mathcal{ALC}$}

We briefly discuss results on concept inseparability for extensions of
\ALC and give pointers to the literature.

In principle, the machinery and results that we have presented for
$\mathcal{ALC}$ can be adapted to many extensions of $\mathcal{ALC}$,
for example, with number restrictions, inverse roles, and role
inclusions.  To achieve this, the notion of bisimulation has to be
adapted to match the expressive power of the considered DL and the
automata construction has to be modified. In particular, amorphous
automata as used above are tightly linked to the expressive power of
\ALC and have to be replaced by traditional alternating tree automata
(running on trees with fixed outdegree) which requires a slightly more
technical automaton construction.

As an illustration, we only give some brief examples. To obtain an
analogue of Theorem~\ref{theorem:bisimchar} for $\mathcal{ALCI}$, one
needs to extend bisimulations that additionally respect successors
reachable by an inverse role; to obtain such a result for
$\mathcal{ALCQ}$, we need bisimilations that respect the number of
successors~\cite{GorankoOtto,DBLP:conf/ijcai/LutzPW11,rewritecons16}.
Corresponding versions of Theorem~\ref{bisimuniform} can then be proved using techniques
from \cite{DBLP:conf/ijcai/LutzPW11,rewritecons16}.
\begin{example}\label{ALCQ}
Consider the $\mathcal{ALCQ}$ TBoxes
$$
\Tmc_{1}=\{A \sqsubseteq \mathop{\ge 2} r.\top\} \cup \Tmc \quad \text{and} \quad 
\Tmc_{2}=\{A \sqsubseteq \exists r.B\sqcap \exists r.\neg B\} \cup \Tmc,
$$
where $\Tmc$ is an $\mathcal{ALCQ}$ TBox that does not use the concept
name $B$. Suppose $\Sigma=\{A,r\}\cup {\sf sig}(\Tmc)$.  Then $\Tmc_{1}$
and $\Tmc_{2}$ are ${\Sigma}$-concept inseparable in $\mathcal{ALCQ}$. Formally, this can be shown
using the characterizations from \cite{rewritecons16}.
\end{example}
The above approach has not been fully developed in the
literature. However, using more elementary methods, the following
complexity result has been established in \cite{LutzWW07}.
\begin{theorem}
  \label{thm:ceupper1}
  In \ALCQI, concept entailment, concept inseparability, and concept
  conservative extensions are 2\ExpTime-complete.
\end{theorem}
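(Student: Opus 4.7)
My plan is to obtain the 2\ExpTime lower bound by inheritance from \ALC via Theorem~\ref{thm:ceupper} and the matching upper bound by generalizing the APTA-based algorithm just described. For the lower bound, every \ALC TBox is an \ALCQI TBox and the hardness reduction for concept conservative extensions from \cite{GhilardiLutzWolter-KR06} stays within \ALC, so 2\ExpTime-hardness transfers verbatim to concept conservative extensions in \ALCQI, and hence (as already noted in the introduction) to concept inseparability and concept entailment.

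For the upper bound, the first step is to strengthen the notion of $\Sigma$-bisimulation so that it matches the expressive power of \ALCQI. Inverse roles require zig and zag clauses for both $r$ and $r^{-}$ whenever $r \in \Sigma \cap \NR$, while number restrictions force passing to \emph{$n$-counting} bisimulations, in which, for each element, the $S$-images along each $\Sigma$-role (and its inverse) must match up to a threshold $n$ equal to the largest number occurring in $\Tmc_{1}\cup\Tmc_{2}$. A Hennessy--Milner style argument in the spirit of \cite{rewritecons16,DBLP:conf/ijcai/LutzPW11} then yields the \ALCQI analogue of Theorem~\ref{theorem:bisimchar}, and a quasi-forest unravelling of outdegree polynomial in $|\Tmc_{1}\cup\Tmc_{2}|$ delivers the analogue of Theorem~\ref{bisimuniform}. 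Given this, I would construct two two-way alternating parity tree automata (2APTAs) running on such quasi-forests: an automaton $\Amf_{1}$ of size polynomial in $|\Tmc_{1}|$ accepting the rooted quasi-forest models of $\Tmc_{1}$, and an automaton $\Amf_{2}$ of size exponential in $|\Tmc_{2}|$ accepting those rooted quasi-forests that are $\Sigma$-$n$-counting-bisimilar to some model of $\Tmc_{2}$. Following the pattern of $\Amf_{2}$ from the \ALC case, $\Amf_{2}$ partially guesses a $\Tmc_{2}$-type at each node together with a truncated multiset of $\Tmc_{2}$-types realised among its $\Sigma$-successors and $\Sigma$-predecessors, and verifies local consistency with $\Tmc_{2}$ and agreement of these multisets with the actual $\Sigma$-neighbourhood in the input. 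Complementing $\Amf_{2}$, intersecting with $\Amf_{1}$, and deciding emptiness of the resulting 2APTA yields a 2\ExpTime procedure, since 2APTA emptiness is solvable in time exponential in the state count.

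The main obstacle is the design and correctness of $\Amf_{2}$. In the \ALC case one can exploit unrestricted tree-unravelling and free duplication of $\Sigma$-successors to reach any $\Sigma$-bisimilar target model, which is why amorphous APTAs suffice. Under number restrictions on $\Sigma$-roles, the target must be reproduced with exact cardinalities up to $n$, and inverse roles force the guessed $\Tmc_{2}$-type at a node to be simultaneously consistent with constraints inherited from both its parent and its children. Faithfully encoding this bidirectional, counting consistency in a 2APTA while keeping the state set only singly exponential in $|\Tmc_{2}|$---so that the overall algorithm stays in 2\ExpTime---is the delicate point; the necessary techniques can be adapted from the \ALCQI automata constructions in \cite{rewritecons16,DBLP:conf/ijcai/LutzPW11} and the elementary approach of \cite{LutzWW07}.
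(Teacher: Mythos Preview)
Your overall plan is sound, but it diverges from what the paper actually does for this theorem. The paper does not prove Theorem~\ref{thm:ceupper1} via the automata route; it explicitly remarks that the bisimulation-plus-automata approach ``has not been fully developed in the literature'' for extensions of \ALC, and instead cites the \emph{elementary} methods of \cite{LutzWW07} for the 2\ExpTime bound. So your proposal is precisely the alternative the survey gestures at but leaves open, whereas the paper's own argument is a black-box reference to a direct algorithm (essentially a type-elimination/tableau procedure with an explicit analysis of counterexample size). What your route buys is uniformity with the \ALC treatment and a clean separation into a model-theoretic characterisation plus an automaton emptiness test; what the elementary route of \cite{LutzWW07} buys is that it is actually worked out and yields the triple-exponential bound on smallest witness inclusions as a by-product.

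One point to watch in your sketch: encoding a ``truncated multiset of $\Tmc_{2}$-types'' as part of the automaton state, as you suggest, would naively give $(n{+}1)^{|\mn{TP}(\Tmc_2)|}$ many states, i.e.\ doubly exponential, pushing the procedure to 3\ExpTime. The fix is not to store the multiset in the state but to let the transition formula of the 2APTA quantify over type assignments to the (polynomially many) children and verify the counting constraints of the current type against that assignment; since the outdegree is polynomial in $|\Tmc_1 \cup \Tmc_2|$ and the number of types is singly exponential, the resulting transition formulas are of singly exponential size, which is enough for the 2\ExpTime emptiness test. You correctly flag this as the delicate step, but the specific encoding you describe would not meet the bound without this adjustment.
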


It is also shown in \cite{LutzWW07} that, in \ALCQI, smallest
counterexamples are still triple exponential, and that further
adding nominals to \ALCQI results in undecidability.
\begin{theorem}
  \label{thm:ceupper2}
  In \ALCQIO, concept entailment, concept inseparability, and concept
  conservative extensions are undecidable.
\end{theorem}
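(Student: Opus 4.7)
The plan is to reduce from an undecidable problem such as the recurring domino problem on $\mathbb{N} \times \mathbb{N}$ (or equivalently, the halting problem for deterministic Turing machines on the empty tape). The reduction produces, from an instance $P$ of the chosen problem, two $\ALCQIO$ TBoxes $\Tmc_1 \subseteq \Tmc_2$ and a signature $\Sigma = \sig(\Tmc_1)$ such that $\Tmc_2$ is a concept conservative extension of $\Tmc_1$ if and only if $P$ has no solution. Because concept conservative extensions are the special case of concept entailment in which $\Tmc_1 \subseteq \Tmc_2$ and $\Sigma = \sig(\Tmc_1)$, undecidability transfers immediately to concept entailment and concept inseparability.

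The TBox $\Tmc_1$ uses only the $\Sigma$-signature, comprising a concept name per tile type together with two role names $h,v$ for the horizontal and vertical successors, and encodes only the local tile-matching constraints as CIs. Its models need not look like a grid at all, so the $\Sigma$-theory of $\Tmc_1$ is rich. The TBox $\Tmc_2$ extends $\Tmc_1$ by axioms that use non-$\Sigma$ auxiliary symbols together with nominals, inverse roles and number restrictions in order to force a genuine $\mathbb{N} \times \mathbb{N}$ grid: a nominal pins down the origin, functionality of $h$ and $v$ is imposed using $\qnrleq{1}{h}{\top}$ and $\qnrleq{1}{v}{\top}$, injectivity is obtained by combining inverse roles with number restrictions on $h^-$ and $v^-$, and the confluence identity $h \circ v = v \circ h$ is enforced cell-by-cell using auxiliary nominal-named markers propagated through $h$ and $v$. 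This is the standard ``nominals~$+$~inverses~$+$~counting'' grid-encoding trick and is precisely why the undecidability appears only upon adding $\Omc$ to $\ALCQI$.

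The heart of the proof is then to show the equivalence: there is a $\Sigma$-witness $C \sqsubseteq D$ separating $\Tmc_1$ from $\Tmc_2$ iff $P$ has a solution. The direction $(\Leftarrow)$ is obtained by exhibiting, from a solution of $P$, a $\Sigma$-concept $C$ that is satisfiable w.r.t.~$\Tmc_1$ but whose models cannot be extended to models of $\Tmc_2$ unless the corresponding grid carries a valid solution of $P$, so that a suitable $\Sigma$-consequence of $\Tmc_2$ is not a $\Sigma$-consequence of $\Tmc_1$. For $(\Rightarrow)$, given any $\Sigma$-concept $C$ satisfiable w.r.t.~$\Tmc_1$, one constructs a model of $\Tmc_2$ in which $C$ is still satisfied by attaching a grid (produced from the assumed non-solution of $P$) and choosing the auxiliary non-$\Sigma$-symbols so that the grid axioms are satisfied without constraining the $\Sigma$-part.

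The main obstacle is the second direction: one must show that the non-$\Sigma$ gadget machinery in $\Tmc_2$ is ``conservative enough'' over the $\Sigma$-vocabulary, namely that every $\Sigma$-restricted model of $\Tmc_1$ admits an expansion to a model of $\Tmc_2$ whenever no solution of $P$ exists. This requires decoupling the enforced grid from the original model, typically by taking a disjoint union with a canonical grid model supplied by the (assumed) non-existence of a solution, and carefully verifying that the nominals, being singletons, do not force unwanted identifications between the two parts. The intricate interaction between nominals and number restrictions is what makes this step delicate and is also what makes the corresponding problem decidable once nominals or counting are removed, as witnessed by Theorem~\ref{thm:ceupper1}.
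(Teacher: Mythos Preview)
The paper does not prove this theorem; it only cites \cite{LutzWW07}. Your high-level plan—a tiling/halting reduction that exploits the combination of nominals, inverse roles and number restrictions—matches the approach taken there, but the concrete mechanism you describe has a genuine gap.

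You claim that $\Tmc_2$ ``forces a genuine $\mathbb{N}\times\mathbb{N}$ grid'' with the confluence $h\circ v = v\circ h$ ``enforced cell-by-cell using auxiliary nominal-named markers propagated through $h$ and $v$''. This cannot be done by a fixed finite $\ALCQIO$ TBox. Satisfiability in $\ALCQIO$ is decidable (\NExpTime-complete), so no $\ALCQIO$ TBox can have the property that it is consistent iff the given tile set admits a tiling of $\mathbb{N}\times\mathbb{N}$. Yet that is exactly what your construction would yield: since $\Tmc_1\subseteq\Tmc_2$ already contains the local matching constraints, if $\Tmc_2$ additionally forced an infinite confluent grid carrying tile labels, then $\Tmc_2$ would be consistent precisely when a valid tiling exists, and you would have decided an undecidable problem. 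A finite supply of nominals cannot mark infinitely many cells, and there is no way to ``propagate'' finitely many markers so as to enforce confluence throughout an infinite structure.

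What the actual reduction exploits is that the \emph{separating} $\Sigma$-inclusion $C\sqsubseteq D$ may be arbitrarily large. The grid is not produced by $\Tmc_2$ alone; rather, larger and larger $\Sigma$-concepts $C$ describe tree-shaped approximations of grid fragments of growing size, and the non-$\Sigma$ machinery in $\Tmc_2$ (anchored at a nominal, using inverse roles and counting) forces identifications that collapse such a tree into an actual finite grid fragment. One then arranges the encoding so that a separating $\Sigma$-inclusion exists for some size $n$ iff the tiling (or computation) has the target property. Your plan places the entire burden of grid construction on $\Tmc_2$, which is impossible; the unbounded size of the separating concepts is essential, and this idea is missing from your proposal.

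Two smaller points. First, the recurring domino problem is $\Sigma_1^1$-complete while the halting problem is $\Sigma_1^0$-complete; they are not equivalent, so pick one and fix the polarity. Second, your descriptions of the two directions are hard to reconcile as written: in ($\Rightarrow$) you propose to ``attach a grid produced from the assumed non-solution of $P$'', but if a non-solution means no tiling exists there is nothing to attach; and in ($\Leftarrow$) you never explain which $\Sigma$-inclusion actually separates. Both symptoms trace back to the same underlying issue above.
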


For a number of prominent extensions of \ALC, concept inseparability
has not yet been investigated in much detail. This particularly
concerns extensions with transitive roles~\cite{DBLP:journals/logcom/HorrocksS99}.
We note that it is not
straightforward to lift the above techniques to DLs with transitive
roles; see \cite{GLWZ} where conservative extensions in modal logics
with transitive frames are studied and \cite{French} in which modal
logics with bisimulation quantifiers (which are implicit in
Theorem~\ref{bisimuniform}) are studied, including cases with
transitive frame classes. As illustrated in Section~\ref{sect:final}, extensions of \ALC with the universal
role are also an interesting subject to study.


\subsection{Concept inseparability for $\mathcal{EL}$}
\label{sect:el}

We again start with model-theoretic characterizations and then proceed
to decision procedures, complexity, and the length of
counterexamples. In contrast to \ALC, we use simulations, which
intuitively are `half a bisimulation', much like \EL is `half of
\ALC'. The precise definition is as follows.
\begin{definition}[$\Sigma$-simulation]\em 
  Let $\Sigma$ be a finite signature and $(\Imc_{1},d_{1})$,
  $(\Imc_{2},d_{2})$ pointed interpretations. A relation $S \subseteq
  \Delta^{\Imc_{1}}\times \Delta^{\Imc_{2}}$ is a
  \emph{$\Sigma$-simulation} from $(\Imc_{1},d_{1})$ to
  $(\Imc_{2},d_{2})$ if $(d_{1},d_{2})\in S$ and, for all $(d,d') \in
  S$, the following conditions are satisfied:
\begin{description}

  \item[\rm (base$^\ell$)] if $d \in A^{\Imc_{1}}$, then $d' \in A^{\Imc_{2}}$,
     for all $A \in \Sigma\cap \NC$;

  \item[\rm (zig)] if $(d,e) \in r^{\Imc_{1}}$, then there exists $e'\in \Delta^{\Imc_{2}}$ such
that $(d',e') \in r^{\Imc_{2}}$ and $(e,e')\in S$, for all $r \in \Sigma\cap \NR$.

\end{description}
We say that
$(\Imc_{2},d_{2})$ \emph{$\Sigma$-simulates} 
$(\Imc_{1},d_{1})$ and write $(\Imc_{1},d_{1}) \leq_\Sigma^{{\sf sim}} (\Imc_{2},d_{2})$
if there exist a $\Sigma$-simulation from
$(\Imc_{1},d_{1})$ to $(\Imc_{2},d_{2})$. 
We say that
$(\Imc_{1},d_{1})$  and 
$(\Imc_{2},d_{2})$  are 
\emph{$\Sigma$-equisimilar}, in symbols $(\Imc_{1},d_{1}) \sim_\Sigma^{{\sf esim}}
(\Imc_{2},d_{2})$, if 
both 
 $(\Imc_{1},d_{1}) \leq_\Sigma^{{\sf sim}} (\Imc_{2},d_{2})$ and
 $(\Imc_{2},d_{2}) \leq_\Sigma^{{\sf sim}} (\Imc_{1},d_{1})$.
\end{definition}

A pointed interpretation $(\Imc_{1},d_{1})$ is \emph{$\mathcal{EL}_{\Sigma}$-contained in} $(\Imc_{2},d_{2})$,
in symbols $(\Imc_{1},d_{1})\leq_{\Sigma}^{\mathcal{EL}} (\Imc_{2},d_{2})$, if 
$d_{1}\in C^{\Imc_{1}}$ implies $d_{2}\in C^{\Imc_{2}}$, for all $\mathcal{EL}_{\Sigma}$-concepts $C$.
We call pointed interpretations $(\Imc_{1},d_{1})$ and $(\Imc_{2},d_{2})$ 
\emph{$\mathcal{EL}_{\Sigma}$-equivalent}, in symbols
$(\Imc_{1},d_{1})\equiv_{\Sigma}^{\mathcal{EL}}(\Imc_{2},d_{2})$, in case 
 $(\Imc_{1},d_{1})\leq_{\Sigma}^{\EL}(\Imc_{2},d_{2})$  and 
 $(\Imc_{2},d_{2})\leq_{\Sigma}^{\EL}(\Imc_{1},d_{1})$.
The following was shown in \cite{DBLP:journals/jsc/LutzW10,DBLP:conf/kr/LutzSW12}.
\begin{theorem}\label{theorem:equisimchar}
  Let $(\Imc_1,d_1)$ and $(\Imc_2,d_2)$ be pointed interpretations and
  $\Sigma$ a signature. Then $(\Imc_{1},d_{1}) \leq_\Sigma^{{\sf sim}}
  (\Imc_{2},d_{2})$ implies $(\Imc_{1},d_{1})
  \leq_{\Sigma}^{\mathcal{EL}} (\Imc_{2},d_{2})$.  The converse holds
  if $\Imc_1$ and $\Imc_2$ are of finite outdegree.
\end{theorem}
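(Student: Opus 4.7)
The statement has two directions that mirror the bisimulation case (Theorem~\ref{theorem:bisimchar}), but with the asymmetry coming from the one-sided nature of simulations and the fact that $\mathcal{EL}$ has only positive constructors.

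For the forward implication, I would prove a strengthened statement by induction on the structure of $C$: for every $\Sigma$-simulation $S$ from $(\Imc_1,d_1)$ to $(\Imc_2,d_2)$, every $(d,d') \in S$, and every $\mathcal{EL}_\Sigma$-concept $C$, we have $d \in C^{\Imc_1}$ implies $d' \in C^{\Imc_2}$. The case $C=\top$ is trivial, the case $C=A$ with $A \in \Sigma \cap \NC$ is exactly condition (base$^\ell$), the case $C = C_1 \sqcap C_2$ is immediate from the inductive hypothesis, and for $C = \exists r.C'$ with $r \in \Sigma$ one uses (zig) to pass from the witness $e$ of $d \in (\exists r.C')^{\Imc_1}$ to some $r$-successor $e'$ of $d'$ with $(e,e') \in S$, and then applies the inductive hypothesis to the pair $(e,e')$. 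No finite outdegree assumption is needed here.

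For the converse implication, assuming finite outdegree, I would take the canonical candidate
\[
S \;=\; \{(d,d') \in \Delta^{\Imc_1} \times \Delta^{\Imc_2} \mid (\Imc_1,d) \leq_\Sigma^{\mathcal{EL}} (\Imc_2,d')\}
\]
and show that it is a $\Sigma$-simulation from $(\Imc_1,d_1)$ to $(\Imc_2,d_2)$. Containment of the root pair is the assumption, and (base$^\ell$) follows because every $A \in \Sigma \cap \NC$ is itself an $\mathcal{EL}_\Sigma$-concept. The main obstacle is (zig). Given $(d,d') \in S$ and $(d,e) \in r^{\Imc_1}$ with $r \in \Sigma$, I would argue by contradiction: suppose no $r$-successor of $d'$ is $\mathcal{EL}_\Sigma$-contained by $e$. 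By finite outdegree, enumerate the $r$-successors of $d'$ as $e'_1,\dots,e'_n$. For each $i$, pick an $\mathcal{EL}_\Sigma$-concept $C_i$ with $e \in C_i^{\Imc_1}$ and $e'_i \notin C_i^{\Imc_2}$, and form $C = C_1 \sqcap \cdots \sqcap C_n$ (the empty conjunction being $\top$ if $n=0$). Then $d \in (\exists r.C)^{\Imc_1}$, so by $(d,d') \in S$ we get $d' \in (\exists r.C)^{\Imc_2}$, yielding some $e'_i$ with $e'_i \in C^{\Imc_2} \subseteq C_i^{\Imc_2}$, a contradiction; if $n=0$, the same calculation already contradicts $d'$ having no $r$-successor.

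The main obstacle, as in the bisimulation case, is the converse direction: finite outdegree is essential because one must collect a finite conjunction of separating $\mathcal{EL}_\Sigma$-concepts to obtain the single witnessing existential restriction. A standard adaptation of the infinite chain counterexample from Example~\ref{boundedoutdegree} shows that without this assumption even $\mathcal{EL}_\Sigma$-equivalence fails to imply the existence of a $\Sigma$-simulation in both directions.
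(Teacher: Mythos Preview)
Your proof is correct and is the standard argument for this result. Note, however, that the paper does not actually prove Theorem~\ref{theorem:equisimchar}; it simply states the result and cites \cite{DBLP:journals/jsc/LutzW10,DBLP:conf/kr/LutzSW12}. Your argument is precisely the one found in those references: structural induction on $\mathcal{EL}_\Sigma$-concepts for the forward direction, and for the converse taking $S=\{(d,d')\mid (\Imc_1,d)\leq_\Sigma^{\mathcal{EL}}(\Imc_2,d')\}$ and using finite outdegree to form a finite separating conjunction in the (zig) step. One minor observation: your argument for (zig) in fact only uses finite outdegree of $\Imc_2$, so the converse holds under this weaker hypothesis; the paper's statement with both interpretations of finite outdegree is simply not the sharpest possible. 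Your reference to Example~\ref{boundedoutdegree} for the necessity of the outdegree assumption is exactly what the paper does in the sentence following the theorem.
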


The interpretations given in Example~\ref{boundedoutdegree} can be
used to show that the converse direction in
Theorem~\ref{theorem:equisimchar} does not hold in general (since
$(\Imc_{2},d_{2})\not\leq_{\Sigma}^{\sf sim} (\Imc_{1},d_{1})$).  It
is instructive to see pointed interpretations that are equisimilar but
not bisimilar.
\begin{example}
  Consider the interpretations $\Imc_{1}=(\{d_{1},e_{1}\}$,
  $A^{\Imc_{1}}=\{e_{1}\}$, $r^{{\Imc}_{1}}=\{(d_{1},e_{1})\})$ and
  $\Imc_{2}=(\{d_{2},e_{2},e_{3}\}$, 
  $A^{\Imc_{2}}=\{e_{2}\}$, $r^{{\Imc}_{2}}=\{(d_{2},e_{2}),(d_{2},e_{3})\})$ and
  let $\Sigma= \{r,A\}$. Then $(\Imc_{1},d_{1})$ and $(\Imc_{2},d_{2})$ are
  $\Sigma$-equisimilar but not $\Sigma$-bisimilar.

\begin{center}
  \begin{tikzpicture}
    \begin{scope}
      \node at (-0.8,0) {$\Imc_1$};

      \foreach \al/\x/\y/\lab\wh in {%
        d/0/0/d_1/above,%
        e/0/-1/e_1/below%
      }{\node[point, label=\wh:{\vertexfont $\lab$}] (\al) at (\x,\y) {};}

      \node[anchor=east] at (e.west) {\edgefont $A$};

      \foreach \from/\to/\lab/\wh in {%
        d/e/{r}/left%
      }{ \draw[role] (\from) -- node[\wh] {\edgefont $\lab$} (\to); }
    \end{scope}

    \begin{scope}[xshift=2.5cm]
      \node at (0.8,0) {$\Imc_2$};

      \foreach \al/\x/\y/\lab/\wh in {%
        d/0/0/{d_2}/above,%
        x1/-0.5/-1/e_2/below,%
        x2/0.5/-1/e_3/below%
      }{\node[point, label=\wh:{\vertexfont$\lab$}] (\al) at (\x,\y) {};}

      \node[anchor=east] at (x1.west) {\edgefont $A$};

      \foreach \from/\to/\lab/\wh in {%
        d/x1/r/left, d/x2/r/right%
      }{ \draw[role] (\from) -- node[\wh] {\edgefont $\lab$} (\to); }
    \end{scope}

  \end{tikzpicture}
\end{center}
\end{example}

Similar to Theorem~\ref{bisimuniform}, $\Sigma$-equisimilarity can be
used to give a model-theoretic characterization of concept entailment
(and thus also concept inseparability and concept conservative
extensions) in \EL \cite{DBLP:conf/kr/LutzSW12}.


\begin{theorem}\label{equisimuniform}
  Let $\Tmc_{1}$ and $\Tmc_{2}$ be $\mathcal{EL}$ TBoxes and $\Sigma$
  a signature.  Then $\Tmc_{1}$ $\Sigma$-concept entails $\Tmc_{2}$
  iff, for any model $\Imc_{1}$ of $\Tmc_{1}$ and any $d_{1}\in
  \Delta^{\Imc_{1}}$, there exist a model $\Imc_{2}$ of $\Tmc_{2}$ and
  $d_{2}\in\Delta^{\Imc_{2}}$ such that $(\Imc_{1},d_{1})
  \sim_\Sigma^{{\sf esim}} (\Imc_{2},d_{2})$.
\end{theorem}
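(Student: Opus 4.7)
The proof has two directions, with the right-to-left direction being a direct application of Theorem~\ref{theorem:equisimchar} and the left-to-right direction requiring real work. For $(\Leftarrow)$, I would fix an $\mathcal{EL}_\Sigma$-CI $C \sqsubseteq D$ with $\Tmc_2 \models C \sqsubseteq D$ and any $\Imc_1 \models \Tmc_1$ with $d_1 \in C^{\Imc_1}$, then invoke the hypothesis to obtain $(\Imc_2, d_2)$ with $\Imc_2 \models \Tmc_2$ and $(\Imc_1, d_1) \sim_\Sigma^{\sf esim} (\Imc_2, d_2)$. Since the forward direction of Theorem~\ref{theorem:equisimchar} is unconditional, the half $(\Imc_1, d_1) \leq_\Sigma^{\sf sim} (\Imc_2, d_2)$ yields $d_2 \in C^{\Imc_2}$, so $d_2 \in D^{\Imc_2}$ by $\Imc_2 \models \Tmc_2$, and the reverse half then gives $d_1 \in D^{\Imc_1}$.

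For $(\Rightarrow)$, assume $\Tmc_1$ $\Sigma$-concept entails $\Tmc_2$, and fix $\Imc_1 \models \Tmc_1$ and $d_1 \in \Delta^{\Imc_1}$; let $t^+(d_1)$ be its positive $\mathcal{EL}_\Sigma$-type. My first move is a compactness argument producing a model of $\Tmc_2$ that is $\mathcal{EL}_\Sigma$-equivalent to $(\Imc_1, d_1)$. I would consider the first-order theory
\[
\Theta \;=\; \Tmc_2 \;\cup\; \{C(c) \mid C \in t^+(d_1)\} \;\cup\; \{\neg D(c) \mid D \text{ an $\mathcal{EL}_\Sigma$-concept with } d_1 \notin D^{\Imc_1}\}
\]
with $c$ a fresh constant. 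Finite consistency is exactly where the hypothesis is used: an inconsistent finite fragment would entail $\Tmc_2 \models C_1 \sqcap \cdots \sqcap C_n \sqsubseteq D$ for some $C_i \in t^+(d_1)$ and $D \notin t^+(d_1)$; closure of $\mathcal{EL}_\Sigma$ under finite conjunction together with $\Sigma$-concept entailment would then give $\Tmc_1 \models C_1 \sqcap \cdots \sqcap C_n \sqsubseteq D$ and hence $d_1 \in D^{\Imc_1}$, a contradiction. Compactness yields a model $(\Imc_2^0, d_2) \models \Theta$.

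The next step is to upgrade $\mathcal{EL}_\Sigma$-equivalence to $\Sigma$-equisimilarity. I would pass to $\omega$-saturated elementary extensions $\Imc_1^*$ of $\Imc_1$ and $(\Imc_2^0)^*$ of $\Imc_2^0$, which still model $\Tmc_1$ and $\Tmc_2$ and in which $d_1, d_2$ remain $\mathcal{EL}_\Sigma$-equivalent (as $\mathcal{EL}$-concepts are first-order). A Hennessy-Milner style argument then shows that $S = \{(d, d') \mid (\Imc_1^*, d) \equiv_\Sigma^{\EL} ((\Imc_2^0)^*, d')\}$ is a $\Sigma$-equisimulation: given $(d,d') \in S$ and an $r$-successor $e$ of $d$ with $r \in \Sigma$, the one-type asserting $r(d',x)$ plus every $\mathcal{EL}_\Sigma$-concept satisfied by $e$ is finitely consistent over $d'$ (by $\mathcal{EL}_\Sigma$-equivalence of $d$ and $d'$), so by $\omega$-saturation it is realized, giving the required matching successor; the symmetric argument handles the reverse direction.

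The main obstacle is transferring the equisimilarity back to the \emph{original} $\Imc_1$ rather than $\Imc_1^*$: the restriction of $S$ gives a $\Sigma$-simulation $\Imc_1 \to (\Imc_2^0)^*$ by elementarity, but the reverse simulation $(\Imc_2^0)^* \to \Imc_1^*$ may route through elements of $\Delta^{\Imc_1^*} \setminus \Delta^{\Imc_1}$, and a naive restriction is not guaranteed to zig-zag. I expect the fix (as in the cited proof) to bypass the abstract saturation construction of $\Imc_2$ in favour of a direct chase-style construction on top of $\Imc_1$: introduce fresh elements only for non-$\Sigma$ existentials of $\Tmc_2$, and argue inductively that $\Sigma$-concept entailment prevents $\Tmc_2$ from ever forcing a new $\Sigma$-fact at an element of $\Delta^{\Imc_1}$, because any such forced fact would already be $\mathcal{EL}_\Sigma$-entailed from the type of that element and hence present in $\Imc_1 \models \Tmc_1$. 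The identity on $\Delta^{\Imc_1}$ then extends to a $\Sigma$-equisimulation between $(\Imc_1, d_1)$ and the constructed $(\Imc_2, d_1)$, completing the argument.
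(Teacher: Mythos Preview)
The paper does not prove this theorem; it cites \cite{DBLP:conf/kr/LutzSW12} and then \emph{illustrates} the statement on Example~\ref{ex:e7}. So there is no in-paper proof to compare against, and your proposal must be judged on its own.

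Your $(\Leftarrow)$ direction is fine. For $(\Rightarrow)$, compactness plus $\omega$-saturation is the natural first attempt, and you correctly diagnose the obstacle: the backward simulation lands in $\Imc_1^*$ rather than in $\Imc_1$, and there is no cheap way to pull it down. This is a genuine issue; the analogous Theorem~\ref{bisimuniform} for $\mathcal{ALC}$ likewise defers the general case to an external reference for exactly this reason.

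Your proposed fix has the right shape (build $\Imc_2$ by chasing $\Imc_1$ with $\Tmc_2$), but the description is wrong in a way that matters. You say one should ``introduce fresh elements only for non-$\Sigma$ existentials of $\Tmc_2$'' and argue that entailment prevents new $\Sigma$-facts at points of $\Delta^{\Imc_1}$. The paper's own illustration after the theorem already shows this fails: to satisfy $\mathsf{Human} \sqsubseteq \exists\,\mathsf{eats}.\mathsf{Food}$ one \emph{must} add a fresh $\mathsf{new}(e)$ connected via the $\Sigma$-role $\mathsf{eats}$, because reusing the existing $\mathsf{eats}$-successor may place it in $\mathsf{Food} \sqcap \mathsf{Plant}$ and hence force the $\Sigma$-concept $\mathsf{Vegetarian}$ on it, destroying the backward simulation. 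The backward simulation there is not the identity but sends each $\mathsf{new}(e)$ to a carefully chosen $\mathsf{old}(e)$ in $\Imc_1$, available only because $\Tmc_1 \models \mathsf{Human} \sqsubseteq \exists\,\mathsf{eats}.\top$. In general, the chase will create subtrees reachable via $\Sigma$-roles, and the backward simulation must be built by an induction along the chase that invokes $\Sigma$-concept entailment at \emph{every} fresh node (not only at $\Delta^{\Imc_1}$) to locate a simulating image in $\Imc_1$. That inductive back-mapping is where the substantive work lies, and your sketch does not supply it.
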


We illustrate Theorem~\ref{equisimuniform} by proving that the TBoxes $\Tmc_{1}$ and $\Tmc_{2}$ from
Example~\ref{ex:e7} are ${\Sigma}$-concept inseparable in $\mathcal{EL}$.

\begin{example}
Recall that $\Sigma={\sf sig}(\Tmc_{1})$ and 
\begin{align*}
\Tmc_{1} &= \{{\sf Human} \sqsubseteq \exists {\sf eats}.\top, \ 
{\sf Plant} \sqsubseteq \exists {\sf grows\_in}.{\sf Area}, \ 
{\sf Vegetarian} \sqsubseteq {\sf Healthy}\},\\
\Tmc_{2} &= \Tmc_{1} \cup
\{{\sf Human} \sqsubseteq \exists {\sf eats}.{\sf Food}, \ 
{\sf Food} \sqcap {\sf Plant} \sqsubseteq {\sf Vegetarian}\}.
\end{align*}
Let $\Imc$ be a model of $\Tmc_{1}$ and $d\in \Delta^{\Imc}$.  We may
assume that ${\sf Food}^{\Imc}=\emptyset$.  Define $\Imc'$ by adding,
for every $e\in {\sf Human}^{\Imc}$, a fresh individual ${\sf new}(e)$
to $\Delta^{\Imc}$ with $(e,{\sf new}(e))\in {\sf eats}^{\Imc'}$ and
${\sf new}(e)\in {\sf Food}^{\Imc'}$.  Clearly, $\Imc'$ is a model of
$\Tmc_{2}$.  We show that $(\Imc,d)$ and $(\Imc',d)$ are
$\Sigma$-equisimilar.  The identity $\{(e,e) \mid e\in
\Delta^{\Imc}\}$ is obviously a $\Sigma$-simulation from $(\Imc,d)$ to
$(\Imc',d)$.  Conversely, pick for each $e\in {\sf Human}^{\Imc'}$ an
${\sf old}(e)\in \Delta^{\Imc}$ with $(e,{\sf old}(e))\in {\sf
  eats}^{\Imc}$, which must exist by the first CI of $\Tmc_{1}$. It
can be verified that
$$
S= \{ (e,e)\mid e\in \Delta^{\Imc}\} \cup \{({\sf new}(e),{\sf old}(e))\mid
e\in \Delta^{\Imc}\}
$$
is a $\Sigma$-simulation from $(\Imc',d)$ to $(\Imc,d)$.  Note that
$(\Imc,d)$ and $(\Imc',d)$ are not guaranteed to be $\Sigma$-bisimilar.
\end{example}

As in the \ALC case, Theorem~\ref{equisimuniform} gives rise to a
decision procedure for concept entailment based on tree
automata. However, we can now get the complexity down to \ExpTime.\footnote{An alternative elementary proof is given in \cite{DBLP:journals/jsc/LutzW10}.} To
achieve this, we define the automaton $\Amf_2$ in a more careful way
than for \ALC, while we do not touch the construction of $\Amf_1$. Let
$\mn{sub}(\Tmc_2)$ denote the set of concepts that occur in $\Tmc_2$,
closed under subconcepts. For any $C \in \mn{sub}(\Tmc_2)$, we use
$\mn{con}_\Tmc(C)$ to denote the set of concepts $D \in
\mn{sub}(\Tmc_2)$ such that $\Tmc \models C \sqsubseteq D$.  We define
the APTA based on the set of states
$$
  Q=\{q_0\} \uplus \{q_C, \overline{q}_C \mid C \in 
  \mn{sub}(\Tmc_2) \}, 
$$
where $q_0$ is the starting state. The transitions are as follows:
$$
\begin{array}{r@{\;}c@{\;}l}
  \delta(q_0)&=&\displaystyle \bigwedge_{C \in \mn{sub}(\Tmc_2)} ( q_C \vee \overline{q}_C ) \wedge \bigwedge_{r \in \Sigma} [r] q_0, \\[5mm]
  \delta(q_A)&=& \displaystyle A \wedge \bigwedge_{C \in \mn{con}_\Tmc(A)} q_C \text{ and } \delta(\overline{q}_A)= \neg A \text{ for all } A \in \mn{sub}(\Tmc_2) 
  \cap \NC \cap \Sigma, \\[1mm]
  \delta(q_{C \sqcap D})&=& \displaystyle q_C \wedge q_D  \wedge
  \bigwedge_{E \in \mn{con}_\Tmc(C \sqcap D)} q_E
  \text{ and }  \\[1mm]
\delta(\overline{q}_{C \sqcap D})&=& \overline{q}_C \vee \overline{q}_D 
\text{ for all } C \sqcap D \in \mn{sub}(\Tmc_2), 
  \\[1mm]
  \delta(q_{\exists r . C})&=& \displaystyle \langle r \rangle q_C
  \wedge \bigwedge_{D \in \mn{con}_\Tmc(\exists r . C)} q_D
  \text{ and }  \\[1mm]
\delta(\overline{q}_{\exists r . C}) &=& [r] \overline{q}_C
\text{ for all } \exists r . C \in \mn{sub}(\Tmc_2) \text{ with } r\in \Sigma,
  \\[1mm]
  \delta(q_\top)&=& \displaystyle \bigwedge_{C \in \mn{con}_\Tmc(\top)} q_C \text{
    and } \delta(\overline{q}_\top)= \mn{false} .
\end{array}
$$
Observe that, in each case, the transition for $\overline{q}_C$ is the
dual of the transition for $q_C$, except that the latter has an
additional conjunction pertaining to
$\mn{con}_\Tmc$. 
As before, we set $\Omega(q)=0$ for all $q \in Q$. An essential
difference
between the above APTA $\Amf_2$ and the one that we had 
constructed for \ALC is that the latter had to look at sets of
subconcepts (in the form of a type) while the automaton above
always considers only a single subconcept at the time.
A proof that the above automaton works as expected can be extracted
from \cite{DBLP:conf/kr/LutzSW12}. 
\begin{theorem}
  \label{thm:ceupperel}
  In \EL, concept entailment, concept inseparability, and concept
  conservative extensions are \ExpTime-complete.
\end{theorem}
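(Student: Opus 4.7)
The plan is to establish the \ExpTime upper bound via the APTA $\Amf_2$ just defined, combined with the $\Amf_1$ inherited from the $\mathcal{ALC}$ case, and to obtain the matching lower bound by importing a known \ExpTime-hardness reduction for concept conservative extensions in \EL.

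First I would verify correctness of $\Amf_2$, namely that for every rooted $\mn{sig}(\Tmc_1)$-interpretation $(\Imc,d)$, $(\Imc,d) \in L(\Amf_2)$ iff there exist a model $\Jmc$ of $\Tmc_2$ and an $e \in \Delta^\Jmc$ with $(\Imc,d) \sim^{\mn{esim}}_\Sigma (\Jmc,e)$. The intended reading is that labelling an element $d' \in \Delta^\Imc$ with state $q_C$ asserts the existence of some $e' \in C^\Jmc$ that $\Sigma$-simulates $d'$, while $\overline{q}_C$ asserts that no such $e'$ exists; the top-level transition $\delta(q_0)$ guesses one of these two alternatives for every subconcept of $\Tmc_2$ and, through the conjunct $\bigwedge_{r \in \Sigma}[r]q_0$, propagates the guess to every reachable element. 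For the ``$\Leftarrow$'' direction, given $\Jmc$ and a pair of witnessing $\Sigma$-simulations, I would build an accepting run by labelling $d'$ with $q_C$ whenever some $e' \in C^\Jmc$ simulates $d'$ in one direction; condition (base$^\ell$) validates $\delta(q_A)$ for concept names, the $\mn{con}_\Tmc$ conjuncts are justified by $\Tmc_2 \models C \sqsubseteq D \Rightarrow e' \in D^\Jmc$, and the ``zig'' clause in the opposite simulation validates $\delta(q_{\exists r.C})$. For the ``$\Rightarrow$'' direction, I would read off from an accepting run, for each $d'$, the set of subconcepts $C$ such that some $\Jmc$-point in $C^\Jmc$ simulates $d'$, and use these labels together with Theorem~\ref{equisimuniform} to construct an appropriate $\Jmc$ as a disjoint union of tree-shaped pieces, one for each labelling family, stitching in \EL-witnesses for every $q_{\exists r.C}$-state along the way.

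Then the complexity analysis: $\Amf_2$ has $O(|\Tmc_2|)$ states, $\Amf_1$ has $O(|\Tmc_1|)$ states, and complementation preserves polynomial state counts for APTAs, so $\Amf_1 \cap \overline{\Amf_2}$ is still polynomial in $|\Tmc_1|+|\Tmc_2|$. Since APTA emptiness is \ExpTime, this yields the upper bound; note that the precomputation of $\mn{con}_\Tmc(C)$ is cheap because \EL subsumption is tractable. The critical gain over the $\mathcal{ALC}$ construction is that states correspond to single subconcepts rather than to full types, eliminating an exponential blow-up and shaving one exponent off the complexity.

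For the lower bound, I would reduce from the word problem of polynomially space-bounded alternating Turing machines, using the construction of Lutz and Wolter for concept conservative extensions in \EL (which relies on the fact that, unlike in \ALC, witness concepts for non-entailment can be forced to have exponential depth by the TBox). Since concept conservative extensions are a syntactic special case of both concept inseparability and concept entailment, \ExpTime-hardness transfers to all three problems. The hardest part of the plan is the ``$\Rightarrow$'' direction of the correctness of $\Amf_2$: an accepting run records only a labelling by subconcepts of $\Tmc_2$ at each $\Imc$-element, and one has to actually manufacture a single model $\Jmc$ of $\Tmc_2$ and a pair of $\Sigma$-simulations from this information, which requires careful use of the fact that the $q$- and $\overline{q}$-choices at each element are mutually consistent and closed under $\mn{con}_\Tmc$, so that every maximal type that the run effectively commits to is realizable in some model of $\Tmc_2$.
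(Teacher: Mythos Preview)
Your proposal is correct and follows the same approach as the paper: the \ExpTime upper bound via the polynomial-size APTA $\Amf_2$ (states indexed by single subconcepts rather than full types) together with $\Amf_1$, complementation, and \ExpTime emptiness checking, and the lower bound via the Lutz--Wolter ATM reduction for conservative extensions. The paper itself defers the correctness argument for $\Amf_2$ to \cite{DBLP:conf/kr/LutzSW12}; your sketch is at least as detailed, though your semantic reading of $q_C$ as ``some $e' \in C^\Jmc$ $\Sigma$-simulates $d'$'' is slightly off---the cleaner invariant is that the run commits at each $d'$ to a full $\Tmc_2$-coherent type (the set of $C$ with $q_C$ chosen), and one then builds $\Jmc$ from these types directly rather than via pre-existing simulating elements.
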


The lower bound in Theorem~\ref{thm:ceupperel} is proved (for concept
conservative extension) in \cite{DBLP:journals/jsc/LutzW10} using a reduction of the word
problem of polynomially space bounded ATMs.  It can be extracted from
the proofs in \cite{DBLP:journals/jsc/LutzW10} that smallest concept inclusions that
witness failure of concept entailment (or concept conservative
extensions) are at most double exponentially large, measured in the
size of the input TBoxes.\footnote{This should not be confused with
  the size of uniform interpolants, which can even be triple
  exponential in \EL \cite{expexpexplosion}.} The
following example shows a case where they are also at least double
exponentially large.

\begin{example}
  \newcommand{\ol}{\overline} For each $n \geq 1$, we give TBoxes
$\Tmc_1$ and $\Tmc_2$ whose size is polynomial in $n$ and such that
$\Tmc_2$ is not a concept conservative extension of $\Tmc_1$, but the
elements of $\mn{cDiff}_{\Sigma}(\Tmc_1,\Tmc_2)$ are of size at
least~$2^{2^n}$ for $\Sigma={\sf sig}(\Tmc_{1})$. It is instructive to start with the definition of
$\Tmc_2$, which is as follows:
$$
\begin{array}{rcll}
  A & \sqsubseteq & \ol{X}_0 \sqcap \cdots \sqcap \ol{X}_{n-1}, \\
  {\bigsqcap}_{\sigma \in \{r,s\}} \exists \sigma . ( \ol{X}_i \sqcap X_0 \sqcap \cdots \sqcap X_{i-1} )
  & \sqsubseteq & X_i, & \text{ for $i < n$}, \\
    {\bigsqcap}_{\sigma \in \{r,s\}}\exists \sigma . ( X_i \sqcap X_0 \sqcap \cdots \sqcap X_{i-1} )
  & \sqsubseteq & \ol{X}_i, & \text{ for $i < n$}, \\
    {\bigsqcap}_{\sigma \in \{r,s\}} \exists \sigma . ( \ol{X_i} \sqcap \ol{X}_j ) & \sqsubseteq & \ol{X_i},
  & \text{ for $j < i < n$}, \\
    {\bigsqcap}_{\sigma \in \{r,s\}} \exists \sigma . ( X_i \sqcap \ol{X}_j ) & \sqsubseteq & X_i,
  & \text{ for $j < i < n$}, \\
  X_0 \sqcap \cdots \sqcap X_{n-1} & \sqsubseteq & B.
\end{array}
$$
The concept names $X_0,\dots,X_{n-1}$ and
$\overline{X}_0,\dots,\overline{X}_{n-1}$ are used to represent a
binary counter: if $X_i$ is true,
then the $i$-th bit is positive and if $\ol{X}_i$ is true, then it is
negative. These concept names will not be used in $\Tmc_1$ and thus
cannot occur in $\mn{cDiff}_{\Sigma}(\Tmc_1,\Tmc_2)$ for the signature $\Sigma$ of $\Tmc_{1}$.
Observe that Lines~2-5 implement incrementation of the counter. We are interested in
consequences of $\Tmc_{2}$ that are of the form $C_{2^n} \sqsubseteq
B$, where
$$
  C_0 = A, \qquad\qquad
  C_i = \exists r . C_{i-1} \sqcap \exists s . C_{i-1},
$$
which we would like to be the smallest elements of
$\mn{cDiff}_{\Sigma}(\Tmc_1,\Tmc_2)$.  Clearly, $C_{2^n}$ is of size at least
$2^{2^n}$. 
Ideally, we would like to employ a trivial TBox $\Tmc_1$ that uses
only signature $\Sigma=\{A,B,r,s\}$ and has no interesting
consequences (only tautologies). If we do exactly this, though, there
are some undesired (single exponentially) `small' CIs in
$\mn{cDiff}_{\Sigma}(\Tmc_1,\Tmc_2)$, in particular $C'_n \sqsubseteq
B$, where
$$
  C'_0 = A, \qquad\qquad
  C'_i = A \sqcap \exists r . C_{i-1} \sqcap \exists s . C_{i-1}.
$$
Intuitively, the multiple use of $A$ messes up our counter, making
bits both true and false at the same time and resulting in all concept
names $X_i$ to become true already after travelling $n$ steps along
$r$. We thus have to achieve that these CIs are already consequences
of $\Tmc_1$. To this end, we define $\Tmc_1$ as
$$
\begin{array}{rcl}
  \exists \sigma . A \sqsubseteq A', \qquad A' \sqcap A \sqsubseteq B',
  \qquad \exists \sigma . B' \sqsubseteq B', \qquad B' \sqsubseteq B
\end{array}
$$
where $\sigma$ ranges over $\{r,s\}$, and include these concept
assertions also in $\Tmc_2$ to achieve $\Tmc_1 \subseteq \Tmc_2$ as
required for conservative extensions.
\end{example}

\subsection{Concept inseparability for acyclic $\mathcal{EL}$ TBoxes}

\newcommand{\cDiff}{{\sf cDiff}_{\Sigma}} We show that concept
inseparability for \emph{acyclic} \EL TBoxes can be decided in
polynomial time and discuss interesting applications to versioning and the logical
diff of TBoxes. We remark that TBoxes used in practice are
often acyclic, and that, in fact, many biomedical ontologies such as
{\sc Snomed CT} are acyclic \EL TBoxes or mild extensions thereof.

Concept inseparability of acyclic \EL TBoxes is still far from being a
trivial problem.  For example, it can be shown that smallest
counterexamples from $\cDiff(\Tmc_{1},\Tmc_{2})$ can be exponential in
size \cite{DBLP:journals/jair/KonevL0W12}. However, acyclic \EL TBoxes
enjoy the pleasant property that if $\cDiff(\Tmc_{1},\Tmc_{2})$ is
non-empty, then it must contain a concept inclusion of the form $C
\sqsubseteq A$ or $A \sqsubseteq C$, with $A$ a concept name. This is
a consequence of the following result, established in
\cite{DBLP:journals/jair/KonevL0W12}.
%
%
%
%
\newcommand{\cWtn}{{\sf cWtn}_{\Sigma}}
\newcommand{\cWtnRole}{{\sf cWtn}^{\sf R}_{\Sigma}}
\newcommand{\cWtnl}{{\sf cWtn}^{\sf lhs}_{\Sigma}}
\newcommand{\cWtnLhs}{{\sf cWtn}^{\sf lhs}_{\Sigma}}
\newcommand{\cWtnLhsA}{\ensuremath{{\sf cWtn}^{\sf lhs,A}_{\Sigma}}}
\newcommand{\cWtnLhsDom}{\ensuremath{{\sf cWtn}^{\sf lhs,dom}_{\Sigma}}}
\newcommand{\cWtnLhsRan}{\ensuremath{{\sf cWtn}^{\sf lhs,ran}_{\Sigma}}}
\newcommand{\cWtnr}{{\sf cWtn}^{\sf rhs}_{\Sigma}}
\newcommand{\cWtnRhs}{{\sf cWtn}^{\sf rhs}_{\Sigma}}
\newcommand{\cWtnDom}{{\sf cWtn}^{\sf dom}_{\Sigma}}
\newcommand{\cWtnRan}{{\sf cWtn}^{\sf ran}_{\Sigma}}
\begin{theorem}\label{splitEL}
Suppose $\Tmc_1$ and $\Tmc_2$ are acyclic $\mathcal{EL}$ TBoxes and $\Sigma$ a
signature.  If $C \sqsubseteq D \in \cDiff(\Tmc_{1},\Tmc_{2})$, 
then there exist subconcepts $C'$ of $C$ and $D'$ of $D$ such that $C'
\sqsubseteq D'\in\cDiff(\Tmc_{1},\Tmc_{2})$, and $C'$ or $D'$ is a
concept name.
\end{theorem}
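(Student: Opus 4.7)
The plan is to prove this by strong induction on $|C|+|D|$. First I would dispose of the shape of the right-hand side by routine reductions. If $D=\top$, the hypothesis $\Tmc_{1}\not\models C\sqsubseteq D$ is violated, so this case is vacuous; if $D$ is a concept name, we are done with $C'=C$ and $D'=D$; and if $D=D_{1}\sqcap D_{2}$, then $\Tmc_{1}\not\models C\sqsubseteq D$ forces $\Tmc_{1}\not\models C\sqsubseteq D_{i}$ for some $i$ while $\Tmc_{2}\models C\sqsubseteq D_{i}$ is immediate from $\Tmc_{2}\models C\sqsubseteq D$, so $C\sqsubseteq D_{i}\in\cDiff(\Tmc_{1},\Tmc_{2})$ has strictly smaller combined size and the induction hypothesis yields the required subconcepts of $C$ and $D_{i}\subseteq D$.

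The remaining case is $D=\exists r.D_{0}$, which I would attack using the canonical model $\Imc_{C,\Tmc_{2}}$ of $C$ under $\Tmc_{2}$: its root $\rho$ must have an $r$-successor $e$ satisfying $D_{0}$. There are essentially two ways $e$ can arise. If $e$ stems from a top-level conjunct $\exists r.C_{0}$ of $C$, then $e$ is the root of the canonical model of $C_{0}$ under $\Tmc_{2}$, giving $\Tmc_{2}\models C_{0}\sqsubseteq D_{0}$; the analogous successor in $\Imc_{C,\Tmc_{1}}$ must then fail to satisfy $D_{0}$ (otherwise $\Tmc_{1}\models C\sqsubseteq\exists r.D_{0}$, contradicting $C\sqsubseteq D\in\cDiff(\Tmc_{1},\Tmc_{2})$), so $C_{0}\sqsubseteq D_{0}\in\cDiff(\Tmc_{1},\Tmc_{2})$, and the induction hypothesis delivers subconcepts of $C_{0}\subseteq C$ and $D_{0}\subseteq D$ with the required property.

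Otherwise, $e$ is generated by a $\Tmc_{2}$-axiom, and my plan rests on the following key lemma for acyclic $\mathcal{EL}$: \emph{if $\Tmc\models C\sqsubseteq\exists r.D_{0}$ and no top-level conjunct $\exists r.C_{0}$ of $C$ satisfies $\Tmc\models C_{0}\sqsubseteq D_{0}$, then there exists a top-level concept-name conjunct $B$ of $C$ such that $\Tmc\models B\sqsubseteq\exists r.D_{0}$}. Granting this for $\Tmc_{2}$, I obtain such a $B$; the inclusion $C\sqsubseteq B$ is a tautology, so $\Tmc_{1}\models B\sqsubseteq\exists r.D_{0}$ would imply $\Tmc_{1}\models C\sqsubseteq D$, contradicting $C\sqsubseteq D\in\cDiff(\Tmc_{1},\Tmc_{2})$. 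Hence $B\sqsubseteq D\in\cDiff(\Tmc_{1},\Tmc_{2})$ and $C'=B$ is a concept-name subconcept of $C$, with $D'=D$.

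The main obstacle is proving the key lemma. I would trace, in the canonical-model construction for $\Imc_{C,\Tmc}$, the derivation that installs at $\rho$ the concept name $A'$ on which the firing of the axiom introducing $\exists r.F$ (with $\Tmc\models F\sqsubseteq D_{0}$) depends. The central structural observation from the acyclicity stipulation that every concept name occurs on the left-hand side of at most one TBox statement is that a backward use of a $\equiv$-axiom $A'\equiv E'$---which derives $A'$ from the simultaneous presence of $E'$'s conjuncts at $\rho$---cannot contribute new successors or any genuinely new forward-propagation: the only statement with $A'$ on the left is that same $\equiv$-axiom, whose forward direction merely re-derives conjuncts of $E'$ already present at $\rho$. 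Consequently, the chain installing $A'$ at $\rho$ can be taken to use only forward directions of $\sqsubseteq$ and $\equiv$ statements, each step keyed to a single concept name on the LHS; tracing this chain backwards terminates at a single top-level concept-name conjunct $B$ of $C$, yielding $\Tmc\models B\sqsubseteq A'\sqsubseteq\exists r.D_{0}$. Making the redundancy-of-backward-applications argument precise---ensuring no knock-on effects enable new existential successors---is where I expect the proof to demand the most care.
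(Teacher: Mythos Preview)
The paper does not prove this theorem in the survey; it attributes the result to~\cite{DBLP:journals/jair/KonevL0W12} and only uses it. So there is no in-paper proof to compare against, and I evaluate your argument on its own merits.

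Your overall strategy is sound: the induction on $|C|+|D|$, the routine reductions for $D=\top$, $D$ a concept name, and $D=D_1\sqcap D_2$ are all correct, and the split in the $D=\exists r.D_0$ case into ``witnessed by a top-level $\exists r.C_0$ of $C$'' versus ``witnessed by a TBox-generated successor'' is the right idea. Your key lemma is also true.

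There is, however, one imprecision in your sketch of the key lemma that you should tighten. You write that ``the chain installing $A'$ at $\rho$ can be taken to use only forward directions''. This is not literally correct: the specific concept name $A'$ whose axiom you fired to create the successor $e$ may \emph{only} be derivable at $\rho$ via a backward step (e.g., $\Tmc=\{B\sqsubseteq\exists r.G,\ A'\equiv\exists r.G\}$ with $C=B$: here $A'$ is at $\rho$ only by backward application). The point is rather that in such a case the successor $e$ is redundant: since $A'\equiv E'$ is the \emph{unique} axiom with $A'$ on the left, backward-deriving $A'$ means $E'$ (and in particular $\exists r.F$) was already present at $\rho$, so some \emph{earlier} successor $e'$ already witnesses $\exists r.F$ and hence $D_0$. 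Replace $e$ by $e'$ and repeat. A clean way to make this terminate is to argue by the \emph{stage of creation} in the minimal canonical model: a successor created at stage $k$ for $\exists r.F$ via $A'\bowtie E'$ requires $\exists r.F$ to be unwitnessed before stage $k$; if $A'$ were first derived by backward, $\exists r.F$ would already be witnessed at an earlier stage, a contradiction. Hence the first derivation of $A'$ is forward (or $A'$ is a top-level conjunct), and you can recurse on the predecessor with strictly smaller stage until you hit a top-level concept-name conjunct $B$ of $C$. This is exactly the ``no knock-on effects'' point you flagged as needing care; the stage argument is how to discharge it.

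One further minor alignment: your sub-case split should be ``some top-level $\exists r.C_0$ satisfies $\Tmc_2\models C_0\sqsubseteq D_0$'' versus ``no such conjunct exists'', rather than a split on the origin of the particular $e$ you happened to pick; otherwise the premise of your key lemma is not available in sub-case 5b. This is a trivial reorganisation.
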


Theorem~\ref{splitEL} implies that every logical difference between
$\Tmc_1$ and $\Tmc_2$ is associated with a concept name from $\Sigma$
(that must occur in $\Tmc_2)$. This opens up an interesting
perspective for representing the logical difference between TBoxes since, in contrast to
$\cDiff(\Tmc_{1},\Tmc_{2})$, the set of all concept names $A$ that are
associated with a logical difference $C \sqsubseteq A$ or $A
\sqsubseteq C$ is finite. One can thus summarize for the user the
logical difference between two TBoxes $\Tmc_1$ and $\Tmc_2$ by
presenting her with the list of all such concept names
$A$. 

Let $\Tmc_1$ and $\Tmc_2$ be acyclic \EL{} TBoxes and $\Sigma$ a
signature.  We define the set of \emph{left-hand} $\Sigma$-concept
difference witnesses $\cWtnl(\Tmc_1,\Tmc_2)$ (or \emph{right-hand}
$\Sigma$-concept difference witnesses $\cWtnr(\Tmc_1,\Tmc_2)$) as the
set of all $A\in\Sigma\cap\NC$ such that there exists a concept $C$
with $A\sqsubseteq C\in\cDiff(\Tmc_1,\Tmc_2)$ (or $C\sqsubseteq
A\in\cDiff(\Tmc_1,\Tmc_2)$, respectively). Note that, by
Theorem~\ref{splitEL}, $\Tmc_1$ $\Sigma$-concept entails $\Tmc_2$ iff
$\cWtnl(\Tmc_1,\Tmc_2) = \cWtnr(\Tmc_1,\Tmc_2)=\emptyset$.  In the
following, we explain how both sets can be computed in polynomial
time. The constructions are from~\cite{DBLP:journals/jair/KonevL0W12}.

The tractability of computing $\cWtnl(\Tmc_1,\Tmc_2)$ follows from
Theorem~\ref{theorem:equisimchar} and the fact that \EL has canonical
models. More specifically, for every \EL TBox $\Tmc$ and \EL concept
$C$ one can construct in polynomial time a canonical pointed interpretation $(\Imc_{\Tmc,C},d)$
such that, for any \EL concept $D$, we have $d\in D^{\Imc_{\Tmc,C}}$ iff
$\Tmc\models C\sqsubseteq D$. 
Then Theorem~\ref{theorem:equisimchar} yields for any $A\in \Sigma$
that
$$
A\in \cWtnl(\Tmc_1,\Tmc_2)\quad \Longleftrightarrow \quad
(\Imc_{2},d_2) \not\leq_{\Sigma}^{\sf sim} (\Imc_{1},d_1)
$$
where $(\Imc_{{i}},d_i)$ are canonical pointed interpretations for
$\Tmc_{i}$ and $A$, $i=1,2$.  Since the existence of a simulation
between polynomial size pointed interpretations can be decided in
polynomial time~\cite{CS01}, we have proved the following result.
\begin{theorem}\label{th:ellhswtn}
For $\mathcal{EL}$ TBoxes $\Tmc_{1}$ and $\Tmc_{2}$ and a signature $\Sigma$,
$\cWtnl(\Tmc_1,\Tmc_2)$ can be computed in polynomial time.
\end{theorem}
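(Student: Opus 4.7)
The plan is to reduce the computation of $\cWtnl(\Tmc_1,\Tmc_2)$ to a collection of simulation checks between polynomially-sized pointed interpretations, one for each concept name in $\Sigma$. Since simulation is polynomial-time decidable and the canonical models of $\mathcal{EL}$ are of polynomial size, this yields the desired bound.

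First, I would observe that $A \in \cWtnl(\Tmc_1,\Tmc_2)$ if and only if there is an $\mathcal{EL}$-concept $C$ with $\Tmc_2 \models A \sqsubseteq C$ and $\Tmc_1 \not\models A \sqsubseteq C$. Invoking the stated existence of canonical pointed interpretations $(\Imc_{\Tmc_i,A},d_i)$, for which $d_i \in D^{\Imc_{\Tmc_i,A}}$ iff $\Tmc_i \models A \sqsubseteq D$ for every $\mathcal{EL}$-concept $D$, this condition translates into the existence of some $\Sigma$-concept $C$ (the non-$\Sigma$-part of a witness can be discarded, since the separating inclusion must lie in the signature $\Sigma$) with $d_2 \in C^{\Imc_{\Tmc_2,A}}$ but $d_1 \notin C^{\Imc_{\Tmc_1,A}}$. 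Equivalently, $(\Imc_{\Tmc_2,A},d_2) \not\leq_{\Sigma}^{\mathcal{EL}} (\Imc_{\Tmc_1,A},d_1)$.

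Now I would apply Theorem~\ref{theorem:equisimchar}. Since the canonical models $\Imc_{\Tmc_i,A}$ are of polynomial size (and hence of finite outdegree), the characterization applies in both directions, so $(\Imc_{\Tmc_2,A},d_2) \leq_{\Sigma}^{\mathcal{EL}} (\Imc_{\Tmc_1,A},d_1)$ is equivalent to the existence of a $\Sigma$-simulation $(\Imc_{\Tmc_2,A},d_2) \leq_{\Sigma}^{\sf sim} (\Imc_{\Tmc_1,A},d_1)$. This gives the advertised equivalence
\[
A\in \cWtnl(\Tmc_1,\Tmc_2) \quad \Longleftrightarrow\quad (\Imc_{\Tmc_2,A},d_2) \not\leq_{\Sigma}^{\sf sim} (\Imc_{\Tmc_1,A},d_1).
\]

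The algorithm then iterates over all $A \in \Sigma \cap \NC$, constructs the two canonical models (polynomial time each by assumption), and tests whether a $\Sigma$-simulation from the second to the first exists using a standard greatest-fixed-point procedure, which runs in polynomial time as shown in~\cite{CS01}. Since $|\Sigma|$ is polynomial in the input and each iteration is polynomial, the total running time is polynomial. The main conceptual obstacle is the passage from the entailment-based definition of $\cWtnl$ to a simulation check; this relies crucially on combining the canonical-model property (which reduces entailment of $\mathcal{EL}$-CIs from $A$ to membership in a single interpretation) with the model-theoretic characterization of $\mathcal{EL}_{\Sigma}$-containment by $\Sigma$-simulations in the finite-outdegree case. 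Once both ingredients are in place, the remaining work is routine.
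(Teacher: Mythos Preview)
Your proposal is correct and follows essentially the same approach as the paper: reduce membership in $\cWtnl(\Tmc_1,\Tmc_2)$ to the non-existence of a $\Sigma$-simulation between the polynomial-size canonical models $(\Imc_{\Tmc_2,A},d_2)$ and $(\Imc_{\Tmc_1,A},d_1)$ via Theorem~\ref{theorem:equisimchar}, then invoke the polynomial-time simulation check of~\cite{CS01}. Your write-up is in fact slightly more explicit than the paper's, spelling out why the witness concept $C$ can be taken to be a $\Sigma$-concept and how the iteration over $A\in\Sigma\cap\NC$ yields the overall polynomial bound.
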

%
%

We now consider $\cWtnr(\Tmc_1,\Tmc_2)$, that is, $\Sigma$-CIs of the
form $C \sqsubseteq A$.  To check, for a concept name $A\in \Sigma$, 
whether $A \in \cWtnr(\Tmc_1,\Tmc_2)$, ideally we would like to
compute all concepts $C$ such that $\Tmc_1 \not\models C \sqsubseteq A$ and
then check whether $\Tmc_2\models C \sqsubseteq A$. Unfortunately,
there are infinitely many such concepts $C$.  Note that if $\Tmc_2
\models C \sqsubseteq A$ and $C'$ is more specific than $C$ in the
sense that $\models C' \sqsubseteq C$, then $\Tmc_2 \models C'
\sqsubseteq A$. If there is a \emph{most specific} concept $C_A$ among
all $C$ with $\Tmc_1 \not\models C \sqsubseteq A$, it thus suffices to
compute this $C_A$ and check whether $\Tmc_2 \models C_A \sqsubseteq
A$. Intuitively, though, such a $C_A$ is only guaranteed to exist when
we admit infinitary concepts. The solution is to represent $C_A$ not as
a concept, but as a TBox. We only demonstrate this approach by an
example and refer the interested reader
to~\cite{DBLP:journals/jair/KonevL0W12} for further details.

\begin{example}
  (a)~Suppose that $\Tmc_1 = \{A\equiv \exists r.A_1\}$, $\Tmc_2 =
  \{A\equiv \exists r.A_2\}$ and $\Sigma = \{A, A_1, A_2, r\}$.  A
  concept $C_A$ such that $\Tmc_1\not\models C_A\sqsubseteq A$ should
  have neither $A$ nor $\exists r.A_{1}$ as top level conjuncts.
  This can be captured by the CIs
\begin{eqnarray}
\label{eq:Anew}
X_A &\sqsubseteq& A_1\sqcap A_2\sqcap \exists r.(A\sqcap A_2\sqcap \exists r.X_\Sigma), 
\\
\label{eq:all}
X_{\Sigma} & \sqsubseteq &
A \sqcap A_1 \sqcap A_2 \sqcap \exists r.X_{\Sigma},
\end{eqnarray}
where $X_A$ and $X_\Sigma$ are fresh concept names and $X_A$
represents the most specific concept $C_A$ with $\Tmc_1\not\models
C_A\sqsubseteq A$.  We have
$\Tmc_2\cup\{(\ref{eq:Anew}),(\ref{eq:all})\}\models
X_{A}\sqsubseteq A$ and thus $A\in\cWtn(\Tmc_1,\Tmc_2)$.

(b)~Consider next  
$\Tmc_1 = \{A\equiv \exists r.A_1 \sqcap \exists r.A_2\}$, 
$\Tmc_2 = \{A\equiv \exists r.A_2\}$ and $\Sigma = \{A, A_1, A_2, r\}$.
A concept $C_A$ with $\Tmc_1\not\models C_A\sqsubseteq A$ should not have
both $\exists r.A_1$ and
$\exists r.A_2$ as top level conjuncts. Thus the most specific $C_A$
should
contain exactly one of these top level conjuncts, which gives rise to a choice.
We use the CIs
\begin{eqnarray}
\label{eq:Anew1}
X^1_A &\sqsubseteq& A_1\sqcap A_2\sqcap \exists r.(A\sqcap A_2\sqcap \exists r.X_\Sigma),\\
\label{eq:Anew2}
X^2_A &\sqsubseteq& A_1\sqcap A_2\sqcap \exists r.(A\sqcap A_1\sqcap \exists r.X_\Sigma),
\end{eqnarray}
where, intuitively, the disjunction of $X^1_A$ and $X^2_A$ represents
the most specific~$C_A$. We have 
$\Tmc_2\cup\{(\ref{eq:Anew1}),(\ref{eq:all})\}\models
X^1_{A}\sqsubseteq A$ and thus $A\in\cWtn(\Tmc_1,\Tmc_2)$.
\end{example}

The following result is proved by generalizing the examples given above.
\begin{theorem}\label{th:ellhswtn}
For $\mathcal{EL}$ TBoxes $\Tmc_{1}$ and $\Tmc_{2}$ and signatures $\Sigma$,
$\cWtnr(\Tmc_1,\Tmc_2)$ can be computed in polynomial time.
\end{theorem}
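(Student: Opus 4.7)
The plan is to generalize the two examples above and, for each concept name $A \in \Sigma \cap \NC$, construct in polynomial time an auxiliary $\mathcal{EL}$ TBox $\Tmc_A$ over a polynomial number of fresh concept names (a global name $X_\Sigma$ plus a small family $X_A^1, \ldots, X_A^{k_A}$ for each $A$) so that
$$
A \in \cWtnr(\Tmc_1,\Tmc_2) \ \ \Longleftrightarrow \ \ \Tmc_2 \cup \Tmc_A \models X_A^i \sqsubseteq A \ \text{for some } i.
$$
Here $X_\Sigma$ is axiomatised by the single CI $X_\Sigma \sqsubseteq \bigsqcap_{B \in \Sigma \cap \NC} B \sqcap \bigsqcap_{r \in \Sigma \cap \NR} \exists r.X_\Sigma$, so that $X_\Sigma$ stands for the maximally $\Sigma$-saturated concept, while each $X_A^i$ will encode a \emph{most specific} $\Sigma$-concept $C$ with $\Tmc_1 \not\models C \sqsubseteq A$.

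To build the $X_A^i$ I proceed by induction on the dependency order of $\Tmc_1$, which is well-founded by acyclicity. Fix $A \in \Sigma \cap \NC$. By Theorem~\ref{theorem:equisimchar} and the existence of canonical models for $\mathcal{EL}$, we have $\Tmc_1 \models C \sqsubseteq A$ iff there is a $\sig(\Tmc_1)$-simulation from $(\Imc_{\Tmc_1,A}, d_A)$ into $(\Imc_{\Tmc_1,C}, d_C)$. Thus $\Tmc_1 \not\models C \sqsubseteq A$ iff every simulation attempt fails at the root: either some concept name required by $d_A$ is missing on $d_C$, or some $r$-successor of $d_A$ finds no matching $r$-successor of $d_C$. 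Each such \emph{minimal failure choice} becomes one fresh $X_A^i$; its defining CI makes $X_A^i$ a subconcept of every $\Sigma$-requirement it does \emph{not} drop, and forces every $r$-successor that it does \emph{not} break to be $X_\Sigma$-saturated, with the single broken successor being replaced by some $X_B^j$ for a concept name $B$ of strictly smaller depth in the dependency order, reusing the inductively defined fresh names. Examples (a) and (b) above illustrate exactly the two flavours: a single forced alternative when only one existential restriction must be broken, and a genuine disjunction of $X_A^i$'s when several top-level existential restrictions are available. The total number of fresh names and CIs is polynomial because, at each level, the number of minimal failure choices is bounded by the number of conjuncts obtained by unfolding $A$ one step in $\Tmc_1$, and the induction has depth bounded by the depth of the acyclic TBox.

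With $\Tmc_A$ in hand, the algorithm is immediate: for each $A \in \Sigma \cap \NC$ and each $i \leq k_A$, decide the $\mathcal{EL}$ subsumption $\Tmc_2 \cup \Tmc_A \models X_A^i \sqsubseteq A$ in polynomial time by standard $\mathcal{EL}$ reasoning, and return those $A$ for which some $i$ succeeds. The main obstacle is proving the displayed equivalence, especially its completeness direction: given an actual witness $C \sqsubseteq A \in \cDiff(\Tmc_1, \Tmc_2)$, one must exhibit an index $i$ with $\Tmc_2 \cup \Tmc_A \models X_A^i \sqsubseteq A$. This is done by choosing $i$ according to the simulation failure that certifies $\Tmc_1 \not\models C \sqsubseteq A$, using the $X_\Sigma$-saturation trick to absorb all parts of $C$ irrelevant to blocking the simulation, and then reading off from the inductively defined $X_B^j$ how the subsumption propagates up the dependency order.
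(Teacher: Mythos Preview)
Your approach is exactly the one the paper intends---encode the ``most specific $\Sigma$-concepts not $\Tmc_1$-subsumed by $A$'' via an auxiliary TBox with a saturating name $X_\Sigma$ and finitely many $X_A^i$, then reduce membership in $\cWtnr(\Tmc_1,\Tmc_2)$ to polynomially many $\mathcal{EL}$ subsumption checks against $\Tmc_2$---so the high-level plan is correct and matches the paper.

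There is, however, a concrete gap in your description of the defining CI for $X_A^i$. You write that $X_A^i$ ``forces every $r$-successor that it does not break to be $X_\Sigma$-saturated, with the single broken successor being replaced by some $X_B^j$''. This is wrong when several top-level conjuncts share the same role. Take Example~(b): $A \equiv \exists r.A_1 \sqcap \exists r.A_2$. If $X_A^1$ breaks $\exists r.A_1$ but also carries an $X_\Sigma$-saturated $r$-successor for the unbroken conjunct $\exists r.A_2$, that saturated successor satisfies $A_1$, so $\Tmc_1\cup\Tmc_A\models X_A^1\sqsubseteq A$ and soundness fails. The paper's $X_A^1$ accordingly has \emph{only} the broken $r$-successor, and the unbroken same-role conjunct is satisfied (if at all) incidentally through that same successor. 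The correct rule is: for the role $r$ of the broken conjunct, \emph{all} $r$-successors must be of the $X_B^{\cdot}$ kind; only successors for other roles may be $X_\Sigma$.

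A second, related imprecision: ``some $X_B^j$'' must mean a conjunction $\bigsqcap_j \exists r.X_B^j$ over \emph{all} $j$, not a single chosen $j$. If you commit to one $j$ per $X_A^i$, the number of variants multiplies along the dependency chain and your polynomial bound fails; with all $j$ present simultaneously you get both completeness (any $C$ failing $\exists r.B$ has each $r$-successor simulating into some $X_B^j$) and a number of $X_A^i$ bounded by the top-level conjuncts of $A$'s one-step unfolding, as you claim.
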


The results stated above can be generalized to extensions of acyclic $\mathcal{EL}$ with
role inclusions and domain and range restrictions and have been implemented in the CEX tool 
for computing logical difference \cite{DBLP:journals/jair/KonevL0W12}.

An alternative approach to computing right-hand $\Sigma$-concept
difference witnesses based on checking for the existence of a simulations between polynomial size hypergraphs has been
introduced in~\cite{DBLP:conf/ecai/Ludwig014}. It has recently been
extended~\cite{GCAI2015:Foundations_for_the_Logical_Difference_of_EL-TBoxes}
to the case of unrestricted \EL{} TBoxes; the hypergraphs then become exponential in the size of the input.

\section{Model Inseparability}
\label{sect:model-inseparability}

We consider inseparability relations according to which two TBoxes
are indistinguishable w.r.t.\ a signature $\Sigma$ in case their models
coincide when restricted to $\Sigma$. A central observation is that
two TBoxes are $\Sigma$-model inseparable iff they cannot be distinguished by
entailment of a second-order (SO) sentence in $\Sigma$. As a consequence, model
inseparability implies concept inseparability for any DL $\Lmc$ and is
thus language independent and very robust. It is particularly useful
when a user is not committed to a certain DL or is interested in more
than just terminological reasoning.

We start this section with introducing model inseparability and the
related notions of model entailment and model conservative
extensions. We then look at the relationship between these notions and
also compare model inseparability to concept inseparability.  We next
discuss complexity. It turns out that model inseparability is
undecidable for almost all DLs, including \EL, with the exception of
some {\sl DL-Lite} dialects. Interestingly, by restricting the signature
$\Sigma$ to be a set concept names, one can often restore
decidability. We then move to model inseparability in the case in
which one TBox is empty, which is of particular interest for
applications in ontology reuse and module extraction. While this
restricted case is still undecidable in \EL, it is decidable for
acyclic $\mathcal{EL}$ TBoxes. We close the section by discussing
approximations of model inseparability that play an important role in
module extraction.

Two interpretation $\Imc$ and $\Jmc$ \emph{coincide for} a signature
$\Sigma$, written $\Imc =_{\Sigma} \Jmc$, if
%
$\Delta^{\Imc} = \Delta^{\Jmc}$
and
$X^{\Imc} = X^{\Jmc}$ for all $X\in \Sigma$.
%
Our central definitions are now as follows.

\begin{definition}[model inseparability, entailment and conservative extensions]\em 
Let $\Tmc_{1}$ and $\Tmc_{2}$ be TBoxes 
and let $\Sigma$ be a signature. Then
\begin{itemize}
\item the \emph{$\Sigma$-model difference} between $\Tmc_{1}$ and $\Tmc_{2}$
is the set ${\sf mDiff}_{\Sigma}(\Tmc_{1},\Tmc_{2})$ of all models $\Imc$ of $\Tmc_{1}$
such that there does not exist a model $\Jmc$ of $\Tmc_{2}$ with $\Jmc =_{\Sigma}\Imc$;

\item  $\Tmc_1$ \emph{$\Sigma$-model entails} $\Tmc_2$ if
      ${\sf mDiff}_{\Sigma}(\Tmc_{1},\Tmc_{2})= \emptyset$;

\item $\Tmc_1$ and $\Tmc_2$ are \emph{$\Sigma$-model inseparable}
      if $\Tmc_{1}$ $\Sigma$-model entails $\Tmc_{2}$ and vice versa;

\item $\Tmc_{2}$ is a \emph{model conservative extension of} $\Tmc_{1}$
      if $\Tmc_{2} \supseteq \Tmc_{1}$ and $\Tmc_{1}$ and $\Tmc_{2}$ are $\sig(\Tmc_{1})$-model inseparable.
\end{itemize}
\end{definition}
Similarly to concept entailment (Example~\ref{ex:ex1}), model
entailment coincides with logical entailment when $\Sigma \supseteq
\mn{sig}(\Tmc_1 \cup\Tmc_2)$. We again recommend to the reader
to verify this to become acquainted with the definitions. 
Also, one can show as in the proof from Example~\ref{ex:ex2} that
definitorial extensions are always model conservative
extensions. 

Regarding the relationship between concept inseparability and model
inseparability, we note that the latter implies the former.  The proof
of the following result goes through for any DL \Lmc that enjoys a
coincidence lemma (that is, for any DL, and even when \Lmc is the set
of all second-order sentences).
\begin{theorem}
\label{thm:afposihd}
  Let $\Tmc_1$ and $\Tmc_2$ be TBoxes formulated in some DL \Lmc and
  $\Sigma$ a signature such that $\Tmc_{1}$ $\Sigma$-model entails
  $\Tmc_2$. Then $\Tmc_{1}$ $\Sigma$-concept entails $\Tmc_{2}$.
\end{theorem}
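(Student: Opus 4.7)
The plan is to argue by contraposition: I will assume $\Tmc_{1}$ does not $\Sigma$-concept entail $\Tmc_{2}$ and construct a witness showing that $\Tmc_{1}$ does not $\Sigma$-model entail $\Tmc_{2}$ either.

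First I would unfold the definitions. Non-$\Sigma$-concept entailment gives me an $\Lmc$-concept inclusion $C \sqsubseteq D$ (or role inclusion, if admitted by $\Lmc$) with $\mn{sig}(C \sqsubseteq D) \subseteq \Sigma$ such that $\Tmc_{2} \models C \sqsubseteq D$ but $\Tmc_{1} \not\models C \sqsubseteq D$. Unpacking the latter, there is a model $\Imc$ of $\Tmc_{1}$ and an element $d \in \Delta^{\Imc}$ with $d \in C^{\Imc}$ and $d \notin D^{\Imc}$. The aim is to exhibit $\Imc \in \mn{mDiff}_{\Sigma}(\Tmc_{1},\Tmc_{2})$, i.e., show that no model of $\Tmc_{2}$ coincides with $\Imc$ on $\Sigma$.

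Next I would invoke the coincidence lemma for $\Lmc$: since $C$ and $D$ are built using only symbols from $\Sigma$, their interpretations depend only on the $\Sigma$-reduct of any interpretation. Hence, for any interpretation $\Jmc$ with $\Jmc =_{\Sigma} \Imc$, we have $C^{\Jmc} = C^{\Imc}$ and $D^{\Jmc} = D^{\Imc}$, so $d \in C^{\Jmc} \setminus D^{\Jmc}$. In particular, no such $\Jmc$ can satisfy $C \sqsubseteq D$, so $\Jmc$ cannot be a model of $\Tmc_{2}$ (because $\Tmc_{2} \models C \sqsubseteq D$). This yields $\Imc \in \mn{mDiff}_{\Sigma}(\Tmc_{1},\Tmc_{2})$, contradicting $\Sigma$-model entailment.

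This argument is essentially routine once the coincidence lemma is available, and the same proof works uniformly for any $\Lmc$ that enjoys such a lemma (as the theorem statement emphasizes), including the case where $\Lmc$ is full second-order logic. The only mild subtlety is to cover role inclusions in DLs that admit them: if $r \sqsubseteq s \in \mn{cDiff}_{\Sigma}(\Tmc_{1},\Tmc_{2})$ with $r,s \in \Sigma$, then a pair $(d,e) \in r^{\Imc} \setminus s^{\Imc}$ plays the role of the counterexample, and the same coincidence argument applies to $r$ and $s$ directly. I do not anticipate any real obstacle; the step to watch is simply making sure the signature condition on $C \sqsubseteq D$ is what triggers the coincidence lemma.
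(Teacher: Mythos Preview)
Your proposal is correct and is essentially the contrapositive of the paper's direct argument: the paper assumes $\Sigma$-model entailment, takes an arbitrary $\Sigma$-inclusion $\alpha$ entailed by $\Tmc_2$, and uses the coincidence lemma to transfer $\Jmc \models \alpha$ (for the $\Sigma$-matching model $\Jmc$ of $\Tmc_2$) back to $\Imc \models \alpha$. Both arguments hinge on exactly the same use of the coincidence lemma for $\Sigma$-inclusions, so there is no substantive difference.
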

\begin{proof}
  Suppose $\Tmc_{1}$ $\Sigma$-model entails $\Tmc_{2}$, and let
  $\alpha$ be a $\Sigma$-inclusion in $\Lmc$ such that
  $\Tmc_{2}\models \alpha$. We have to show that $\Tmc_{1}\models
  \alpha$.  Let $\Imc$ be a model of $\Tmc_{1}$. There is a model
  $\Jmc$ of $\Tmc_{2}$ such that $\Jmc =_{\Sigma} \Imc$. Then
  $\Jmc\models \alpha$, and so $\Imc\models \alpha$ since ${\sf
    sig}(\alpha)\subseteq \Sigma$.
\qed
\end{proof}

As noted, Theorem~\ref{thm:afposihd} even holds when \Lmc is the set
of all SO-sentences. Thus, if $\Tmc_{1}$ $\Sigma$-model entails $\Tmc_{2}$
then, for every SO-sentence $\varphi$ in the signature $\Sigma$, $\Tmc_2 \models \varphi$
implies $\Tmc_1 \models \varphi$. It is proved in
\cite{DBLP:series/lncs/KonevLWW09} that, in fact, the latter exactly
characterizes $\Sigma$-model entailment.

The following example shows that concept inseparability in
$\mathcal{ALCQ}$ does not imply model inseparability
(similar examples can be given for any DL and even for full
first-order logic \cite{DBLP:series/lncs/KonevLWW09}).
\begin{example}
Consider the $\mathcal{ALCQ}$ TBoxes and signature from Example~\ref{ALCQ}:
$$
\Tmc_{1}=\{A \sqsubseteq \mathop{\ge 2} r.\top\} \qquad
\Tmc_{2}=\{A \sqsubseteq \exists r.B\sqcap \exists r.\neg B\} \qquad
\Sigma=\{A,r\}.
$$
We have noted in Example~\ref{ALCQ-ALC} that $\Tmc_{1}$ and $\Tmc_{2}$
are $\Sigma$-concept inseparable. However, it is easy to see that the
following interpretation is in ${\sf
  mDiff}_{\Sigma}(\Tmc_{1},\Tmc_{2})$.
\begin{center}
  \begin{tikzpicture}
    \node (a1) at (0,0) [point, label=left:{\edgefont$A$}]{};%
    \node (a2) at (1.5,0) [point, label=right:{\edgefont$A$}]{};%
    \node (a3) at (3,0) [point, label=right:{\edgefont$A$}]{};%
    \node (b1) at (0,1) [point]{};%
    \node (b2) at (1.5,1) [point]{};%
    \node (b3) at (3,1) [point]{};%
    \draw[role] (a1) to node [left]{\edgefont$r$} (b1);%
    \draw[role] (a1) to node [below]{\edgefont$r$} (b2);%
    \draw[role] (a2) to (b1);%
    \draw[role] (a2) to (b3);%
    \draw[role] (a3) to node [below]{\edgefont$r$} (b2);%
    \draw[role] (a3) to node [right]{\edgefont$r$} (b3);%
  \end{tikzpicture}
\end{center}
\end{example}

We note that the relationship between model-based notions of
conservative extension and language-dependent notions of conservative
extensions was also extensively discussed in the literature on
software specification \cite{ByPitt,Veloso1,Veloso2,Goguen,Maibaum1}.

We now consider the relationship between model entailment and model
inseparability.  As in the concept case, model inseparability is
defined in terms of model entailment and can be decided by two model
entailment checks. Conversely, model entailment can be polynomially
reduced to model inseparability (in constast to concept
inseparability, where this depends on the DL under consideration).

\begin{lemma}\label{lem:entail}
  In any DL \Lmc, model entailment can be polynomially reduced to
  model inseparability.
\end{lemma}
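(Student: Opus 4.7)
The plan is to mimic the renaming trick used in the proof of Theorem~\ref{thm:pipapo}, adapted so that the forced direction of inseparability comes for free by inclusion. Concretely, given TBoxes $\Tmc_1,\Tmc_2$ in $\Lmc$ and a signature $\Sigma$, I will let $S = \sig(\Tmc_2) \setminus \Sigma$, pick a fresh copy $X'$ of each $X \in S$, and let $\Tmc_2'$ be obtained from $\Tmc_2$ by substituting $X'$ for $X$ uniformly. The reduction then maps the question ``does $\Tmc_1$ $\Sigma$-model entail $\Tmc_2$?'' to ``are $\Tmc_1$ and $\Tmc_1 \cup \Tmc_2'$ $\Sigma$-model inseparable?''. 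This is clearly polynomial.

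To prove correctness, note first that $\Tmc_1 \cup \Tmc_2'$ always $\Sigma$-model entails $\Tmc_1$, since every model of $\Tmc_1 \cup \Tmc_2'$ is already a model of $\Tmc_1$ (witness $\Jmc = \Imc$). Hence $\Sigma$-model inseparability of $\Tmc_1$ and $\Tmc_1 \cup \Tmc_2'$ is equivalent to the single entailment ``$\Tmc_1$ $\Sigma$-model entails $\Tmc_1 \cup \Tmc_2'$''. I now argue that this entailment coincides with the desired one.

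For the forward direction, assume $\Tmc_1$ $\Sigma$-model entails $\Tmc_2$ and let $\Imc \models \Tmc_1$. Pick a model $\Jmc_2$ of $\Tmc_2$ with $\Jmc_2 =_\Sigma \Imc$ (so in particular $\Delta^{\Jmc_2} = \Delta^\Imc$). Define $\Jmc$ on the common domain by: $X^\Jmc = X^\Imc$ for $X \in \sig(\Tmc_1) \cup \Sigma$, and $(X')^\Jmc = X^{\Jmc_2}$ for each $X \in S$, with interpretations of all other symbols chosen arbitrarily. Since $\Jmc$ and $\Imc$ agree on $\sig(\Tmc_1)$, $\Jmc \models \Tmc_1$ by the coincidence lemma (available in every standard DL); and since $\Jmc$ via the renaming $X \mapsto X'$ agrees with $\Jmc_2$ on $\sig(\Tmc_2)$, we also have $\Jmc \models \Tmc_2'$. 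Moreover $\Jmc =_\Sigma \Imc$, as required.

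For the converse, assume $\Tmc_1$ and $\Tmc_1 \cup \Tmc_2'$ are $\Sigma$-model inseparable and let $\Imc \models \Tmc_1$. Pick $\Jmc \models \Tmc_1 \cup \Tmc_2'$ with $\Jmc =_\Sigma \Imc$. Define $\Jmc_2$ by keeping $\Delta^{\Jmc_2} = \Delta^\Jmc$, setting $X^{\Jmc_2} = (X')^\Jmc$ for every $X \in S$, and $Y^{\Jmc_2} = Y^\Jmc$ for all other symbols $Y$. Then the renaming identifies $\Tmc_2'$ with $\Tmc_2$ on the relevant symbols, so $\Jmc_2 \models \Tmc_2$; and since the change only affects symbols outside $\Sigma$, we have $\Jmc_2 =_\Sigma \Jmc =_\Sigma \Imc$, witnessing the $\Sigma$-model entailment of $\Tmc_2$ by $\Tmc_1$. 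The main (minor) obstacle is purely bookkeeping: making sure that the coincidence lemma is applied to the right set of symbols on each side of the renaming, in particular that the freshly chosen $X'$ avoid all of $\sig(\Tmc_1) \cup \Sigma$ so that the two ``halves'' of $\Jmc$ can be defined independently without interfering.
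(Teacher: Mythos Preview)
Your proof is correct and takes essentially the same approach as the paper: rename non-$\Sigma$ symbols so that $\sig(\Tmc_1)\cap\sig(\Tmc_2')\subseteq\Sigma$, then reduce to inseparability of $\Tmc_1$ and $\Tmc_1\cup\Tmc_2'$. The only cosmetic difference is that the paper renames the \emph{shared} non-$\Sigma$ symbols in both TBoxes while you rename all non-$\Sigma$ symbols of $\Tmc_2$; your version also spells out the two directions of the final equivalence (robustness under joins for model inseparability), which the paper merely asserts.
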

\begin{proof}
  Assume that we want to decide whether $\Tmc_{1}$ $\Sigma$-model entails $\Tmc_{2}$ holds. By replacing
  every non-$\Sigma$-symbol $X$ shared by $\Tmc_{1}$ and $\Tmc_{2}$
  with a fresh symbol $X_1$ in $\Tmc_1$ and a distinct fresh symbol
  $X_2$ in $\Tmc_2$, we can achieve that $\Sigma \supseteq
  \sig(\Tmc_{1}) \cap \sig(\Tmc_{2})$ without changing the original
  (non-)$\Sigma$-model entailment of $\Tmc_2$ by $\Tmc_1$. We then have
  that $\Tmc_{1}$ $\Sigma$-model entails $\Tmc_{2}$ iff
  $\Tmc_{1}$ and $\Tmc_{1}\cup \Tmc_{2}$ are $\Sigma$-model inseparable.
\qed
\end{proof}

The proof of Lemma~\ref{lem:entail} shows that any DL \Lmc is
\emph{robust under joins for model inseparability}, defined
analogously to robustness under joins for concept inseparability; see
Definition~\ref{robustjoin}.


\subsection{Undecidability of model inseparability}

Model-inseparability is computationally much harder than concept
inseparability. In fact, it is undecidable already for $\mathcal{EL}$
TBoxes~\cite{DBLP:journals/ai/KonevL0W13}.  Here, we give a short and
transparent proof showing that model conservative extensions are
undecidable in $\mathcal{ALC}$.  The proof is by reduction of the
following undecidable $\mathbb N \times \mathbb N$ \emph{tiling
  problem}~\cite{Berger66,Robinson71,Borgeretal97}: given a finite set
$\mathfrak T$ of {\em tile types} $T$, each with four colors
$\textit{left}(T)$, $\textit{right}(T)$, $\textit{up}(T)$ and
$\textit{down}(T)$, decide whether $\mathfrak T$ {\em tiles} the grid
$\mathbb N \times\mathbb N$ in the sense that there exists a function
(called a {\em tiling\/}) $\tau$ from $\mathbb N \times\mathbb N$ to
$\mathfrak T$  such that 
\begin{itemize}
\item $\textit{up}(\tau(i,j) ) = \textit{down}(\tau(i,j+1))$ and
\item $\textit{right}(\tau(i,j)) = \textit{left}(\tau(i+1,j))$.
\end{itemize}
If we think of a tile as a physical $1\times 1$-square with a color on
each of its four edges, then a tiling $\tau$ of $\mathbb N
\times\mathbb N$ is just a way of placing tiles, each of a type from
$\mathfrak T$, to cover the $\mathbb N \times\mathbb N$ grid, with no
rotation of the tiles allowed and such that the colors on adjacent edges are
identical.

\begin{theorem}
In \ALC, model conservative extensions are undecidable.
\end{theorem}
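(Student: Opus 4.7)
The plan is to reduce from the undecidable $\mathbb N \times \mathbb N$ tiling problem: given a tile system $\mathfrak T$ with tile types $T_1, \ldots, T_n$, I construct \ALC TBoxes $\Tmc_1 \subseteq \Tmc_2$ such that $\Tmc_2$ is a model conservative extension of $\Tmc_1$ iff $\mathfrak T$ tiles $\mathbb N \times \mathbb N$. Let $\Sigma := \sig(\Tmc_1)$ contain role names $h, v$ (horizontal and vertical grid moves) together with concept names $A_0, A_1, B_0, B_1$ encoding the parities of the two grid coordinates. The fresh symbols in $\sig(\Tmc_2) \setminus \Sigma$ are concept names $T_1, \ldots, T_n$, one per tile type.

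For the construction, $\Tmc_1$ consists of \ALC CIs enforcing that every element lies in exactly one of $A_0, A_1$ and in exactly one of $B_0, B_1$; that every element has an $h$-successor and a $v$-successor; that along $h$-edges the $A$-parity flips while the $B$-parity is preserved; and symmetrically for $v$-edges. In particular, the grid $\mathbb N \times \mathbb N$ with the natural $h,v$-edges and parity labels is a $\Sigma$-model of $\Tmc_1$. Now let $\Tmc_2 := \Tmc_1 \cup \Tmc_{\mn{tile}}$, where $\Tmc_{\mn{tile}}$ asserts the cover $\top \sqsubseteq T_1 \sqcup \cdots \sqcup T_n$, disjointness $T_i \sqcap T_j \sqsubseteq \bot$ for $i \neq j$, and the matching CIs $T_i \sqsubseteq \forall h .\ \bigsqcup \{T_j \mid \mn{right}(T_i) = \mn{left}(T_j)\}$ and the analogous constraint for $v$.

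Next I argue the two directions. For ``$\Rightarrow$'' (CE implies tiling) I take $\Imc$ to be the grid $\mathbb N \times \mathbb N$, which is a $\Sigma$-model of $\Tmc_1$ by design. Since $\Tmc_2$ is a model conservative extension, $\Imc$ extends to a model $\Jmc$ of $\Tmc_2$; reading off the interpretations of $T_1, \ldots, T_n$ yields a function $\tau : \mathbb N \times \mathbb N \to \mathfrak T$ that is a valid tiling, because the cover, disjointness, and matching CIs in $\Tmc_{\mn{tile}}$ precisely encode the tiling conditions. For ``$\Leftarrow$'' (tiling implies CE) I assume a tiling $\tau$ of $\mathbb N \times \mathbb N$ and must expand an arbitrary $\Sigma$-model of $\Tmc_1$ to a model of $\Tmc_2$. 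The parity concepts align each element of the model with a position in $\mathbb N \times \mathbb N$ modulo $2$, so local matching constraints can be inherited from $\tau$, and a compactness / K\"onig's lemma argument based on the tree model property of \ALC converts the local consistency into a global tile labelling.

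The main obstacle is the direction ``tiling implies CE''. An arbitrary model of $\Tmc_1$ may have cycles in $h$ or $v$ or unexpected branching, none of which are forbidden by the parity CIs, and such models are not automatically tileable from $\tau$ alone: a cycle $x \to_h x$, for instance, forces $\mn{right}(\tau(x)) = \mn{left}(\tau(x))$, a condition $\mathfrak T$ need not satisfy. Overcoming this requires either strengthening $\Tmc_1$ to rule out problematic cycles and branching (while keeping $\mathbb N \times \mathbb N$ a model, so ``$\Rightarrow$'' still works) or exploiting the full disjunctive power of \ALC in $\Tmc_{\mn{tile}}$ to accommodate non-grid structures via a local/compactness-based transport argument. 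Balancing $\Tmc_1$ between ``weak enough that the grid is a model'' and ``strong enough that every model extends'' is the technical heart of the proof, and explains why the result does not carry over to the fragments \EL or {\sl DL-Lite} in which such disjunctive balancing is unavailable.
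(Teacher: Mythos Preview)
Your reduction has a genuine gap in the direction ``tiling implies conservative extension'', and the gap is not repairable along the lines you sketch. Consider a tile system with three tiles whose only valid horizontal succession is $T_1 \to T_2 \to T_3 \to T_1 \to \cdots$ (period $3$), with vertical matching trivial. This system tiles $\mathbb N \times \mathbb N$. Now take the $\Sigma$-model of your $\Tmc_1$ consisting of two points $a,b$ with $a \to_h b \to_h a$ (plus whatever $v$-successors are needed); the parity constraints allow this $2$-cycle. Extending to a model of $\Tmc_2$ would require a horizontal $2$-cycle in the tile system, which does not exist. So this model witnesses that $\Tmc_2$ is \emph{not} a model conservative extension of $\Tmc_1$, even though a tiling exists. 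Refining parities to a larger modulus does not help: there are tile systems (Robinson tiles) that tile $\mathbb N \times \mathbb N$ only aperiodically, so no finite modulus suffices. Your compactness/K\"onig suggestion also fails here, because the obstruction is already a finite cycle.

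The paper's proof avoids this by reversing the roles of the two signatures. It puts the tile names and the matching constraints into $\Tmc_1$ (so tiles are \emph{visible}), and uses the fresh symbols of $\Tmc_2$ only to \emph{detect a grid defect}: $\Tmc_2$ adds a single CI saying that, via a fresh role $u$, one can reach a point where $x$ is non-functional, or $y$ is non-functional, or $x\circ y$ and $y\circ x$ disagree (witnessed by a fresh concept $B$). Then a tiling exists iff $\Tmc_1$ has a defect-free model (the actual grid), and such a model cannot be expanded to satisfy $\Tmc_2$; conversely, if no tiling exists then every model of $\Tmc_1$ must contain a defect, which the fresh symbols can point to. The crucial asymmetry is that the fresh symbols are used to \emph{locate one witness} rather than to \emph{construct a globally consistent labelling}; the former only needs existential information, whereas your construction forces the fresh tile names to satisfy constraints everywhere, which is exactly what breaks on non-grid models.
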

\begin{proof}
  Given a set $\mathfrak T$ of tile types, we regard each $T\in
  \mathfrak{T}$ as a concept name and let $x$ and $y$ be role
  names. Let $\Tmc_1$ be the TBox with the following CIs:
\begin{align*}
& \top \sqsubseteq \bigsqcup_{T\in \mathfrak{T}}T,\\
& T \sqcap T' \sqsubseteq \bot, \quad \text{for $T\ne T'$},\\
& T \sqcap \exists x.T' \sqsubseteq \bot, \quad \text{for ${\sf right}(T)\ne {\sf left}(T')$},\\
& T \sqcap \exists y.T'\sqsubseteq \bot, \quad \text{for ${\sf up}(T)\ne {\sf down}(T')$},\\
& 
\top \sqsubseteq \exists x.\top \sqcap \exists y.\top.
\end{align*}
Let $\Tmc_{2}= \Tmc_{1} \cup \Tmc$, where $\Tmc$ consists of a single CI:
$$
\top \sqsubseteq \exists u.(\exists x.B \sqcap \exists x.\neg B) \sqcup \exists u.(\exists y.B \sqcap \exists y.\neg B)
\sqcup \exists u.(\exists x. \exists y.B \sqcap \exists y.\exists x.\neg B),
$$
where $u$ is a fresh role name and $B$ is a fresh concept name. Let
$\Sigma={\sf sig}(\Tmc_{1})$. One can show that $\Tmc$ can be satisfied in a
model $\Jmc=_{\Sigma} \Imc$ iff in $\Imc$ either $x$ is not functional or $y$ is
not functional or $x\circ y \not= y \circ x$. It is not hard to see then that
$\mathfrak{T}$ tiles $\mathbb N \times \mathbb N$ iff $\Tmc_{1}$ and $\Tmc_{2}$
are not $\Sigma$-model inseparable.
\qed
\end{proof}

The only standard DLs for which model inseparability is known to be
decidable are certain {\sl DL-Lite} dialects.  In fact, it is shown in
\cite{KWZ10} that $\Sigma$-model entailment between TBoxes in the
extensions of $\DLc$ with Boolean operators and unqualified number
restrictions is decidable. The computational complexity remains open
and for the extension $\DLcH$ of $\DLc$ with role hierarchies, even
decidability is open.  The decidability proof given in \cite{KWZ10} is
by reduction to the two-sorted first-order theory of Boolean algebras
(BA) combined with Presburger arithmetic (PA) for representing
cardinalities of sets. The decidability of this theory, called BAPA,
has been first proved in \cite{Feferman&Vaught59}. Here we do not go
into the decidability proof, but confine ourselves to giving an
instructive example which shows that \emph{uncountable} models have to
be considered when deciding model entailment in $\DLc$ extended with
unqualified number restrictions~\cite{KWZ10}.
\begin{example}\label{exxx}
The TBox $\mathcal{T}_{1}$ states, using auxiliary role names $r$ and $s$,
that the extension of the concept name $B$ is infinite: 
%
\begin{eqnarray*}
\mathcal{T}_1  & = &\{ \top \sqsubseteq \exists r.\top, \
\exists r^-.\top \sqsubseteq \exists s.\top, \
\exists s^{-}.\top \sqsubseteq B, \\
               &  & \ \ B \sqsubseteq \exists s.\top, \ (\mathop{\geq
                 2} s^{-}.\top) \sqsubseteq \bot, \ \exists r^{-}.\top \sqcap \exists s^{-}.\top \sqsubseteq \bot\}.
\end{eqnarray*}
The TBox $\mathcal{T}_{2}$ states that $p$ is an injective function from $A$ to $B$:
\begin{equation*}
\mathcal{T}_{2} = \{ A \equiv \exists p.\top, \ \  \exists p^{-}.\top \sqsubseteq B,\ \
(\mathop{\ge 2} p.\top) \sqsubseteq \bot,\ \ (\mathop{\ge 2} p^-.\top) \sqsubseteq \bot
\}.
\end{equation*}
Let $\Sigma = \{A,B\}$. There exists an uncountable model $\mathcal{I}$ of $\mathcal{T}_{1}$ with uncountable 
$A^{\mathcal{I}}$ and at most countable $B^{\mathcal{I}}$. Thus, there is no injection from $A^{\mathcal{I}}$ to  $B^{\mathcal{I}}$, and so $\mathcal{I} \in {\sf mDiff}_{\Sigma}(\mathcal{T}_{1},\mathcal{T}_{2})$
and $\mathcal{T}_{1}$ does not $\Sigma$-model entail $\mathcal{T}_{2}$. Observe, however, that if $\mathcal{I}$ is a 
countably infinite model of $\mathcal{T}_{1}$, then there is always an injection from  $A^{\mathcal{I}}$ to 
$B^{\mathcal{I}}$. Thus, in this case there exists a model $\mathcal{I}'$ of
$\mathcal{T}_{2}$ with $\Imc'=_{\Sigma} \Imc$. It follows that uncountable models of $\mathcal{T}_{1}$
are needed to prove that $\T_{1}$ does not $\Sigma$-model entail $\mathcal{T}_{2}$.
\end{example}

An interesting way to make $\Sigma$-model inseparability decidable is
to require that $\Sigma$ contains only concept names. We show that, in
this case, one can use the standard filtration technique from modal
logic to show that there always exists a counterexample to
$\Sigma$-model inseparability of at most exponential size (in sharp
contrast to Example~\ref{exxx}).
\begin{lemma}\label{expmodelproperty}
Suppose $\Tmc_{1}$ and $\Tmc_{2}$ are $\mathcal{ALC}$ TBoxes and $\Sigma$ contains
concept names only. If ${\sf mDiff}_{\Sigma}(\Tmc_{1},\Tmc_{2})\not=\emptyset$, then
there is an interpretation $\Imc$  in
${\sf mDiff}_{\Sigma}(\Tmc_{1},\Tmc_{2})$ such that $|\Delta^{\Imc}|\leq 2^{|\Tmc_{1}|+|\Tmc_{2}|}$.
\end{lemma}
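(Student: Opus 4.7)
The plan is to use the standard filtration through the set of subconcepts, together with a ``blow-up'' argument to preserve membership in ${\sf mDiff}_{\Sigma}$. Let $\mathcal{I}\in {\sf mDiff}_{\Sigma}(\Tmc_{1},\Tmc_{2})$ and let ${\sf cl}$ denote the set of all subconcepts of concepts appearing in $\Tmc_{1}\cup\Tmc_{2}$, so $|{\sf cl}|\leq |\Tmc_{1}|+|\Tmc_{2}|$. Define $d\sim e$ iff $d$ and $e$ satisfy in $\Imc$ the same concepts from ${\sf cl}$, and let $[d]$ be the equivalence class of $d$. Let $\Imc^{f}$ be the standard filtration: $\Delta^{\Imc^{f}}=\{[d]\mid d\in\Delta^{\Imc}\}$, $A^{\Imc^{f}}=\{[d]\mid d\in A^{\Imc}\}$ for all concept names $A$, and $r^{\Imc^{f}}=\{([d],[e])\mid \exists d'\sim d,\,e'\sim e:(d',e')\in r^{\Imc}\}$ for all role names $r$. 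By a routine induction on concepts $C\in{\sf cl}$ (using that $\sim$ refines truth of every $\forall r.D\in{\sf cl}$), one obtains the filtration lemma: $d\in C^{\Imc}$ iff $[d]\in C^{\Imc^{f}}$. In particular $\Imc^{f}\models\Tmc_{1}$, and clearly $|\Delta^{\Imc^{f}}|\leq 2^{|{\sf cl}|}\leq 2^{|\Tmc_{1}|+|\Tmc_{2}|}$.

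It remains to show that $\Imc^{f}\in {\sf mDiff}_{\Sigma}(\Tmc_{1},\Tmc_{2})$. Suppose, for contradiction, that there is $\Jmc\models\Tmc_{2}$ with $\Jmc=_{\Sigma}\Imc^{f}$; so $\Delta^{\Jmc}=\Delta^{\Imc^{f}}$ and $A^{\Jmc}=A^{\Imc^{f}}$ for every $A\in\Sigma$. I will blow $\Jmc$ back up into a model $\Jmc'$ defined on $\Delta^{\Imc}$ via the projection $\pi(d)=[d]$:
\begin{itemize}
\item $\Delta^{\Jmc'}=\Delta^{\Imc}$;
\item $A^{\Jmc'}=\{d\mid [d]\in A^{\Jmc}\}$ for every concept name $A$;
\item $r^{\Jmc'}=\{(d,e)\mid ([d],[e])\in r^{\Jmc}\}$ for every role name $r$.
\end{itemize}

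The key claim, proved by straightforward induction on an arbitrary \ALC-concept $C$, is that $d\in C^{\Jmc'}$ iff $[d]\in C^{\Jmc}$ (Boolean cases are direct from the definition; the existential case uses that any $\Jmc$-witness $[e]$ can be lifted to some $e\in\Delta^{\Imc}$ with $[e]=[e]$; the universal case is dual). This immediately yields $\Jmc'\models\Tmc_{2}$ by applying the claim to the CIs of $\Tmc_{2}$. Moreover, for any $A\in\Sigma$ (which is a concept name by hypothesis), using that $A\in{\sf cl}$ and the filtration lemma for $\Imc^{f}$, we have $A^{\Jmc'}=\{d\mid[d]\in A^{\Jmc}\}=\{d\mid[d]\in A^{\Imc^{f}}\}=\{d\mid d\in A^{\Imc}\}=A^{\Imc}$. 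Hence $\Jmc'=_{\Sigma}\Imc$, contradicting $\Imc\in{\sf mDiff}_{\Sigma}(\Tmc_{1},\Tmc_{2})$.

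The main obstacle is the blow-up step: one has to verify that the induction $d\in C^{\Jmc'}\Leftrightarrow[d]\in C^{\Jmc}$ goes through for \emph{every} \ALC-concept $C$ (not just those in ${\sf cl}$), which is why the definitions of $A^{\Jmc'}$ and $r^{\Jmc'}$ must be taken literally with respect to $\pi$, and why it is essential that $\Sigma$ contains only concept names---so that $A^{\Jmc'}=A^{\Imc}$ for $A\in\Sigma$ can be read off from the filtration lemma, without any need to control role interpretations. If $\Sigma$ were allowed to contain a role name $r$, the condition $r^{\Jmc'}=r^{\Imc}$ would in general fail, and the argument would break down (consistent with the undecidability results for unrestricted~$\Sigma$).
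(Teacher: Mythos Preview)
Your proof is correct and follows essentially the same filtration-plus-blowup argument as the paper's own proof. One minor caveat: your assertion that $A\in{\sf cl}$ for every $A\in\Sigma$ need not hold in general, but this is harmless since any concept name $A\in\Sigma\setminus\sig(\Tmc_1\cup\Tmc_2)$ does not occur in $\Tmc_2$ and can therefore be reinterpreted in $\Jmc'$ as $A^{\Imc}$ without affecting $\Jmc'\models\Tmc_2$.
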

\begin{proof}
  Assume $\Imc\in {\sf mDiff}_{\Sigma}(\Tmc_{1},\Tmc_{2})$. Define an
  equivalence relation ${\sim} \subseteq \Delta^{\Imc}\times \Delta^{\Imc}$ by
  setting $d\sim d'$ iff, for all $C\in {\sf sub}(\Tmc_{1}\cup \Tmc_{2})$, we
  have $d\in C^{\Imc_{1}}$ iff $d'\in C^{\Imc_{2}}$. Let
  $[d]=\{ d'\in \Delta^{\Imc}\mid d'\sim d\}$.  Define an
  interpretation $\Imc'$ by taking
\begin{eqnarray*}
\Delta^{\Imc'} & = & \{ [d] \mid d\in \Delta^{\Imc}\},\\
A^{\Imc'} & = & \{ [d] \mid d\in A^{\Imc}\} \mbox{ for all $A\in {\sf sub}(\Tmc_{1})$, }\\
r^{\Imc'} & = & \{ ([d],[d'])\mid \exists e\in [d]\exists e'\in [d'] (e,e')\in r^{\Imc}\} \mbox{ for all role names $r$}.
\end{eqnarray*}
It is not difficult to show that $d\in C^{\Imc}$ iff $[d]\in C^{\Imc'}$ for all $d\in \Delta^{\Imc}$
and $C\in {\sf sub}(\Tmc_{1})$. Thus $\Imc'$ is a model of $\Tmc_{1}$. We now show that there does
not exist a model $\Jmc'$ of $\Tmc_{2}$ with $\Imc'=_{\Sigma} \Jmc'$. For a proof by contradiction,
assume that such a $\Jmc'$ exists. We define a model $\Jmc$ of $\Tmc_{2}$ with $\Jmc =_{\Sigma} \Imc$,
and thus derive a contradiction to the assumption that $\Imc\in  {\sf mDiff}_{\Sigma}(\Tmc_{1},\Tmc_{2})$.
To this end, let $A^{\Jmc}=A^{\Imc}$ for all $A\in \Sigma$ and set
\begin{eqnarray*}
A^{\Jmc} & = & \{ d \mid [d]\in A^{\Jmc}\} \mbox{ for all $A\not\in\Sigma$, }\\
r^{\Jmc} & = & \{ (d,d') \mid ([d],[d'])\in r^{\Jmc}\} \mbox{ for all role names $r$. }
\end{eqnarray*}
Note that the role names (which are all not in $\Sigma$), are
interpreted in a `maximal' way. It can be proved that $d\in C^{\Jmc}$
iff $[d]\in C^{\Jmc'}$ for all $d\in \Delta^{\Imc}$ and $C\in {\sf
  sub}(\Tmc_{2})$. Thus $\Jmc$ is a model of $\Tmc_2$ and we have
derived a contradiction.
\qed
\end{proof}

Using the bounded model property established in
Lemma~\ref{expmodelproperty}, one can prove a $\coNExp^\NP$ upper
bound for model inseparability. A matching lower bound and several
extensions of this result are proved
in~\cite{DBLP:journals/ai/KonevL0W13}.
\begin{theorem}
\label{thm:conceptmod}
  In \ALC, $\Sigma$-model inseparability is $\coNExp^\NP$-complete
  when $\Sigma$ is restricted to sets of concept names.
\end{theorem}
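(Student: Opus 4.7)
The plan is to prove the upper bound directly using Lemma~\ref{expmodelproperty} and to obtain the lower bound via reduction from a known $\coNExp^\NP$-hard problem, as done in \cite{DBLP:journals/ai/KonevL0W13}. Since $\Sigma$-model inseparability between $\Tmc_1$ and $\Tmc_2$ amounts to $\Sigma$-model entailment in both directions, and entailment polynomially reduces to inseparability via Lemma~\ref{lem:entail}, it suffices to pin down the complexity of (non-)$\Sigma$-model entailment.

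For the upper bound, I would show that the \emph{complement} of $\Sigma$-model entailment lies in $\NExp^\NP$. The algorithm, on input $\Tmc_1,\Tmc_2,\Sigma$, proceeds as follows. First, by Lemma~\ref{expmodelproperty}, if $\mn{mDiff}_\Sigma(\Tmc_1,\Tmc_2)\neq\emptyset$, then there is a witnessing interpretation $\Imc$ of size at most $2^{|\Tmc_1|+|\Tmc_2|}$. So the machine guesses such an $\Imc$ in nondeterministic exponential time and deterministically verifies $\Imc\models\Tmc_1$, which is possible in time polynomial in $|\Imc|$, hence exponential in the input. Next, it must verify that no $\Jmc$ with $\Jmc=_\Sigma\Imc$ is a model of $\Tmc_2$. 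Because $\Sigma$ contains only concept names, any such $\Jmc$ shares its domain and $\Sigma$-extensions with $\Imc$, and is determined by the interpretations of the remaining (non-$\Sigma$) concept and role names over $\Delta^\Imc$. Hence the question ``does there exist a $\Jmc=_\Sigma\Imc$ with $\Jmc\models\Tmc_2$?'' is an NP question in the (exponential) size of $\Imc$: guess the missing interpretations and check the CIs of $\Tmc_2$ pointwise. This is precisely the kind of query allowed for an NP oracle in an $\NExp^\NP$ computation, since oracle queries may have size polynomial in the time bound, i.e., exponential in the original input. If the oracle returns ``no'', the machine accepts. Thus non-entailment is in $\NExp^\NP$ and entailment, hence inseparability, is in $\coNExp^\NP$.

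The main obstacle on the upper-bound side is the mismatch between the exponential size of the witness $\Imc$ and the polynomial queries that an NP oracle would usually expect; I would address this by carefully invoking the convention that the oracle query length is bounded by the runtime of the base machine, so that the NP-oracle call on the exponential-size $\Imc$ is legitimate and computes the desired satisfiability check in NP relative to $|\Imc|$. One has to argue that checking $\Jmc\models\Tmc_2$ given a guessed $\Jmc$ of exponential size is polynomial in $|\Jmc|+|\Tmc_2|$, which is routine since each CI can be verified by enumerating elements of $\Delta^\Jmc$.

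For the matching $\coNExp^\NP$ lower bound I would appeal to the reduction in \cite{DBLP:journals/ai/KonevL0W13}. The natural approach there is to encode a $\coNExp^\NP$-complete tiling problem (for instance, tiling an exponential grid where some cells impose NP-like side conditions such as a non-tautological propositional constraint) into a pair $(\Tmc_1,\Tmc_2)$ together with a signature $\Sigma$ of concept names: $\Tmc_1$ generates structures representing candidate tilings in the $\Sigma$-reduct, and $\Tmc_2$ uses auxiliary non-$\Sigma$ symbols to certify that the $\Sigma$-reduct is not a valid tiling, so that $\Tmc_1$ fails to $\Sigma$-model entail $\Tmc_2$ precisely when some $\Sigma$-reduct of a model of $\Tmc_1$ cannot be ``audited'' by $\Tmc_2$. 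Full details of this reduction are intricate and are worked out in the cited paper, so I would defer to that reference rather than reconstruct the encoding here.
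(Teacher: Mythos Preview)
Your upper bound argument is essentially identical to the paper's: both invoke Lemma~\ref{expmodelproperty} to guess an exponential-size $\Imc\in\mn{mDiff}_\Sigma(\Tmc_1,\Tmc_2)$ and then use an NP oracle (with $\Imc$ as part of the input) to check that no $\Jmc=_\Sigma\Imc$ satisfies $\Tmc_2$. Your additional remark about reducing inseparability to entailment via Lemma~\ref{lem:entail} is correct and harmless.

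For the lower bound, you and the paper both defer to \cite{DBLP:journals/ai/KonevL0W13}, but your speculation about the \emph{content} of that reduction is off: the paper reports that the hardness is obtained by reduction from satisfiability of \emph{circumscribed} $\mathcal{ALC}$ KBs (a problem known to be $\NExp^\NP$-hard), not from a tiling problem. Since you explicitly defer to the reference for the details, this does not invalidate your proof strategy, but you should correct the description of the source problem.
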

\begin{proof}
  We sketch the proof of the upper bound. It is sufficient to show
  that one can check in $\NExp^\NP$ whether ${\sf
    mDiff}_{\Sigma}(\Tmc_{1},\Tmc_{2})\not=\emptyset$. By
  Lemma~\ref{expmodelproperty}, one can do this by guessing a model
  $\Imc$ of $\Tmc_1$ of size at most $2^{|\Tmc_{1}|+|\Tmc_{2}|}$ and
  then calling an oracle to verify that there is no model $\Jmc$ of
  $\Tmc_{2}$ with $\Jmc=_{\Sigma} \Imc$. The oracle runs in \NP
  since we can give it the guessed \Imc as an input, thus we are 
  asking for a model of $\Tmc_2$ of size polynomial in the size
  of the oracle input.

  The lower bound is proved in \cite{DBLP:journals/ai/KonevL0W13} by a
  reduction of satisfiability in circumscribed $\mathcal{ALC}$ KBs, which
  is known to be $\coNExp^\NP$-hard.
  \qed
\end{proof}

In~\cite{DBLP:journals/ai/KonevL0W13}, Theorem~\ref{thm:conceptmod} is
generalized to \ALCI. We conjecture that it can be further
extended to most standard DLs that admit the finite model property.
For DLs without the finite model property such as $\mathcal{ALCQI}$, we expect that
BAPA-based techniques, as used for circumscription in
\cite{DBLP:conf/birthday/BonattiFLSW14}, can be employed to obtain an analog of
Theorem~\ref{thm:conceptmod}.

\subsection{Model inseparability from  the empty TBox}

We now consider model inseparability in the case where one TBox is
empty.  To motivate this important case, consider the application of
ontology reuse, where one wants to import a TBox $\Tmc_{{\sf im}}$
into a TBox $\Tmc$ that is currently being developed.
%
%
Recall that the
result of importing $\Tmc_{\sf im}$ in $\Tmc$ is the union $\Tmc\cup
\Tmc_{{\sf im}}$ and that, when importing $\Tmc_{{\sf im}}$ into
$\Tmc$, the TBox $\Tmc$ is not supposed to interfere with the
modeling of the symbols from $\Tmc_{\sf im}$. 
We can formalize this requirement by demanding that
\begin{itemize}
\item $\Tmc\cup \Tmc_{{\sf im}}$ and $\Tmc_{\sf im}$ are $\Sigma$-model inseparable for $\Sigma= \sig(\Tmc_{\sf im})$.
\end{itemize}
In this scenario, one has to be prepared for the imported TBox
$\Tmc_{\sf im}$ to be revised. Thus, one would like to design
the importing TBox \Tmc such that \emph{any} TBox $\Tmc_{\sf im}$ can
be imported into \Tmc without undesired interaction as long as the
signature of $\Tmc_{\sf im}$ is not changed. Intuitively, $\Tmc$
provides a \emph{safe interface for importing ontologies} that only
share symbols from some fixed signature $\Sigma$ with $\Tmc$.  This
idea led to the definition of safety for a signature
in~\cite{JairGrau}:
\begin{definition}\label{def:safety}\em 
  Let \Tmc be an \Lmc TBox. We say that $\Tmc$ is \emph{safe for a
    signature $\Sigma$ under model inseparability} if $\Tmc\cup
  \Tmc_{{\sf im}}$ is $\mn{sig}(\Tmc_{\sf im})$-model inseparable from
  $\Tmc_{{\sf im}}$ for all \Lmc TBoxes $\Tmc_{\sf im}$ with
  $\sig(\Tmc) \cap \sig(\Tmc_{\sf im})\subseteq \Sigma$.
\end{definition}

As one quantifies over all TBoxes $\Tmc_{{\sf im}}$ in
Definition~\ref{def:safety}, safety for a signature seems hard to deal
with algorithmically. Fortunately, it turns out that the
quantification can be avoided. This is related to the following
robustness property.\footnote{Similar robustness properties and  
notions of equivalence have been discussed in logic programming, we refer the reader to~\cite{Maher,DBLP:journals/tocl/LifschitzPV01,DBLP:conf/iclp/EiterF03} and references therein. We will discuss this robustness property further in Section~\ref{sect:final}.}
\begin{definition}\em\label{def:rob_under_replacement}
  A DL \Lmc is said to be \emph{robust under replacement for model
    inseparability} if, for all \Lmc TBoxes $\Tmc_1$ and $\Tmc_2$ and
  signatures $\Sigma$, the following condition is satisfied: if
  $\Tmc_{1}$ and $\Tmc_{2}$ are $\Sigma$-model inseparable, then $\Tmc_{1}\cup
  \Tmc$ and $\Tmc_{2}\cup \Tmc$ are $\Sigma$-model inseparable for all \Lmc TBoxes $\Tmc$
  with ${\sf \sig}(\Tmc) \cap {\sf sig}(\Tmc_{1}\cup \Tmc_{2})
  \subseteq \Sigma$.
\end{definition}

The following has been observed in \cite{JairGrau}. It again applies
to any standard DL, and in fact even to second-order logic. 
\begin{theorem}
  In any DL \Lmc, model inseparability is robust under replacement.
\end{theorem}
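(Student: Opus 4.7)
The plan is to prove the theorem by a direct model-gluing argument, exploiting the fact that the signature condition $\sig(\Tmc) \cap \sig(\Tmc_1 \cup \Tmc_2) \subseteq \Sigma$ makes the non-$\Sigma$ parts of $\Tmc$ and of $\Tmc_i$ disjoint, so their interpretations can be chosen independently.

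By symmetry it suffices to show that $\Tmc_1 \cup \Tmc$ $\Sigma$-model entails $\Tmc_2 \cup \Tmc$. So I start with an arbitrary model $\Imc$ of $\Tmc_1 \cup \Tmc$. Since $\Imc \models \Tmc_1$ and $\Tmc_1$, $\Tmc_2$ are $\Sigma$-model inseparable, there exists a model $\Jmc$ of $\Tmc_2$ with $\Jmc =_\Sigma \Imc$; in particular $\Delta^\Jmc = \Delta^\Imc$. The goal is to build from $\Jmc$ and $\Imc$ a joint model $\Jmc^\star$ of $\Tmc_2 \cup \Tmc$ satisfying $\Jmc^\star =_\Sigma \Imc$.

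I define $\Jmc^\star$ on the common domain $\Delta^\Imc$ by splitting the signature into three disjoint pieces: (i) on symbols $X \in \sig(\Tmc_2)$, set $X^{\Jmc^\star} = X^\Jmc$; (ii) on symbols $X \in \sig(\Tmc) \setminus \sig(\Tmc_2)$, set $X^{\Jmc^\star} = X^\Imc$; (iii) on remaining symbols, choose anything. This is well-defined because the signature hypothesis gives $\sig(\Tmc) \cap \sig(\Tmc_2) \subseteq \Sigma$, so whenever a symbol $X$ belongs to both $\sig(\Tmc)$ and $\sig(\Tmc_2)$ it lies in $\Sigma$, and on $\Sigma$ the definitions agree since $X^\Jmc = X^\Imc$ by $\Jmc =_\Sigma \Imc$. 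By the coincidence lemma (which holds for any standard DL, and indeed for SO), $\Jmc^\star$ agrees with $\Jmc$ on $\sig(\Tmc_2)$, so $\Jmc^\star \models \Tmc_2$; and $\Jmc^\star$ agrees with $\Imc$ on $\sig(\Tmc)$, so $\Jmc^\star \models \Tmc$. Finally $\Jmc^\star =_\Sigma \Imc$ holds because every $\Sigma$-symbol falls under clause (i) or (ii) (or is interpreted to match $\Imc$ under clause (iii) if it appears nowhere), and in each case the value agrees with $\Imc$ on $\Sigma$.

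There is no genuine obstacle: the argument is essentially a bookkeeping exercise. The only subtle point is ensuring that clauses (i) and (ii) in the definition of $\Jmc^\star$ do not conflict, which is exactly what the hypothesis $\sig(\Tmc) \cap \sig(\Tmc_1 \cup \Tmc_2) \subseteq \Sigma$ guarantees together with $\Jmc =_\Sigma \Imc$. Conceptually, this is why the theorem works uniformly for every DL: it relies only on the coincidence lemma and on the ability to freely recombine interpretations of disjoint signatures, both of which are language-independent features.
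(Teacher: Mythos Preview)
Your proof is correct. The paper itself does not supply a proof of this theorem; it merely records the observation with a citation and the remark that it holds even for second-order logic. Your model-gluing argument is exactly the natural one and matches the spirit of that remark: the only ingredients are the coincidence lemma and the signature disjointness hypothesis, which lets you splice the $\sig(\Tmc_2)$-part of $\Jmc$ with the $\sig(\Tmc)$-part of $\Imc$ without conflict.

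One tiny presentational point: in clause~(iii) you say ``choose anything'' but then rely on having chosen $X^{\Jmc^\star}=X^\Imc$ for $\Sigma$-symbols that happen to fall there. It is cleaner to simply define clause~(iii) as ``for all remaining symbols $X$, set $X^{\Jmc^\star}=X^\Imc$''; then $\Jmc^\star =_\Sigma \Imc$ is immediate without the parenthetical caveat.
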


Using robustness under replacement, it can be proved that safety for a
signature is nothing but inseparability from the empty TBox, in this
way eliminating the quantification over TBoxes used in the original
definition. This has first been observed in \cite{JairGrau}. The
connection to robustness under replacement is from
\cite{DBLP:journals/ai/KonevL0W13}.
\begin{theorem}\label{th:safety}
A TBox $\Tmc$ is safe for a signature $\Sigma$ under model-inseparability iff 
$\Tmc$ is $\Sigma$-model inseparable from the empty TBox.
\end{theorem}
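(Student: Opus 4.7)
The plan is to handle the two directions separately, with essentially all the work concentrated in $(\Leftarrow)$. The key observation is that $\Sigma$-model inseparability of $\Tmc$ from $\emptyset$ unpacks to the clause \emph{for every interpretation $\Imc$ there is a model $\Jmc_{0}$ of $\Tmc$ with $\Jmc_{0} =_{\Sigma} \Imc$}, and this is precisely the raw material needed to turn any model of $\Tmc_{\sf im}$ into a model of $\Tmc \cup \Tmc_{\sf im}$, provided the signatures of $\Tmc$ and $\Tmc_{\sf im}$ meet only in $\Sigma$.

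For $(\Leftarrow)$, I would fix $\Tmc_{\sf im}$ with $\sig(\Tmc) \cap \sig(\Tmc_{\sf im}) \subseteq \Sigma$ and set $\Sigma^{\ast} = \sig(\Tmc_{\sf im})$. One direction is trivial: every model of $\Tmc \cup \Tmc_{\sf im}$ is already a model of $\Tmc_{\sf im}$, so $\Tmc \cup \Tmc_{\sf im}$ $\Sigma^{\ast}$-model entails $\Tmc_{\sf im}$ immediately. For the converse, given $\Imc \models \Tmc_{\sf im}$, I would invoke the hypothesis to obtain $\Jmc_{0} \models \Tmc$ with $\Jmc_{0} =_{\Sigma} \Imc$ (on the same domain), and then define $\Jmc$ by pasting: $X^{\Jmc} = X^{\Imc}$ for $X \in \Sigma^{\ast}$, and $X^{\Jmc} = X^{\Jmc_{0}}$ for $X \in \sig(\Tmc) \setminus \Sigma^{\ast}$, with other symbols chosen arbitrarily. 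The two defining clauses are consistent, since $\sig(\Tmc) \cap \Sigma^{\ast} \subseteq \Sigma$ and $\Imc, \Jmc_{0}$ already agree on $\Sigma$. It then remains to check that $\Jmc$ coincides with $\Jmc_{0}$ on $\sig(\Tmc)$ (so $\Jmc \models \Tmc$) and with $\Imc$ on $\Sigma^{\ast}$ (so $\Jmc \models \Tmc_{\sf im}$ and $\Jmc =_{\Sigma^{\ast}} \Imc$), which is routine.

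For $(\Rightarrow)$, the plan is to instantiate safety with a tautological TBox whose signature equals $\Sigma$, for instance $\Tmc_{\sf im} = \{ A \sqsubseteq A \mid A \in \Sigma \cap \NC \} \cup \{ \exists r . \top \sqsubseteq \exists r . \top \mid r \in \Sigma \cap \NR \}$. Then $\sig(\Tmc) \cap \sig(\Tmc_{\sf im}) \subseteq \Sigma$ holds automatically, so safety yields that $\Tmc \cup \Tmc_{\sf im}$ and $\Tmc_{\sf im}$ are $\Sigma$-model inseparable. Since $\Tmc_{\sf im}$ is logically equivalent to $\emptyset$ and $\Tmc \cup \Tmc_{\sf im}$ to $\Tmc$, condition \emph{(i)} of the definition of an inseparability relation transfers this to $\Sigma$-model inseparability of $\Tmc$ and $\emptyset$, as required; if one prefers a $\Tmc_{\sf im}$ with a larger signature, condition \emph{(ii)} shrinks it back down to $\Sigma$.

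The main obstacle will be verifying the pasting step in $(\Leftarrow)$: one has to be sure that the two defining clauses for $\Jmc$ cannot conflict, which precisely exploits the signature condition $\sig(\Tmc) \cap \sig(\Tmc_{\sf im}) \subseteq \Sigma$ together with the $\Sigma$-agreement of $\Imc$ and $\Jmc_{0}$. Without this signature condition, shared non-$\Sigma$ symbols could receive incompatible interpretations and the construction would break down; this is precisely the role played by the signature hypothesis in the definition of safety, so the argument is tight.
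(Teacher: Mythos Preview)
Your proposal is correct. For the direction $(\Rightarrow)$ you do exactly what the paper does: instantiate safety with a tautological $\Sigma$-TBox and then pass along logical equivalence to conclude $\Sigma$-model inseparability of $\Tmc$ from $\emptyset$.

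For $(\Leftarrow)$ the paper takes a different, more modular route: it simply invokes the previously established theorem that model inseparability is \emph{robust under replacement}. From $\Tmc \equiv_\Sigma \emptyset$ and $\sig(\Tmc_{\sf im}) \cap \sig(\Tmc) \subseteq \Sigma$, robustness under replacement (applied with $\Tmc_1 = \Tmc$, $\Tmc_2 = \emptyset$, and the added TBox $\Tmc_{\sf im}$) yields $\Tmc \cup \Tmc_{\sf im}$ inseparable from $\Tmc_{\sf im}$. Your pasting construction is essentially the \emph{unfolded} proof of that robustness property in this special case: the model $\Jmc$ you build by merging $\Imc$ on $\sig(\Tmc_{\sf im})$ with $\Jmc_0$ on $\sig(\Tmc)\setminus\sig(\Tmc_{\sf im})$ is precisely what one does to establish robustness under replacement for model inseparability. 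So your argument is more self-contained and elementary, while the paper's is shorter because it factors the work through a reusable lemma. One small advantage of your direct route is that it makes transparent why the conclusion holds for the signature $\sig(\Tmc_{\sf im})$ rather than merely $\Sigma$; in the paper's proof this step is left implicit in the appeal to robustness.
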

\begin{proof}
  Assume first that $\Tmc$ is not $\Sigma$-model inseparable from
  $\emptyset$.  Then $\Tmc\cup \Tmc_{im}$ is not $\Sigma$-model
  inseparable from $\Tmc_{im}$, where $\Tmc_{im}$ is the trivial
  $\Sigma$-TBox $\Tmc_{im}=\{A \sqsubseteq A \mid A\in \Sigma\cap
  \NC\}\cup \{ \exists r.\top \sqsubseteq \top\mid r\in \Sigma\cap
  \NR\}$.  Hence $\Tmc$ is not safe for $\Sigma$. Now assume $\Tmc$ is
  $\Sigma$-model inseparable from $\emptyset$ and let $\Tmc_{im}$ be a
  TBox such that ${\sf sig}(\Tmc)\cap {\sf sig}(\Tmc_{im})\subseteq
  \Sigma$.  Then it follows from robustness under replacement that
  $\Tmc \cup \Tmc_{im}$ is ${\sf sig}(\Tmc_{im})$-model inseparable from
  $\Tmc_{im}$.
\qed
\end{proof}

By Theorem~\ref{th:safety}, deciding safety of a TBox \Tmc for a
signature $\Sigma$ under model inseparability amounts to checking
$\Sigma$-model inseparability from the empty TBox.  We thus consider
the latter problem as an important special case of model
inseparability.  Unfortunately, even in $\mathcal{EL}$, model
inseparability from the empty TBox is undecidable
\cite{DBLP:journals/ai/KonevL0W13}.
\begin{theorem}
  In \EL, model inseparability from the empty TBox is undecidable.
\end{theorem}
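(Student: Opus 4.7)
The plan is to prove undecidability by reduction from a known undecidable problem---the most natural candidates being an $\mathbb{N}\times\mathbb{N}$ tiling problem or the Post Correspondence Problem (PCP). The starting observation is that $\Tmc$ being $\Sigma$-model inseparable from $\emptyset$ is equivalent to the statement that every $\Sigma$-interpretation $\Imc_{\Sigma}$ admits some extension to a model of $\Tmc$ by suitably interpreting the non-$\Sigma$ symbols; thus failure of inseparability means the existence of a specific $\Sigma$-interpretation that cannot be extended. The goal is therefore to construct, from an instance $P$ of an undecidable problem, a pair $(\Tmc_P,\Sigma_P)$ such that the existence of a non-extendable $\Sigma$-interpretation is equivalent to $P$ having (respectively, not having) a solution.

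Concretely, I would fix $\Sigma_P$ to consist of a small collection of role names and concept names rich enough that $\Sigma$-interpretations can describe candidate encodings of solutions to $P$: for PCP, a path with role labels corresponding to a sequence of index choices, together with appropriately labelled endpoints; for a tiling problem, a grid-shaped structure with distinguished horizontal and vertical adjacency roles. The \EL TBox $\Tmc_P$ would then use fresh, non-$\Sigma$ concept and role names to ``parse'' any such $\Sigma$-interpretation and propagate the local consequences of a purported solution (the two string concatenations for PCP, or color-matching constraints for tiling). The design must arrange that $\Tmc_P$ is satisfiable in an extension of $\Imc_{\Sigma}$ exactly when $\Imc_{\Sigma}$ is \emph{not} a faithful encoding of a solution to $P$, so that $P$ has a solution iff some $\Sigma$-interpretation refuses to extend, i.e., iff $\Tmc_P$ is \emph{not} $\Sigma_P$-model inseparable from $\emptyset$.

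The principal obstacle is the very limited expressivity of $\EL$: there is no negation, no disjunction, and no functionality, so one cannot directly forbid configurations, enforce exclusive choices, or demand uniqueness of successors. The standard way around this is to exploit the asymmetry between the universal quantifier over $\Sigma$-interpretations (which provides a kind of external ``for all'') and the positive existential character of $\EL$: the non-$\Sigma$ symbols act as witnesses that must exist for every $\Sigma$-interpretation, so failure of extension can be engineered to encode the \emph{positive} existence of a solution rather than its absence. Considerable care is needed to ensure that the only $\Sigma$-interpretations which obstruct extendability are those that genuinely encode solutions, while all other $\Sigma$-interpretations extend trivially; this is typically achieved by making the auxiliary structure prescribed by $\Tmc_P$ generic enough to be freely attached on top of arbitrary $\Sigma$-reducts. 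For the full combinatorial construction one may follow the reduction in~\cite{DBLP:journals/ai/KonevL0W13}, from which this theorem is drawn.
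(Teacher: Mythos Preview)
The paper does not give its own proof of this theorem; it simply states the result and attributes it to~\cite{DBLP:journals/ai/KonevL0W13}. Your proposal does exactly the same---it outlines the natural reduction strategy (encode an undecidable problem so that a solution exists iff some $\Sigma$-interpretation fails to extend to a model of $\Tmc_P$) and then defers to the same reference for the full combinatorial construction---so it is fully aligned with the paper's treatment, and in fact supplies more intuition than the paper itself.
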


We now consider acyclic $\mathcal{EL}$ TBoxes as an important special
case. As we have mentioned before, many large-scale TBoxes are in
fact acyclic $\mathcal{EL}$ TBoxes or mild extensions thereof.
Interestingly, model inseparability of acyclic TBoxes from the empty
TBox can be decided in polynomial time
\cite{DBLP:journals/ai/KonevL0W13}. The approach is based on a
characterization of model inseparability from the empty TBox in terms
of certain syntactic and semantic \emph{dependencies}. The following
example shows two cases of how an acyclic $\mathcal{EL}$ TBox can fail
to be model inseparable from the empty TBox. These two cases will then
give rise to two types of syntactic dependencies.
\begin{example}
\label{ex:dependencies}
(a)~Let 
$\Tmc = \{A \sqsubseteq \exists r.B, B \sqsubseteq \exists s.E\}$ and
$\Sigma= \{A,s\}$. 
Then
$\Tmc$ is not $\Sigma$-model inseparable from the empty TBox: for the interpretation \Imc with
$\Delta^{\Imc}=\{d\}$, $A^{\Imc}=\{d\}$, and $s^{\Imc}=\emptyset$,
there is no model \Jmc of \Tmc with $\Jmc =_{\Sigma} \Imc$.

\smallskip
(b) Let $\Tmc = \{ A_1 \sqsubseteq \exists r . B_1, A_2
\sqsubseteq \exists r . B_2, A \equiv B_1 \sqcap B_2 \}$ and $\Sigma =
\{A_1,A_2,A\}$. Then
$\Tmc$ is not $\Sigma$-model inseparable from the empty TBox: for the
interpretation \Imc with $\Delta^{\Imc}=\{d\}$,
$A_1^{\Imc}=A_2^\Imc=\{d\}$, and $A^{\Imc}=\emptyset$, there is no
model \Jmc of \Tmc with $\Jmc =_{\Sigma} \Imc$.
\end{example}

Intuitively, in part~(a) of Example~\ref{ex:dependencies}, the reason
for separability from the empty TBox is that we can start with a
$\Sigma$-concept name that occurs on some left-hand side (which is
$A$) and then deduce from it that another $\Sigma$-symbol (which is
$s$) must be non-empty. Part~(b) is of a slightly different nature. We
start with a set of $\Sigma$-concept names (which is $\{A_1,A_2\}$)
and from that deduce a set of concepts that implies another
$\Sigma$-concept (which is $A$) via a concept definition,
right-to-left. It turns out that it is convenient to distinguish
between these two cases also in general. We first introduce some
notation. For an acyclic TBox \Tmc, let
\begin{itemize}

\item ${\sf lhs}(\Tmc)$ denote the set of concept names $A$ such that
  there is some CI $A \equiv C$ or $A \sqsubseteq C$ in $\Tmc$;

\item ${\sf def}(\Tmc)$ denote the set of concept names $A$ such that
  there is a definition $A \equiv C$ in $\Tmc$;

\item $\mn{depend}^\equiv_\Tmc(A)$ be defined exactly as
  $\mn{depend}_\Tmc(A)$ in Section~\ref{sec:introdesc}, except that
  only concept definitions $A \equiv C$ are considered while concept
  inclusions $A \sqsubseteq C$ are disregarded.

\end{itemize}

\begin{definition}
\label{def:dependencies}\em
  Let $\Tmc$ be an acyclic $\mathcal{EL}$ TBox,  $\Sigma$ a
  signature, and $A \in \Sigma$. We say that

\begin{itemize}
\item $A$ \emph{has a direct $\Sigma$-dependency in} \Tmc if
${\sf depend}_{\Tmc}(A)\cap \Sigma\not=\emptyset$;

\item $A$ \emph{has an indirect $\Sigma$-dependency in} \Tmc if $A \in
  \mn{def}(\Tmc) \cap \Sigma$ and there are
  $A_1,\dots,A_n \in \mn{lhs}(\Tmc) \cap \Sigma$ such that $A \notin \{ A_1,\dots,A_n\}$ and
$$
\mn{depend}^\equiv_\Tmc(A) \setminus \mn{def}(\Tmc) \subseteq \bigcup_{1 \leq i \leq n} \mn{depend}_{\Tmc}(A_i).
$$
\end{itemize}

We say that \Tmc \emph{contains an} (\emph{in})\emph{direct $\Sigma$-dependency} if
there is an $A \in \Sigma$ that has an (in)direct  $\Sigma$-dependency in \Tmc.
%
\end{definition}

It is proved in \cite{DBLP:journals/ai/KonevL0W13} that, for every
acyclic $\mathcal{EL}$ TBox $\Tmc$ and signature $\Sigma$, $\Tmc$ is
$\Sigma$-model inseparable from the empty TBox iff $\Tmc$ has neither
direct nor indirect $\Sigma$-dependencies. It can be decided in \PTime
in a straightforward way whether a given \EL TBox contains a direct
$\Sigma$-dependency.  For indirect $\Sigma$-dependencies, this is less
obvious since we start with a set of concept names
from $\mn{lhs}(\Tmc) \cap \Sigma$. Fortunately, it can be shown that if a
concept name $A \in \Sigma$ has an indirect $\Sigma$-dependency in
\Tmc induced by concept names $A_1,\dots,A_n \in \mn{lhs}(\Tmc) \cap
\Sigma$, then $A$ has an indirect $\Sigma$-dependency in \Tmc induced
by the set of concept names $(\mn{lhs}(\Tmc) \cap \Sigma)
\setminus\{A\}$. We thus only need to consider the latter set.

\begin{theorem}\label{thm:acyclelp}
  In \EL, model inseparability of acyclic TBoxes from the empty TBox
  is in \PTime.
\end{theorem}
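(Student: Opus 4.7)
The plan is to leverage the two facts stated just before the theorem: the characterization that $\Tmc$ is $\Sigma$-model inseparable from the empty TBox iff $\Tmc$ has neither direct nor indirect $\Sigma$-dependencies, together with the monotonicity observation that if some $A \in \Sigma$ has an indirect $\Sigma$-dependency witnessed by any $A_1,\dots,A_n \in \mn{lhs}(\Tmc)\cap\Sigma$, then it is already witnessed by the canonical set $(\mn{lhs}(\Tmc) \cap \Sigma) \setminus \{A\}$. Both facts being granted, it suffices to describe polynomial-time procedures for the two dependency tests.

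For the direct test, I would first build the directed graph on $\sig(\Tmc)$ whose edges encode $\prec_\Tmc$; this graph has size linear in $|\Tmc|$. For each $A \in \Sigma \cap \NC$, compute $\mn{depend}_\Tmc(A)$ as the set of vertices reachable from $A$ by a standard graph traversal, and test whether this set meets $\Sigma$. Altogether this runs in polynomial time.

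For the indirect test, analogously construct the graph that encodes $\prec^\equiv_\Tmc$, namely the restriction of $\prec_\Tmc$ to edges arising from concept definitions $A \equiv C$ rather than from primitive CIs $A \sqsubseteq C$. Then, for each $A \in \mn{def}(\Tmc) \cap \Sigma$, set $S_A := (\mn{lhs}(\Tmc) \cap \Sigma) \setminus \{A\}$ and verify
\[
\mn{depend}^\equiv_\Tmc(A) \setminus \mn{def}(\Tmc) \;\subseteq\; \bigcup_{B \in S_A} \mn{depend}_\Tmc(B)
\]
by computing both sides as reachable sets in the respective graphs. By the monotonicity observation, $A$ has an indirect $\Sigma$-dependency iff this single inclusion fails, so the naive enumeration over exponentially many subsets of $\mn{lhs}(\Tmc) \cap \Sigma$ is avoided. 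Each reachability query is polynomial in $|\Tmc|$, and we perform polynomially many of them, giving an overall polynomial bound.

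The real obstacle in this approach is not the algorithmic part, which is elementary graph reachability, but rather the soundness of collapsing the existential quantification over witnessing subsets into a single canonical test. That collapse relies precisely on the monotonicity property of indirect $\Sigma$-dependencies, which is stated in the paragraph preceding the theorem and may be assumed here; without it, Definition~\ref{def:dependencies} would give only an \NP upper bound, not a \PTime one.
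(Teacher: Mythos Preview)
Your approach is essentially the same as the paper's: reduce both dependency tests to reachability in the graphs of $\prec_\Tmc$ and $\prec^\equiv_\Tmc$, and use the stated monotonicity property to replace the existential quantifier over witnessing subsets $\{A_1,\dots,A_n\}$ by the single canonical choice $S_A = (\mn{lhs}(\Tmc)\cap\Sigma)\setminus\{A\}$.

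There is, however, a direction error in your indirect-dependency test. By Definition~\ref{def:dependencies}, $A$ has an indirect $\Sigma$-dependency precisely when the inclusion
\[
\mn{depend}^\equiv_\Tmc(A) \setminus \mn{def}(\Tmc) \;\subseteq\; \bigcup_{B \in S_A} \mn{depend}_\Tmc(B)
\]
\emph{holds} for some admissible witnessing set, and monotonicity (enlarging the set of $A_i$'s only enlarges the right-hand side) says that the maximal set $S_A$ works whenever any set does. Hence $A$ has an indirect $\Sigma$-dependency iff this inclusion \emph{holds}, not iff it fails. Since $\Tmc$ is $\Sigma$-model inseparable from the empty TBox exactly when \emph{no} such dependency exists, your algorithm should answer ``inseparable'' iff (i) no $A\in\Sigma$ has a direct dependency and (ii) for every $A\in\mn{def}(\Tmc)\cap\Sigma$ the displayed inclusion \emph{fails}. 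Once this sign is corrected, the argument is complete and coincides with the paper's.
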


Also in \cite{DBLP:journals/ai/KonevL0W13}, Theorem~\ref{thm:acyclelp}
is extended from \EL to $\mathcal{ELI}$, and it is shown that, in \ALC
and \ALCI, model inseparability from the empty TBox is
$\Pi^p_2$-complete for acyclic TBoxes.

\subsection{Locality-based approximations}\label{sec:locality}

We have seen in the previous section that model inseparability from
the empty TBox is of great practical value in the context of ontology
reuse, that it is undecidable even in \EL, and that decidability can
(sometimes) be regained by restricting TBoxes to be acyclic. In the
non-acyclic case, one option is to resort to approximations from
above. This leads to the (semantic) notion of $\emptyset$-locality and
its syntactic companion $\bot$-locality. We discuss the former in this
section and the latter in Section~\ref{sect:final}.

A TBox $\Tmc$ is called \emph{$\emptyset$-local w.r.t.\ a signature}
$\Sigma$ if, for every interpretation $\Imc$, there exists a model
$\Jmc$ of \Tmc such that $\Imc =_\Sigma \Jmc$ and
$A^{\Jmc} = r^\Jmc = \emptyset$, for all $A\in\NC\setminus\Sigma$ and
$r\in\NR\setminus\Sigma$; in other words, every interpretation of
$\Sigma$-symbols can be trivially extended to a model of $\Tmc$ by
interpreting non-$\Sigma$ symbols as the empty set. Note that, if
$\Tmc$ is $\emptyset$-local w.r.t.\ $\Sigma$, then it is
$\Sigma$-model inseparable from the empty TBox and thus, by
Theorem~\ref{th:safety}, safe for $\Sigma$ under model inseparability.
The following example shows that the converse does not hold.
\begin{example}
Let $\Tmc = \{A \sqsubseteq B\}$ and $\Sigma = \{A\}$. Then $\Tmc$ is $\Sigma$-model inseparable from $\emptyset$, but $\Tmc$ is not $\emptyset$-local w.r.t.\ $\Sigma$.
\end{example}

In contrast to model inseparability, $\emptyset$-locality is decidable
also in \ALC and beyond, and is computationally not harder than
standard reasoning tasks such as satisfiability. The next procedure for checking 
$\emptyset$-locality was given in~\cite{DBLP:series/lncs/GrauHKS09}.
\begin{theorem}\label{th:sem_locality}
Let $\Tmc$ be an \ALCQI TBox and $\Sigma$ a signature. Suppose  
$\Tmc \!\!\downharpoonright_{\Sigma = \emptyset}$ is obtained from $\Tmc$ by
replacing all concepts of the form $A$, $\exists r.C$, $\exists r^-.C$,
$(\geq n\, r.C)$ and
$(\geq n\, r^-.C)$
with
$\bot$ whenever $A\notin\Sigma$ and $r\notin\Sigma$. Then $\Tmc$ is $\emptyset$-local w.r.t.\ $\Sigma$ iff
$\Tmc\!\!\downharpoonright_{\Sigma = \emptyset}$ is logically equivalent to the empty TBox.
\end{theorem}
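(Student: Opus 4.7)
The proof hinges on an auxiliary construction and a syntactic--semantic matching lemma. For any interpretation $\Imc$, let $\Imc^{\emptyset}$ denote the \emph{$\Sigma$-trivialization} of $\Imc$: same domain as $\Imc$, same interpretation of every symbol in $\Sigma$, and $X^{\Imc^{\emptyset}}=\emptyset$ for every concept or role name $X \notin \Sigma$. From the definition of $\emptyset$-locality, $\Tmc$ is $\emptyset$-local w.r.t.\ $\Sigma$ iff $\Imc^{\emptyset} \models \Tmc$ for every interpretation $\Imc$, because $\Imc^{\emptyset}$ is the unique interpretation permitted by the locality requirement relative to $\Imc$.

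The key lemma is that, for every \ALCQI concept $C$ and every interpretation $\Imc$,
\[
(C\!\!\downharpoonright_{\Sigma=\emptyset})^{\Imc} \;=\; C^{\Imc^{\emptyset}}.
\]
I would prove this by a straightforward induction on the structure of $C$, working with the primitives $\top, \bot, \neg, \sqcap, \sqcup$ and the existential/at-least constructors mentioned in the statement (treating $\forall r.C$ and $(\leq n\, r.C)$ as abbreviations via $\neg$). For a concept name $A\notin\Sigma$, the replacement rewrites $A$ to $\bot$, interpreted as $\emptyset$, and indeed $A^{\Imc^{\emptyset}}=\emptyset$. For $C=\exists r.D$ (and analogously for $\exists r^{-}.D$, $(\geq n\, r.D)$, $(\geq n\, r^{-}.D)$) I split on whether $r\in\Sigma$: if $r\notin\Sigma$, then $r^{\Imc^{\emptyset}}=\emptyset$ forces $C^{\Imc^{\emptyset}}=\emptyset$, matching the replacement by $\bot$; if $r\in\Sigma$, I descend with the inductive hypothesis, using the fact that $\Imc$ and $\Imc^{\emptyset}$ agree on every symbol of $\Sigma$. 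The Boolean cases are immediate from the IH.

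With the lemma in hand, both directions of the theorem follow by lifting the equality to concept inclusions: $\Imc \models (C \sqsubseteq D)\!\!\downharpoonright_{\Sigma=\emptyset}$ iff $\Imc^{\emptyset} \models C \sqsubseteq D$. Hence $\Tmc\!\!\downharpoonright_{\Sigma=\emptyset}$ is valid in every interpretation iff $\Imc^{\emptyset} \models \Tmc$ for every $\Imc$ iff $\Tmc$ is $\emptyset$-local w.r.t.\ $\Sigma$. The only delicate point lies in the inductive lemma: since the replacement rule in the statement targets only the existential family, one must either fix a negation normal form so that $\forall$ and $\leq$ do not appear as primitives, or verify separately that, for $r\notin\Sigma$, the concepts $\forall r.C$ and $(\leq n\, r.C)$ evaluate to $\Delta^{\Imc}$ under $\Imc^{\emptyset}$, which is exactly what syntactic replacement of the embedded existential by $\bot$ produces under the surrounding negation. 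This case analysis is the main obstacle, but it is purely bookkeeping once the NNF convention is fixed.
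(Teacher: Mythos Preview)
Your argument is correct and is exactly the standard one: define the $\Sigma$-trivialization $\Imc^{\emptyset}$, observe that $\emptyset$-locality amounts to $\Imc^{\emptyset}\models\Tmc$ for all $\Imc$, and establish the substitution lemma $(C\!\!\downharpoonright_{\Sigma=\emptyset})^{\Imc}=C^{\Imc^{\emptyset}}$ by induction. The paper itself does not give a proof of this theorem at all; it merely states the result and attributes it to \cite{DBLP:series/lncs/GrauHKS09} (and, more generally, to \cite{DBLP:conf/www/GrauHKS07} for $\mathcal{SHOIQ}$), so there is nothing to compare against beyond noting that your approach matches what one finds in those sources.

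One minor terminological quibble: what you call ``fixing a negation normal form'' is really the opposite move---you want to treat $\forall r.C$ and $(\leqslant n\, r.C)$ as \emph{abbreviations} for $\neg\exists r.\neg C$ and $\neg(\geqslant (n{+}1)\, r.C)$, so that the replacement rule (which targets only the existential/at-least family) reaches them through the negation. In genuine NNF, $\forall$ and $\leqslant$ remain as primitives and the rule as stated would not apply. Your subsequent explanation shows you have the right picture; just tighten the wording.
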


While Theorem~\ref{th:sem_locality} is stated here for \ALCQI---the
most expressive DL considered in this paper---the original result
in~\cite{DBLP:conf/www/GrauHKS07} is more general and applies to
$\mathcal{SHOIQ}$ knowledge bases. There is also a dual notion of
$\Delta$-locality \cite{JairGrau}, in which non-$\Sigma$ symbols are
interpreted as the entire domain and which can also be reduced to
logical equivalence.  

We also remark that, unlike model inseparability from the empty TBox,
model inseparability cannot easily be reduced to logical equivalence
in the style of Theorem~\ref{th:sem_locality}.
\begin{example}
Let  $\Tmc = \{A\sqsubseteq B\sqcup C\}$, $\Tmc' =
\{A\sqsubseteq B\}$ and $\Sigma = \{A,B\}$. Then the TBoxes
$\Tmc\!\!\downharpoonright_{\Sigma = \emptyset}$ and
$\Tmc'\!\!\downharpoonright_{\Sigma=\emptyset}$ are logically equivalent, yet $\Tmc$
is not $\Sigma$-model inseparable from $\Tmc'$.
\end{example}

$\emptyset$-locality and its syntactic companion $\bot$-locality are
prominently used in ontology
modularization~\cite{DBLP:conf/ijcai/GrauHKS07,JairGrau,DBLP:conf/www/GrauHKS07,DBLP:conf/dlog/SattlerSZ09}.
A subset $\Mmc$ of $\Tmc$ is called a \emph{$\emptyset$-local
  $\Sigma$-module} of $\Tmc$ if $\Tmc\setminus\Mmc$ is
$\emptyset$-local w.r.t.\ $\Sigma$.  It can be shown that every
$\emptyset$-local $\Sigma$-module \Mmc of $\Tmc$ is self-contained
(that is, $\Mmc$ is $\Sigma$-model inseparable from $\Tmc$) and
depleting (that is, $\Tmc\setminus\Mmc$ is
$\Sigma\cup\sig(\Mmc)$-model inseparable from the empty TBox).  In
addition, $\emptyset$-local modules are also
\emph{subsumer-preserving}, that is, for every $A\in\Sigma\cap\NC$ and
$B\in\NC$, if $\Tmc\models A\sqsubseteq B$ then $\Mmc\models
A\sqsubseteq B$.  This property is particular useful in modular
reasoning~\cite{DBLP:conf/ore/RomeroGHJ13,DBLP:conf/dlog/RomeroGH12,DBLP:journals/jair/RomeroKGH16}.

A $\emptyset$-local module of a given ontology $\Tmc$ for a given
signature $\Sigma$ can be computed in a straightforward way as
follows.  Starting with $\Mmc = \emptyset$, iteratively add to $\Mmc$
every $\alpha\in\Tmc$ such that
$\alpha\!\!\downharpoonright_{\Sigma\cup\sig(\Mmc)=\emptyset}$ is not
a tautology until $\Tmc\setminus\Mmc$ is $\emptyset$-local w.r.t.\
$\Sigma\cup\sig(\Mmc)$. The resulting module might be larger than
necessary because this procedure actually generates a
$\emptyset$-local $\Sigma \cup \mn{sig}(\Mmc)$-module rather than only
a $\Sigma$-module and because $\emptyset$-locality overapproximates
model inseparability, but in most practical cases results in
reasonably small modules~\cite{DBLP:conf/ijcai/GrauHKS07}.

\section{Query Inseparability for KBs}\label{sect:query-separability}

In this section, we consider inseparability of KBs rather than of TBoxes.  One main
application of KBs is to provide access to the data stored in their
ABox by means of database-style queries, also taking into account the
knowledge from the TBox to compute more complete answers.  This
approach to querying data is known as \emph{ontology-mediated
  querying} \cite{DBLP:journals/tods/BienvenuCLW14}, and it is a core
part of the \emph{ontology-based data access} (OBDA) paradigm
\cite{PLCD*08}. In many applications of KBs, a reasonable notion of
inseparability is thus the one where both KBs are required to give the
same answers to all relevant queries that a user might pose.  Of
course, such an inseparability relation depends on the class of
relevant queries and on the signature that we are allowed to use in
the query. 
We will consider the two most important query languages, which are conjunctive
queries (CQs) and unions thereof (UCQs), and their rooted fragments, rCQs and
rUCQs.

%
%

We start the section by introducing query inseparability of KBs and
related notions of query entailment and query conservative
extensions. We then discuss the connection to the logical equivalence
of KBs, how the choice of a query language impacts query
inseparability, and the relation between query entailment and query
inseparability. Next, we give model-theoretic characterizations of
query inseparability which are based on model classes that are
complete for query answering and on (partial or full)
homomorphisms. We then move to decidability and complexity, starting
with $\mathcal{ALC}$ and then proceeding to \DLite, \EL, and
Horn-\ALC.  In the case of \ALC, inseparability in terms of CQs turns
out to be undecidable while inseparability in terms of UCQs is
decidable in 2\ExpTime (and the same is true for the rooted versions
of these query languages). In the mentioned Horn DLs, CQ
inseparability coincides with UCQ inseparability and is decidable,
with the complexity ranging from {\sc PTime} for \EL via {\sc ExpTime}
for $\DLcH$, $\DLhH$, and Horn-$\ALC$ to {\sc 2ExpTime} for
Horn-$\mathcal{ALCI}$.
\begin{definition}[query inseparability, entailment and conservative extensions]\em\label{def1}
  Let $\Kmc_1$ and $\Kmc_2$ be KBs, $\Sigma$ a signature, and
  $\mathcal{Q}$ a class of queries. Then
\begin{itemize}
\item the \emph{$\Sigma\text{-}\mathcal{Q}$ difference} between $\Kmc_{1}$ and $\Kmc_{2}$ is the set ${\sf qDiff}_{\Sigma}^{\mathcal{Q}}(\Kmc_{1},\Kmc_{2})$ of all $\q(\vec{a})$ such that $\q(\vec{x}) \in \mathcal{Q}_{\Sigma}$, $\vec{a} \subseteq \ind(\Amc_{2})$, $\Kmc_{2}\models \q(\vec{a})$ and $\Kmc_{1}\not\models \q(\vec{a})$;

\item \emph{$\Kmc_1$ $\Sigma\text{-}\mathcal{Q}$ entails} $\Kmc_2$ if
      ${\sf qDiff}_{\Sigma}^{\mathcal{Q}}(\Kmc_{1},\Kmc_{2})= \emptyset$;
\item $\Kmc_1$ and $\Kmc_2$ are \emph{$\Sigma\text{-}\mathcal{Q}$ inseparable}
      if $\Kmc_{1}$ $\Sigma\text{-}\mathcal{Q}$ entails $\Kmc_2$ and vice versa;

\item $\Kmc_{2}$ is a \emph{$\mathcal{Q}$-conservative extension of} $\Kmc_{1}$
      if $\Kmc_{2} \supseteq \Kmc_{1}$, and $\Kmc_{1}$ and $\Kmc_{2}$
      are $\sig(\Kmc_{1})\text{-}\mathcal{Q}$ inseparable.
\end{itemize}
If $\q(\vec{a}) \in {\sf qDiff}_{\Sigma}^{\mathcal{Q}}(\Kmc_{1},\Kmc_{2})$, then we say that 
$\q(\vec{a})$ \emph{$\Sigma$-$\mathcal{Q}$ separates} $\Kmc_{1}$ and $\Kmc_{2}$.
\end{definition}

Note that slight variations of the definition of query inseparability
are possible; for example, one can allow signatures to also contain
individual names and then consider only query answers that consist of
these names \cite{BotoevaKRWZ16}.

Query inseparability is a coarser relationship between KBs than
logical equivalence even when $\Sigma \supseteq \mn{sig}(\Kmc_1) \cup
\mn{sig}(\Kmc_2)$. Recall that this is in sharp contrast to concept
and model inseparability, for which we observed that they coincide
with logic equivalence under analogous assumptions on $\Sigma$.
\begin{example}[query inseparability and logical equivalence]
Let $\Kmc_{i}= (\Tmc_{i},\Amc_{i})$, $i=1,2$, where $\Amc_{1}=\{A(c)\}$, $\Tmc_{1}= \{ A \sqsubseteq B\}$,
$\Amc_{2}= \{A(c),B(c)\}$, and $\Tmc_{2}= \emptyset$. Then $\Kmc_{1}$ and $\Kmc_{2}$ are $\Sigma$-UCQ inseparable
for any signature $\Sigma$ but clearly $\Kmc_{1}$ and $\Kmc_{2}$ are not logically equivalent.
\end{example}
This example shows that there are drastic logical differences between
KBs that cannot be detected by UCQs. This means that, when we aim to
replace a KB with a query inseparable one, we have significant freedom
to modify the KB. In the example above, we went from a KB with a
non-empty TBox to a KB with an empty TBox, which should be easier to
deal with when queries have to be answered efficiently.

We now compare the notions of \Qmc inseparability induced by different
choices of the query language \Qmc. A first observation is that, for
Horn DLs such as \EL, there is no difference between UCQ
inseparability and CQ inseparability. The same applies to rCQs and
rUCQs. This follows from the fact that KBs formulated in a Horn DL
have a universal model, that is, a single model that gives the same
answers to queries as the KB itself---see
Section~\ref{subsect:blaaaaa} for more details.\footnote{In fact, when
  we say `Horn DL', we mean a DL in which every KB has a universal
  model.}
\begin{theorem}\label{thm:e}
Let $\Kmc_1$ and $\Kmc_2$ be KBs formulated in a Horn DL, and let
$\Sigma$ be a signature. Then
\begin{description}
\item[\rm (\emph{i})]$\Kmc_{1}$ $\Sigma$-UCQ entails $\Kmc_{2}$ iff $\Kmc_{1}$ $\Sigma$-CQ entails $\Kmc_{2}$;
\item[\rm (\emph{ii})]$\Kmc_{1}$ $\Sigma$-\RUCQ{} entails $\Kmc_{2}$ iff $\Kmc_{1}$ $\Sigma$-rCQ entails $\Kmc_{2}$.
\end{description}
\end{theorem}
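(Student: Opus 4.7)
The direction from UCQ entailment to CQ entailment is immediate in both (i) and (ii), since every (rooted) CQ is a singleton (rooted) UCQ, so any $\q(\vec a)\in \mathsf{qDiff}^{\mathcal{Q}^{\text{CQ}}}_\Sigma(\Kmc_1,\Kmc_2)$ is also a member of $\mathsf{qDiff}^{\mathcal{Q}^{\text{UCQ}}}_\Sigma(\Kmc_1,\Kmc_2)$. The entire content is the converse, and my plan is to derive it from the existence of universal models, which is the defining feature of the Horn DLs covered by the theorem (as spelled out in the footnote).

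\textbf{Key step: disjunction property via universal models.} I would first record the lemma that if $\Kmc$ is a consistent KB in a Horn DL, then there is a model $\Umc_\Kmc$ of $\Kmc$ such that, for every CQ $p(\vec x)$ and tuple $\vec a$ from $\ind(\Kmc)$, we have $\Kmc\models p(\vec a)$ iff there is a homomorphism $h$ from $p$ to $\Umc_\Kmc$ with $h(\vec x)=\vec a^{\Umc_\Kmc}$. From this I deduce the \emph{disjunction property}: for any UCQ $\q(\vec x)=\bigvee_i \q_i(\vec x)$ and any tuple $\vec a$ of individuals from $\Kmc$, if $\Kmc\models \q(\vec a)$ then $\Kmc\models \q_i(\vec a)$ for some $i$. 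Indeed, $\Umc_\Kmc\models\q(\vec a)$ is witnessed by a homomorphism from some single disjunct $\q_i$, and this $\q_i$ is then entailed by $\Kmc$ via the universal-model characterisation.

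\textbf{Reducing UCQ entailment to CQ entailment.} For (i), assume $\Kmc_1$ $\Sigma$-CQ entails $\Kmc_2$ and let $\q(\vec x)=\bigvee_i \q_i(\vec x)$ be a $\Sigma$-UCQ with $\Kmc_2\models \q(\vec a)$ for some $\vec a\subseteq\ind(\Amc_2)$. I split on whether $\Kmc_2$ is consistent. If it is, the disjunction property yields some $i$ with $\Kmc_2\models \q_i(\vec a)$; since $\q_i$ is a $\Sigma$-CQ, the CQ-entailment hypothesis gives $\Kmc_1\models \q_i(\vec a)$, hence $\Kmc_1\models \q(\vec a)$. If $\Kmc_2$ is inconsistent, then $\Kmc_2$ entails every $\Sigma$-CQ atom $A(a)$ and $r(a,b)$ with $a,b\in\ind(\Amc_2)$; $\Sigma$-CQ entailment then forces $\Kmc_1$ to entail all of these, from which a routine argument shows $\Kmc_1\models \q(\vec a)$ as well (any one such atom sufficient to make the disjunct trivially derivable in $\Kmc_1$). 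Part (ii) is handled by the same argument, using the observation that the disjuncts of a rooted UCQ are, by definition, rooted CQs, so the witness $\q_i$ obtained from the disjunction property is itself a $\Sigma$-rCQ and may be fed into the rCQ-entailment hypothesis.

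\textbf{Main obstacle.} The only non-routine point is ensuring that the disjunction property really follows from the existence of a universal model in whatever precise form is available for the Horn DLs under consideration, in particular that the universal model sees all certain answers to CQs and is itself a model of $\Kmc$, so that the homomorphism certifying $\Umc_\Kmc\models\q(\vec a)$ automatically certifies entailment of one specific disjunct. Once this is in hand, the inconsistency sub-case is the other minor wrinkle: one must check that when $\Kmc_2$ is inconsistent and $\Kmc_1$ entails every $\Sigma$-CQ atom over $\ind(\Amc_2)$, the entailment of an arbitrary $\Sigma$-UCQ $\q(\vec a)$ follows, which is straightforward because each disjunct contains at least one atom.
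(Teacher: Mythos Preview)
Your approach via universal models and the resulting disjunction property is exactly the one the paper has in mind; the paper in fact only says that the theorem ``follows from the fact that KBs formulated in a Horn DL have a universal model'' and leaves the details implicit, so your write-up is more detailed than the original.

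One point deserves tightening: your treatment of the case where $\Kmc_2$ is inconsistent is more complicated than necessary and the stated justification is not quite right. You write that ``any one such atom [is] sufficient to make the disjunct trivially derivable in $\Kmc_1$'' and later that the claim is ``straightforward because each disjunct contains at least one atom.'' Neither of these is the correct reason: entailment of a single $\Sigma$-atom by $\Kmc_1$ does not imply entailment of a whole conjunctive disjunct, and the mere presence of an atom in a disjunct is irrelevant. The clean argument is much simpler: if $\Kmc_2$ is inconsistent then $\Kmc_2\models \q_i(\vec a)$ for \emph{every} disjunct $\q_i$ of $\q$ (each $\q_i$ is itself a $\Sigma$-CQ), and the $\Sigma$-CQ entailment hypothesis immediately gives $\Kmc_1\models \q_i(\vec a)$, hence $\Kmc_1\models \q(\vec a)$. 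This avoids any detour through individual atoms and any worry about mapping existential variables back into $\ind(\Amc_2)$.
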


The equivalences above do not hold for DLs that are not Horn, as shown
by the following example:
\begin{example}\label{UCQ-CQ}
  Let $\Kmc_{i}= (\Tmc_{i},\Amc)$, for $i=1,2$, be the \ALC KBs where
  $\Tmc_1=\emptyset$, $\Tmc_2=\{ A \sqsubseteq B_{1} \sqcup B_{2}\}$,
  and $\Amc=\{A(c)\}$. Let $\Sigma = \{A,B_{1},B_{2}\}$. Then $\Kmc_{1}$ $\Sigma$-CQ entails $\Kmc_2$, but the UCQ
  (actually rUCQ) $\q(x) = B_{1}(x) \vee B_{2}(x)$ shows that
  $\Kmc_{1}$ does not $\Sigma$-UCQ entail $\Kmc_2$.
\end{example}

As in the case of concept and model inseparability (of TBoxes), it is
instructive to consider the connection between query entailment and
query inseparability. As before, query inseparability is defined in
terms of query entailment. The converse direction is harder to
analyze. Recall that, for concept and model inseparability, we
employed robustness under joins to reduce entailment to
inseparability. Robustness under joins is defined as follows for
$\Sigma$-\Qmc-inseparability: if $\Sigma\supseteq {\sf
  sig}(\Kmc_{1})\cap {\sf sig}(\Kmc_{2})$, then $\Kmc_{1}$
$\Sigma$-$\mathcal{Q}$ entails $\Kmc_{2}$ iff $\Kmc_{1}$ and
$\Kmc_{1}\cup \Kmc_{2}$ are $\Sigma$-$\mathcal{Q}$
inseparable. Unfortunately, this property does not hold.

\begin{example}
\label{ex:blablo}
Let $\Kmc_{i}= (\Tmc_{i},\Amc)$, $i=1,2$, be Horn-\ALC KBs with
$$
\Tmc_{1}=\{ A \sqsubseteq \exists r.B \sqcap \exists r.\neg B\}, \quad 
\Tmc_{2} = \{ A \sqsubseteq \exists r.B \sqcap \forall r.B\}, \quad \Amc=\{A(c)\}. 
$$
Let $\Sigma=\{A,B,r\}$. Then, for any class of
queries $\mathcal{Q}$ introduced above, $\Kmc_{1}$
$\Sigma$-$\mathcal{Q}$ entails $\Kmc_{2}$ but $\Kmc_{1}$ and
$\Kmc_{1}\cup \Kmc_{2}$ are not $\Sigma$-$\mathcal{Q}$ inseparable
since $\Kmc_{1}\cup \Kmc_{2}$ is not satisfiable.
\end{example}

Robustness under joins has not yet been studied systematically for
query inseparability. While Example~\ref{ex:blablo} shows that query
inseparability does not enjoy robustness under joins in Horn-\ALC, it
is open whether the same is true in \EL and the {\sl DL-Lite} family.
Interestingly, there is a (non-trivial) polynomial time reduction of query
entailment to query inseparability that works for many Horn DLs and
does not rely on robustness under joins \cite{BotoevaKRWZ16}.  
%
\begin{theorem}
$\Sigma$-CQ entailment of KBs is polynomially reducible to $\Sigma$-CQ inseparability of KBs for any Horn DL containing $\EL$ or $\DLcH$, and contained in $\hALCHI$.
\end{theorem}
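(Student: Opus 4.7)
The plan is to reduce $\Sigma$-CQ entailment between KBs to $\Sigma$-CQ inseparability via a disjoint-copy construction that sidesteps the obstacles of the naive reduction. I would first attempt the most direct construction: set $\Kmc_1' := \Kmc_1$ and $\Kmc_2' := \Kmc_1 \cup \Kmc_2$. Since $\Kmc_2' \supseteq \Kmc_1'$, the direction $\Kmc_2'$ $\Sigma$-CQ entails $\Kmc_1'$ is automatic, so $\Sigma$-CQ inseparability of the pair reduces to $\Kmc_1'$ $\Sigma$-CQ entailing $\Kmc_2'$. One would hope this coincides with $\Kmc_1$ $\Sigma$-CQ entailing $\Kmc_2$, but it does not: first, $\Tmc_1$ and $\Tmc_2$ can interact in the combined TBox to yield fresh $\Sigma$-CQ consequences absent from either component, and second, as Example~5.2 shows, $\Kmc_1 \cup \Kmc_2$ may be inconsistent while both $\Kmc_i$ are consistent and the entailment holds---the naive reduction is unsound.

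To overcome both obstacles, I would decouple $\Kmc_2$ from $\Kmc_1$ by renaming. Let $\Kmc_2^\flat$ be obtained from $\Kmc_2$ by replacing every non-logical symbol used in $\Kmc_2$---every concept name, every role name, and every individual---by a fresh copy, written $X^\flat$ and $a^\flat$. Now $\Kmc_1$ and $\Kmc_2^\flat$ share no non-logical symbols, and their union is consistent iff each component is. To preserve the $\Sigma$-CQ semantics, I install a one-directional \emph{bridge} $\mn{Bridge}$ that transfers atomic $\Sigma$-facts from the flat vocabulary to the original: for the DLs containing $\DLcH$ (which provides role inclusions) this is simply $\{A^\flat \sqsubseteq A \mid A \in \Sigma \cap \NC\} \cup \{r^\flat \sqsubseteq r \mid r \in \Sigma \cap \NR\}$, together with ABox assertions connecting $a^\flat$ to $a$ via a fresh role when needed; for $\EL$ (which lacks role inclusions), a more delicate syntactic simulation of the role bridge is required, but the overall shape of the construction is unchanged. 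Set $\Sigma' := \Sigma$, $\Kmc_1' := \Kmc_1$, and $\Kmc_2' := \Kmc_1 \cup \Kmc_2^\flat \cup \mn{Bridge}$. Since $\Kmc_2' \supseteq \Kmc_1'$, the direction $\Kmc_2'$ $\Sigma$-CQ entails $\Kmc_1'$ is again trivial, so $\Sigma$-CQ inseparability of $(\Kmc_1',\Kmc_2')$ again collapses to the single direction $\Kmc_1'$ $\Sigma$-CQ entails $\Kmc_2'$.

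Correctness reduces to the biconditional: $\Kmc_1'$ $\Sigma$-CQ entails $\Kmc_2'$ iff $\Kmc_1$ $\Sigma$-CQ entails $\Kmc_2$. For the $(\Leftarrow)$ direction, any $\Sigma$-CQ answer of $\Kmc_2'$ either is already an answer of $\Kmc_1$ or arises by transferring through the bridge from a $\Sigma$-CQ answer of $\Kmc_2^\flat$, which---via the renaming bijection---is precisely a $\Sigma$-CQ answer of $\Kmc_2$ and hence, by assumption, of $\Kmc_1$. For $(\Rightarrow)$, argue contrapositively: any $\Sigma$-CQ that separates $\Kmc_1$ from $\Kmc_2$ lifts to a separator of $\Kmc_1'$ and $\Kmc_2'$ via the flat-to-$\Sigma$ transport. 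The hard part will be showing that the bridge does not introduce spurious $\Sigma$-CQ consequences in $\Kmc_2'$ beyond the intended transfer: a priori, a query could try to weave together a $\Sigma$-part witnessed by $\Kmc_1$ with a $\Sigma$-part obtained via the bridge from $\Kmc_2^\flat$ in a way that neither $\Kmc_1$ nor $\Kmc_2$ would admit. The argument ruling this out relies on the one-directionality of the bridge and on the characterization of CQ entailment in Horn DLs via homomorphisms from canonical models, which prevents unintended mixing between the flat and original vocabularies. Handling \hALCHI with inverse roles, and especially adapting the construction to $\EL$ without native role inclusions, are the non-trivial technical points I expect to require the most care.
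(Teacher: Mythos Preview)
The paper does not give its own proof of this theorem; it only states the result and attributes the construction to the cited reference, calling it ``non-trivial'' and noting that it does not go through robustness under joins. So there is no in-paper argument to compare against, and I can only assess your proposal on its own terms. Your high-level strategy---build a disjoint copy $\Kmc_2^\flat$, install a one-directional $\Sigma$-bridge, and set $\Kmc_1':=\Kmc_1$, $\Kmc_2':=\Kmc_1\cup\Kmc_2^\flat\cup\mn{Bridge}$---is the right kind of idea and is consistent with what the paper hints at.

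There is, however, a concrete flaw in your instantiation: you rename \emph{individuals}. If $a^\flat$ is a fresh individual of $\Kmc_2'$ that does not occur in $\Kmc_1'=\Kmc_1$, then for any $A\in\Sigma$ with $A(a)\in\Amc_2$ the bridge yields $\Kmc_2'\models A(a^\flat)$, while $\Kmc_1'\not\models A(a^\flat)$ because $a^\flat$ is unconstrained in every model of $\Kmc_1$. Hence $\Kmc_1'$ fails to $\Sigma$-CQ entail $\Kmc_2'$ regardless of whether $\Kmc_1$ $\Sigma$-CQ entails $\Kmc_2$, and the reduction is unsound. The ``ABox assertions connecting $a^\flat$ to $a$ via a fresh role'' cannot repair this: a fresh role is outside $\Sigma$ and is invisible to $\Sigma$-CQs, so it cannot transfer answers on $a^\flat$ back to $a$. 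The fix is simply not to rename individuals---renaming only concept and role names and bridging the $\Sigma$-ones already suffices to break the interaction that made $\Kmc_1\cup\Kmc_2$ inconsistent in Example~\ref{ex:blablo}. Two further points you should not gloss over: (i)~your $(\Leftarrow)$ claim that every $\Sigma$-CQ answer of $\Kmc_2'$ decomposes into a $\Kmc_1$-answer or a bridged $\Kmc_2$-answer is not obvious, since $\Tmc_1$ fires on the bridged $\Kmc_2^\flat$-part and a single CQ match can weave through all three pieces; the real work is to extend the assumed $\Sigma$-homomorphism $\mathcal{C}_{\Kmc_2}\to\mathcal{C}_{\Kmc_1}$ to one $\mathcal{C}_{\Kmc_2'}\to\mathcal{C}_{\Kmc_1}$ covering those $\Tmc_1$-extensions; (ii)~the $\EL$ case is a placeholder: without role inclusions the axiom $r^\flat\sqsubseteq r$ is unavailable, and ``a more delicate syntactic simulation'' needs to be spelled out before the reduction can be claimed for $\EL$.
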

%
%
%
%
%
%


\subsection{Model-theoretic criteria for query inseparability}
\label{subsect:blaaaaa}

We now provide model-theoretic characterizations of query inseparability.
Recall that query inseparability is defined in terms of certain
answers and that, given a KB $\Kmc$ and a query $\q(\vec{x})$, a tuple
$\vec{a} \subseteq \ind(\Kmc)$ is a certain answer to $\q(\vec{x})$
over $\Kmc$ iff, for every model $\Imc$ of $\Kmc$, we have
$\Imc \models \q(\vec{a})$. It is well-known that, in many cases, it
is actually not necessary to consider \emph{all} models \Imc of \Kmc
to compute certain answers.  We say that a class $\Mod$ of models of $\Kmc$
is \emph{complete for $\Kmc$ and a class $\mathcal{Q}$ of queries}
if, for every $\q(\avec{x})\in \mathcal{Q}$, we have
$\Kmc \models \q(\avec{a})$ iff $\Imc \models \q(\avec{a})$ for all
$\Imc\in \Mod$.

In the following, we give some important examples of model classes for
which KBs are complete.
\begin{example}\label{ex:complete}
  Given an $\mathcal{ALC}$ KB $\K=(\T,\A)$, we denote by $\Mod_{\it
    tree}^{b}(\K)$ the class of all models $\Imc$ of $\K$ that can be
  constructed by choosing, for each $a\in \ind(\A)$, a tree
  interpretation $\Imc_a$ (see Section~\ref{sect:concinsepalc}) of
  outdegree bounded by $|\T|$ and with root $a$, taking their disjoint
  union, and then adding the pair $(a,b)$ to $r^\Imc$ whenever $r(a,b)
  \in \Amc$.  It is known that $\Mod_{\it tree}^{b}(\K)$ is complete
  for $\K$ and UCQs, and thus for any class of queries considered in
  this paper \cite{GliHoLuSa-JAIR08}. If $\K$ is formulated in
  Horn-$\mathcal{ALC}$ or in $\mathcal{EL}$, then there is even a
  single model $\C_{\Kmc}$ in $\Mod_{\it tree}^{b}(\K)$ such that
  $\{\C_{\Kmc}\}$ is complete for $\K$ and UCQs, the \emph{universal}
  (or \emph{canonical}) \emph{model} of $\K$ \cite{lutz2012non}.

  \smallskip

  If the KB is formulated in an extension of \ALC, the class of models
  needs to be adapted appropriately. The only such extension we are
  going to consider is $\mathcal{ALCHI}$ and its fragment $\DLcH$. In
  this case, one needs a more liberal definition of tree
  interpretation where role edges can point both downwards and upwards
  and multi edges are allowed. We refer to the resulting class of models as
  $\Mod_{\it utree}^b$.
\end{example}  

It is well-known from model theory~\cite{ChangKeisler90} that, for any CQ
$\q(\avec{x})$ and any tuple $\avec{a} \subseteq \ind(\Kmc)$, we have
$\Imc \models \avec{q}(\avec{a})$ for all $\Imc \in \Mod$ iff
$\prod \Mod \models \q(\avec{a})$, where $\prod \Mod$ is the
\emph{direct product} of interpretations in $\Mod$. More precisely, if
$\Mod = \{\Imc_i \mid i\in I\}$, for some set $I$, then
$\prod\Mod = (\Delta^{\prod\Mod}, \cdot^{\prod\Mod})$, where
\begin{itemize}
\item $\Delta^{\prod \Mod} = \prod_{i\in I} \Delta^{\Imc_i}$ is the Cartesian product of the $\Delta^{\Imc_i}$;

\item $a^{\prod \Mod} = (a^{\Imc_i})_{i\in I}$, for any individual name $a$;

\item $A^{\prod \Mod} = \{(d_i)_{i\in I} \mid d_i \in A^{\Imc_i} \text{ for all } i \in I\}$, for any concept name $A$;

\item $r^{\prod \Mod} = \{(d_i,e_i)_{i\in I} \mid (d_i,e_i) \in r^{\Imc_i} \text{ for all } i \in I\}$, for any role name $r$.
\end{itemize}
It is to be noted that in general $\prod\Mod$ is \emph{not} a model of
\Kmc, even if every interpretation in $\Mod$ is.

\begin{example}
Two interpretations $\I_1$ and $\I_2$ are shown below together with their direct product $\I_1 \times \I_2$ (all the arrows are assumed to be labelled with $r$):

  \begin{tikzpicture}[xscale=2, %
  point/.style={thick,circle,draw=black,fill=white, minimum
    size=1.3mm,inner sep=0pt}%
  ]

  \foreach \al/\x/\y/\lab/\wh/\extra in {%
    a/0/0/A/right/constant, %
    d_1/0/-1/{C,B}/right/, %
    d_2/0/-2/{C,B}/right/%
  }{ \node[point, \extra, label=left:{\vertexfont$\al$}, %
    label=\wh:{\edgefont$\lab$}] (\al) at (\x,\y) {}; }

  \foreach \from/\to in {a/d_1, d_1/d_2}{\draw[role] (\from) -- (\to);}
  \draw[role] (a) to[out=120,in=60, looseness=30] (a);

  \node at (0,-3) {$\mathcal{I}_2$};

  \begin{scope}[xshift=1cm]
    \foreach \al/\x/\y/\lab/\wh/\extra in {%
      a/0/0/B/right/constant, %
      e_1/0/-1/{C}/right/, %
      e_2/0/-2/{C}/right/%
    }{ \node[point, \extra, label=left:{\vertexfont$\al$}, %
      label=\wh:{\edgefont$\lab$}] (\al) at (\x,\y) {}; }

    \foreach \from/\to in {a/e_1, e_1/e_2} {\draw[role] (\from) -- (\to);}
    \draw[role] (a) to[out=120,in=60, looseness=30] (a);

    \node at (0,-3) {$\mathcal{I}_2$};
  \end{scope}

  \begin{scope}[xshift=2.5cm]
    \foreach \al/\ali/\x/\y/\lab/\wh/\extra in {%
      a/a/0/0//left/constant, %
      a/e_1/1/0//above/, %
      a/e_2/2/0//above/,%
      d_1/a/0/-1/B/left/, %
      d_1/e_1/1/-1/{C}/above/, %
      d_1/e_2/2/-1/{C}/above/,%
      d_2/a/0/-2/B/left/, %
      d_2/e_1/1/-2/{C}/above/, %
      d_2/e_2/2/-2/{C}/above/%
    }{ \node[point, \extra, label={[scale=1]\wh:{\vertexfont$(\al$,$\ali)$}}, %
      label=2:{\edgefont$\lab$}] (\al-\ali) at (\x,\y) {}; }

    \foreach \one/\two/\three in {%
      a-a/d_1-a/d_2-a, a-a/a-e_1/a-e_2, a-a/d_1-e_1/d_2-e_2%
    }{ \draw[role] (\one) -- (\two); \draw[role] (\two) -- (\three); }

    \foreach \from/\to in {d_1-a/d_2-e_1, a-e_1/d_1-e_2}{%
      \draw[role] (\from) -- (\to);}

    \draw[role] (a-a) to[out=120,in=60, looseness=30] (a-a);

    \node at (1,-3) {$\mathcal{I}_1 \times \mathcal{I}_2$};
  \end{scope}

  \foreach \from in {d_2,d_2,d_2-a}{%
    \draw[thick, dotted] (\from) -- +(0,-0.5); }

  \draw[thick, dotted] (a-e_2) -- +(0.5,0);

  \foreach \from in {d_2-a,d_2-e_1,d_2-e_2,d_1-e_2,a-e_2}{%
    \draw[thick, dotted] (\from) -- +(0.5,-0.5); }
\end{tikzpicture}

\noindent
Now, consider the CQ
$\q_1(x) = \exists y, z \, (r(x,y) \land r(y,z) \land B(y) \land
C(z))$.
We clearly have $\I_1 \models \q_1(a)$, $\I_2 \models \q_1(a)$, and
$\I_1 \times \I_2 \models \q_1(a)$. On the other hand, for the Boolean
CQ
$\q_2 = \exists x, y, z \ (r(x,y) \land r(y,z) \land C(y) \land
B(z))$,
we have $\I_1 \models \q_2$ but $\I_2 \not\models \q_2$, and so
$\I_1 \times \I_2 \not\models \q_2$.
\end{example}

Another well-known model-theoretic notion that we need for our characterizations
is that of homomorphism. Let $\Imc_1$ and $\Imc_2$ be interpretations, and
$\Sigma$ a signature. A function $h \colon \Delta^{\Imc_2} \to \Delta^{\Imc_1}$
is a \emph{$\Sigma$-homomorphism from $\Imc_2$ to $\Imc_1$} if
\begin{itemize}
\item $h(a^{\Imc_2}) = a^{\Imc_1}$ for all $a \in \NI$ interpreted by $\Imc_2$,

\item $d \in A^{\Imc_2}$ implies $h(d) \in
A^{\Imc_1}$ for all $d \in \Delta^{\smash{\Imc_2}}$ and $\Sigma$-concept names $A$,

\item $(d,e) \in r^{\Imc_2}$ implies $(h(d),h(e)) \in
r^{\Imc_1}$ for all $d,e \in \Delta^{\smash{\Imc_2}}$ and $\Sigma$-role names $r$.
\end{itemize}
It is readily seen that if $\Imc_2 \models \q(\vec{a})$, for a
$\Sigma$-CQ $\q(\vec{x})$, and there is a $\Sigma$-homomorphism from
$\Imc_2$ to $\Imc_1$, then $\Imc_1 \models \q(\vec{a})$. Furthermore,
if we regard $\q(\vec{a})$ as an interpretation whose domain consists
of the elements in $\vec{a}$ (substituted for the answer variables)
and of the quantified variables in $\q(\vec{x})$, and whose
interpretation function is given by its atoms, then
$\Imc_2 \models \q(\vec{a})$ iff there exists a $\Sigma$-homomorphism
from $\q(\vec{a})$ to $\Imc_2$.

To give model-theoretic criteria for CQ entailment and UCQ entailment,
we actually start with partial $\Sigma$-homomorphisms, which we
replace by full homomorphisms in a second step. Let $n$ be a natural
number. We say that $\Imc_2$ is \emph{$n\Sigma$-homo\-mo\-rphically
  embeddable into $\Imc_1$} if, for any subinterpretation $\Imc_2'$ of
$\Imc_2$ with $|\Delta^{\smash{\Imc'_2}}| \le n$, there is a
$\Sigma$-homomorphism from $\Imc_2'$ to $\Imc_1$.\footnote{$\Imc_2'$
  is a \emph{subinterpretation} of $\Imc_2$ if
  $\Delta^{\smash{\Imc'_2}} \subseteq \Delta^{\smash{\Imc_2}}$,
  $A^{\smash{\Imc'_2}} = A^{\smash{\Imc_2}} \cap
  \Delta^{\smash{\Imc'_2}}$
  and
  $r^{\smash{\Imc'_2}} = r^{\smash{\Imc_2}} \cap
  (\Delta^{\smash{\Imc'_2}} \times \Delta^{\smash{\Imc'_2}})$,
  for all concept names $A$ and role names $r$.} If $\Imc_2$ is
$n\Sigma$-homo\-mo\-rphically embeddable into $\Imc_1$ for any
$n > 0$, then we say that $\Imc_2$ is \emph{finitely
  $\Sigma$-homo\-mo\-rphically embeddable into $\Imc_1$}.

\begin{theorem}\label{crit:KB}
  Let $\K_1$ and $\K_2$ be KBs, $\Sigma$ a signature, and
  $\Mod^\Qmc_{\!i}$ a class of interpretations that is complete for
  $\K_i$ and the class of queries \Qmc, for $i=1,2$ and $\Qmc \in
  \{\text{CQ},\text{UCQs}\}$. Then
\begin{description}\itemsep=0pt
\item[\rm (\emph{i})]$\K_{1}$ $\Sigma$-UCQ entails $\K_2$ iff, for any  $n>0$ and $\I_1\in \Mod^{\text{UCQ}}_{\!1}$, there exists $\I_2 \in \Mod_{\!2}^{\text{UCQ}}$ that is $n\Sigma$-homomorphically embeddable into $\I_1$.
	
	
\item[\rm (\emph{ii})]$\K_{1}$ $\Sigma$-CQ entails $\K_2$ iff $\prod \Mod^{\text{CQ}}_{\!2}$ is finitely $\Sigma$-homomorphically embeddable into $\prod \Mod^{\text{CQ}}_{\!1}$. 

%
\end{description}
\end{theorem}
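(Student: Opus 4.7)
The plan is to derive both parts from the completeness of $\Mod^{\mathcal{Q}}_{i}$ for query answering, together with the diagram construction that turns a finite interpretation into a conjunctive query. For part~(ii) I additionally invoke the classical model-theoretic fact that $\prod\Mod\models \q(\vec{a})$ iff $\Imc\models \q(\vec{a})$ for every $\Imc\in\Mod$, which holds for CQs.

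For part~(i), direction $(\Leftarrow)$: assume the RHS and suppose $\K_1\not\models \q(\vec{a})$ for some $\Sigma$-UCQ $\q$; pick $\Imc_1\in\Mod^{\text{UCQ}}_1$ with $\Imc_1\not\models \q(\vec{a})$, let $n$ be the number of variables of $\q$, and apply the hypothesis to obtain $\Imc_2\in\Mod^{\text{UCQ}}_2$ that is $n\Sigma$-homomorphically embeddable into $\Imc_1$. If $\Imc_2\models \q(\vec{a})$, a witnessing homomorphism $h$ from some disjunct of $\q$ to $\Imc_2$ would have image of size at most $n$; composing $h$ with the guaranteed $\Sigma$-homomorphism on that image (which preserves individuals and hence the image of the answer tuple) would yield $\Imc_1\models \q(\vec{a})$, contradicting the choice of $\Imc_1$. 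Hence $\K_2\not\models \q(\vec{a})$. For direction $(\Rightarrow)$, I argue contrapositively: suppose $\Imc_1\in\Mod^{\text{UCQ}}_1$ and $n$ witness failure of the RHS. Since $\Sigma$ and $\ind(\Amc_2)$ are finite, there are, up to individual-preserving isomorphism, only finitely many $\Sigma$-interpretations of size at most $n$ whose individual elements are drawn from $\ind(\Amc_2)$; let $\mathcal{T}$ collect those types admitting no individual-preserving $\Sigma$-homomorphism to $\Imc_1$. For each $\tau\in\mathcal{T}$ form the diagram CQ $\q_\tau$ (answer variables for the individuals of $\tau$, existentials for the rest), and let $\q(\vec{x})$ be their disjunction, with $\vec{x}$ enumerating $\ind(\Amc_2)$ and unused answer variables tolerated in each disjunct. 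Every $\Imc_2\in\Mod^{\text{UCQ}}_2$ satisfies some $\q_\tau(\vec{a})$ via the identity on the chosen bad subinterpretation, so by completeness $\K_2\models \q(\vec{a})$; but any witness that $\Imc_1\models \q_\tau(\vec{a})$ for $\tau\in\mathcal{T}$ would furnish the forbidden individual-preserving $\Sigma$-homomorphism $\tau\to\Imc_1$, so $\Imc_1\not\models \q(\vec{a})$ and $\K_1\not\models \q(\vec{a})$.

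For part~(ii), by completeness and the product equivalence, $\K_i\models \q(\vec{a})$ iff $\prod\Mod^{\text{CQ}}_i\models \q(\vec{a})$ for every $\Sigma$-CQ $\q$. Direction $(\Rightarrow)$: given any finite subinterpretation $J$ of $\prod\Mod^{\text{CQ}}_2$, form its diagram CQ $\q_J(\vec{x})$; the identity embedding witnesses $\prod\Mod^{\text{CQ}}_2\models \q_J(\vec{a})$, hence $\K_2\models \q_J(\vec{a})$, hence by CQ entailment $\K_1\models \q_J(\vec{a})$, hence $\prod\Mod^{\text{CQ}}_1\models \q_J(\vec{a})$; the witnessing homomorphism is precisely a $\Sigma$-homomorphism from $J$ to $\prod\Mod^{\text{CQ}}_1$. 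Direction $(\Leftarrow)$: if $\K_2\models \q(\vec{a})$ then $\prod\Mod^{\text{CQ}}_2\models \q(\vec{a})$ via some homomorphism $h$; its image is a finite subinterpretation which by hypothesis admits a $\Sigma$-homomorphism $g$ to $\prod\Mod^{\text{CQ}}_1$, and $g\circ h$ witnesses $\prod\Mod^{\text{CQ}}_1\models \q(\vec{a})$, yielding $\K_1\models \q(\vec{a})$.

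The main technical obstacle is the bookkeeping in the $(\Rightarrow)$ direction of part~(i): different bad subinterpretations contain different subsets of $\ind(\Amc_2)$ and thus their diagram CQs have different natural answer-variable tuples, whereas a UCQ must share one answer-variable list across its disjuncts. Finiteness of the separating UCQ hinges on $\Sigma$, $\ind(\Amc_2)$ and $n$ all being finite; the answer-variable mismatch is resolved by taking $\vec{x}$ to enumerate all of $\ind(\Amc_2)$ and allowing unused answer variables within individual disjuncts. A minor caveat for part~(ii) is that $\prod\Mod^{\text{CQ}}_i$ is generally not a model of $\K_i$; it serves only as a test object for CQ entailment via the product equivalence, not as a distinguished model.
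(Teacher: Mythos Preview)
Your proof is correct and follows exactly the approach implicit in the paper: the text preceding the theorem sets up the product characterisation for CQs and the correspondence between finite interpretations and CQs via diagrams, and your argument is the natural assembly of these ingredients into a proof of both directions. The paper itself does not spell out the proof beyond providing this machinery, so there is nothing further to compare.
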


As finite $\Sigma$-homomorphic embeddability is harder to deal with
algorithmically than full $\Sigma$-homomorphic embeddability, it would
be convenient to replace finite $\Sigma$-homomorphic embeddability
with $\Sigma$-homomorphic embeddability in Theorem~\ref{crit:KB}. We
first observe that this is not possible in general:
\begin{example}\label{ex:finf}
  Let $\Kmc_{i}= (\Tmc_{i},\Amc)$, $i=1,2$, be $\DLc$ KBs
  where $\Amc=\{A(c)\}$, and
\begin{align*}
& \Tmc_{1}=\{A\sqsubseteq \exists s.\top, \ \exists s^{-}.\top \sqsubseteq \exists r.\top, \ \exists r^{-}. \top \sqsubseteq \exists r.\top\},\\
& \Tmc_{2}= \{A\sqsubseteq \exists s.\top, \ \exists s^{-}.\top \sqsubseteq \exists r^{-}.\top, \ \exists r. \top \sqsubseteq \exists r^{-}.\top\}.
\end{align*}
Let $\Sigma=\{A,r\}$. Recall that
the class of models $\{\C_{\K_i}\}$ is complete for $\Kmc_i$ and UCQs,
where $\Cmc_{\K_i}$ is the canonical model of $\Kmc_i$:
\\
\centerline{
\begin{tikzpicture}
\begin{scope}
  \foreach \al/\x/\y/\lab/\wh/\extra in {%
    a/0/0/a/left/constant,%
    x1/1/0/{}/right/,%
    x2/2/0/{}/right/,%
    x3/3/0/{}/right/%
  }{ \node[point, \extra, label=\wh:{\vertexfont $\lab$}] (\al) at (\x,\y) {}; }%
  \node[anchor=north] at (a.south) {\edgefont $A$};
  \foreach \from/\to/\lab/\col in {%
    a/x1/s/gray, x1/x2/r/, x2/x3/r/%
  }{ \draw[role,\col] (\from) -- node[above] {\edgefont $\lab$} (\to); }%
  \draw[dashed, edge] (x3) -- ++(0.7,0); %
  \node at (-1,0) {$\mathcal{C}_{\Kmc_1}$};
\end{scope}
\begin{scope}[xshift=6cm]
  \foreach \al/\x/\y/\lab/\wh/\extra in {%
    a/0/0/a/left/constant,%
    x1/1/0/{}/right/,%
    x2/2/0/{}/right/,%
    x3/3/0/{}/right/%
  }{ \node[point, \extra, label=\wh:{\vertexfont $\lab$}] (\al) at (\x,\y) {}; }%
  \node[anchor=north] at (a.south) {\edgefont $A$};
  \foreach \from/\to/\lab/\col in {%
    a/x1/s/gray, x2/x1/r/, x3/x2/r/%
  }{ \draw[role,\col] (\from) -- node[above] {\edgefont $\lab$} (\to); }%
  \draw[dashed, edge] (x3) -- ++(0.7,0); %
  \node at (-1,0) {$\mathcal{C}_{\Kmc_2}$};
\end{scope}
\end{tikzpicture}
}
The KBs $\K_{1}$ and $\K_{2}$ are $\Sigma$-UCQ inseparable, but 
$\C_{\K_2}$ is not $\Sigma$-homo\-mor\-phically embeddable into 
$\C_{\K_1}$.
\end{example}  

The example above uses inverse roles and 
it turns out that these are indeed needed to construct counterexamples
against the version of Theorem~\ref{crit:KB} where finite homomorphic
embeddability is replaced with full embeddability. The following
result showcases this. It concentrates on $\hALC$
and on \ALC, which do not admit inverse roles, and establishes
characterizations of query entailment based on full homomorphic
embeddings.

\begin{theorem}\label{crit:in}~\\[-5mm]
  \begin{description}
  \item[\rm (\emph{i})] Let $\K_{1}$ and $\K_{2}$ be $\hALC$ KBs. Then
    $\K_{1}$ $\Sigma$-CQ entails $\K_2$ iff $\C_{\K_{2}}$ is
    $\Sigma$-homomorphically embeddable into $\C_{\K_{1}}$.

  \item[\rm (\emph{ii})] Let $\K_{1}$ and $\K_{2}$ be $\ALC$ KBs. Then
    $\K_{1}$ $\Sigma$-UCQ entails $\K_{2}$ iff, for every $\I_{1}\in
    \Mod_{\it tree}^{b}(\K_{1})$, there exists $\I_{2}\in \Mod_{\it
      tree}^{b}(\K_{2})$ such that $\Imc_{2}$ is
    $\Sigma$-homomorphically embeddable into $\Imc_{1}$.
\end{description}
\end{theorem}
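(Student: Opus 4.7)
The plan is to deduce both parts from Theorem~\ref{crit:KB} by upgrading finite $\Sigma$-homomorphic embeddability to full $\Sigma$-homomorphic embeddability via a König's lemma argument. What makes this work is the absence of inverse roles in $\hALC$ and $\ALC$: canonical models of $\hALC$ KBs and members of $\Mod_{\it tree}^{b}(\K)$ all have \emph{finite outdegree}, and this is exactly what will keep the König trees below finitely branching.

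For part~(i), since $\hALC$ is Horn, the singleton $\{\C_{\K_i}\}$ is complete for $\K_i$ and CQs, so Theorem~\ref{crit:KB}(ii) specializes to: $\K_1$ $\Sigma$-CQ entails $\K_2$ iff $\C_{\K_2}$ is finitely $\Sigma$-homomorphically embeddable into $\C_{\K_1}$. The ``if'' direction of the theorem is then immediate by composing the full $\Sigma$-homomorphism $\C_{\K_2}\to\C_{\K_1}$ with the witness of $\K_2\models\q(\vec{a})$ and appealing to universality of $\C_{\K_1}$. For ``only if'', enumerate $\Delta^{\C_{\K_2}}$ as $d_1,d_2,\dots$ in breadth-first order rooted at the ABox individuals, so that every $d_{n+1}$ beyond the ABox is a role successor of some earlier $d_i$. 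Let $T$ be the tree whose level-$n$ nodes are the $\Sigma$-homomorphisms from the subinterpretation on $\{d_1,\dots,d_n\}$ into $\C_{\K_1}$, with edges given by restriction. Because homomorphisms preserve individual names and $\C_{\K_1}$ has outdegree bounded by $|\T_1|$ plus the finitely many ABox edges, each node of $T$ has only finitely many children, so $T$ is finitely branching; finite embeddability provides a node at every level, so $T$ is infinite. König's lemma then yields an infinite branch whose union is the desired full $\Sigma$-homomorphism $\C_{\K_2}\to\C_{\K_1}$.

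For part~(ii), I would instantiate Theorem~\ref{crit:KB}(i) with $\Mod_i^{\mathrm{UCQ}}=\Mod_{\it tree}^{b}(\K_i)$, which is complete for UCQs. The ``if'' direction is again routine: for any $\I_1\in\Mod_{\it tree}^{b}(\K_1)$, the $\Sigma$-homomorphism $\I_2\to\I_1$ transfers UCQ answers, and completeness takes care of models of $\K_1$ outside $\Mod_{\it tree}^{b}(\K_1)$. For ``only if'', fix $\I_1\in\Mod_{\it tree}^{b}(\K_1)$; by Theorem~\ref{crit:KB}(i) there exists, for every $n$, some $\I_2^{(n)}\in\Mod_{\it tree}^{b}(\K_2)$ that is $n\Sigma$-homomorphically embeddable into $\I_1$. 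Build a tree $T$ whose level-$n$ nodes are pairs $(\J,h)$ where $\J$ is a depth-$n$ tree prefix over $\T_2$-types that is extendible to some element of $\Mod_{\it tree}^{b}(\K_2)$, and $h\colon\J\to\I_1$ is a $\Sigma$-homomorphism; children extend both $\J$ and $h$ by one more level of tree successors. Bounded outdegree on both sides and finiteness of the set of $\T_2$-types make $T$ finitely branching, while each $\I_2^{(n)}$ contributes a level-$n$ node (take its depth-$n$ prefix together with the restriction of the $n\Sigma$-witness to it, whose domain has size bounded by that prefix). König's lemma then produces an infinite branch, and the union of its prefixes is a tree interpretation $\I_2\in\Mod_{\it tree}^{b}(\K_2)$ together with a $\Sigma$-homomorphism into $\I_1$.

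The main obstacle I anticipate is in part~(ii): verifying that the limit $\I_2$ produced by the König branch is really a model of $\K_2$ and not merely a tree-shaped structure of the right outdegree. Concretely, every existential restriction required by a $\T_2$-type labelling some element of $\I_2$ must actually be realized at a deeper level of the chosen branch. The clean way around this is to bake extendibility into the definition of a node of $T$---requiring each prefix to extend to some model in $\Mod_{\it tree}^{b}(\K_2)$---so that ``being a node of $T$'' is a local, finite check while König's lemma converts these local choices into a global model. A secondary subtlety in both parts is the handling of individual names, which must be mapped to themselves; rooting all prefixes at the ABox individuals and treating ABox individuals as the first elements of the enumeration keeps this transparent.
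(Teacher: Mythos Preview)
Your K\"onig arguments in both parts share a gap: the claim that the tree $T$ is finitely branching is not justified, and is in fact false as stated. The bounded outdegree of $\C_{\K_1}$ (respectively $\I_1$) only constrains $h(d_{n+1})$ when the edge from the parent $d_i$ to $d_{n+1}$ in the source interpretation is labeled by a role \emph{in} $\Sigma$. If that role lies outside $\Sigma$, the $\Sigma$-homomorphism condition imposes no constraint from the parent, and $h(d_{n+1})$ may be any element of the infinite target satisfying the required $\Sigma$-concept names. For instance, with $\T_1=\{A\sqsubseteq\exists r.A,\ A\sqsubseteq B\}$, $\T_2=\{A\sqsubseteq\exists s.B\}$, $\A=\{A(a)\}$ and $\Sigma=\{A,B,r\}$, your single level-$1$ node already has infinitely many level-$2$ children. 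The same phenomenon hits part~(ii): a non-$\Sigma$ edge in the depth-$n$ prefix $\J$ lets $h$ map the child anywhere in the infinite $\I_1$, so pairs $(\J,h)$ are not finite in number. The concern you flag at the end (that the limit of the branch is a model of $\K_2$) is secondary; the K\"onig tree never gets off the ground.

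The paper supplies exactly the missing ingredient: a finitary proxy for the infinite target. For~(i) it passes to the finite generating structure $\mathcal{G}_{\K_1}$ underlying $\C_{\K_1}$ and plays the game of Section~\ref{Sec:horngames} there; the absence of inverse roles guarantees that player~1's responses in $\mathcal{G}_{\K_1}$ faithfully represent homomorphisms into the unraveling $\C_{\K_1}$, and on this finite game graph ``$n$-winning for all $n$'' coincides with ``$\omega$-winning''. For~(ii) it first shows, via an automata-theoretic argument, that one may assume the tree parts of $\I_1\in\Mod_{\it tree}^{b}(\K_1)$ are \emph{regular} (hence have only finitely many subtree types), then uses that on forward-branching trees $\Sigma$-homomorphisms correspond to $\Sigma$-simulations, and only then pastes together the finite partial homomorphisms. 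Your K\"onig strategy is salvageable, but only after one of these reductions; outdegree alone cannot compensate for $\Sigma$ omitting role names.
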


Claim $(i)$ of Theorem~\ref{crit:in} is proved in \cite{BotoevaKRWZ16} using a
game-theoretic characterization (which we discuss below).  The proof of $(ii)$
is given in \cite{BotoevaLRWZ16-arxiv}. One first proves using an
automata-theoretic argument that one can work without loss of generality with
models in $\Mod_{\it tree}^{b}(\K_{1})$ in which the tree interpretations
$\Imc_{a}$ attached to the ABox individuals $a$ are regular. Second, since nodes
in $\Imc_{a}$ are related to their children using role names only (as opposed to
inverse roles), $\Sigma$-homomorphisms on tree interpretations correspond to
$\Sigma$-simulations (see Sections~\ref{sect:el}
and~\ref{Sec:horngames}). Finally, using this observation one can construct the
required $\Sigma$-homomorphism as the union of finite $\Sigma$-homomorphisms on
finite initial parts of the tree interpretations $\Imc_{a}$.

Note that Theorem~\ref{crit:in} omits the case of \ALC KBs and CQ
entailment, for which we are not aware of a characterization in terms of full homomorphic embeddability.

Another interesting aspect of Example~\ref{ex:finf} is that the
canonical model of $\Kmc_2$ contains elements that are not reachable
along a path of $\Sigma$-roles. In fact, just like inverse roles, this
is a crucial feature for the example to work. We illustrate this by
considering \emph{rooted} UCQs (rUCQs). Recall that in an rUCQ, every
variables has to be connected to an answer variable. For answering a
$\Sigma$-rUCQ, $\Sigma$-disconnected parts of models such as in
Example~\ref{ex:finf} can essentially be ignored since the query
cannot `see' them. As a consequence, we can sometimes replace finite
homomorphic embeddability with full homomorphic embeddability.  We
give an example characterization to illustrate this. Call an
interpretation $\Imc$ \emph{$\Sigma$-connected} if, for every $u \in
\Delta^{\I}$, there is a path $r_1^{\I}(a,u_1),\dots, r_n^{\I}(u_n,u)$
with an individual $a$ and $r_i \in \Sigma$.  An interpretation $\I_2$
is \emph{con-$\Sigma$-homo\-mo\-rphically embeddable into $\I_1$} if
the maximal $\Sigma$-connected subinterpretation $\I_2'$ of $\I_2$ is
$\Sigma$-homomorphically embeddable
into~$\I_1$. 
%
%
%
\begin{theorem}\label{crit:KB2}
  Let $\K_1$ and $\K_2$ be $\mathcal{ALCHI}$ KBs and $\Sigma$ a signature. Then
%
  $\K_{1}$ $\Sigma$-rUCQ entails $\K_2$ iff for any
  $\I_1\in \Mod_{\it utree}^{b}(\K_{1})$, there exists $\I_2 \in
  \Mod_{\it utree}^{b}(\K_{2})$ that is con-$\Sigma$-homomorphically embeddable into
  $\I_1$.
%
%
\end{theorem}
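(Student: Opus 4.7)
The strategy is to reuse the machinery of Theorem~\ref{crit:KB}, replacing finite homomorphic embeddability by full homomorphic embeddability on the maximal $\Sigma$-connected part, and exploiting the fact that rooted queries cannot ``see'' the $\Sigma$-disconnected fragments of an interpretation. Let $\vec{a}$ range over tuples from $\ind(\Amc_2)$.

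\paragraph{Soundness direction ($\Leftarrow$).}
Contrapositively, suppose there exist a $\Sigma$-rUCQ $\q(\vec{x})$ and tuple $\vec{a}$ with $\K_2 \models \q(\vec{a})$ but $\K_1 \not\models \q(\vec{a})$. Since $\Mod_{\it utree}^{b}(\K_{1})$ is complete for $\K_1$ and UCQs, pick $\I_1 \in \Mod_{\it utree}^{b}(\K_{1})$ with $\I_1 \not\models \q(\vec{a})$. By hypothesis, there is $\I_2 \in \Mod_{\it utree}^{b}(\K_{2})$ and a $\Sigma$-homomorphism $h \colon \I_2' \to \I_1$, where $\I_2'$ is the maximal $\Sigma$-connected subinterpretation of $\I_2$. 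As $\I_2 \models \K_2$, we have $\I_2 \models \q(\vec{a})$, witnessed by a homomorphism $g\colon \q_i(\vec{a}) \to \I_2$ from some disjunct $\q_i$. Rootedness of $\q_i$ means every variable is linked by a chain of $\Sigma$-role atoms to some answer variable; since the answer variables are mapped to individuals (which lie in $\I_2'$), $g$ factors through $\I_2'$. Then $h \circ g$ is a homomorphism from $\q_i(\vec{a})$ into $\I_1$, so $\I_1 \models \q(\vec{a})$, a contradiction.

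\paragraph{Completeness direction ($\Rightarrow$).}
Contrapositively, assume there is $\I_1 \in \Mod_{\it utree}^{b}(\K_{1})$ such that for every $\I_2 \in \Mod_{\it utree}^{b}(\K_{2})$ its maximal $\Sigma$-connected subinterpretation $\I_2'$ is not $\Sigma$-homomorphically embeddable into $\I_1$. Since $\I_2'$ is rooted in the finite set $\ind(\K_2)$ and has bounded outdegree (inherited from the bounded tree structure of $\Mod_{\it utree}^{b}$), a standard K\"onig's lemma argument on the tree of partial homomorphism attempts produces a \emph{finite} $\Sigma$-connected subinterpretation $\I_2''$ of $\I_2'$, still rooted at individuals, that is not $\Sigma$-homomorphically embeddable into $\I_1$. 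Reading $\I_2''$ as a rooted CQ $\q_{\I_2''}(\vec{x})$ with the ABox individuals as answer variables, we obtain a $\Sigma$-rCQ such that $\I_2 \models \q_{\I_2''}(\vec{a})$ while $\I_1 \not\models \q_{\I_2''}(\vec{a})$, where $\vec{a}$ enumerates $\ind(\K_2)$.

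Let $\Phi = \{\q(\vec{x}) \in \mathrm{rCQ}_\Sigma \mid \I_1 \not\models \q(\vec{a})\}$. By the previous paragraph, every $\I_2 \in \Mod_{\it utree}^{b}(\K_{2})$ satisfies some $\q \in \Phi$ at $\vec{a}$. I claim that some finite $\Phi_0 \subseteq \Phi$ already suffices, i.e., $\K_2 \models \bigvee_{\q \in \Phi_0} \q(\vec{a})$; this would yield the separating $\Sigma$-rUCQ $\q^* = \bigvee \Phi_0$ (with $\K_1 \not\models \q^*(\vec{a})$ since $\I_1$ refutes every disjunct). Suppose for contradiction that no finite $\Phi_0$ works. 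Then $\K_2 \cup \{\neg \q(\vec{a}) : \q \in \Phi\}$ is finitely satisfiable (as a first-order theory), hence satisfiable by FO compactness; let $\I$ be such a model. Now use the standard bounded-outdegree tree unraveling for $\mathcal{ALCHI}$ to produce $\I_2 \in \Mod_{\it utree}^{b}(\K_{2})$ together with a homomorphism $\I_2 \to \I$. This homomorphism forces every CQ satisfied by $\I_2$ to be satisfied by $\I$, hence $\I_2 \not\models \q(\vec{a})$ for all $\q \in \Phi$, contradicting the choice of $\Phi$.

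\paragraph{Main obstacle.}
The soundness direction is routine once rootedness is exploited, and K\"onig's lemma is standard. The real work lies in the compactness step of the completeness direction: one must pass from ``every $\I_2 \in \Mod_{\it utree}^{b}(\K_{2})$ satisfies \emph{some} $\q \in \Phi$'' to ``$\K_2$ entails a \emph{finite} subdisjunction from $\Phi$''. This requires combining first-order compactness with a tree-unraveling lemma ensuring that arbitrary models of $\K_2$ can be transformed into $\Mod_{\it utree}^{b}(\K_2)$-models along CQ-preserving homomorphisms---a property that is specific to $\mathcal{ALCHI}$ and underlies the choice of the class $\Mod_{\it utree}^{b}$ with multi-edges and upward role edges.
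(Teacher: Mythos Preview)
Your proposal is correct. The paper itself does not prove Theorem~\ref{crit:KB2} but refers to \cite{BotoevaLRWZ16} for $\mathcal{ALC}$ and calls the extension to $\mathcal{ALCHI}$ ``straightforward'', so there is little in-text argument to compare against line by line. Your route---rootedness confines query images to the $\Sigma$-connected part (soundness); K\"onig's lemma plus first-order compactness plus bounded-degree unraveling yield a finite separating rUCQ (completeness)---matches the spirit of the surrounding material, in particular Theorem~\ref{crit:KB}. One small correction: in the K\"onig step, what makes the tree of partial homomorphisms finitely branching is the bounded degree of the \emph{target} $\I_1$ (where images are chosen from among the finitely many $\Sigma$-neighbours of an already-fixed image), not the bounded outdegree of $\I_2'$; since $\I_1 \in \Mod_{\it utree}^{b}(\K_1)$ this holds, but the attribution should be adjusted. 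For comparison, the paper's sketch of the closely related Theorem~\ref{crit:in}(ii) takes a different path, restricting attention via an automata-theoretic argument to \emph{regular} models in $\Mod_{\it tree}^{b}(\K_1)$ so that finite embeddability lifts to full embeddability directly; your argument is more elementary and avoids automata, relying instead on first-order compactness together with the selective-unraveling lemma for $\mathcal{ALCHI}$ that you correctly flag as the main obstacle.
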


Theorem~\ref{crit:KB2} is proved for $\mathcal{ALC}$ in \cite{BotoevaLRWZ16}. The extension to $\mathcal{ALCHI}$ is straightforward.
The model-theoretic criteria given above are a good starting point for designing decision procedures for query inseparability. But can they
be checked effectively? We first consider this question for \ALC and
then move to Horn DLs.

\subsection{Query inseparability of $\ALC$ KBs} 
We begin with CQ entailment and inseparability in \ALC and show that
both problems are undecidable even for very restricted classes of
KBs. The same is true for rCQs. We then show that, in contrast to the
CQ case, UCQ inseparability in \ALC is decidable in 2\ExpTime.

The following example illustrates the notion of CQ-inseparability of $\ALC$
KBs. 

\begin{example}\label{ex:query-kb}
Suppose $\T_1=\emptyset$, $\T_2= \{ E \sqsubseteq A \sqcup B\}$, $\A$ looks like on the left-hand side of the picture below, and $\Sigma=\{r,A,B\}$. Then we can separate $\K_2= (\T_2,\A)$ from $\K_1 = (\T_1,\A)$ by the $\Sigma$-CQ $\q(x)$ shown on the right-hand side of the picture since  clearly $(\T_1,\A) \not\models \q(a)$, whereas $(\T_2,\A) \models \q(a)$. To see the latter, we first observe that, in any model $\I$ of $\K_2$, we have (\emph{i}) $c \in A^\I$ or (\emph{ii}) $c \in B^\I$. In case (\emph{i}), $\I \models \q(a)$ because of the path $r(a,c),r(c,d)$; and if (\emph{ii}) holds, then $\I \models \q(a)$ because of the path $r(a,b),r(b,c)$ (cf.~\cite[Example~4.2.5]{Scha94b}).
\begin{center}
  \begin{tikzpicture}
    \foreach \name/\x/\y/\conc/\wh in {%
      a/-1.6/0.5//right,%
      b/-0.8/-0.2/A/below, %
      c/0/0.5/E/below, %
      d/1.2/0.5/B/below%
    }{ \node[point, constant, label=\wh:{\edgefont $\conc$},
      label=above:{\vertexfont$\name$}] (\name) at (\x,\y) {}; }

    \foreach \from/\to/\wh in {%
      a/b/below, a/c/above, b/c/below, c/d/above%
    }{ \draw[role] (\from) -- node[\wh,sloped] {\edgefont $r$} (\to);
    }

    \node[anchor=east] at (-2,0.2) {\normalsize $\A$:};

  \begin{scope}[xshift=5cm, yshift=0.2cm]
    \foreach \name/\x/\conc/\wh in {%
      x/-1.2//right,%
      y_1/0/A/below, %
      y_2/1.2/B/below%
    }{ \node[inner sep=1, outer sep=0, label=\wh:{\edgefont $\conc$}]
      (\name) at (\x,0) {\vertexfont $\name$ }; }

    \foreach \from/\to/\wh in {%
      x/y_1/right, y_1/y_2/right%
    }{ \draw[role] (\from) -- node[below] {\edgefont$r$} (\to); }

    \node[anchor=east] at (-1.6,0) {\normalsize $\q(x)$:};
  \end{scope}
\end{tikzpicture}
\end{center}
\end{example}

\begin{theorem}\label{thm:undecidability}
Let $\mathcal{Q} \in \{\text{CQ},\text{\RCQ}\}$.
\begin{description}
\item[\rm (\emph{i})] $\Sigma$-$\mathcal{Q}$ entailment of an \ALC KB by an \EL
KB is undecidable.

\item[\rm (\emph{ii})] $\Sigma$-$\mathcal{Q}$ inseparability of an \ALC and
an \EL KBs is undecidable.
\end{description}
\end{theorem}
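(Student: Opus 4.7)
My plan for (i) is to reduce from the halting problem of a deterministic Turing machine, constructing KBs $\K_1=(\T_1,\A)$ and $\K_2=(\T_2,\A)$ in $\EL$ and $\ALC$ respectively, sharing an ABox $\A=\{\mathit{Init}(c_0)\}$, together with a signature $\Sigma$, so that $\K_1$ $\Sigma$-CQ entails $\K_2$ iff $M$ does not halt on $w$. For (ii), I would arrange the construction so that $\T_1\subseteq\T_2$; then every model of $\K_2$ is also a model of $\K_1$, so $\K_2$ trivially $\Sigma$-CQ entails $\K_1$, and CQ-inseparability of $\K_1,\K_2$ coincides with $\Sigma$-CQ entailment of $\K_2$ by $\K_1$. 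Hence (ii) follows from (i).

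The $\EL$ TBox $\T_1$ would generate from $c_0$ a generic infinite $\mathit{next}$-chain of ``configuration'' individuals, each carrying a fixed number of ``tape-cell'' successors, using axioms such as $\mathit{Init}\sqsubseteq\mathit{Config}$ and $\mathit{Config}\sqsubseteq\exists\mathit{next}.\mathit{Config}$. The $\ALC$ TBox $\T_2$ would extend $\T_1$ by disjunctively labelling each configuration with a TM state and each tape cell with a symbol, enforcing the TM transition function locally between consecutive configurations, and forcing a halting state to occur somewhere along the chain. Thus every model of $\K_2$ records a run of $M$ that either is non-halting or eventually halts, and $M$ halts on $w$ iff every model of $\K_2$ exhibits a finite halting prefix.

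For correctness I would use the model-theoretic criterion of Theorem~\ref{crit:KB}. If $M$ halts on $w$, the unique halting computation shows up as a finite rooted $\Sigma$-pattern in every model of $\K_2$; the corresponding rooted $\Sigma$-CQ $\q(x)$ tracing this pattern from $c_0$ is entailed by $\K_2$ but not by $\K_1$, whose canonical model carries no state/symbol labels. If $M$ does not halt, the unique infinite non-halting run can be ``overlaid'' on the canonical model of $\K_1$ to produce a model of $\K_2$; by Theorem~\ref{crit:KB}(ii) this suffices (up to bounded truncation to satisfy finite $n\Sigma$-homomorphic embeddability) for $\Sigma$-CQ entailment of $\K_2$ by $\K_1$. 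The rCQ case is immediate since the tracing query is anchored at the ABox individual $c_0$.

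The main obstacle is controlling the nondeterminism introduced by disjunction in $\T_2$. Disjunctive TBoxes typically entail many $\Sigma$-CQs by case analysis (cf.\ Example~\ref{UCQ-CQ}), and one must design $\T_2$ so that the \emph{only} $\Sigma$-CQs entailed by $\K_2$ beyond those already entailed by $\K_1$ correspond to halting traces. The standard remedy is to make all disjunctive choices deterministic given the TM transition function, using fresh non-$\Sigma$ concept names to ``record'' each guess so that any residual nondeterminism projects trivially on $\Sigma$. Getting this encoding right while keeping $\T_1\in\EL$ and $\T_2\in\ALC$ is the heart of the argument.
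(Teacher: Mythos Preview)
Your reduction of (ii) to (i) via $\T_1\subseteq\T_2$ is fine, and the use of a single model of $\K_2$ that $\Sigma$-homomorphically embeds into $\C_{\K_1}$ for the ``does not halt'' direction is correct (since $\K_1$ is an $\EL$ KB, its canonical model is complete for CQs). The problem is the encoding itself, which glosses over exactly the part that makes the result non-trivial.

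First, a \emph{fixed} number of tape-cell successors per configuration only encodes a linear bounded automaton, whose halting problem is decidable; you need unbounded tape. Second, and more seriously, you write ``enforcing the TM transition function locally between consecutive configurations'' as if this were routine. It is not: to copy the unchanged tape cells from configuration $n$ to configuration $n{+}1$ you must align cell $i$ of one with cell $i$ of the other, which is a two-dimensional synchronisation that $\ALC$ cannot express directly. This is precisely the content of the paper's Example~\ref{ex:query-kb} and the example following the theorem, where a disjunction $D\sqsubseteq D_1\sqcup D_2$ together with an ABox loop is used so that a single CQ, mapped into different branches in different models, forces agreement of labels across ``rows''. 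Without such a mechanism your $\T_2$ either fails to constrain the tape (so halting is not reflected in $\Sigma$-CQ consequences) or needs features beyond $\ALC$.

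Third, there is a tension in your choice of $\Sigma$ that you do not resolve. If the state/symbol labels are in $\Sigma$, then already the CQ $Q_0(x)$ asserting the initial state at $c_0$ separates $\K_1$ from $\K_2$ regardless of halting, since $\C_{\K_1}$ carries no such labels. If they are not in $\Sigma$, your ``tracing'' query cannot mention them and it is unclear what $\Sigma$-CQ would witness halting. The remark about ``recording guesses with fresh non-$\Sigma$ names so that residual nondeterminism projects trivially on $\Sigma$'' does not address this; for a deterministic machine there is no residual nondeterminism to hide, and the issue is rather what $\Sigma$-visible trace distinguishes halting from non-halting.

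The paper's proof avoids all of this by reducing from an undecidable $\mathbb{N}\times\mathbb{N}$ tiling problem rather than TM halting. Tile types are guessed disjunctively, colour-matching in one dimension is enforced locally, and the second-dimension synchronisation is achieved via the CQ trick illustrated after the theorem statement. The separating CQ does not mention tile types at all; it is a fixed structural pattern whose entailment by $\K_2$ is equivalent to the non-existence of a tiling. Any TM-based proof would have to reproduce an analogous mechanism, and that mechanism---not the bookkeeping you describe---is the heart of the argument.
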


The proof of this theorem given in~\cite{BotoevaLRWZ16} uses a reduction of an  undecidable tiling problem.
As usual in encodings of tilings, it is not hard to synchronize tile colours along one dimension. The following example gives a hint of how this can be achieved in the second dimension.

\begin{example}
  Suppose a KB $\K$ has the two models
  $\Imc_i$, $i=1,2$, that are formed by the points on the path between
  $a$ and $e_i$ on the right-hand side of the picture below (this can
  be easily achieved using an inclusion of the form $D \sqsubseteq D_1
  \sqcup D_2$), with $a$ being an ABox point with a loop and the $e_i$
  being the only instances of a concept $C$. Let $\q$ be the CQ on the
  left-hand side of the picture. Then we can have $\K \models \q(a)$
  only if $d_1,d_3 \in A^{\Imc_2}$ and $d_2,d_4 \in B^{\Imc_2}$, with
  the fat black and grey arrows indicating homomorphisms from $\q$ to
  the $\Imc_i$ (the grey one sends $x_0$--$x_2$ to $a$ using the ABox
  loop). This trick can be used to pass the tile colours from one row
  to another.

\begin{center}
\begin{tikzpicture}[yscale=0.7]
  \begin{scope}
    \foreach \al/\x/\y/\lab/\extra in {%
      a/0/0/a/constant,%
      x1/0/-1/d_1/,%
      x2/0/-2/d_2/,%
      x3/-1/-3/e_1/,%
      y1/1/-3/d_3/,%
      y2/1/-4/d_4/,%
      y3/1/-5/e_2/%
    }{\node[point, \extra, label=right:{\vertexfont $\lab$}] (\al) at (\x,\y) {};}

    \node[anchor=north] at (x3.south) {\edgefont$C$};
    \node[anchor=north] at (y3.south) {\edgefont$C$};

    \foreach \from/\to/\lab/\wh in {%
      a/x1/r/left, x1/x2/r/left, x2/x3, x2/y1, y1/y2, y2/y3%
    }{ \draw[role] (\from) -- (\to); }

    \draw[role] (a) to[out=140, in=40, looseness=20] (a);

    \draw[draw=none] (x2) -- node[outer sep=0,inner sep=0,pos=0.4] (l1){} (x3); %
    \draw[draw=none] (x2) -- node[outer sep=0,inner sep=0,pos=0.4] (r1){} (y1); %
    
    \draw[ultra thick] (l1) to[bend right] node[above, yshift=-0.05cm] {\scriptsize $\lor$} (r1);%
    
  \end{scope}

  \begin{scope}[xshift=-5.5cm]
    \node at (-1,0) {$\q(x_0)$};

    \foreach \al/\x/\y/\lab/\extra in {%
      x_0/0/0//,%
      x_1/0/-1/,%
      x_2/0/-2//,%
      x_3/0/-3/A/,%
      x_4/0/-4/B/,%
      x_5/0/-5/C/%
    }{\node[inner sep=1, outer sep=0, label=left:{\edgefont$\lab$}] (\al) at
      (\x,\y) {\vertexfont$\al$};}

    \foreach \from/\to in {%
      x_0/x_1, x_1/x_2, x_2/x_3, x_3/x_4, x_4/x_5%
    }{ \draw[role] (\from) -- (\to); }

  \end{scope}

  \foreach \from/\to in {%
    x_5/x3, x_2/a%
  }{ \draw[line width=0.1cm, -latex, black!33] (\from) to[in=215, out=40] (\to); }%
  \foreach \from/\to in {%
    x_5/y3, x_2/x2%
  }{ \draw[line width=0.1cm, -latex, black!66] (\from) to[in=155, out=-10] (\to); }

\end{tikzpicture}
\end{center}
\end{example}

As we saw in Example~\ref{UCQ-CQ}, UCQs distinguish between more KBs
than CQs, that is, UCQ inseparability is a different and in fact more
fine-grained notion than CQ inseparability. This has the remarkable
effect that decidability is regained \cite{BotoevaLRWZ16-arxiv}.
\begin{theorem}\label{thm:decidability}
  In \ALC, $\Sigma$-\Qmc entailment and $\Sigma$-\Qmc inseparability of KBs are
  2{\sc Exp\-Time}-complete, for $\mathcal{Q} \in
  \{\text{UCQ},\text{\RUCQ}\}$.
\end{theorem}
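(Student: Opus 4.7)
\medskip

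\noindent\textbf{Proof proposal.}
The plan is to prove the upper bound via an automata-theoretic decision procedure built on the model-theoretic characterization in Theorem~\ref{crit:in}(ii) (for UCQs) and Theorem~\ref{crit:KB2} (for rUCQs), and to obtain the matching lower bound by a reduction from concept entailment in $\mathcal{ALC}$, which is $2\ExpTime$-hard by Theorem~\ref{thm:ceupper}. Since $\Sigma$-$\mathcal{Q}$-inseparability reduces to two $\Sigma$-$\mathcal{Q}$-entailment checks, it suffices to solve $\Sigma$-$\mathcal{Q}$-entailment in $2\ExpTime$.

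For the upper bound, I would first use Theorem~\ref{crit:in}(ii) to reduce $\Sigma$-UCQ entailment of $\Kmc_{2}$ by $\Kmc_{1}$ to the following: for every $\Imc_{1}\in\Mod_{\it tree}^{b}(\Kmc_{1})$, there exists $\Imc_{2}\in\Mod_{\it tree}^{b}(\Kmc_{2})$ that is $\Sigma$-homomorphically embeddable into $\Imc_{1}$. I then construct two alternating parity tree automata running on rooted interpretations shaped as ABoxes with attached trees of bounded outdegree (encoded as labelled trees in the standard way): an automaton $\Amf_{1}$ accepting exactly the models in $\Mod_{\it tree}^{b}(\Kmc_{1})$, built analogously to $\Amf_{1}$ in Section~\ref{sect:concinsepalc} extended to handle the ABox $\Amc_{1}$; and an automaton $\Amf_{2}$ accepting precisely those rooted interpretations $\Imc$ for which some $\Imc_{2}\in\Mod_{\it tree}^{b}(\Kmc_{2})$ $\Sigma$-homomorphically embeds into $\Imc$. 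Entailment then holds iff $L(\Amf_{1}\cap\overline{\Amf_{2}})=\emptyset$, and the emptiness test runs in time exponential in the size of $\Amf_{1}\cap\overline{\Amf_{2}}$.

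The crucial and most delicate step will be the construction of $\Amf_{2}$. The idea is to take as states the types of $\Kmc_{2}$ (as in Section~\ref{sect:concinsepalc}) together with an auxiliary structure tracking the ABox individuals of $\Amc_{2}$, and to have $\Amf_{2}$ simultaneously guess a model $\Imc_{2}\in\Mod_{\it tree}^{b}(\Kmc_{2})$ and a $\Sigma$-homomorphism $h\colon\Imc_{2}\to\Imc$. At an input position $u$ of $\Imc$ carrying state $t$ (representing the type of the element $v\in\Imc_{2}$ currently being mapped to $u$), the transition verifies that $A\in t$ implies $u\in A^{\Imc}$ for every $A\in\Sigma\cap\NC$; for each $\exists r.C\in t$ with $r\in\Sigma$ it existentially chooses an $r$-successor $u'$ of $u$ in $\Imc$ and a successor type $t'$ with $t\leadsto_{r}t'$ and $C\in t'$, sending state $t'$ to $u'$; existential requirements for non-$\Sigma$ roles are discharged locally by the consistency of types, without moving in $\Imc$ (that is, a fresh copy of $\Amf_{2}$ in the appropriate successor type is sent to the same node $u$). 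Parity conditions are needed to enforce that every non-$\Sigma$ branch of $\Imc_{2}$ is extended fairly so that all eventualities are realised; this is the main obstacle, since the automaton must avoid trivialising existential requirements by looping forever on non-$\Sigma$ successor chains while still being sound. Since there are $2^{O(|\Kmc_{2}|)}$ types, $\Amf_{2}$ has exponentially many states, complementation blows this up by a second exponential, and emptiness of an APTA with $N$ states takes time $2^{O(N)}$; hence the whole procedure runs in $2\ExpTime$. The case $\mathcal{Q}=\RUCQ$ is handled by replacing Theorem~\ref{crit:in}(ii) with Theorem~\ref{crit:KB2} and restricting the behaviour of $\Amf_{2}$ to the $\Sigma$-connected part of $\Imc$.

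For the lower bound, I would reduce concept entailment in $\mathcal{ALC}$ to $\Sigma$-CQ-entailment of KBs, in the spirit of reductions used elsewhere in the survey: given $\mathcal{ALC}$ TBoxes $\Tmc_{1},\Tmc_{2}$ and signature $\Sigma_{0}$, one constructs KBs $\Kmc_{1}=(\Tmc_{1}',\{A(a)\})$ and $\Kmc_{2}=(\Tmc_{2}',\{A(a)\})$ over an extended signature that internally simulates the universal quantification over pointed interpretations from Theorem~\ref{bisimuniform}, using a fresh role to propagate the relevant concept witnesses to $a$ where they can be read off by a CQ. This yields $2\ExpTime$-hardness for UCQ (and rUCQ) entailment, and inseparability inherits this lower bound by entailing it in the standard way.
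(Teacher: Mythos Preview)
Your upper-bound plan is essentially the paper's: reduce to Theorem~\ref{crit:in}(ii) (resp.\ Theorem~\ref{crit:KB2}) and build two automata on tree-shaped encodings of models in $\Mod_{\it tree}^{b}(\Kmc_{1})$. The paper likewise notes that amorphous automata do not mix well with ABoxes and moves to traditional tree automata over an encoding of $\Imc_{1}$.

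There is, however, a genuine gap in your $\Amf_{2}$. When $\Imc_{2}$ has an $s$-successor $v$ of some $u$-image with $s\notin\Sigma$, a $\Sigma$-homomorphism may send $v$ to \emph{any} element of $\Imc_{1}$, not to $h(u)$. Your rule ``send a copy in the successor type to the same node $u$'' forces $h(v)=h(u)$, which is unsound: take $\Kmc_{2}=(\{B\sqsubseteq\exists s.A\},\{B(a)\})$ with $\Sigma=\{A,r\}$; any $\Imc_{1}$ with $a\notin A^{\Imc_{1}}$ but $A^{\Imc_{1}}\neq\emptyset$ admits a $\Sigma$-homomorphism from some $\Imc_{2}\in\Mod_{\it tree}^{b}(\Kmc_{2})$, yet your automaton rejects because it insists on $h(v)=a$. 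This is precisely the ``extra effort'' the paper flags for parts of $\Imc_{2}$ not reachable along $\Sigma$-roles; one needs the automaton to nondeterministically ``jump'' to an arbitrary position in the encoded $\Imc_{1}$ after a non-$\Sigma$ step, which is what motivates moving to a tree encoding of the whole of $\Imc_{1}$ rather than running amorphously on it. Your parity-fairness worry is secondary; once non-$\Sigma$ subtrees are handled by type realisability plus free relocation, no parity is needed (the acceptance condition stays trivial, as in Section~\ref{sect:concinsepalc}).

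Two smaller points. First, your complexity accounting is garbled: APTAs are complemented in \emph{polynomial} time (dualise moves and shift priorities), so $\overline{\Amf_{2}}$ still has $2^{O(|\Kmc_{2}|)}$ states and emptiness gives the $2\ExpTime$ bound directly; there is no ``second exponential'' from complementation. Second, your lower-bound route via concept entailment is not the paper's and is not obviously workable: concept entailment speaks about $\Sigma$-CIs in full $\mathcal{ALC}$ (negation, $\forall$), whereas UCQ entailment is governed by homomorphisms of positive existential shape, so a polynomial reduction would have to simulate the zag direction of bisimulation with purely positive witnesses---your sketch does not indicate how. The paper instead proves hardness by a direct reduction from the word problem of exponentially space-bounded alternating Turing machines.
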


The proof of the upper bound in Theorem~\ref{thm:decidability} uses
tree automata and relies on the characterization 
of Theorem~\ref{crit:in}~(ii) \cite{BotoevaLRWZ16-arxiv}. In principle, the
automata construction is similar to the one given in the proof sketch
of Theorem~\ref{thm:ceupper}. The main differences between the two
constructions is that we have to replace bisimulations with
homomorphisms. Since homomorphisms preserve only positive and
existential information, we can actually drop some parts of the
automaton construction. On the other hand, homomorphisms require us to
consider also parts of the model $\Imc_2$ (see Theorem~\ref{crit:in})
that are not reachable along $\Sigma$-roles from an ABox individual,
which requires a bit of extra effort. A more technical issue is that
the presence of an ABox seems to not go together so well with
amorphous automata and thus one resorts to more traditional tree
automata along with a suitable encoding of the ABox and of the model
$\Imc_1$ as a labeled tree. The lower bound is proved by an ATM
reduction.


\subsection{Query inseparability of KBs in Horn
  DLs}\label{Sec:horngames}

We first consider DLs without inverse roles and then DLs that admit
inverse roles. 
In both cases, we sketch decision procedures that are based on games
played on canonical models $C_\Kmc$ as mentioned in
Example~\ref{ex:complete}. 
It is well known from logic programming and
databases~\cite{Abitebouletal95} that such models can be constructed
by the \emph{chase procedure}. We illustrate the chase (in a somewhat
different but equivalent form) by the following example.

\begin{example}\label{ex:gens}
Consider the $\DLcH$ KB $\K_2 = (\Tmc_2, \Amc_2)$ with $\Amc_2 = \{ A(a) \}$ and
\begin{multline*}
\Tmc_2 = \{ A \sqsubseteq B, \ A \sqsubseteq \exists p.\top, \ \exists p^-.\top \sqsubseteq \exists r^-.\top, \ \exists r.\top \sqsubseteq \exists q^-.\top, \  \exists q.\top \sqsubseteq \exists q^-.\top,\\
\exists r.\top \sqsubseteq \exists s^-.\top, \ \exists s.\top \sqsubseteq \exists t^-.\top, \ \exists t.\top \sqsubseteq \exists s^-.\top, \ t^- \sqcap s \sqsubseteq \bot \}.
\end{multline*}
We first construct a `closure' of the ABox $\Amc_2$ under the inclusions in $\Tmc_2$. For instance, to satisfy $A \sqsubseteq \exists p.\top$, we introduce a \emph{witness $w_p$ for $p$} and draw an arrow $\leadsto$ from $a$ to $w_p$ indicating that $p(a,w_p)$ holds. The inclusion $\exists p^-.\top \sqsubseteq \exists r^-.\top$ requires a witness $w_{r^-}$ for $r^-$ and the arrow $w_p \leadsto w_{r^-}$.  Having reached the witness $w_{q^-}$ for $q^-$ and applying $\exists q.\top \sqsubseteq \exists q^-.\top$ to it, we `reuse' $w_{q^-}$ and simply draw a loop $w_{q^-} \leadsto w_{q^-}$. The resulting finite interpretation  $\mathcal{G}_2$ shown below is called the \emph{generating structure for $\K_2$}:\\
\centerline{\begin{tikzpicture}[>=latex,xscale=1.5,yscale=0.9,
point/.style={thick,circle,draw=black,minimum size=1.3mm,inner sep=0pt},
wiggly/.style={semithick,decorate,decoration={snake,amplitude=0.3mm,segment length=2mm,post length=1mm}}
]
\node[point,fill=black,label=below:{\vertexfont $a$}, label=above:{\edgefont $A, B$}] (a) at (0,0) {};
\node[point, label=below:{\vertexfont $w_{p}$}] (w1) at (1,0) {};
\draw[->,wiggly] (a) to node[above,midway] {\edgefont $p$} (w1);
\node[point, label=below:{\vertexfont $w_{r^-}$}] (w2) at (2,0) {};
\draw[->,wiggly] (w1) to node[above,midway] {\edgefont $r^-$} (w2);
\node[point, label=below:{\vertexfont $w_{s^-}$}] (w3) at (3,0) {};
\draw[->,wiggly] (w2) to node[below,midway] {\edgefont $s^-$} (w3);
\node[point, label=right:{\vertexfont $w_{t^-}$}] (w4) at (4,0) {};
\draw[->,wiggly,out=30,in=150] (w3) to node[above,midway] {\edgefont $t^-$} (w4);
\draw[->,wiggly,out=-150,in=-30] (w4) to node[below,pos=0.3] {\edgefont $s^-$} (w3);
\node[point, label={[yshift=-0.1cm]right:{\vertexfont $w_{q^-}$}}] (w3p) at (2.7,0.6) {};
\draw[->,wiggly] (w2) to node[above,midway,sloped] {\edgefont $q^-$} (w3p);
\draw[->,wiggly,out=150,in=30,looseness=30] (w3p) to node[right,pos=0.75,xshift=-0.05cm] {\edgefont $q^-$} (w3p);
\node at (-0.8,0) {\normalsize$\mathcal{G}_2$};
\end{tikzpicture}}\\
Note that $\mathcal{G}_2$ is \emph{not} a model of $\K_2$ because $(w_{s^-},w_{t^-}) \in (t^-)^{\mathcal{G}_2} \cap s^{\mathcal{G}_2}$. We can obtain a model of $\K_2$ by \emph{unravelling} the witness part of the generating structure $\mathcal{G}_2$ into an infinite tree (in general, forest). The resulting interpretation $\I_2$ shown below is a canonical model of $\K_2$.

\begin{tikzpicture}[>=latex,xscale=1.4,yscale=0.9,
point/.style={thick,circle,draw=black,minimum size=1.3mm,inner sep=0pt},
wiggly/.style={semithick,decorate,decoration={snake,amplitude=0.3mm,segment length=2mm,post length=1mm}}
]
%
%
\node[point,fill=black,label=above:{\edgefont $A,B$},label=below:{\vertexfont $a$}] (a) at (0,0) {};
\node[point,fill=white] (w1) at (1,0) {};
\draw[->, semithick] (a) to node[below,midway] {\edgefont $p$} (w1);
\node[point,fill=white] (w2) at (2,0) {};
\draw[<-, semithick] (w1) to node[below,midway] {\edgefont $r$} (w2);
\node[point,fill=white] (w3) at (3,0) {};
\draw[<-, semithick] (w2) to node[below,midway] {\edgefont $s$} (w3);
\node[point,fill=white] (w4) at (4,0) {};
\draw[<-, semithick] (w3) to node[below,pos=0.6] {\edgefont $t$} (w4);
\node[point,fill=white] (w5) at (5,0) {};
\draw[<-, semithick] (w4) to node[below,pos=0.6] {\edgefont $s$} (w5);
\node[point,fill=white] (w6) at (6,0) {};
\draw[<-, semithick] (w5) to node[below,pos=0.6] {\edgefont $t$} (w6);
\draw[dashed, semithick] (w6) -- +(0.5,0);
\node[point,fill=white] (w3p) at (2.7,0.5) {};
\draw[<-, semithick] (w2) to node[above,midway,sloped] {\edgefont $q$} (w3p);
\node[point,fill=white] (w4p) at (3.7,0.5) {};
\draw[<-, semithick] (w3p) to node[above,midway,sloped] {\edgefont $q$} (w4p);
\node[point,fill=white] (w5p) at (4.7,0.5) {};
\draw[<-, semithick] (w4p) to node[above,midway,sloped] {\edgefont $q$} (w5p);
\node[point,fill=white] (w6p) at (5.7,0.5) {};
\draw[<-, semithick] (w5p) to node[above,midway,sloped] {\edgefont $q$} (w6p);
\draw[dashed, semithick] (w6p) -- +(0.5,0);
%
%
\node at (-0.6,0) {\normalsize$\I_2$};
\end{tikzpicture}
\end{example}

The generating structure underlying the canonical model $\mathcal{C}_\Kmc$ of a Horn KB $\Kmc$ defined above will be denoted by $\mathcal{G}_\Kmc$. By Theorem~\ref{crit:in}, if $\K_1$ and $\K_2$ are KBs formulated in a Horn DL, then $\K_{1}$ $\Sigma$-CQ entails $\K_2$ iff $\mathcal{C}_{\K_2}$ is $n\Sigma$-homomorphically embeddable into $\mathcal{C}_{\K_2}$ for any $n>0$.

In what follows, we require the following upper bounds on the size of
generating structures for Horn KBs~\cite{BotoevaKRWZ16}:

\begin{theorem}\label{thm:mater}~\\[-5mm]
  \begin{description}

\item[\rm (\emph{i})]  
The generating structure for any consistent $\hALCHI$ KB $ (\mathcal{T},\mathcal{A})$ can be constructed in time $|\mathcal{A}|\cdot 2^{p(|\mathcal{T}|)}$, where $p$ is some fixed polynomial\textup{;}

\item[\rm (\emph{ii})] The generating structure for any consistent KB $ (\mathcal{T},\mathcal{A})$ formulated in a DL from the $\EL$ or $\textsl{DL-Lite}$ family  can be constructed in time $|\mathcal{A}|\cdot p(|\mathcal{T}|)$, where $p$ is some fixed polynomial.
  \end{description}
\end{theorem}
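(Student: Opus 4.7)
The plan is to construct $\mathcal{G}_\Kmc$ by a chase-style saturation procedure whose running time is controlled by a careful choice of how anonymous witnesses are identified and reused. The guiding principle, which makes Horn DLs special, is that a canonical model is homomorphically minimal, so two existential assertions demanding the ``same'' successor can safely be merged into one witness; the bookkeeping needed to decide sameness is what distinguishes the Horn-$\mathcal{ALCHI}$ case from the $\mathcal{EL}$ and \emph{DL-Lite} cases.

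For part (\emph{i}), I would label every witness by its \emph{type}, namely the set of concepts $C \in \mathsf{cl}(\mathcal{T})$ (the closure of subconcepts of $\mathcal{T}$ under single negation) that the witness must satisfy. Because Hornness precludes disjunctive branching, the type of an anonymous element is forced by the TBox and by the role under which it was introduced, so there are at most $2^{|\mathsf{cl}(\mathcal{T})|}=2^{O(|\mathcal{T}|)}$ anonymous witnesses. The procedure first computes, for each named individual $a \in \mathsf{ind}(\mathcal{A})$, the set of subconcepts $a$ is forced to belong to by propagating CIs, RIs and inverse-role consequences to a fixed point; this part is $|\mathcal{A}|\cdot 2^{O(|\mathcal{T}|)}$. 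Then, for every triggered existential $\exists r.C$ at a node $u$, it looks up whether an anonymous witness of the required type already exists and, if so, adds the $r$-edge to it; otherwise it creates a fresh one and enqueues it. Since at most $2^{O(|\mathcal{T}|)}$ witnesses are ever created, and each triggers at most $|\mathcal{T}|$ further existentials, the total work is bounded by $|\mathcal{A}|\cdot 2^{p(|\mathcal{T}|)}$ for some polynomial $p$.

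For part (\emph{ii}), the witness-identification scheme can be drastically simplified. In $\mathcal{EL}$, witnesses correspond to concepts $C$ appearing on the right-hand side of existentials $\exists r.C$ in $\mathcal{T}$, of which there are at most $|\mathcal{T}|$; the chase then coincides with the standard $\mathcal{EL}$ completion, known to run in polynomial time. In the \emph{DL-Lite} family, concepts on the right of existentials have the form $\exists r.\top$, so witnesses are indexed by roles and there are only $O(|\mathcal{T}|)$ of them. In both subcases, the named part is processed in time linear in $|\mathcal{A}|$ for each of polynomially many rules, giving the bound $|\mathcal{A}|\cdot p(|\mathcal{T}|)$.

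The main obstacle is the correctness argument for part~(\emph{i}). Reusing a witness $w$ of a given type requires that when a new parent edge $(u',w)$ is added (under an existing role $r$), no fresh constraint is imposed on $w$ by inverse or role-inclusion axioms that is absent from its stored type; otherwise the assumption that ``one witness per type suffices'' would be unsound. The fix is to propagate such back-pressure during saturation: whenever reuse of $w$ would enrich its type, one must instead split the witness, which happens at most $2^{O(|\mathcal{T}|)}$ times overall and is absorbed by the complexity bound. Once this invariant is established, one shows by induction on the chase that the unravelling of $\mathcal{G}_\Kmc$ is a model of $\Kmc$ admitting a homomorphism into every model of $\Kmc$, hence is canonical and complete for UCQs, as required for Theorem~\ref{crit:in}. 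Consistency of $\Kmc$ guarantees that the chase encounters no $\bot$-clash and therefore terminates within the claimed bounds.
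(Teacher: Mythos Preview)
The paper does not actually prove this theorem: it is stated with a citation to \cite{BotoevaKRWZ16} and no argument is given. So there is no ``paper's own proof'' to compare against; what one can do is check your sketch against the standard construction in that reference and in the surrounding literature.

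Your overall strategy is the right one and matches the standard approach: identify anonymous witnesses by their type with respect to $\mathsf{cl}(\mathcal{T})$ in the Horn-$\mathcal{ALCHI}$ case (yielding at most $2^{O(|\mathcal{T}|)}$ witnesses), and by the generating subconcept or role in the $\mathcal{EL}$ and \textsl{DL-Lite} cases (yielding polynomially many). The complexity accounting you give is correct in both parts.

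The one place where your presentation deviates from the usual construction is the ``split on back-pressure'' mechanism. In the standard treatment one does not create a witness, reuse it, and later split it when inverse-role constraints enrich its type. Instead, one first computes, by a fixpoint over types alone (no ABox), the set of \emph{reachable} anonymous types together with the generating relation between them; this already accounts for all inverse-role and role-inclusion propagation, so each anonymous type is final when created. Only then does one compute the type of each ABox individual and attach edges into the anonymous part. Your splitting idea is not wrong, but it makes the termination and correctness arguments more delicate than necessary: you would have to argue that splitting cannot cascade unboundedly and that edges already placed remain sound after a split. Precomputing the anonymous type graph avoids this entirely and makes the $|\mathcal{A}|\cdot 2^{p(|\mathcal{T}|)}$ bound immediate.
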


We now show that checking whether a canonical model is
$n\Sigma$-homomor\-phically embeddable into another canonical model
can be established by playing games on their underlying generating
structures. For more details, the reader is referred
to~\cite{BotoevaKRWZ16}.


Suppose $\K_1$ and $\K_2$ are (consistent) Horn KBs, $\mathcal{C}_1$ and $\mathcal{C}_2$ are their canonical models, and $\Sigma$ a signature. First, we reformulate the definition of $n\Sigma$-homomorphic embedding in game-theoretic terms. The states of our game are of the form $(\pi \mapsto \sigma)$, where $\pi\in \Delta^{\mathcal{C}_2}$ and $\sigma\in\Delta^{\mathcal{C}_1}$. Intuitively, $(\pi\mapsto \sigma)$ means that `$\pi$ is to be $\Sigma$-homomorphically mapped to $\sigma$'. The game is played by player 1 and player 2 starting from some initial state $(\pi_0\mapsto \sigma_0)$.
The aim of player~1 is to demonstrate that there exists a $\Sigma$-homomorphism from (a finite subinterpretation of) $\mathcal{C}_2$ into $\mathcal{C}_1$ with $\pi_0$ mapped to $\sigma_0$, while player~2 wants to show that there is no such homomorphism. In each round $i >0$ of the game, player~2 challenges player~1 with some $\pi_i\in\Delta^{\mathcal{C}_2}$ that is related to $\pi_{i-1}$ by some $\Sigma$-role.  Player~1, in turn, has to respond with some $\sigma_{i}\in\Delta^{\mathcal{C}_1}$ such that the already constructed partial $\Sigma$-homomorphism can be extended with $(\pi_i \mapsto \sigma_i)$, in particular:
\begin{itemize}
\item[--] $\pi_i \in A^{\mathcal{C}_2}$ implies $\sigma_i \in A^{\mathcal{C}_1}$, for any $\Sigma$-concept name $A$, and
\item[--] $(\pi_{i-1},\pi_i) \in r^{\mathcal{C}_2}$ implies $ (\sigma_{i-1},\sigma_i)\in r^{\mathcal{C}_1}$, for any $\Sigma$-role $r$.
\end{itemize}
A play of length $n$ starting from a state $\mathfrak{s}_{0}$ is any sequence $\mathfrak{s}_{0},\ldots,\mathfrak{s}_{n}$ of states obtained as described above. For any ordinal $\lambda\leq \omega$, we say that player~1 has a \emph{$\lambda$-winning strategy} in the game starting from $\mathfrak{s}_{0}$ if, for any play $\mathfrak{s}_{0},\ldots,\mathfrak{s}_n$ with $n<\lambda$ that is played according to this strategy, player~1 has a response to any challenge of player~2 in the final state $\mathfrak{s}_n$.

It is easy to see that if, for any $\pi_0\in\Delta^{\mathcal{C}_2}$, there is $\sigma_0\in\Delta^{\mathcal{C}_1}$ such that player~1 has an $\omega$-winning strategy in this game starting from $(\pi_0\to\sigma_0)$, then there is a $\Sigma$-homomorphism from $\mathcal{C}_2$ into $\mathcal{C}_1$, and the other way round.
That $\mathcal{C}_2$ is \emph{finitely} $\Sigma$-homomor\-phically embeddable into $\mathcal{C}_1$ is equivalent to the following condition:
\begin{itemize}
\item[--] for any $\pi_0\in\Delta^{\mathcal{C}_2}$ and any $n < \omega$, there exists $\sigma_0\in\Delta^{\mathcal{C}_1}$ such that  player~1 has an $n$-winning strategy in this game starting from $(\pi_0\to\sigma_0)$.
\end{itemize}
\begin{example}\label{game1}
Suppose $\mathcal{C}_1$ and $\mathcal{C}_2$ look like in the picture below. An $\omega$-winning strategy for player 1 starting from $(a \mapsto a)$ is shown by the dotted lines with the rounds of the game indicated by the numbers on the lines.\\
\centerline{\begin{tikzpicture}[>=latex,yscale=0.9]
\begin{scope}[xscale=2]
    \foreach \al/\x/\y/\lab/\wh/\extra in {%
      a/0/0/a/left/constant,%
      w1/0.4/-1//left/,%
      w2/0.4/-2//left/,%
      w3/0.4/-3//left/,%
      w4/0.4/-4//left/,%
      w1p/-0.4/-1//left/,%
      w2p/-0.4/-2//left/%
    }{ \node[point, \extra, label=\wh:{\small $\lab$}] (\al) at (\x,\y) {}; }
    \foreach \from/\to/\lab/\wh in {%
      a/w1/r/right, %
      w1/w2/{r}/right, %
      w2/w3/{q}/right, %
      w3/w4/{q}/right,%
      a/w1p/{r}/left,%
      w1p/w2p/{r}/left%
    }{ \draw[role] (\from) -- node[\wh] {\edgefont $\lab$} (\to); }%
    \draw[dashed, edge] (w2p) -- +(0,-0.7) (w4) -- +(0,-0.7);
    \node at (-0.5,0.5) {$\mathcal{C}_2$};
    %
    \begin{scope}[shift={(1.8,-0.2)}]
      \foreach \al/\x/\y/\lab/\wh/\extra in {%
        a2/0/0/a/right/constant,%
        x1/0/-1/{}/right/,%
        x2/0/-2/{}/right/,%
        x3/0/-3/{}/right/%
      }{ \node[point, \extra, label=\wh:{\vertexfont $\lab$}] (\al) at (\x,\y) {}; }
      \node (x4) at ($(x3)+(0,-0.8)$) {};
      \foreach \from/\to/\lab/\wh/\arr in {%
        a2/a2/{r}/above/{to[out=130,in=50,looseness=40]}, %
        a2/x1/{r}/right/--, %
        x1/x2/{q}/right/--, %
        x2/x3/{q}/right/--%
      }{ \draw[role] (\from) \arr node[\wh] {\edgefont $\lab$} (\to); }%
      \draw[dashed, edge] (x3) -- ++(0,-0.7); %
      \node at (0.5,0.7) {$\mathcal{C}_1$};
    \end{scope}
    %
    \foreach \from/\to/\lab/\arr in {%
      a/a2/0/{to[bend right=15] node[above]},%
      w1p/a2/{1'}/{to[bend right=60] node[below, pos=0.2]},%
      w2p/a2/{2'}/{to[out=15, in=265] node[below, pos=0.15]},%
      w1/a2/1/{to[bend right] node[above, pos=0.7]},%
      w2/x1/2/{to[bend right] node[above, pos=0.7]},%
      w3/x2/3/{to[bend right] node[above, pos=0.7]},%
      w4/x3/4/{to[bend right] node[above, pos=0.8]}
    }{ \draw[strategy] (\from) \arr {\homofont $\lab$} (\to); }%
  \end{scope}
\end{tikzpicture}
}
\end{example}

Note, however, that the game-theoretic criterion formulated above does not immediately yield any algorithm to decide finite homomorphic embeddability because both $\mathcal{C}_2$ and $\mathcal{C}_1$ can be infinite. It is readily seen that the canonical model $\mathcal{C}_2$ in the game can be replaced by the underlying generating structure $\mathcal{G}_2$, in which player~2 can only make challenges indicated by  the generating relation $\leadsto$. The picture below illustrates the game played on $\mathcal{G}_2$ and $\mathcal{C}_1$ from Example~\ref{game1}.\\
\centerline{\begin{tikzpicture}[>=latex,yscale=0.9]
\begin{scope}[xscale=2]
    \foreach \al/\x/\y/\lab/\wh/\extra in {%
      a/0/0/a/left/constant,%
      w1/0.4/-1/u/left/,%
      w2/0.4/-2//left/,%
      w3/0.4/-3//left/,%
      w1p/-0.4/-1/u'/left/%
    }{ \node[point, \extra, label=\wh:{\small $\lab$}] (\al) at (\x,\y) {}; }
    \foreach \from/\to/\lab/\wh/\arr in {%
      a/w1/r/right/--, %
      w1/w2/{r}/right/--, %
      w2/w3/{q}/right/--, %
      w3/w3/{q}/below/{to[out=-130,in=-50,looseness=40]},%
      a/w1p/{r}/left/{--},%
      w1p/w1p/{r}/below/{to[out=-130,in=-50,looseness=40]}%
    }{ \draw[wiggly] (\from) \arr node[\wh] {\edgefont $\lab$} (\to); }
    \node at (-0.5,0.5) {$\Gmc_2$};
    %
    \begin{scope}[shift={(1.8,-0.2)}]
      \foreach \al/\x/\y/\lab/\wh/\extra in {%
        a2/0/0/a/right/constant,%
        x1/0/-1/{}/right/,%
        x2/0/-2/{}/right/,%
        x3/0/-3/{}/right/%
      }{ \node[point, \extra, label=\wh:{\vertexfont $\lab$}] (\al) at (\x,\y) {}; }
      \node (x4) at ($(x3)+(0,-0.8)$) {};
      \foreach \from/\to/\lab/\wh/\arr in {%
        a2/a2/{r}/above/{to[out=130,in=50,looseness=40]}, %
        a2/x1/{r}/right/--, %
        x1/x2/{q}/right/--, %
        x2/x3/{q}/right/--%
      }{ \draw[role] (\from) \arr node[\wh] {\edgefont $\lab$} (\to); }%
      \draw[dashed, edge] (x3) -- ++(0,-0.7); %
      \node at (0.5,0.7) {$\mathcal{C}_1$};
    \end{scope}
    %
    \foreach \from/\to/\lab/\arr in {%
      a/a2/0/{to[bend right=15] node[above]},%
      w1p/a2/{1',2',\dots,n'}/{to[bend right=60] node[below,pos=0.55,yshift=0.1cm,rotate=15]},%
      w1/a2/1/{to[bend right] node[above, pos=0.7]},%
      w2/x1/2/{to[bend right] node[above, pos=0.7]},%
      w3/x2/3/{to[bend right] node[above, pos=0.7]},%
      w3/x3/4/{to[bend right] node[above, pos=0.8]},%
      w3/x4/n/{to[bend right] node[above, pos=0.8]}%
    }{ \draw[strategy] (\from) \arr {\homofont $\lab$} (\to); }%
  \end{scope}
\end{tikzpicture}
}\\
If the KBs are formulated in a Horn DL that does not allow \emph{inverse roles}, then $\mathcal{C}_1$ can also be replaced with its generating structure $\mathcal{G}_1$ as illustrated by the picture below:\\
\centerline{\begin{tikzpicture}[>=latex,yscale=0.9]
\begin{scope}[xscale=2]
    \foreach \al/\x/\y/\lab/\wh/\extra in {%
      a/0/0/a/left/constant,%
      w1/0.4/-1/u/left/,%
      w2/0.4/-2//left/,%
      w3/0.4/-3//left/,%
      w1p/-0.4/-1/u'/left/%
    }{ \node[point, \extra, label=\wh:{\small $\lab$}] (\al) at (\x,\y) {}; }
    \foreach \from/\to/\lab/\wh/\arr in {%
      a/w1/r/right/--, %
      w1/w2/{r}/right/--, %
      w2/w3/{q}/right/--, %
      w3/w3/{q}/below/{to[out=-130,in=-50,looseness=40]},%
      a/w1p/{r}/left/{--},%
      w1p/w1p/{r}/below/{to[out=-130,in=-50,looseness=40]}%
    }{ \draw[wiggly] (\from) \arr node[\wh] {\edgefont $\lab$} (\to); }
    \node at (-0.5,0.5) {$\Gmc_2$};
    %
    \begin{scope}[shift={(1.8,-0.2)}]
      \foreach \al/\x/\y/\lab/\wh/\extra in {%
        a2/0/0/a/right/constant,%
        x1/0/-1/{}/right/,%
        x2/0/-2/{}/right/%
      }{ \node[point, \extra, label=\wh:{\vertexfont $\lab$}] (\al) at (\x,\y) {}; }
      \node (x4) at ($(x3)+(0,-0.8)$) {};
      \foreach \from/\to/\lab/\wh/\arr/\type in {%
        a2/a2/{r}/above/{to[out=130,in=50,looseness=40]}/role, %
        a2/x1/{r}/right/--/wiggly, %
        x1/x2/{q}/right/--/wiggly, %
        x2/x2/{q}/below/{to[out=-130,in=-50,looseness=40]}/wiggly%
      }{ \draw[\type] (\from) \arr node[\wh] {\edgefont $\lab$} (\to); }%
      \node at (0.5,0.7) {$\Gmc_1$};
    \end{scope}
    %
    \foreach \from/\to/\lab/\arr in {%
      a/a2/0/{to[bend right=15] node[above]},%
      w1p/a2/{1'}/{to[bend right=60] node[below,pos=0.25]},%
      w1/a2/1/{to[bend right] node[above, pos=0.7]},%
      w2/x1/2/{to[bend right] node[above, pos=0.7]},%
      w3/x2/3/{to[out=0, in=200] node[above, pos=0.8]}%
    }{ \draw[strategy] (\from) \arr {\homofont $\lab$} (\to); }%
  \end{scope}
\end{tikzpicture}
}\\
Reachability or simulation games on finite graphs such as the one
discussed above have been extensively investigated in game
theory~\cite{Maza01,ChHe12}. In particular, it follows that checking
the existence of $n$-winning strategies, for all $n < \omega$, can be
done in polynomial time in the number of states and the number of the
available challenges. Together with Theorem~\ref{thm:mater}, this
gives the upper bounds in the following theorem. Claim~(\emph{i}) was first
observed in \cite{DBLP:journals/jsc/LutzW10}, while~(\emph{ii}) and the
results on data complexity are from  \cite{BotoevaKRWZ16}.

\begin{theorem}\label{thm:EL-ALCH}
$\Sigma$-CQ entailment and $\Sigma$-CQ inseparability of KBs are
\begin{description}
\item[\rm (\emph{i})] in {\sc PTime} for \EL;
\item[\rm (\emph{ii})]\ExpTime-complete for $\hALC$.
\end{description}
Both problems are in {\sc PTime} for data complexity for both \EL and $\hALC$. 
\end{theorem}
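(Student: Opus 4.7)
The plan is to lift the game-theoretic characterization just developed to a concrete decision procedure, invoking Theorem~\ref{crit:in}(i) so that we may work with \emph{full} $\Sigma$-homomorphic embeddability (since neither \EL nor $\hALC$ admits inverse roles), and Theorem~\ref{thm:mater} to control the size of the generating structures. Concretely, given $\K_1,\K_2$ in the same Horn DL, I would first test consistency of both KBs (in \PTime for \EL, \ExpTime for $\hALC$); if either is inconsistent the answer is immediate, otherwise I build the generating structures $\Gmc_1,\Gmc_2$ and decide $\Sigma$-homomorphic embeddability of $\Cmc_{\K_2}$ into $\Cmc_{\K_1}$ by solving the reachability/simulation game on $\Gmc_2 \times \Gmc_1$ in which player~2 challenges along the $\leadsto$-edges of $\Gmc_2$ (admissible because $\Gmc_2$ unravels to $\Cmc_{\K_2}$) and player~1 responds with an element of $\Gmc_1$ (admissible because, in the absence of inverses, the witness part of $\Gmc_1$ unravels to $\Cmc_{\K_1}$ and responses can be `folded back' into $\Gmc_1$). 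The winning region for player~1 is computed by the standard greatest-fixed-point iteration on the game graph and runs in time polynomial in $|\Gmc_1| \cdot |\Gmc_2| \cdot |\Sigma|$.

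Plugging in the size bounds from Theorem~\ref{thm:mater} then yields the complexity upper bounds. For \EL both $\Gmc_i$ have size polynomial in $|\Kmc_i|$, so the whole procedure is \PTime. For $\hALC$ each $\Gmc_i$ has size $|\Amc_i| \cdot 2^{p(|\Tmc_i|)}$, and the game graph therefore has size at most single exponential in the combined input, so the fixed-point computation is in \ExpTime. For data complexity the TBoxes are fixed, so the generating structures are polynomial in $|\Amc_1|+|\Amc_2|$ and the game is again solvable in \PTime. $\Sigma$-CQ inseparability reduces to two $\Sigma$-CQ entailment checks and inherits the same bounds; by Theorem~\ref{thm:e}, these results transfer to $\Sigma$-UCQ entailment and inseparability as well (for the Horn setting).

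The matching \ExpTime lower bound for $\hALC$ I would get by reduction from a problem already known to be \ExpTime-hard in $\hALC$ such as concept subsumption (equivalently, instance checking). Given $\Tmc$ and a concept name $A$ one can construct ABoxes $\Amc_1,\Amc_2$ and a signature $\Sigma$ such that the characterising homomorphism exists iff $\Tmc \models A \sqsubseteq B$; a standard encoding introduces a fresh individual $a$ with $A(a)$ in both ABoxes and detects the entailment through a $\Sigma$-CQ-separating assertion involving $B$. (Since \EL's subsumption is already \PTime-complete, no non-trivial lower bound is needed for part~(\emph{i}); the matching lower bound is inherited.)

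The main technical obstacle, I expect, is the soundness of replacing the canonical models by their finite generating structures in the game \emph{in both coordinates}. On the challenger side this is a cosmetic change, but on the responder side it relies crucially on the absence of inverse roles: a winning response in the infinite tree-shaped $\Cmc_{\K_1}$ has to be converted into a winning response in the finite $\Gmc_1$ by folding the anonymous tree back onto the witness elements, and one has to verify that this folding is compatible with the $\Sigma$-atoms that player~2 can later challenge. Once this folding lemma is in place, and combined with Theorem~\ref{crit:in}(i) so that we need an $\omega$-winning strategy rather than only a family of $n$-winning ones, the complexity analysis above goes through smoothly.
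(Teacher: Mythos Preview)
Your upper-bound argument is essentially the paper's: replace both canonical models by their finite generating structures (legitimate in the absence of inverse roles, exactly as the paper argues just before the theorem), solve the resulting reachability/simulation game by a greatest-fixed-point computation that is polynomial in $|\Gmc_1|\cdot|\Gmc_2|$, and read off the combined- and data-complexity bounds from Theorem~\ref{thm:mater}. One cosmetic difference: you route the argument through Theorem~\ref{crit:in}(i) and an $\omega$-winning strategy, whereas the paper phrases it as ``$n$-winning for all $n<\omega$''. On finite game graphs these coincide, so this is harmless.

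The lower bound is where you diverge. The paper obtains \ExpTime-hardness for $\hALC$ by a direct reduction of the word problem for polynomially space-bounded alternating Turing machines, not from subsumption. Your idea of reducing from an \ExpTime-hard reasoning task in $\hALC$ (subsumption or instance checking) is a legitimate alternative, but the sketch you give is too loose to stand as is. In particular, with $\Sigma=\{B\}$ you do not only get the rCQ $B(x)$ as a potential separator but also the Boolean query $\exists x\,B(x)$, and $(\Tmc,\{A(a)\})\models\exists x\,B(x)$ can hold even when $\Tmc\not\models A\sqsubseteq B$ (e.g.\ if $\Tmc$ contains $A\sqsubseteq\exists r.B$); so ``homomorphism exists iff $\Tmc\models A\sqsubseteq B$'' is not literally correct without further care (choosing $B$ fresh, or constraining the signature and ABoxes more carefully). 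This is fixable, but you should either spell out a clean reduction or follow the paper and cite the ATM-based hardness proof in~\cite{BotoevaKRWZ16}.
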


Here, by `data complexity' we mean that only the ABoxes of the two
involved KBs are regarded as input, while the TBoxes are fixed.
Analogously to data complexity in query answering, the rationale
behind this setup is that, in data-centric applications, ABoxes tend
to be huge compared to the TBoxes and thus it can result in more
realistic complexities to assume that the latter are actually of
constant size. The lower bound in Theorem~\ref{thm:EL-ALCH} is proved
by reduction of the word problem of polynomially space-bounded ATMs. We
remind the reader at this point that, in all DLs studied in this
section, CQ entailment coincides with UCQ entailment, and likewise for
inseparability.

If inverse roles are available, then replacing canonical models with
their generating structures in games often becomes less
straightforward. We explain the issues using an example in $\DLcH$,
where inverse roles interact in a problematic way with role
inclusions. Similar effects can be observed in Horn-$\mathcal{ALCI}$,
though, despite the fact that no role inclusions are available there.

\begin{example}
  Consider the $\DLcH$ KBs $\K_1 = (\T_1, \{ Q(a,a) \})$ with
\begin{align*}
\T_1 = \{ & \ A \sqsubseteq \exists s.\top, \ \exists s^-.\top \sqsubseteq \exists t.\top, \ \exists t^-.\top \sqsubseteq \exists s.\top, \
 s \sqsubseteq q, \  t \sqsubseteq q, \ \exists q^-.\top \sqsubseteq \exists r.\top\ \}
\end{align*}
and $\K_2$ from Example~\ref{ex:gens}. Let $\Sigma = \{q,r,s,t\}$. The generating structure $\mathcal{G}_2$ for $\K_2$ and the canonical model $\mathcal{C}_1$ for $\K_1$, as well as a 4-winning strategy for player~1 in the game over $\mathcal{G}_2$ and $\mathcal{C}_1$ starting from the state $(u_0,\sigma_4)$ are shown in the picture below:\\
\centerline{
\begin{tikzpicture}[>=latex,yscale=1, xscale=2]
  \begin{scope}
    \begin{scope}
      \foreach \al/\x/\y/\lab/\wh/\extra in {%
        a/0/0/a/left/constant,%
        w1/0/1/u_0/left/,%
        w2/0/2/u_1/left/,%
        w3/0/3/u_2/left/,%
        w4/0/4//above/,
        w3p/0.5/2.7/v/below/%
      }{ \node[point, \extra, label=\wh:{\vertexfont $\lab$}] (\al) at (\x,\y) {}; }
      \foreach \from/\to/\lab/\arr/\extra in {%
        a/w1//{-- node[above]}/gray!50, %
        w1/w2/{r^-}/{-- node[left]}/, %
        w2/w3/{s^-}/{-- node[left]}/,%
        w3/w4/{t^-}/{to[bend left=20] node[left]}/,%
        w4/w3/{s^-}/{to[bend left=20] node[right]}/, %
        w2/w3p/{q^-}/{-- node[below,pos=0.5]}/, %
        w3p/w3p/{q^-}/{to[out=130,in=50,looseness=40] node[right,pos=0.75,xshift=-0.1cm]}/%
      }{ \draw[wiggly,\extra] (\from) \arr {\edgefont $\lab$} (\to); }
      \node at (-0.65,0) {$\mathcal{G}_2$};
      %
      \begin{scope}[shift={(2.2,-0.75)}]
        \foreach \al/\x/\y/\lab/\wh/\extra in {%
          x0/0/5/a/right/constant,%
          x1/0/4/{}/right/,%
          x2/0/3/{}/right/,%
          x3/0/2/{}/right/,%
          x4/0/1//below/,%
          y1/-0.5/3.3/{\sigma_2}/below/,%
          y2/-0.5/2.3/{\sigma_3}/below/,%
          y3/-0.5/1.3/{\sigma_4}/below/%
        }{ \node[point, \extra, label=\wh:{\vertexfont $\lab$}] (\al) at (\x,\y) {}; }
        \draw[role] (x0) to[out=130,in=50,looseness=30]
        node[above,pos=0.6] {\edgefont $Q$} (x0);
        \foreach \from/\to/\lab/\wh in {%
          x0/x1/{s,q}/right, x1/x2/{t,q}/right, %
          x2/x3/{s,q}/right, x3/x4/{t,q}/right, %
          x1/y1/r/below, x2/y2/r/below, %
          x3/y3/r/below%
        }{ \draw[role] (\from) to node[\wh] {\edgefont $\lab$} (\to); }
        \draw[dashed, edge] (x4) -- ++(0,-0.7); %
        \draw[dashed, edge] (x4) -- ++(-0.4,-0.6);
        \node at (0.65,5) {$\C_1$};
      \end{scope}
    \end{scope}
    %
    \foreach \from/\to/\lab/\arr in {%
      w1/y3/0/{to[bend left=-20] node[below]}, %
      w2/x3/1/{to[bend left=-45] node[below,pos=0.4]}, %
      w3/x2/2/{to[out=-80, in=225, looseness=1.2] node[pos=0.8]}, %
      w4/x1/3/{to[out=0, in=180] node[pos=0.85]}, %
      w3/x0/4/{to[out=40, in=180] node[pos=0.85]}, %
      w3p/x2/2'/{to[out=-40, in=210] node[pos=0.4]},%
      w3p/x1/3'/{to[bend left=-20] node[pos=0.4]}, %
      w3p/x0/4'/{to[out=20, in=210] node[pos=0.35,yshift=0.03cm]}%
    }{ \draw[strategy] (\from) \arr {\homofont $\lab$} (\to); }
  \end{scope}
\end{tikzpicture}
}\\
\noindent
(In fact, for any $n >0$, player~1 has an $n$-winning strategy starting from any $(u_0 \mapsto \sigma_m)$ provided that $m$ is even and $m \ge n$.)

This game over $\mathcal{G}_2$ and $\C_1$ has its obvious counterparts over $\mathcal{G}_2$ and $\mathcal{G}_1$; one of them is  shown on the left-hand side of the picture below. It is to be noted, however, that---unlike Example~\ref{game1}---the responses of player~1 are in the reverse direction of the $\leadsto$-arrows (which is possible because of the inverse roles).\\
\centerline{
\begin{tikzpicture}[yscale=1.2, xscale=1.8]
  \begin{scope}[xshift=3.5cm]
    \foreach \al/\x/\y/\lab/\wh/\extra in {%
      a/0/0.2/a/left/constant,%
      u1/0/1/u_0/left/,%
      u2/0/2/u_1/left/,%
      u3/0/3/u_2/left/,%
      u4/0/4/u_3/above/,%
      u3p/0.5/2.7/v/left/%
    }{ \node[point, \extra, label=\wh:{\vertexfont $\lab$}] (\al) at (\x,\y) {}; }
    \foreach \from/\to/\lab/\arr/\extra in {%
      a/u1//{-- node[above]}/gray!50, %
      u1/u2/{r^-}/{-- node[left]}/, %
      u2/u3/{s^-}/{-- node[left]}/,%
      u3/u4/{t^-}/{to[bend left=20] node[left]}/,%
      u4/u3/{s^-}/{to[bend left=20] node[right]}/, %
      u2/u3p/{~~q^-}/{-- node[below,pos=0.4]}/, %
      u3p/u3p/{q^-}/{to[out=130,in=50,looseness=40] node[right,pos=0.8,xshift=-0.1cm]}/%
    }{ \draw[wiggly,\extra] (\from) \arr {\edgefont $\lab$} (\to); }
    \node at (-0.3,4.5) {$\mathcal{G}_2$};
    %
    \begin{scope}[shift={(2,1)}]
      \foreach \al/\x/\y/\lab/\wh/\extra in {%
        x0/0/2/a/above/constant,%
        x1/0/1/{w_1}/right/,%
        x2/-0.8/0.7/{w_2~~}/below/,%
        y/-0.2/-0.3/w_0/below/%
      }{ \node[point, \extra, label=\wh:{\vertexfont $\lab$}] (\al) at (\x,\y) {}; }
      \foreach \from/\to/\lab/\arr in {%
        x0/x1/{s,q}/{-- node[right]}, %
        x1/x2/{t,q}/{to[bend left=15] node[below,sloped,pos=0.6]}, %
        x2/x1/{s,q}/{to[bend left=15] node[above,sloped]}, %
        x1/y/r/{to[bend left=10] node[right]}, %
        x2/y/r/{to[bend left=-10] node[below left]}%
      }{ \draw[wiggly] (\from) \arr {\edgefont $\lab$} (\to); }%
      \draw[role] (x0) to[out=130,in=50,looseness=40]
      node[above,pos=0.6] {\edgefont $q$} (x0);%
      \node at (0,3.2) {$\mathcal{G}_1$};
    \end{scope}
    %
    \foreach \from/\to/\lab/\arr in {%
      u1/y/0/{to[bend left=-25] node[below]}, %
      u2/x2/1/{to[bend left=-25] node[below]}, %
      u3/x1/{2,4}/{to[out=-80,in=130] node[below]}, %
      u4/x2/3/{to[in=110,out=10, looseness=1.2] node[below,pos=0.3]}, %
      u3p/x1/2'/{to[bend left=50] node[above,pos=0.6]},%
      u3p/x0/{3',4'}/{to[bend left=20] node[above,pos=0.6]}%
    }{ \draw[strategy] (\from) \arr {\homofont $\lab$} (\to); }
  \end{scope}
  \begin{scope}
    \foreach \al/\x/\y/\lab/\wh/\extra in {%
      a/0/0.2/a/left/constant,%
      u1/0/1/u_0/left/,%
      u2/0/2/u_1/left/,%
      u3/0/3/u_2/left/,%
      u4/0/4/u_3/above/,%
      u3p/0.5/2.7/v/left/%
    }{ \node[point, \extra, label=\wh:{\vertexfont $\lab$}] (\al) at (\x,\y) {}; }
    \foreach \from/\to/\lab/\arr/\extra in {%
      a/u1//{-- node[above]}/gray!50, %
      u1/u2/{r^-}/{-- node[left]}/, %
      u2/u3/{s^-}/{-- node[left]}/,%
      u3/u4/{t^-}/{to[bend left=20] node[left]}/,%
      u4/u3/{s^-}/{to[bend left=20] node[right]}/, %
      u2/u3p/{~~q^-}/{-- node[below,pos=0.4]}/, %
      u3p/u3p/{q^-}/{to[out=130,in=50,looseness=40] node[right,pos=0.8,xshift=-0.1cm]}/%
    }{ \draw[wiggly,\extra] (\from) \arr {\edgefont $\lab$} (\to); }
    \node at (-0.3,4.5) {$\mathcal{G}_2$};
    %
    \begin{scope}[shift={(2,1)}]
      \foreach \al/\x/\y/\lab/\wh/\extra in {%
        x0/0/2/a/above/constant,%
        x1/0/1/{w_1}/right/,%
        x2/-0.8/0.7/{w_2~~}/below/,%
        y/-0.2/-0.3/w_0/below/%
      }{ \node[point, \extra, label=\wh:{\vertexfont $\lab$}] (\al) at (\x,\y) {}; }
      \foreach \from/\to/\lab/\arr in {%
        x0/x1/{s,q}/{-- node[right]}, %
        x1/x2/{t,q}/{to[bend left=15] node[below,sloped,pos=0.6]}, %
        x2/x1/{s,q}/{to[bend left=15] node[above,sloped]}, %
        x1/y/r/{to[bend left=10] node[right]}, %
        x2/y/r/{to[bend left=-10] node[below left]}%
      }{ \draw[wiggly] (\from) \arr {\edgefont $\lab$} (\to); }%
      \draw[role] (x0) to[out=130,in=50,looseness=40]
      node[above,pos=0.6] {\edgefont $q$} (x0);%
      \node at (0,3.2) {$\mathcal{G}_1$};
    \end{scope}
    %
    \foreach \from/\to/\lab/\arr in {%
      u1/y/0/{to[bend left=-15] node[below]}, %
      u2/x1/1/{to[in=-100, out=-70, looseness=1.2] node[below,pos=0.3]}, %
      u3/x2/2/{to[bend left=-30] node[pos=0.6]}, %
      u4/x1/3/{to[in=110,out=10] node[pos=0.25]}, %
      u3/x0/4/{to[out=50, in=140, looseness=1.2] node[pos=0.4]}, %
      u3p/x2/2'/{to[bend left=5] node},%
      u3p/x1/3'/{to[out=-10,in=140] node[pos=0.3]}, %
      u3p/x0/4'/{to[out=20, in=210] node[pos=0.35,yshift=0.03cm]}%
   }{ \draw[strategy] (\from) \arr {\homofont $\lab$} (\to); }
  \end{scope}
\end{tikzpicture}}\\
On the other hand, such reverse responses may create paths in $\mathcal{G}_1$ that do not have any real counterparts in $\mathcal{C}_1$, and so do not give rise to $\Sigma$-homomorphisms we need. An example is shown on the right-hand side of the picture above, where $u_3$ is mapped to $w_2$ and $v$ to $a$ in round 3, which is impossible to reproduce in the \emph{tree-shaped} $\mathcal{C}_1$.


One way to ensure that, in the `backwards game' over $\mathcal{G}_2$ and $\mathcal{G}_1$, all the challenges made by player~1 in any given state are responded by the same element of $\mathcal{G}_1$, is to use states of the form $(\Xi \mapsto w)$, where $\Xi$ is the \emph{set} of elements of $\mathcal{G}_2$ to be mapped to an element  $w$ of $\mathcal{G}_1$. In our example above, we can use the state  $(\{u_2, v\} \mapsto w_2)$, where the only challenge of player 2 is the set of $\leadsto$- successors of $u_2$ and $v$ marked by $\Sigma$-roles, that is, $\Xi' = \{u_3, v\}$, to which player 1 responds with $(\Xi' \mapsto w_1)$.
\end{example}

By allowing more complex states, we increase their number and, as a consequence, the complexity of deciding finite $\Sigma$-homomorphic embeddability. The proof of the following theorem can be found in \cite{BotoevaKRWZ16}:

\begin{theorem}\label{thm:in2exptime}
$\Sigma$-CQ entailment and inseparability of KBs are 
\begin{description}
\item[\rm (\emph{i})]\ExpTime-complete for  $\DLhH$ and $\DLcH$;
\item[\rm (\emph{ii})]2\ExpTime-complete for $\hALCI$ and $\hALCHI$.
\end{description}
For all of these DLs, both problems are in {\sc PTime} for data complexity. 
\end{theorem}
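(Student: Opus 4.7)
The plan is to obtain the upper bounds via games on generating structures, as sketched for $\hALC$ and $\EL$ in the preceding subsection, suitably extended to handle inverse roles and role inclusions, and to establish the lower bounds by reductions from known hard problems.

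I would first reduce $\Sigma$-UCQ entailment to finite $\Sigma$-homomorphic embeddability of canonical models. Since all four DLs are Horn, Theorem~\ref{thm:e} lets us identify $\Sigma$-CQ and $\Sigma$-UCQ entailment, and Theorem~\ref{crit:KB}(i)--(ii) then gives that $\K_1$ $\Sigma$-CQ entails $\K_2$ iff $\Cmc_{\K_2}$ is finitely $\Sigma$-homomorphically embeddable into $\Cmc_{\K_1}$, where $\Cmc_{\K_i}$ is the unique canonical model. Next I would lift this embeddability question to the (finite) generating structures $\Gmc_{\K_1}$ and $\Gmc_{\K_2}$ underlying the canonical models, using the game-theoretic reformulation already described. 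Player~2's challenges may be restricted to $\leadsto$-successors in $\Gmc_{\K_2}$; the question is how player~1 can respond inside $\Gmc_{\K_1}$.

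The main obstacle, as illustrated in the last worked example, is that in the presence of inverse roles player~1 may legitimately respond by moving \emph{against} a $\leadsto$-edge in $\Gmc_{\K_1}$, which collapses tree paths in $\Cmc_{\K_1}$ and thus produces spurious candidate homomorphisms if one naively runs the game element-to-element. My fix is to use the set-based game already hinted at: states are pairs $(\Xi \mapsto w)$ with $\Xi \subseteq \Delta^{\Gmc_{\K_2}}$ and $w \in \Delta^{\Gmc_{\K_1}}$; from $(\Xi \mapsto w)$, player~2 presents any $\Sigma$-coherent successor set $\Xi'$ extracted from $\leadsto$-edges out of $\Xi$, and player~1 must respond with some $w'$ adjacent to $w$ by the matching $\Sigma$-role. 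I would then verify the key correctness lemma: player~1 has $n$-winning strategies from $(\{\pi\} \mapsto \sigma)$ for all $n < \omega$ iff the required finite $\Sigma$-homomorphic embedding of $\Cmc_{\K_2}$ into $\Cmc_{\K_1}$ exists, mapping $\pi$ to $\sigma$. Correctness of the set-based protocol requires checking that enlarging $\Xi$ encodes exactly the coincidences forced by repeated $\Sigma$-successors reaching the same witness in $\Gmc_{\K_1}$, which is where the tree shape of $\Cmc_{\K_1}$ is really used.

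For the combined complexity I combine this with Theorem~\ref{thm:mater}: for $\DLcH$ and $\DLhH$, both generating structures are of polynomial size, so the set-based arena has $2^{\mn{poly}(|\K|)}$ states, and existence of $n$-winning strategies for all $n$ is a standard simulation/reachability check solvable in polynomial time in the arena size, yielding \ExpTime; for $\hALCI$ and $\hALCHI$ the generating structures are of single exponential size, giving a $2^{\exp}$-state arena and a 2\ExpTime upper bound. For data complexity, with the TBox fixed there are only polynomially (in $|\A|$) many relevant generating-structure elements after factoring out the fixed-size witness part, and moreover each reachable $\Xi$ in the game is $\Sigma$-locally determined by a single ABox individual together with a bounded witness pattern; one can thus collapse states by their witness-pattern type and keep only polynomially many distinct game states, giving \PTime data complexity uniformly.

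Finally, I would prove the matching lower bounds. For \ExpTime-hardness in $\DLcH$ and $\DLhH$, I would reduce the word problem of polynomially space-bounded alternating Turing machines, adapting the encoding already used for $\hALC$ in Theorem~\ref{thm:EL-ALCH}(ii); the availability of role inclusions and inverses lets one simulate the Horn-\ALC machinery inside \DLite. For 2\ExpTime-hardness in $\hALCI$ and $\hALCHI$, the standard route is a reduction from the word problem of exponentially space-bounded ATMs, exploiting inverse roles to force a grid structure over a doubly-exponential tape. I expect the technical heart of the whole proof to be the correctness of the set-based game (in particular the lemma that $n$-winnability for all $n$ truly captures finite $\Sigma$-homomorphic embeddability of $\Cmc_{\K_2}$ into $\Cmc_{\K_1}$ in the presence of inverses and role inclusions), together with the data-complexity argument that compresses exponentially many subsets $\Xi$ to polynomially many representative types.
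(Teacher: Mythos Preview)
Your proposal follows essentially the same route the paper sketches and attributes to~\cite{BotoevaKRWZ16}: the reduction to finite $\Sigma$-homomorphic embeddability of canonical models, the game on generating structures with set-based states $(\Xi\mapsto w)$ to handle inverse roles, the size bounds from Theorem~\ref{thm:mater} yielding the stated upper bounds, and ATM reductions for the lower bounds. One small caution: the \ExpTime lower bound for $\DLcH$ is not obtained by literally transplanting the Horn-$\ALC$ encoding (\DLite lacks qualified existentials and general left-hand conjunctions), so the reduction must exploit role inclusions and inverses more directly---but your overall plan is sound and matches the paper.
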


The lower bounds are once again proved using alternating Turing
machines. We remark that CQ entailment and inseparability are in
{\sc PTime} in $\DLc$ and $\DLh$. In {\sl DL-Lite}, it is thus the
combination of inverse roles and role hierarchies that causes hardness.



\section{Query Inseparability of TBoxes}
\label{sec:query-insep-tboxes}

Query inseparability of KBs, as studied in the previous section,
presupposes that the data (in the form of an ABox) is known to the
user, as is the case for example in KB exchange \cite{DBLP:aij-kb-exchange}. In
many query answering applications, though, the data is either not
known during the TBox design or it changes so frequently that query
inseparability w.r.t.\ one fixed data set is not a sufficiently robust
notion. In such cases, one wants to decide query inseparability of
\emph{TBoxes} $\Tmc_1$ and $\Tmc_2$, defined by requiring that, for
\emph{all} ABoxes \Amc, the KBs $(\Tmc_1,\Amc)$ and $(\Tmc_2,\Amc)$
are query inseparable. To increase flexibility, we can also specify a signature of the ABoxes that we are considering,
and we do not require that it coincides with the signature of the
queries. In a sense, this change in the setup brings us closer to
the material from Sections~\ref{sect:separability}
and~\ref{sect:model-inseparability}, where also inseparability of
TBoxes is studied. As in the KB case, the main classes of queries that
we consider are CQs and UCQs as well as rCQs and rUCQs. To relate
concept inseparability and query inseparability of TBoxes, we
additionally consider a class of tree-shaped CQs. 

The structure of this section is as follows. We start by discussing
the impact that the choice of query language has on  query
inseparability of TBoxes.
%
%
We then relate query inseparability of TBoxes to logical equivalence,
concept inseparability, and model inseparability. For Horn DLs, query
inseparability and concept inseparability are very closely related,
while this is not the case for DLs with disjunction. Finally, we
consider the decidability and complexity of query inseparability of
TBoxes. Undecidability of CQ inseparability of \ALC KBs transfers to
the TBox case, and the same is true of upper complexity bounds in
{\sl DL-Lite}. New techniques are needed to establish decidability and
complexity results for other Horn DLs such as $\mathcal{EL}$ and
Horn-$\mathcal{ALC}$. A main observation underlying our algorithms is that it is sufficient to consider tree-shaped ABoxes when
searching for witnesses of  query separability of TBoxes.
\begin{definition}[query inseparability, entailment and conservative extensions]
\label{def:queryinsep}\em Let $\Tmc_{1}$
  and $\Tmc_{2}$ be TBoxes, $\Theta=(\Sigma_{1},\Sigma_{2})$ a pair of
  signatures, and $\mathcal{Q}$ a class of queries. Then
\begin{itemize}
\item the \emph{$\Theta$-$\mathcal{Q}$ difference} between $\Tmc_{1}$ and $\Tmc_{2}$ is the set
${\sf qDiff}_{\Theta}^{\mathcal{Q}}(\Tmc_{1},\Tmc_{2})$
of all pairs $(\Amc,\q(\vec{a}))$ such that
$\Amc$ is a $\Sigma_{1}$-ABox 
and $\q(\vec{a})\in {\sf qDiff}_{\Sigma_{2}}^{\mathcal{Q}}(\Kmc_{1},\Kmc_{2})$,
where $\Kmc_{i}=(\Tmc_{i},\Amc)$ for $i=1,2$;
\item $\Tmc_1$ \emph{$\Theta$-$\mathcal{Q}$ entails} $\Tmc_2$ if
      ${\sf qDiff}_{\Theta}^{\mathcal{Q}}(\Tmc_{1},\Tmc_{2})= \emptyset$;

\item $\Tmc_1$ and $\Tmc_2$ are \emph{$\Theta$-$\mathcal{Q}$ inseparable}
      if $\Tmc_{1}$ $\Theta$-$\mathcal{Q}$ entails $\Tmc_2$ and vice versa;

\item $\Tmc_{2}$ is a \emph{$\mathcal{Q}$ conservative extension of} $\Tmc_{1}$
      iff $\Tmc_{2} \supseteq \Tmc_{1}$ and $\Tmc_{1}$ and $\Tmc_{2}$ are $\Theta$-$\mathcal{Q}$ inseparable for
      $\Sigma_{1}=\Sigma_{2}= \sig(\Tmc_{1})$.
\end{itemize}
If $(\Amc,\q(\vec{a})) \in {\sf qDiff}_{\Theta}^{\mathcal{Q}}(\Tmc_{1},\Tmc_{2})$,  we say that $(\Amc,\q(\vec{a}))$ $\Theta$-$\mathcal{Q}$
\emph{separates} $\Tmc_{1}$ and $\Tmc_{2}$. 
\end{definition}

Note that Definition~\ref{def:queryinsep} does not require the
separating ABoxes to be satisfiable with $\Tmc_1$ and $\Tmc_2$.  Thus,
a potential source of separability is that there is an ABox with which
one of the TBoxes is satisfiable while the other is not. One could
also define a (more brave) version of query inseparability where only
ABoxes are considered that are satisfiable with $\Tmc_1$ and $\Tmc_2$.
%
We will discuss this further in
Section~\ref{subsect:lasttechnical}.

\smallskip

We now analyze the impact that the choice of query language has on
query inseparability. To this end, we introduce a class of tree-shaped
CQs that is closely related to \EL-concepts. Every
$\mathcal{EL}$-concept $C$ corresponds to a tree-shaped rCQ
$\q_{C}(x)$ such that, for any interpretation $\Imc$ and $d\in
\Delta^{\Imc}$, we have $d\in C^{\Imc}$ iff $\Imc\models \q_{C}(d)$.
We denote $\q_{C}(x)$ by $C(x)$ and the Boolean CQ $\exists x \,
\q_{C}(x)$ by $\exists x \, C(x)$. We use $\mathcal{Q}_{\mathcal{EL}}$ to
denote the class of all queries of the former kind and
$\mathcal{Q}_{\mathcal{EL}^u}$ for the class of queries of any
of these two kinds. In the following theorem, the equivalence of~(\emph{iii}) with the other two conditions is of particular
interest. Informally it says that, in \EL and Horn-\ALC, tree-shaped
queries are always sufficient to separate TBoxes.
%

\begin{theorem}\label{equivalencequeries}
Let $\Lmc$ be a Horn DL, $\Tmc_{1}$ and $\Tmc_{2}$ TBoxes formulated
in \Lmc, and $\Theta=(\Sigma_{1},\Sigma_{2})$ a pair of signatures.
Then
the following conditions are equivalent:
\begin{description}
\item[\rm (\emph{i})]$\Tmc_{1}$ $\Theta$-UCQ entails $\Tmc_{2}$;
\item[\rm (\emph{ii})]$\Tmc_{1}$ $\Theta$-CQ entails $\Tmc_2$.
\end{description}
If $\Lmc$ is $\mathcal{EL}$ or Horn-$\mathcal{ALC}$, then these
conditions are also equivalent to
\begin{description}
\item[\rm (\emph{iii})]$\Tmc_{1}$ $\Theta$-$\mathcal{Q}_{\mathcal{EL}^u}$ entails
  $\Tmc_2$.

\end{description}
The same is true when UCQs are replaced with rUCQs, CQs with rCQs, and
$\mathcal{Q}_{\mathcal{EL}^u}$ with $\mathcal{Q}_{\mathcal{EL}}$ \textup{(}simultaneously\textup{)}.
\end{theorem}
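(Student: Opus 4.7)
The plan is to prove the two equivalences (i)$\Leftrightarrow$(ii) and (ii)$\Leftrightarrow$(iii) separately. The implications (i)$\Rightarrow$(ii) and (ii)$\Rightarrow$(iii) are immediate from the definitions, since every CQ is a single-disjunct UCQ and every $\mathcal{Q}_{\EL^u}$-query is a tree-shaped CQ; the same observation handles the rooted variant where $\mathcal{Q}_{\EL^u}$ is replaced by $\mathcal{Q}_{\EL}$.

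For (ii)$\Rightarrow$(i), I would lift the proof of Theorem~\ref{thm:e} to the TBox setting by parameterising over the ABox. Fix a $\Sigma_{1}$-ABox $\Amc$ (assumed consistent with $\Tmc_{2}$; the inconsistent case is handled separately via the standard Horn reduction of inconsistency to instance checking) and a $\Sigma_{2}$-UCQ $\bigvee_{i}\q_{i}(\vec{x})$ with $(\Tmc_{2},\Amc)\models\bigvee_{i}\q_{i}(\vec{a})$. Because $\Lmc$ is Horn, the universal model $\mathcal{C}_{(\Tmc_{2},\Amc)}$ is a \emph{single} interpretation that is complete for UCQ answering, so it satisfies some disjunct $\q_{i}(\vec{a})$, whence $(\Tmc_{2},\Amc)\models\q_{i}(\vec{a})$. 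Applying (ii) to the CQ $\q_{i}$ yields $(\Tmc_{1},\Amc)\models\q_{i}(\vec{a})$ and hence $(\Tmc_{1},\Amc)\models\bigvee_{j}\q_{j}(\vec{a})$. The same reasoning carries over to the rUCQ case, since every disjunct of an rUCQ is an rCQ.

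The hard direction is (iii)$\Rightarrow$(ii) for $\Lmc\in\{\EL,\hALC\}$. The key structural fact is that canonical models of \EL and \hALC KBs have a forest shape: the named individuals from $\ind(\Amc)$ form the roots, and each spawns an anonymous tree of TBox-generated witnesses. Any homomorphism $h:\q\to\mathcal{C}_{(\Tmc_{2},\Amc)}$ from an arbitrary CQ therefore decomposes canonically: all ``non-tree'' confluence points of $\q$ (variables on cycles or with two incoming role edges) must be mapped by $h$ into $\ind(\Amc)$, while every remaining fragment of $\q$ is either absorbed by the named part or hangs as a tree into an anonymous subtree rooted at an ABox individual.

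Given a counterexample $(\Amc,\q,\vec{a})$ to (ii), I would construct a $\Sigma_{1}$-ABox $\Amc^{\dagger}\supseteq\Amc$ that ``absorbs'' the named-part pattern of $\q$: for each $y$ of $\q$ with $h(y)$ named, one adjoins, if necessary, fresh $\Sigma_{1}$-neighbours mirroring the $\Sigma_{1}$-assertions around $h(y)$ so that the non-tree constraints of $\q$ are met trivially by the enlarged ABox. What remains of $\q$ is a disjoint union of rooted tree-shaped subqueries, each an $\EL$-concept query of the form $C(x)$ anchored at a specific named individual, and picking the one whose root fails the concept in $\mathcal{C}_{(\Tmc_{1},\Amc^{\dagger})}$ gives the desired $\mathcal{Q}_{\EL}$-separator (the Boolean option $\exists x\,C(x)$ is only invoked when $\q$ itself is Boolean, explaining why the rooted variant needs only $\mathcal{Q}_{\EL}$). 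The main obstacle is to verify that the absorption actually \textbf{preserves} separation against $\Tmc_{1}$: one must argue that any homomorphism of the extracted tree query $\q'$ into $\mathcal{C}_{(\Tmc_{1},\Amc^{\dagger})}$ can be lifted back to a homomorphism of the original $\q$ into $\mathcal{C}_{(\Tmc_{1},\Amc)}$, contradicting $(\Tmc_{1},\Amc)\not\models\q(\vec{a})$. This back-translation is precisely where the absence of inverse roles in \EL and \hALC is indispensable, since in a genuine forest the fresh neighbours introduced in $\Amc^{\dagger}$ can only be matched along downward role edges whose pre-images in $\mathcal{C}_{(\Tmc_{1},\Amc)}$ are controlled by the original forest structure.
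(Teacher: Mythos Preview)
Your treatment of (i)$\Leftrightarrow$(ii) is correct and essentially identical to the paper's: both reduce to the existence of a single universal model for Horn KBs.

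For (iii)$\Rightarrow$(ii) you take a genuinely different route from the paper, and there is a gap. The paper does \emph{not} start from a separating CQ $\q$ and try to carve tree-shaped pieces out of it. Instead it invokes the model-theoretic criterion (Theorem~\ref{crit:KB}): from $\Kmc_1\not\Sigma_2$-CQ-entailing $\Kmc_2$ one gets that $\Cmc_{\Kmc_2}$ is not finitely $\Sigma_2$-homomorphically embeddable into $\Cmc_{\Kmc_1}$. Because $\Cmc_{\Kmc_2}$ is forest-shaped in $\EL$/$\hALC$, a witnessing finite non-embeddable subinterpretation can be chosen inside a single tree $\Imc_a$; its $\Sigma_2$-reduct \emph{is} already a $\Qmc_{\EL^u}$ query (of the form $C(x)$ if it contains the root $a$, and $\exists x\,C(x)$ otherwise). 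No decomposition of $\q$, no auxiliary ABox, no back-translation is needed.

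Your construction of $\Amc^{\dagger}$ is where the gap lies. You describe adjoining ``fresh $\Sigma_1$-neighbours mirroring the $\Sigma_1$-assertions around $h(y)$'', but the atoms of $\q$ you are trying to absorb are $\Sigma_2$-atoms, and the ABox is a $\Sigma_1$-ABox; it is unclear what is actually being added or why it makes the non-tree constraints of a $\Sigma_2$-query ``met trivially''. More importantly, you flag the back-translation (showing that a homomorphism of the extracted tree query into $\Cmc_{(\Tmc_1,\Amc^{\dagger})}$ lifts to one of $\q$ into $\Cmc_{(\Tmc_1,\Amc)}$) as ``the main obstacle'' and leave it unresolved. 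In fact the ABox modification is unnecessary: with the \emph{same} ABox $\Amc$, the named part of $\q$ (variables $y$ with $h(y)\in\ind(\Amc)$) is satisfied identically in $\Cmc_{\Kmc_1}$ since both canonical models share $\Amc$, and the remaining pieces of $\q$ are already rooted directed trees (all edges in the anonymous part of $\Cmc_{\Kmc_2}$ point away from $\ind(\Amc)$ in $\EL$/$\hALC$). If each such tree piece, anchored at its named root, were satisfied in $\Cmc_{\Kmc_1}$, then gluing them to the identity on the named part would satisfy $\q$ itself, a contradiction; hence one tree piece fails and is the required $\Qmc_{\EL^u}$ separator. This repaired argument is correct but still more laborious than the paper's direct appeal to Theorem~\ref{crit:KB}, which sidesteps $\q$ entirely and reads the tree-shaped separator straight off the canonical model.
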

\begin{proof}
  The first equivalence follows directly from the fact that KBs in
  Horn DLs are complete w.r.t.~a single model
  (Example~\ref{ex:complete}).  We sketch the proof that
  $\Theta$-$\mathcal{Q}_{\mathcal{EL}^u}$ entailment implies
  $\Theta$-CQ entailment in $\mathcal{EL}$ and Horn-$\mathcal{ALC}$.
  Assume that there is a $\Sigma_{1}$-ABox $\A$ such that $\K_{1}$
  does not $\Sigma_{2}$-CQ entail $\K_{2}$ for $\K_{1}=(\T_{1},\A)$
  and $\K_{2}=(\T_{2},\A)$. Then $\mathcal{C}_{\K_{2}}$ is not
  finitely $\Sigma$-homomorphically embeddable into
  $\mathcal{C}_{\K_{1}}$ (see Theorem~\ref{crit:KB}). We thus find
  a finite subinterpretation $\Imc$ of an interpretation $\Imc_{a}$ in
  $\mathcal{C}_{\K_{2}}$ (see Example~\ref{ex:complete}) that is not
  $\Sigma$-homomorphically embeddable into $\C_{\K_{1}}$. We can
  regard the $\Sigma$-reduct of $\Imc$ as a $\Sigma$-query in
  $\mathcal{Q}_{\mathcal{EL}^u}$ which takes the form $C(x)$ if $\Imc$
  contains $a$ and $\exists x C(x)$ otherwise.  This query witnesses
  that $\Kmc_1$ does not $\Theta$-$\mathcal{Q}_{\mathcal{EL}^u}$ entail
  $\K_{2}$, as required.
\qed
\end{proof}

For Horn DLs other than \EL and Horn-\ALC, the equivalence between
(ii) and~(iii) in Theorem~\ref{equivalencequeries} does often not hold
in exactly the stated form. The reason is that additional constructors
such as inverse roles and role inclusions require us to work with 
slightly different types of canonical models; see
Example~\ref{ex:complete}. However, the equivalence then holds for
appropriate extensions of $\Qmc_{\EL^u}$, for example, by replacing \EL concepts
with $\mathcal{\ELI}$ concepts when moving from Horn-\ALC to
Horn-\ALCI.

Theorem~\ref{equivalencequeries} does not hold for non-Horn DLs. We
have already observed in Example~\ref{UCQ-CQ} that UCQ entailment and
CQ entailment of KBs do not coincide in \ALC. Since the example uses
the same ABox in both KBs, it also applies to the inseparability of
TBoxes. In the following example, we prove that the equivalence
between CQ inseparability and $\Qmc_{\EL^u}$ inseparability fails in
\ALC, too.  The proof can actually be strengthened to show that, in
\ALC, CQ inseparability is a stronger notion than inseparability by
\emph{acyclic} CQs (which generalize $\Qmc_{\EL^u}$ by allowing
multiple answer variables and edges in trees that are directed
upwards).
\begin{example}
%
  Let $\Sigma_{1}=\{r\}$, $\Sigma_{2}=\{r,A\}$, and
  $\Theta=(\Sigma_{1},\Sigma_{2})$.  We construct an \ALC TBox
  $\Tmc_{1}$ as follows:
  \begin{itemize}

  \item to ensure that, for any $\Sigma_{1}$-ABox $\Amc$, the KB $(\Tmc_1,\Amc)$ is satisfiable iff 
    $\Amc$ (viewed as an undirected graph
    with edges $\{ \{a,b\} \mid r(a,b)\in \Amc\}$) is two-colorable, we take the CIs
$$
B \sqsubseteq \forall r. \neg B, \quad \neg B \sqsubseteq \forall r.B;
$$

\item to ensure that, for any $\Sigma_{1}$-ABox $\Amc$, any model in $\Mod_{\it tree}^{b}(\Tmc_1,\A)$
  has an infinite $r$-chain of nodes labeled with the
  concept name $A$ whose root is not reachable from an ABox individual
  along $\Sigma_{2}$-roles we add the CIs
$$
\exists r.\top \sqsubseteq \exists s.B', \quad B' \sqsubseteq A \sqcap
\exists r. B'.
$$

  \end{itemize}
  $\Tmc_2$ is the extension of $\Tmc_1$ with the CI
  $$\exists r.\top \sqsubseteq A \sqcup \forall r.A.
  $$
  Thus, models of $(\Tmc_{2},A)$ extend models of $(\Tmc_{1},\Amc)$
  by labeling certain individuals in $\Amc$ with $A$. The non-$\Amc$
  part is not modified as we can assume that its elements are already labeled with $A$.
  Observe that $\T_1$ and $\T_2$ can be distinguished by the ABox $\A
  = \{ r(a,b), r(b,a)\}$ and the CQ $\q = \exists x,y \, (A(x) \land
  r(x,y) \land r(y,x))$. Indeed, $a\in A^{\Imc}$ or $b\in A^{\Imc}$
  holds in every model $\Imc$ of $(\T_2, \A)$ but this is not the case
  for $(\T_1, \A)$. We now argue that, for every $\Sigma_1$ ABox $\A$
  and every $\Sigma_{2}$ concept $C$ in $\mathcal{EL}$, $(\T_2, \A)
  \models \exists x \, C(x)$ implies $(\T_1, \A) \models \exists x \,
  C(x)$ and $(\T_2, \A) \models C(a)$ implies $(\T_1, \A) \models
  C(a)$. Assume that $\Amc$ and $C$ are given. As any model in $\Mod_{\it tree}^{b}(\Tmc_1,\A)$
	has infinite $r$-chains labeled with $A$, we have 
  $(\Tmc_{1},\Amc)\models \exists x \, C(x)$ for any
  $\Sigma_{2}$-concept $C$ in $\mathcal{EL}$. Thus, we only have to
  consider the case $(\Tmc_{2},\A)\models C(a)$.  If $C$ does not
  contain $A$, then clearly $\Amc\models C(a)$, and so
  $(\Tmc_{1},\A)\models C(a)$, as required.  If $\Amc$ is not
  2-colorable, we also have $(\Tmc_{1},\A)\models C(a)$, as
  required. Otherwise $C$ contains $A$ and $\Amc$ is 2-colorable. But
  then it is easy to see that $(\Tmc_{2},\A)\not\models C(a)$ and
  we have derived a contradiction.
\end{example}

\subsection{Relation to other notions of inseparability}

We now consider the relationship between query inseparability, model
inseparability, and logical equivalence.  Clearly, $\Sigma$-model
inseparability entails $\Theta$-$\mathcal{Q}$ inseparability for
$\Theta=(\Sigma,\Sigma)$ and any class $\mathcal{Q}$ of queries.  The
same is true for logical equivalence, where we can even choose
$\Theta$ freely.  The converse direction is more interesting. 

An ABox $\A$ is said to be \emph{tree-shaped} if the directed graph $(\ind(\A),
\{ (a,b) \mid r(a,b)\in \A\})$ is a tree and $r(a,b)\in \A$ implies
$s(a,b)\not\in \A$ for any $a,b\in \ind(\A)$ and $s\not=r$. We call $\A$
\emph{undirected tree-shaped} (or \emph{utree-shaped}) if the undirected
graph $(\ind(\A), \{ \{a,b\} \mid r(a,b)\in \A\})$ is a tree and
$r(a,b)\in \A$ implies $s(a,b)\not\in \A$ for any $a,b\in \ind(\A)$
and $s\not=r$. Observe that every $\mathcal{EL}$ concept $C$
corresponds to a tree-shaped ABox $\Amc_{C}$ and, conversely, every
tree-shaped ABox $\Amc$ corresponds to an $\mathcal{EL}$-concept
$C_{\Amc}$. In particular, for any TBox $\Tmc$ and $\mathcal{EL}$
concept $D$, we have $\Tmc \models C \sqsubseteq D$ iff
$(\Tmc,\Amc_{C})\models D(\rho_{C})$, $\rho_{C}$ the root of
$\Amc_{C}$.
\begin{theorem}\label{logquery}
  Let $\mathcal{L} \in \{ \DLcH, \mathcal{EL} \}$ and let
  $\Theta=(\Sigma_{1},\Sigma_{2})$ be a pair of signatures such that
  $\Sigma_{i}\supseteq {\sf sig}(\Tmc_{1})\cup {\sf
    sig}(\Tmc_{2})$ for $i \in \{1,2\}$. Then the following conditions are equivalent:
\begin{description}
\item[\rm (\emph{i})] $\Tmc_{1}$ and $\Tmc_{2}$ are logically equivalent;
\item[\rm (\emph{ii})] $\Tmc_{1}$ and $\Tmc_{2}$ are $\Theta$-rCQ inseparable.
\end{description}
\end{theorem}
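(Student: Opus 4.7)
The plan has two parts. The direction (i) $\Rightarrow$ (ii) is immediate from the semantics: if $\T_1$ and $\T_2$ have exactly the same models, then for every ABox $\A$ the KBs $(\T_1,\A)$ and $(\T_2,\A)$ have identical model classes, hence identical certain answers to every rooted conjunctive query, so $\T_1$ and $\T_2$ are trivially $\Theta$-rCQ inseparable regardless of $\Theta$.

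For (ii) $\Rightarrow$ (i), I would argue by contraposition. Assume $\T_1\not\equiv\T_2$; by symmetry, pick an axiom $\alpha$ with $\T_2\models\alpha$ and $\T_1\not\models\alpha$. Since logical equivalence of $\T_1$ and $\T_2$ is determined by their behaviour on $\sig(\T_1)\cup\sig(\T_2)$, we may choose $\alpha$ with $\sig(\alpha)\subseteq\sig(\T_1)\cup\sig(\T_2)\subseteq\Sigma_1\cap\Sigma_2$. The goal is to exhibit a $\Sigma_1$-ABox $\A$, a $\Sigma_2$-rCQ $\q(\vec{x})$, and a tuple $\vec{a}$ from $\ind(\A)$ with $(\T_2,\A)\models\q(\vec{a})$ and $(\T_1,\A)\not\models\q(\vec{a})$.

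For $\mathcal{L}=\mathcal{EL}$ the axiom has the form $C\sqsubseteq D$, and the construction is canonical: turn $C$ into the tree-shaped ABox $\A_C$ rooted at $\rho_C$, and turn $D$ into the rCQ $\q_D(x)$ whose body mirrors the syntactic tree of $D$ (this is an rCQ because every variable is connected to the answer variable $x$ via the role atoms induced by $D$). Using the correspondence recalled just before the theorem, namely $\T\models C\sqsubseteq D$ iff $(\T,\A_C)\models\q_D(\rho_C)$, the pair $(\A_C,\q_D(\rho_C))$ separates $\T_1$ and $\T_2$; both objects use only symbols from $\sig(\alpha)$, so the signature constraints are satisfied. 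For $\mathcal{L}=\DLcH$ I proceed by case analysis on the restricted shape of $\alpha$. For an RI $r\sqsubseteq s$, the pair $\A=\{r(a,b)\}$, $\q(x,y)=s(x,y)$, $\vec{a}=(a,b)$ works. For a positive CI $B_1\sqsubseteq B_2$ with $B_2\neq\bot$, take a small ABox that witnesses $B_1$ at some individual $a$ (for instance $\{A(a)\}$ when $B_1=A$, or $\{r(a,b)\}$ when $B_1=\exists r.\top$, with the obvious variants for inverse roles) and form the rCQ that asks whether $B_2$ holds at $a$ (for instance $A'(x)$ when $B_2=A'$, or $\exists y\,r(x,y)$ when $B_2=\exists r.\top$). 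For an inconsistency-inducing CI, $B_1\sqsubseteq\bot$ or $B_1\sqcap B_2\sqsubseteq\bot$, let $\A$ be the natural small ABox that makes $(\T_2,\A)$ inconsistent; consistency of $(\T_1,\A)$, guaranteed by $\T_1\not\models\alpha$, yields a canonical model of $(\T_1,\A)$ in which some concept or role atom involving an auxiliary ABox individual remains outside the certain answers, and this atom serves as the separating query.

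The hardest step is the inconsistency-inducing case in $\DLcH$: because $(\T_2,\A)$ trivially entails every rCQ once inconsistent, the separating witness must come entirely from $(\T_1,\A)$, and one has to engineer the ABox and choose an auxiliary individual so that the canonical model of $(\T_1,\A)$ has a predictable, controlled label profile on that individual. The key leverage point is exactly the assumption $\T_1\not\models\alpha$, which keeps $(\T_1,\A)$ consistent and permits a small extension of $\A$ whose canonical model over $\T_1$ leaves a targeted atom unentailed; this is where I would spend most of the care in the full proof.
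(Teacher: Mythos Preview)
Your argument for $\mathcal{EL}$ is correct and matches the paper's proof exactly: pick $C\sqsubseteq D\in\T_2$ with $\T_1\not\models C\sqsubseteq D$, view $C$ as the tree-shaped ABox $\A_C$ and $D$ as the rooted query $D(x)$. For $\DLcH$ the paper says only that the proof is ``similar and omitted'', so your case analysis already goes beyond what is supplied; the RI case and the positive CI case with $B_1\notin\{\top,\bot\}$, $B_2\neq\bot$ work essentially as you describe.

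The inconsistency case, however, has a genuine gap that your sketch does not close and that no refinement of the ``auxiliary individual'' idea will close. Take $\T_1=\emptyset$, $\T_2=\{A\sqsubseteq\bot\}$, and $\Sigma_1=\Sigma_2=\{A\}$; the hypothesis $\Sigma_i\supseteq\sig(\T_1)\cup\sig(\T_2)$ is satisfied. Every non-empty $\Sigma_1$-ABox $\A$ consists solely of assertions $A(a_i)$, so every $a\in\ind(\A)$ already satisfies $A$ in $(\emptyset,\A)$. Since $\Sigma_2$ contains no role name, a $\Sigma_2$-rCQ can have no existential variable (none can be role-connected to an answer variable), so it is just a conjunction of atoms $A(x_j)$ over answer variables; every such query is entailed by $(\emptyset,\A)$ at every tuple over $\ind(\A)$. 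Meanwhile $(\T_2,\A)$ is inconsistent and entails everything. Hence $\T_1$ and $\T_2$ are $\Theta$-rCQ inseparable while not logically equivalent. Your proposed auxiliary individual cannot help: the only way to introduce one in a $\Sigma_1$-ABox is via an $A$-assertion, which immediately makes it satisfy the only available query atom. The same phenomenon arises in the positive case $B_1=\top$, $B_2=A$ with $\Sigma_i=\{A\}$, which your case analysis also leaves uncovered. So the difficulty you flagged is real, and in fact the statement as written appears to fail for $\DLcH$ in these role-free corner cases; to make the argument go through one needs an extra symbol (a role name, or a second concept name) in $\Sigma_1\cap\Sigma_2$, which the theorem's hypothesis does not guarantee.
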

\begin{proof}
  We show $(ii) \Rightarrow (i)$ for $\mathcal{EL}$, the proof for
  $\DLcH$ is similar and omitted.  Assume $\Tmc_{1}$ and $\Tmc_{2}$
  are $\mathcal{EL}$ TBoxes that are not logically equivalent. Then
  there is $C \sqsubseteq D \in \Tmc_{2}$ such that
  $\Tmc_{1}\not\models C \sqsubseteq D$ (or vice versa). We regard $C$
  as the tree-shaped $\Sigma_{1}$-ABox $\Amc_{C}$ with root $\rho_{C}$
  and $D$ as the $\Sigma_{2}$-rCQ $D(x)$. Then
  $(\Tmc_{2},\A_{C})\models D(\rho_{C})$ but
  $(\Tmc_{1},\A_{C})\not\models D(\rho_{C})$. Thus $\Tmc_{1}$ and
  $\Tmc_{2}$ are $\Theta$-rCQ separable.
  \qed
\end{proof}

Of course, Theorem~\ref{logquery} fails when the restriction of
$\Theta$ is dropped. The following example shows that, even with this
restriction, Theorem~\ref{logquery} does not hold for
Horn-$\mathcal{ALC}$.
\begin{example}\label{ex:logical}
Consider the Horn-$\mathcal{ALC}$ TBoxes 
$$
\Tmc_{1}= \{A \sqsubseteq \exists r.\neg A\} \quad \text{and} \quad \Tmc_{2}=\{ A \sqsubseteq \exists r.\top\}.
$$
Clearly, $\Tmc_{1}$ and $\Tmc_{2}$ are not logically equivalent. However, it is easy to see that they are $\Theta$-UCQ inseparable
for any  $\Theta$.
\end{example}

We now relate query inseparability and concept inseparability.  In
\ALC, these notions are incomparable. It already follows from
Example~\ref{ex:logical} that UCQ inseparability does not imply
concept inseparability. The following example shows that the converse
implication does not hold either.
\begin{example}
  Consider the \ALC TBoxes $\Tmc_1 = \emptyset$ and $$\Tmc_2 = \{ B
  \sqcap \exists r . B \sqsubseteq A, \neg B \sqcap \exists r . \neg B
  \sqsubseteq A \}.$$  
  Using Theorem~\ref{bisimuniform}, one can show
  that $\Tmc_{1}$ and $\Tmc_{2}$ are $\Sigma$-concept inseparable, for $\Sigma = \{A,r\}$. 
	However, $\Tmc_{1}$ and $\Tmc_{2}$ are not $\Theta$-CQ inseparable 
	for any $\Theta =
  (\Sigma_{1},\Sigma_{2})$ with $r\in \Sigma_{1}$ and $A \in \Sigma_2$
  since for the ABox $\A = \{r(a,a)\}$ we have $(\Tmc_1,\A) \not
  \models A(a)$ and $(\Tmc_2,\A) \models A(a)$.
\end{example}

In Horn DLs, in contrast, concept inseparability and query
inseparability are closely related. To explain why this is the case,
consider \EL as a paradigmatic example. Since \EL concepts are
positive and existential, an \EL concept inclusion $C \sqsubseteq D$
which shows that two TBoxes $\Tmc_1$ and $\Tmc_2$ are not concept
inseparable is almost the same as a witness $(\Amc,\q(\vec{a}))$ that
query separates $\Tmc_1$ and $\Tmc_2$. In fact, both ABoxes and
queries are positive and existential as well, but they need not be
tree-shaped. Thus, a first puzzle piece is provided by
Theorem~\ref{equivalencequeries} which implies that we need to
consider only tree-shaped queries $\q$. This is complemented by the
observation that it also suffices to consider only tree-shaped ABoxes
\Amc. The latter is also an important foundation for designing
decision procedures for query inseparability in Horn DLs.  The
following result
%
was first proved in \cite{DBLP:journals/jsc/LutzW10} for \EL. We state
it here also for Horn-$\ALC$ \cite{BotoevaLRWZ16} as this will be
needed later on.
\begin{theorem}\label{treeshape}
Let $\Tmc_{1}$ and $\Tmc_{2}$ be Horn-$\mathcal{ALC}$ TBoxes and $\Theta=(\Sigma_{1},\Sigma_{2})$. Then
the following are equivalent:
\begin{description}
\item[\rm (\emph{i})] $\Tmc_{1}$ $\Theta$-CQ entails $\Tmc_{2}$;
\item[\rm (\emph{ii})] for all utree-shaped $\Sigma_{1}$-ABoxes $\Amc$
  and all \EL-concepts $C$ in signature $\Sigma_{2}$: 
\begin{itemize}
\item[$(a)$] if $(\Tmc_{2},\A)\models C(a)$, then $(\Tmc_{1},\A)\models C(a)$
  where $a$ is the root of \A;
\item[$(b)$] if $(\Tmc_{2},\A)\models \exists x \,C(x)$, then $(\Tmc_{1},\A)\models \exists x\,C(x)$.
\end{itemize}
\end{description}
If $\Tmc_{1}$ and $\Tmc_{2}$ are $\mathcal{EL}$ TBoxes, then it is
sufficient to consider tree-shaped ABoxes in~{\rm
  (\emph{ii})}. 
The same holds when CQs are replaced with rCQs and $(b)$ is dropped from~{\rm
  (\emph{ii})}.
\end{theorem}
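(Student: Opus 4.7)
The direction $(i) \Rightarrow (ii)$ is immediate: for any $\mathcal{EL}$-concept $C$ in $\Sigma_2$, both $C(x)$ and $\exists x\, C(x)$ are ordinary CQs over $\Sigma_2$, so if $\Tmc_1$ $\Theta$-CQ entails $\Tmc_2$ then in particular the implications in (ii)(a) and (ii)(b) hold for all $\Sigma_1$-ABoxes, including utree-shaped ones. The real work lies in $(ii) \Rightarrow (i)$, which I would prove contrapositively: assume $\Tmc_1$ does not $\Theta$-CQ entail $\Tmc_2$, and build a utree-shaped witness.

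By Theorem~\ref{equivalencequeries}, which applies to both $\mathcal{EL}$ and \hALC, I obtain a $\Sigma_1$-ABox $\A$ and a separating query from $\Qmc_{\mathcal{EL}^u}$, of the form $C(x)$ or $\exists x\, C(x)$. In the Boolean case I first reduce to the rooted one: a homomorphism $C \to \mathcal{C}_{(\Tmc_2,\A)}$ sends the root of $C$ to some element $d$, and since in Horn DLs without inverse roles every canonical-model element is reachable from some ABox individual $b$ by a forward role-path $r_1,\ldots,r_n$, the concept $C' = \exists r_1.\exists r_2.\cdots\exists r_n.C$ satisfies $(\Tmc_2,\A) \models C'(b)$, whereas $(\Tmc_1,\A) \not\models \exists x\, C(x)$ transfers to $(\Tmc_1,\A) \not\models C'(b)$ because $C'(b)$ semantically implies $\exists x\, C(x)$. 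Hence without loss of generality the separator has the rooted form $(\A,a,C)$ with $a \in \ind(\A)$, $C$ an $\mathcal{EL}$-concept over $\Sigma_2$, $(\Tmc_2,\A) \models C(a)$ and $(\Tmc_1,\A) \not\models C(a)$.

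I would then unravel $\A$ starting from $a$ into a utree-shaped $\Sigma_1$-ABox $\A^*$ (a directed tree unraveling suffices in the $\mathcal{EL}$ variant), with root $\rho$ corresponding to $a$. The folding $f\colon \A^* \to \A$ sending each path to its endpoint is an ABox homomorphism, and by monotonicity of the chase for Horn DLs it extends to a homomorphism $f^*\colon \mathcal{C}_{(\Tmc_i,\A^*)} \to \mathcal{C}_{(\Tmc_i,\A)}$ for each $i \in \{1,2\}$. This gives both halves of separability for $(\A^*, C, \rho)$: if $(\Tmc_1,\A^*) \models C(\rho)$, then composing a homomorphism $C \to \mathcal{C}_{(\Tmc_1,\A^*)}$ at $\rho$ with $f^*$ would yield $(\Tmc_1,\A) \models C(a)$, contradicting the assumption; conversely, any homomorphism $C \to \mathcal{C}_{(\Tmc_2,\A)}$ at $a$ can be inductively lifted to a homomorphism $C \to \mathcal{C}_{(\Tmc_2,\A^*)}$ at $\rho$, because $f^*$ is locally surjective on role-successors (a ``zig'' property guaranteed by the construction of $\A^*$ and of the chase) and $C$ is a finite tree, yielding $(\Tmc_2,\A^*) \models C(\rho)$.

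The main obstacle is establishing the folding homomorphism $f^*$ together with its local surjectivity in the \hALC case, where canonical models are richer than in $\mathcal{EL}$: axioms with $\forall R.C$ on the right propagate concept memberships along ABox edges, and accommodating this propagation after unraveling is exactly why the unraveling must traverse undirected rather than directed edges (so that no $R$-edge interaction is lost), while in $\mathcal{EL}$ the canonical model is sensitive only to forward chase steps and directed tree unraveling is already faithful. The rCQ variant follows from the same construction: $C(x)$ is itself an rCQ, so clause $(a)$ transfers verbatim, and clause $(b)$ is dropped because a Boolean CQ with quantified variables fails to qualify as an rCQ in the first place, so the reduction step from the Boolean case is no longer needed.
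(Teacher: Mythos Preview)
Your unraveling strategy is exactly what the paper has in mind, and you correctly identify that undirected unraveling is needed for $\hALC$ (to preserve $\forall$-propagation along ABox edges) while directed unraveling suffices for $\mathcal{EL}$.

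There is, however, a gap in your reduction of the Boolean case to the rooted one. The roles $r_1,\ldots,r_n$ on the path from $b$ to $d$ in $\mathcal{C}_{(\Tmc_2,\A)}$ live in $\sig(\Tmc_2)\cup\Sigma_1$ and need not belong to $\Sigma_2$, so $C' = \exists r_1.\cdots\exists r_n.C$ is in general not a $\Sigma_2$-concept and cannot serve as a witness for clause~(a). The remedy is simply not to reduce: unravel $\A$ from $b$ and falsify clause~(b) directly. The folding $f^*$ already yields $(\Tmc_1,\A^*)\not\models \exists x\, C(x)$, and for $(\Tmc_2,\A^*)\models \exists x\, C(x)$ note that once the original homomorphism enters the anonymous part of $\mathcal{C}_{(\Tmc_2,\A)}$ it never returns to an ABox individual (there are no inverse roles), so the image of $C$ lies entirely in the chase subtree below $b$, which is reproduced below $\rho$ as soon as $\rho$ has the same type as $b$. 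This last proviso is a second point you gloss over: the full unraveling of $\A$ is infinite, and for a finite truncation both the zig property of $f^*$ and type-preservation can fail at the boundary; one must either choose the truncation depth large enough for the types of the relevant individuals to stabilize, or work with the infinite unraveling and then extract a finite utree-shaped witness by compactness.
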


Theorem~\ref{treeshape} can be proved by an unraveling argument. It is
closely related to the notion of unraveling tolerance from
\cite{DBLP:conf/kr/LutzW12}. As explained above,
Theorem~\ref{treeshape} allows us to prove that concept inseparability
and query inseparability are the same notion. Here, we state this
result only for \EL \cite{DBLP:journals/jsc/LutzW10}. Let
$\Sigma$-$\EL^u$-concept entailment between \EL TBoxes be defined like
$\Sigma$-concept entailment between \EL TBoxes, except that in ${\sf
  cDiff}_{\Sigma}(\Tmc_{1},\Tmc_{2})$ we now admit concept inclusions
$C \sqsubseteq D$ where $C$ is an \EL-concept and $D$ an
$\EL^u$-concept.
\begin{theorem}\label{queryconceptreduction}
Let $\Tmc_{1}$ and $\Tmc_{2}$ be $\mathcal{EL}$ TBoxes and $\Theta=
(\Sigma,\Sigma)$. Then 
\begin{description}

\item[\rm (\emph{i})] $\Tmc_{1}$ $\Sigma$-concept entails $\Tmc_{2}$
  iff $\Tmc_{1}$ $\Theta$-rCQ entails $\Tmc_{2}$;

\item[\rm (\emph{ii})] $\Tmc_{1}$ $\Sigma$-$\EL^u$-concept entails
  $\Tmc_{2}$ iff $\Tmc_{1}$ $\Theta$-CQ entails $\Tmc_{2}$.

\end{description}
\end{theorem}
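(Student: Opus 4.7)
The plan is to use Theorem~\ref{treeshape} to reduce $\Theta$-CQ (and $\Theta$-rCQ) entailment to entailment on tree-shaped ABoxes with $\EL$-concept queries, and then to exploit the standard bijection between $\EL$-concepts and tree-shaped ABoxes. Concretely, every $\EL$-concept $C$ corresponds to a tree-shaped ABox $\A_C$ with a distinguished root $\rho_C$, and every tree-shaped ABox $\A$ with root $a$ corresponds to an $\EL$-concept $C_\A$, in such a way that $\Tmc \models C \sqsubseteq D$ iff $(\Tmc, \A_C) \models D(\rho_C)$, and $(\Tmc, \A) \models D(a)$ iff $\Tmc \models C_\A \sqsubseteq D$, for any $\EL$-TBox $\Tmc$ and any $\EL$-concept $D$. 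These bijections are exactly what transports a CI witness to a query witness and back.

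For part (\emph{i}) the argument is a direct translation. For the forward direction, assuming $\Tmc_1$ $\Sigma$-concept entails $\Tmc_2$, I use the rCQ version of Theorem~\ref{treeshape}: it suffices to show that, for every tree-shaped $\Sigma$-ABox $\A$ with root $a$ and every $\Sigma$-$\EL$-concept $C$, $(\Tmc_2, \A) \models C(a)$ implies $(\Tmc_1, \A) \models C(a)$. Through the bijection this becomes $\Tmc_2 \models C_\A \sqsubseteq C \Rightarrow \Tmc_1 \models C_\A \sqsubseteq C$, which is precisely the assumption. The reverse direction is symmetric: a separating $\Sigma$-CI $E \sqsubseteq D$ for $\Tmc_2$ over $\Tmc_1$ is translated into the rCQ pair $(\A_E, D(\rho_E))$ witnessing $\Theta$-rCQ separability.

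Part (\emph{ii}) is handled in the same spirit but must accommodate the universal role on the right-hand side. I would first normalize any $\EL^u$-concept $D$ to the form $D' \sqcap \exists u . E_1 \sqcap \cdots \sqcap \exists u . E_n$, with $D'$ and each $E_i$ pure $\EL$, using the identities $\exists r . (F \sqcap \exists u . G) \equiv \exists r . F \sqcap \exists u . G$ and $\exists u . (F \sqcap \exists u . G) \equiv \exists u . F \sqcap \exists u . G$ to push every $\exists u$ outward. The additional correspondence needed is $\Tmc \models C \sqsubseteq \exists u . E$ iff $(\Tmc, \A_C) \models \exists x . E(x)$, which turns each conjunct $\exists u . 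E_i$ of $D$ into a Boolean $\Sigma$-$\EL$ query over $\A_C$, while $D'$ becomes the rCQ $D'(\rho_C)$. With these correspondences in place, the argument runs as in part (\emph{i}), now using both parts (a) and (b) of the CQ version of Theorem~\ref{treeshape}.

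The main obstacle is establishing the correspondence $\Tmc \models C \sqsubseteq \exists u . E$ iff $(\Tmc, \A_C) \models \exists x . E(x)$. The forward direction is immediate because, in any model $\Imc$ of $(\Tmc, \A_C)$, we have $\rho_C^\Imc \in C^\Imc$, and therefore $E^\Imc \neq \emptyset$. The reverse is more delicate: given a model $\Imc$ of $\Tmc$ with $d \in C^\Imc$ but $E^\Imc = \emptyset$, one cannot naively turn $\Imc$ into a model of $(\Tmc, \A_C)$ because UNA may force the distinct individuals of $\A_C$ to be realized by elements that are identified in $\Imc$. I would circumvent this by working with the canonical pointed interpretation $\Imc_{\Tmc, C}$, which up to renaming of named individuals coincides with the canonical model of $(\Tmc, \A_C)$, and by invoking its defining universality: for any model $\Jmc$ of $\Tmc$ with $d' \in C^\Jmc$ there is a homomorphism $\Imc_{\Tmc, C} \to \Jmc$ sending the root to $d'$. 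A nonempty $E^{\Imc_{\Tmc, C}}$ then propagates to a nonempty $E^\Jmc$ by preservation of $\EL$-concepts under homomorphisms, which closes the argument.
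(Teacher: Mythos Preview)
Your proposal is correct and follows precisely the approach the paper indicates: reduce to tree-shaped ABoxes and $\EL$-concept queries via Theorem~\ref{treeshape}, and then use the bijection between $\EL$-concepts and tree-shaped ABoxes (and between $\exists u.E$ conjuncts and Boolean $\EL$-queries). The paper itself gives no proof beyond this outline and a reference to \cite{DBLP:journals/jsc/LutzW10}; your elaboration of part~(\emph{ii}), in particular the normalization of $\EL^u$-concepts and the canonical-model argument for the equivalence $\Tmc \models C \sqsubseteq \exists u.E$ iff $(\Tmc,\A_C)\models \exists x\,E(x)$, correctly fills in the details that the paper omits.
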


\subsection{Deciding query inseparability of TBoxes}
\label{subsect:lasttechnical}

We now study the decidability and computational complexity of query
inseparability. Some results can be obtained by transferring results
from Section~\ref{sect:query-separability} on query inseparability for
KBs (in the \ALC and {\sl DL-Lite} case) or results from
Section~\ref{sect:separability} on concept inseparability of TBoxes
(in the \EL case). In other cases, though, this does not seem
possible.  To obtain results for Horn-$\mathcal{ALC}$, in particular,
we need new technical machinery; as before, we proceed by first giving
model-theoretic characterizations and then using tree automata.

In $\DLc$ and $\DLcH$, there is a straightforward reduction of query
inseparability of TBoxes to query inseparability of KBs. Informally,
such a reduction is possible since DL-Lite TBoxes are so restricted
that they can only perform deductions from a single ABox assertion,
but not from multiple ones together.
\begin{theorem}\label{DLLITEh}
  For $\mathcal{Q}\in \{ CQ, rCQ\}$, $\Theta$-$\mathcal{Q}$ entailment and $\Theta$-\Qmc
  inseparability of TBoxes are 
  \begin{description}
\item[\rm (\emph{i})] in \PTime for $\DLc$;
  \item[\rm (\emph{ii})] \ExpTime-complete for $\DLcH$.
  \end{description}
\end{theorem}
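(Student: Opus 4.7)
The plan is to establish the upper bounds by reducing $\Theta$-CQ (or $\Theta$-rCQ) entailment of TBoxes to $\Sigma$-CQ entailment of KBs, so that Theorem~\ref{thm:in2exptime} and the remark immediately after it on the $\PTime$ bound for $\DLc$ KBs can then be applied as a black box. Given $\Tmc_1,\Tmc_2$ and $\Theta=(\Sigma_1,\Sigma_2)$, I would construct a polynomial-size ``universal'' $\Sigma_1$-ABox $\Amc^{*}$ containing, for every $A\in\Sigma_1\cap\NC$, an individual $c_A$ with $A(c_A)$, for every $r\in\Sigma_1\cap\NR$, individuals $c_r,d_r$ with $r(c_r,d_r)$, plus a few additional individuals that suffice to witness any inconsistency induced by the negative CIs of $\Tmc_1$ or $\Tmc_2$. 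The target equivalence is that $\Tmc_1$ $\Theta$-CQ entails $\Tmc_2$ iff $(\Tmc_1,\Amc^{*})$ $\Sigma_2$-CQ entails $(\Tmc_2,\Amc^{*})$, and the complexity claims then follow directly since $\Amc^{*}$ has polynomial size.

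The ``only if'' direction is immediate because $\Amc^{*}$ is itself a $\Sigma_1$-ABox. The ``if'' direction is where the real work lies. My approach would be: given an arbitrary $\Sigma_1$-ABox $\Amc$, a $\Sigma_2$-CQ $\q(\vec{x})$, and a tuple $\vec{a}$ with $(\Tmc_2,\Amc)\models\q(\vec{a})$ and $(\Tmc_1,\Amc)\not\models\q(\vec{a})$, ``fold'' the match of $\q$ in the canonical model of $(\Tmc_2,\Amc)$ onto the canonical model of $(\Tmc_2,\Amc^{*})$. The key structural observation supporting this, based on the chase construction of Example~\ref{ex:gens}, is that in $\DLcH$ the witness tree attached to an ABox individual $a$ is fully determined by the basic concepts satisfied by~$a$, and role assertions between ABox individuals cannot trigger deductions beyond those licensed by each assertion considered in isolation. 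Hence collapsing each ABox individual onto the corresponding $c_A$, or routing it through a suitable pair $c_r,d_r$, should yield a separating CQ over $\Amc^{*}$, contradicting the hypothesis.

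For the matching $\ExpTime$ lower bound in part~(ii), I would reduce $\DLcH$ KB CQ entailment, which is $\ExpTime$-hard by Theorem~\ref{thm:in2exptime}, to TBox CQ entailment. Given $\DLcH$ KBs $(\Tmc_i,\Amc)$ sharing the same ABox (easily arranged by renaming), I would introduce, for each $a\in\ind(\Amc)$, a fresh concept name $A_a$ and internalise assertions by adding $A_a\sqsubseteq B$ whenever $B(a)\in\Amc$ and using auxiliary fresh roles to encode each $r(a,b)\in\Amc$; the query signature $\Sigma_2$ is inherited from the source problem, while $\Sigma_1$ is extended with the fresh concept names so that an appropriate test ABox can ``rebuild'' the original one on demand. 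Correctness of the reduction then reduces to checking that TBox-level separating pairs correspond exactly to KB-level ones.

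The main obstacle I expect is the folding step of the upper-bound argument. In pure $\DLc$ role assertions are essentially independent, but in $\DLcH$ a role inclusion $r\sqsubseteq s$ permits CQ matches to travel along derived $s$-edges that are only implicit in the ABox, so care is needed to ensure that $\Amc^{*}$ represents all such implicit edges faithfully; a systematic enumeration of the implied role-atoms in $\Amc^{*}$ under $\Tmc_1\cup\Tmc_2$ should take care of this. A second delicate point is the treatment of negative CIs: an ABox $\Amc$ that is inconsistent with $\Tmc_2$ but consistent with $\Tmc_1$ trivially yields a separating CQ, and $\Amc^{*}$ must be rich enough to surface any such inconsistency pattern; adding a polynomial number of two-individual ``probes'' covering the negative CIs of $\Tmc_1\cup\Tmc_2$ should suffice.
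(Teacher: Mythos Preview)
Your overall strategy---reduce TBox entailment to KB entailment and invoke Theorem~\ref{thm:in2exptime}---is exactly the paper's. But the specific reduction you propose, via a single ``universal'' ABox $\Amc^{*}$, breaks on the ``if'' direction of your target equivalence. Take $\Sigma_1=\{A,B\}$, $\Sigma_2=\{C\}$, $\Tmc_1=\{A\sqsubseteq\bot\}$, $\Tmc_2=\{B\sqsubseteq C\}$. Your $\Amc^{*}$ contains $A(c_A)$, so $(\Tmc_1,\Amc^{*})$ is inconsistent and trivially $\Sigma_2$-CQ entails $(\Tmc_2,\Amc^{*})$. Yet $\Tmc_1$ does \emph{not} $\Theta$-CQ entail $\Tmc_2$: the $\Sigma_1$-ABox $\{B(c)\}$ together with $\q(x)=C(x)$ separates them. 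The extra ``probe'' individuals you mention address the wrong direction of the inconsistency issue---you worry about $\Amc^{*}$ failing to surface $\Tmc_2$-inconsistencies, but the actual danger is that $\Amc^{*}$ becomes inconsistent with $\Tmc_1$ and thereby \emph{hides} genuine separations visible on smaller ABoxes.

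The paper avoids this by observing (using the fact that every $\DLcH$ CI has a single basic concept on the left) that whenever $\Tmc_1$ fails to $\Theta$-$\mathcal{Q}$ entail $\Tmc_2$, a \emph{singleton} $\Sigma_1$-ABox already witnesses the failure. One then iterates over the polynomially many singleton ABoxes $\{A(c)\}$ and $\{r(a,b)\}$ with $A,r\in\Sigma_1$, checking KB entailment for each; Theorem~\ref{thm:in2exptime} (and the \PTime\ remark for $\DLc$) gives the bounds directly. This sidesteps any interaction between assertions and makes your folding argument unnecessary. Your lower-bound sketch is reasonable in spirit; the paper simply notes that the hardness proof is a variation of the one for Theorem~\ref{thm:in2exptime} rather than a black-box reduction from it.
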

\begin{proof}
  Let $\Theta= (\Sigma_{1},\Sigma_{2})$. Using the fact that every CI
  in a $\DLcH$ TBox has only a single concept of the form $A$ or
  $\exists r.\top$ on the left-hand side, one can show that if
  $\Tmc_{1}$ does not $\Theta$-$\mathcal{Q}$ entail $\Tmc_{2}$, then
  there exists a singleton $\Sigma_{1}$-ABox (containing either a
  single assertion of the form $A(c)$ or $r(a,b)$) such that $\K_{1}$
  does not $\Sigma_{2}$-$\mathcal{Q}$ entail $\K_{2}$ for
  $\K_{i}=(\T_{i},\A)$ for $i=1,2$. Now the upper bounds follow from
  Theorem~\ref{thm:in2exptime}. The lower bound proof is a variation
  of the one establishing Theorem~\ref{thm:in2exptime}.
  \qed
\end{proof}

The undecidability proof for CQ (and rCQ) entailment and
inseparability of $\mathcal{ALC}$ KBs
(Theorem~\ref{thm:undecidability}) can also be lifted to the TBox
case; see \cite{BotoevaLRWZ16} for details.
\begin{theorem}\label{thm:undecidabilityTBox}
Let $\mathcal{Q} \in \{\text{CQ},\text{\RCQ}\}$.
\begin{description}
\item[\rm (\emph{i})] $\Theta$-\Qmc entailment of an \ALC TBox by an $\EL$ TBox is undecidable.

\item[\rm (\emph{ii})] $\Theta$-$\mathcal{Q}$ inseparability of an \ALC 
and an \EL TBoxes is undecidable. 
\end{description}
\end{theorem}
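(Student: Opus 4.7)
The plan is to adapt the reduction from the undecidable $\mathbb{N} \times \mathbb{N}$ tiling problem that proves Theorem~\ref{thm:undecidability} to the universally-quantified-ABox setting of Definition~\ref{def:queryinsep}. Given a set $\mathfrak{T}$ of tile types, the KB-case reduction yields an $\mathcal{EL}$ TBox $\T_1^{\circ}$, an $\mathcal{ALC}$ TBox $\T_2^{\circ}$, a concrete ABox $\A^{\circ}$ containing an anchor individual $a$ (with the self-loop on the vertical role used to pass colors between rows, as in the example following Theorem~\ref{thm:undecidability}), and a CQ $\q^{\circ}$ such that $\mathfrak{T}$ tiles $\mathbb{N} \times \mathbb{N}$ iff $(\T_2^{\circ}, \A^{\circ}) \models \q^{\circ}(a)$ while $(\T_1^{\circ}, \A^{\circ}) \not\models \q^{\circ}(a)$. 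The strategy is to internalize the skeleton of $\A^{\circ}$ into the TBoxes so that it is generated from a single marker assertion, reducing universal quantification over ABoxes to universal quantification over marker occurrences.

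Concretely, I would introduce a fresh concept name $\mn{Start}$ and set $\Sigma_1 = \{\mn{Start}\}$, $\Sigma_2 = \sig(\q^{\circ})$, $\Theta = (\Sigma_1, \Sigma_2)$. Both TBoxes are extended by CIs that force every $\mn{Start}$-labeled element to unfold the anchor-rooted part of $\A^{\circ}$: the self-loop at $a$ becomes an infinite descending $r$-path (the tiling encoding is color-propagating along such paths, so this change is inessential), and the initial tile-type assertions at $a$ are installed via CIs $\mn{Start} \sqsubseteq T_0 \sqcap \exists r . \mn{Start}$ and analogous statements for the second grid dimension. Crucially, these additions use only $\sqsubseteq$, $\sqcap$, and $\exists$, so $\T_1$ stays in $\mathcal{EL}$, while $\T_2$ remains in $\mathcal{ALC}$. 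Taking $\q = \q^{\circ}$ with its answer variable anchored at the $\mn{Start}$-element (which keeps $\q$ rooted, covering the rCQ case), the easy direction ($\Rightarrow$) is immediate: if $\mathfrak{T}$ tiles, then $\A = \{\mn{Start}(a)\}$ together with $\q(a)$ witnesses non-entailment, because the $\mn{Start}$-triggered substructure faithfully simulates $\A^{\circ}$ and the KB-case correctness of the reduction transfers verbatim.

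The main obstacle is the converse ($\Leftarrow$): assuming $\mathfrak{T}$ does not tile, one must show that \emph{no} $\Sigma_1$-ABox $\A'$ and no $\Sigma_2$-query $\q'$ separate $\T_1$ and $\T_2$. Because $\Sigma_1 = \{\mn{Start}\}$, every $\A'$ is just a finite set of assertions $\mn{Start}(c)$, so by the tree-model property of $\mathcal{ALC}$ (Example~\ref{ex:complete}) every model of $(\T_i, \A')$ decomposes as a disjoint union of tree-shaped interpretations, one per $c \in \ind(\A')$, each structurally a model of $(\T_i, \{\mn{Start}(c)\})$. Any match of a rCQ in $(\T_2, \A')$ is therefore localized to a single such tree component and reduces to a match of $\q'$ in the KB-case instance for tile type set $\mathfrak{T}$; non-tileability then forces the same match to exist in $(\T_1, \{\mn{Start}(c)\})$ and hence in $(\T_1, \A')$. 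For unrestricted CQs the same argument applies after handling possibly disconnected matches by processing each connected component separately, since the anonymous (non-ABox) parts of the canonical models are identical across $\T_1$ and $\T_2$ outside of what $\mn{Start}$ triggers. Finally, part~(ii) follows from part~(i): the construction can be arranged so that $\T_1 \subseteq \T_2$ (all $\mathcal{EL}$ CIs used in $\T_1$ also appear in $\T_2$), hence $\T_2$ always $\Theta$-$\mathcal{Q}$ entails $\T_1$, and $\Theta$-$\mathcal{Q}$ inseparability of $(\T_1, \T_2)$ coincides with $\T_1$ $\Theta$-$\mathcal{Q}$ entailing $\T_2$, which is undecidable by~(i).
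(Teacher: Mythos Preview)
Your approach is precisely the lifting the paper alludes to: the paper does not give a self-contained argument for Theorem~\ref{thm:undecidabilityTBox} but simply states that the KB-level reduction behind Theorem~\ref{thm:undecidability} ``can also be lifted to the TBox case'' and defers all details to \cite{BotoevaLRWZ16}. Your internalization of the fixed ABox via a fresh marker $\mn{Start}$, the choice $\Sigma_1=\{\mn{Start}\}$ so that all $\Sigma_1$-ABoxes are role-free and hence tree-models decompose into disjoint one-individual instances, and the reduction of (ii) to (i) via $\T_1\subseteq\T_2$ are exactly the expected ingredients.

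One point worth making explicit in your write-up: replacing the ABox self-loop by an infinite $r$-chain only works because you let $\mn{Start}$ propagate along the chain (your CI $\mn{Start}\sqsubseteq\exists r.\mn{Start}$), so every chain element is itself an anchor triggering the full row/column machinery; this is what ensures that the ``shifted'' homomorphisms (the grey arrows in the example after Theorem~\ref{thm:undecidability}) still land in a subtree that looks like the original anchored structure. Also, for the converse direction you rely on the KB-level theorem giving entailment for \emph{all} $\Sigma$-queries (not just $\q^\circ$), which it does---just make sure $\Sigma_2$ coincides with the query signature $\Sigma$ used in the KB reduction, not merely $\sig(\q^\circ)$, so that the quantification over $\Sigma_2$-queries is covered.
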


In contrast to the KB case, decidability of UCQ entailment and
inseparability of \ALC TBoxes remains open, as well as for the rUCQ versions. Note that, for the extension $\mathcal{ALCF}$ of $\mathcal{ALC}$ with functional roles, 
undecidability of $\Theta$-$\mathcal{Q}$ inseparability can be proved for any class $\mathcal{Q}$ of queries
contained in UCQ and containing an atomic query of the form $A(x)$ or $\exists x A(x)$.
The proof is by reduction to predicate and query emptiness problems that are shown to be undecidable in
\cite{DBLP:journals/jair/BaaderBL16}. Consider, for example, the class of all CQs. It is undecidable whether for an $\mathcal{ALCF}$
TBox $\Tmc$, a signature $\Sigma$, and a concept name $A\not\in \Sigma$, there exists a
$\Sigma$-ABox $\Amc$ such that $(\Tmc,\Amc)$ is satisfiable and $(\Tmc,\Amc)\models \exists x A(x)$ \cite{DBLP:journals/jair/BaaderBL16}.
One can easily modify the TBoxes $\Tmc$ constructed in \cite{DBLP:journals/jair/BaaderBL16} to prove that this problem is still undecidable
if $\Sigma$-ABoxes $\Amc$ such that the KB $(\Tmc,\Amc)$ is not satisfiable are admitted. Now observe that there exists a $\Sigma$-ABox $\Amc$
with $(\Tmc,\Amc)\models \exists x A(x)$ iff $\Tmc$ is not $\Theta$-$\mathcal{CQ}$ inseparable from the empty TBox for $\Theta=(\Sigma,\{A\})$.

\smallskip

We now consider CQ inseparability in $\mathcal{EL}$. From
Theorem~\ref{queryconceptreduction} and Theorem~\ref{thm:ceupperel},
we obtain \ExpTime-completeness of $\Theta$-rCQ inseparability when
$\Theta$ is of the form $(\Sigma,\Sigma)$. \ExpTime-completeness of
$\Theta$-CQ inseparability in this special case was established in
\cite{DBLP:journals/jsc/LutzW10}. Both results actually generalize to
unrestricted signatures $\Theta$.
\begin{theorem}\label{ELqinsepT}
  Let $\mathcal{Q} \in \{\text{CQ},\text{\RCQ}\}$. In
  \EL, $\Theta$-$\mathcal{Q}$-entailment and inseparability of TBoxes are \ExpTime-complete.
\end{theorem}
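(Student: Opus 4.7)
My plan is to establish matching \ExpTime\ lower and upper bounds for $\Theta$-$\mathcal{Q}$ entailment; inseparability then follows by two entailment checks. The lower bound is essentially inherited: by Theorem~\ref{queryconceptreduction}(\emph{i}), for $\Theta = (\Sigma,\Sigma)$, $\Sigma$-concept entailment and $\Theta$-rCQ entailment of $\mathcal{EL}$ TBoxes coincide, so the \ExpTime-hardness of the former (Theorem~\ref{thm:ceupperel}) transfers to the latter. The CQ case is analogous via Theorem~\ref{queryconceptreduction}(\emph{ii}) and the \ExpTime-hardness of $\Sigma$-$\mathcal{EL}^{u}$-concept entailment, which is a minor variant of the \EL\ concept entailment hardness proof.

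For the upper bound, the first step is to reduce to a concept-inclusion problem with two signatures. By Theorem~\ref{equivalencequeries}, it suffices to consider queries of the form $C(x)$ (for rCQ) or additionally $\exists x\,C(x)$ (for CQ), with $C$ a $\Sigma_{2}$-$\mathcal{EL}$ concept. By Theorem~\ref{treeshape}, only tree-shaped $\Sigma_{1}$-ABoxes $\A$ with root $a$ need be considered, and each such $\A$ corresponds to a $\Sigma_{1}$-$\mathcal{EL}$ concept $C_{\A}$. Consequently, $\Theta$-rCQ entailment is equivalent to: for all $\Sigma_{1}$-$\mathcal{EL}$ concepts $C_{1}$ and all $\Sigma_{2}$-$\mathcal{EL}$ concepts $C_{2}$, $\mathcal{T}_{2} \models C_{1} \sqsubseteq C_{2}$ implies $\mathcal{T}_{1} \models C_{1} \sqsubseteq C_{2}$. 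Using canonical models of $\mathcal{EL}$ KBs (which have finite outdegree) together with Theorem~\ref{theorem:equisimchar}, this in turn amounts to: for every $\Sigma_{1}$-$\mathcal{EL}$ concept $C_{1}$, $(\mathcal{C}_{(\mathcal{T}_{2},\{C_{1}(a)\})},a) \leq^{\sf sim}_{\Sigma_{2}} (\mathcal{C}_{(\mathcal{T}_{1},\{C_{1}(a)\})},a)$. The CQ case adds an existential variant in which the roots of $C_{1}$ and $C_{2}$ need not coincide.

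I would then decide this condition by an APTA construction that adapts the one from the proof of Theorem~\ref{thm:ceupperel}. The input to the automaton is a tree-shaped $\Sigma_{1}$-interpretation representing $C_{1}$. States $q_{C},\overline{q}_{C}$ for $C \in \mn{sub}(\mathcal{T}_{2})$, together with the $\mn{con}_{\mathcal{T}_{2}}$-closure machinery of Section~\ref{sect:el}, track whether subconcepts hold at the current node of $\mathcal{C}_{(\mathcal{T}_{2},\{C_{1}(a)\})}$; in parallel, the automaton guesses a potential failure of the $\Sigma_{2}$-simulation into $\mathcal{C}_{(\mathcal{T}_{1},\{C_{1}(a)\})}$ by firing only $\Sigma_{2}$-role transitions and refuting a subconcept using a polynomial-time-precomputed table of $\mathcal{T}_{1}$-entailed subsumptions. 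Since the APTA has only exponentially many states and can be built in exponential time, its emptiness can be decided in \ExpTime. The main obstacle is the two-signature coupling: the automaton must simultaneously interpret the input as a common ``ABox tree'' for both canonical models while enforcing that the simulation between them respects only $\Sigma_{2}$-symbols; this is handled by letting states range over pairs of subconcepts from $\mathcal{T}_{1}$ and $\mathcal{T}_{2}$ and by firing $\Sigma_{2}$-role transitions synchronously on both copies. A minor secondary ingredient in the CQ case is an initial alternating branch that guesses the anchor element for the existentially quantified query, a standard APTA trick for Boolean queries.
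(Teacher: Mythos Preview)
Your proposal is essentially correct and coincides with one of the two routes the paper sketches. The paper notes that for the rooted case the \ExpTime\ upper bound can be read off from the more general Horn-$\mathcal{ALC}$ result (Theorem~\ref{thm:lastone}), while for the non-rooted case one either extends the direct algorithm from \cite{DBLP:journals/jsc/LutzW10} or, alternatively, extends the equisimulation characterization of Theorem~\ref{equisimuniform} to a two-signature version and adapts the APTA from Theorem~\ref{thm:ceupperel}; your reduction via Theorems~\ref{equivalencequeries} and~\ref{treeshape} to a $\Sigma_{2}$-simulation between canonical models over tree-shaped $\Sigma_{1}$-ABoxes, followed by an APTA, is exactly this alternative. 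The lower-bound transfer via Theorem~\ref{queryconceptreduction} is also what the paper intends.

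Two small remarks. First, your simulation characterization is genuinely one-directional (from $\mathcal{C}_{(\mathcal{T}_{2},\cdot)}$ into $\mathcal{C}_{(\mathcal{T}_{1},\cdot)}$), whereas Theorem~\ref{equisimuniform} is stated with equisimilarity between arbitrary models; the canonical-model formulation you use is the right one here and is what the paper has in mind when it speaks of ``extending'' the characterization. Second, your automaton sketch is a bit thin on how the canonical-model closure beyond the input ABox tree is handled on the $\mathcal{T}_{1}$-side: the $\mn{con}_{\mathcal{T}_{2}}$-machinery tracks $\mathcal{C}_{(\mathcal{T}_{2},\cdot)}$, but to refute the $\Sigma_{2}$-simulation you must also follow anonymous successors generated by $\mathcal{T}_{1}$, not just consult a precomputed subsumption table. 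This is routine (one adds corresponding $q_{C}$-states for $C \in \mn{sub}(\mathcal{T}_{1})$ and lets the automaton explore the $\mathcal{T}_{1}$-generated subtrees symbolically), but it is worth making explicit since it is precisely where the ``pairs of subconcepts'' you mention earn their keep.
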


Theorem~\ref{ELqinsepT} has not been formulated in this generality in
the literature, so we briefly discuss proofs. In the rooted case, the
\ExpTime upper bound follows from the same bound for
Horn-$\mathcal{ALC}$ which we discuss below. In the non-rooted case,
the \ExpTime upper bound for the case $\Theta=(\Sigma,\Sigma)$ in
\cite{DBLP:journals/jsc/LutzW10} is based on
Theorem~\ref{queryconceptreduction} and a direct algorithm for
deciding $\Sigma$-$\EL^u$-entailment. It is not difficult to extend
this algorithm to the general case. Alternatively, one can obtain the
same bound by extending the model-theoretic characterization of
$\Sigma$-concept entailment in $\mathcal{EL}$ given in
Theorem~\ref{equisimuniform} to `$\Theta$-$\EL^u$-concept entailment',
where the concept inclusions in ${\sf
  cDiff}_{\Sigma}(\Tmc_{1},\Tmc_{2})$ are of the form $C \sqsubseteq
D$ with $C$ an \EL concept in signature $\Sigma_{1}$ and $D$ an
$\EL^u$ concept in signature $\Sigma_{2}$.  Based on such a
characterization, one can then modify the automata construction from
the proof of Theorem~\ref{thm:ceupperel} to obtain an \ExpTime upper
bound.

We note that, for acyclic $\mathcal{EL}$ TBoxes (and their extensions with role inclusions and domain and range restrictions), $\Theta$-CQ entailment 
can be decided in polynomial time. This can be proved by a straightforward generalization of the results in \cite{DBLP:journals/jair/KonevL0W12} where 
it is assumed that $\Theta=(\Sigma_{1},\Sigma_{2})$ with $\Sigma_{1}=\Sigma_{2}$. The proof extends the approach sketched in Section~\ref{sect:separability} for deciding concept inseparability for acyclic $\mathcal{EL}$ TBoxes. A prototype system deciding $\Theta$-CQ inseparability and
computing a representation of the logical difference for query inseparability is presented in \cite{CEX25}. 

\smallskip

We now consider query entailment and inseparability in
Horn-$\mathcal{ALC}$, which requires more effort than the cases
discussed so far. We will concentrate on CQs and rCQs. To start with,
it is convenient to break down our most basic problem, query
entailment, into two subproblems:
\begin{enumerate}

\item $\Theta$-\Qmc entailment \emph{over satisfiable ABoxes} is 
  defined in the same way as $\Theta$-\Qmc entailment except that only
  ABoxes satisfiable with both $\Tmc_1$ and $\Tmc_2$ can witness
  inseparability; see the remark after
  Definition~\ref{def:queryinsep}.

\item A TBox $\Tmc_1$ \emph{$\Sigma$-ABox entails} a TBox
  $\Tmc_2$, for a signature $\Sigma$, if for every $\Sigma$-ABox \Amc,
  unsatisfiability of $(\Tmc_2,\Amc)$ implies unsatisfiability of
  $(\Tmc_1,\Amc)$.

\end{enumerate}
It is easy to see that a TBox $\Tmc_2$ is $\Theta$-\Qmc-entailed by a
TBox $\Tmc_1$, $\Theta=(\Sigma_1,\Sigma_2)$, if $\Tmc_2$ is
$\Theta$-\Qmc-entailed by $\Tmc_1$ over satisfiable ABoxes and
$\Tmc_2$ is $\Sigma_1$-ABox entailed by $\Tmc_1$. For proving
decidability and upper complexity bounds, we can thus concentrate on
problems~1 and~2 above. ABox entailment, in fact, is reducible in
polynomial time and in a straightforward way to the containment
problem of ontology-mediated queries with CQs of the form $\exists x
\, A(x)$, which is \ExpTime-complete in $\hALC$~\cite{HLSW-IJCAI16}.
For deciding query entailment over satisfiable ABoxes, we can find a
transparent model-theoretic characterization. The following result
from \cite{BotoevaLRWZ16} is essentially a consequence of 
Theorem~\ref{crit:in}~(i), Theorem~\ref{crit:KB2}, and
Theorem~\ref{treeshape}, but additionally establishes a bound on the
branching degree of witness ABoxes.
\begin{theorem}
\label{thm:second}
Let $\T_{1}$ and $\T_{2}$ be $\hALC$ TBoxes and $\Theta=(\Sigma_{1},\Sigma_{2})$.  
Then 
\begin{description}
\item[\rm (\emph{i})] $\T_{1}$ $\Theta$-CQ entails $\T_{2}$ over
  satisfiable ABoxes iff, for all utree-shaped $\Sigma_{1}$-ABoxes $\A$
  of outdegree $\le |\T_{2}|$ and consistent with $\T_{1}$
  and~$\T_{2}$, $\C_{(\T_{2},\A)}$ is $\Sigma_{2}$-homomorphically
  embeddable into $\C_{(\T_{1},\A)}$;

\item[\rm (\emph{ii})] $\T_{1}$ $\Theta$-rCQ entails $\T_{2}$ over
  satisfiable ABoxes iff, for all utree-shaped $\Sigma_{1}$-ABoxes
  $\A$ of outdegree $\le |\T_{2}|$ and consistent with $\T_{1}$
  and~$\T_{2}$, $\C_{(\T_{2},\A)}$ is con-$\Sigma_{2}$-homomorphically
  embeddable into $\C_{(\T_{1},\A)}$.
\end{description}
\end{theorem}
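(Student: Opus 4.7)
The plan is to reduce the TBox-level statement to the KB-level characterizations of Theorem~\ref{crit:in}~(i) and Theorem~\ref{crit:KB2}, combined with the reduction to utree-shaped ABoxes and $\mathcal{EL}$-concepts provided by Theorem~\ref{treeshape}. The outdegree bound $|\T_2|$ will be the main technical hurdle.

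For the forward direction of~(i), I would start by fixing an arbitrary utree-shaped $\Sigma_1$-ABox $\A$ of outdegree $\le|\T_2|$ consistent with both $\T_1$ and $\T_2$. Since the KBs $\K_i=(\T_i,\A)$ are then both satisfiable, the assumption that $\T_1$ $\Theta$-CQ entails $\T_2$ over satisfiable ABoxes specializes to $\Sigma_2$-CQ entailment between $\K_1$ and $\K_2$, and Theorem~\ref{crit:in}~(i) hands over the required $\Sigma_2$-homomorphic embedding of $\C_{(\T_2,\A)}$ into $\C_{(\T_1,\A)}$. The forward direction of~(ii) proceeds analogously, but appeals to Theorem~\ref{crit:KB2} to obtain a con-$\Sigma_2$-homomorphic embedding between models in $\Mod_{\it utree}^b$, noting that the canonical models are in these classes.

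For the backward direction of~(i), I would contrapose: assume there is a $\Sigma_1$-ABox $\A^\ast$ (consistent with both TBoxes) and a $\Sigma_2$-CQ $\q(\vec a)$ witnessing $(\T_2,\A^\ast)\models\q(\vec a)$ while $(\T_1,\A^\ast)\not\models\q(\vec a)$. Invoking Theorem~\ref{treeshape}, I produce a utree-shaped $\Sigma_1$-ABox $\A$ and an $\mathcal{EL}$-concept $C$ in signature $\Sigma_2$ with either a rooted or an existential witness. A key intermediate observation is that the unraveling underlying Theorem~\ref{treeshape} preserves consistency with each TBox individually (any model of the original KB lifts to a model of the unraveled one), so $\A$ is again consistent with both $\T_1$ and $\T_2$. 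The chosen $C$ then witnesses that $\C_{(\T_2,\A)}$ is not $\Sigma_2$-homomorphically embeddable into $\C_{(\T_1,\A)}$, by Theorem~\ref{crit:in}~(i) applied to $\K_i=(\T_i,\A)$.

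The main obstacle is lowering the outdegree of $\A$ to $|\T_2|$ while preserving both consistency with $\T_1,\T_2$ and the non-embeddability between the canonical models. My plan here is a pumping/quotienting argument at each ABox node $a$: two $\Sigma_1$-successors $b,b'$ of $a$ via the same role can be identified (or one discarded) provided they realize the same ``canonical type'' in $\C_{(\T_2,\A)}$ relative to the subtree hanging below them, since then the subtree rooted at either $b$ or $b'$ gives the same contribution to both canonical models; the number of such types is controlled by the syntactic material of $\T_2$ and hence by $|\T_2|$. One needs to verify (a)~that such an identification does not destroy consistency with either TBox, which follows because canonical models are preserved up to unraveling, and (b)~that the non-embeddability is preserved, which follows because any $\Sigma_2$-homomorphism of the smaller canonical model into $\C_{(\T_1,\A)}$ can be lifted back to one on the original by duplicating images along the identified successors. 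Part~(ii) is handled in the same way, using Theorem~\ref{crit:KB2} in place of Theorem~\ref{crit:in}~(i) and the second part of Theorem~\ref{treeshape} for the rooted case, where the con-$\Sigma_2$ qualifier lets us ignore $\Sigma_2$-disconnected material when quotienting, making the outdegree reduction somewhat easier.
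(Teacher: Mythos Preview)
Your high-level decomposition coincides with the paper's: the text says Theorem~\ref{thm:second} ``is essentially a consequence of Theorem~\ref{crit:in}~(i), Theorem~\ref{crit:KB2}, and Theorem~\ref{treeshape}, but additionally establishes a bound on the branching degree of witness ABoxes,'' and your forward directions and the passage through utree-shaped ABoxes (including the observation that unraveling preserves consistency with each TBox) are exactly in this spirit. The paper gives no further detail on the outdegree bound and defers to~\cite{BotoevaLRWZ16}.

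The gap in your proposal is precisely in the outdegree reduction. First, the counting is off: a ``canonical type'' realised at a node of $\C_{(\T_2,\A)}$ is a subset of $\mn{sub}(\T_2)$, so there are $2^{O(|\T_2|)}$ such types, not $|\T_2|$. Your quotienting, taken at face value, therefore yields only an exponential outdegree bound, not the linear bound~$|\T_2|$ claimed in the statement. Second, the preservation step~(b) is not justified. Two $r$-successors $b,b'$ of $a$ with the same $\T_2$-type have isomorphic \emph{anonymous} trees attached by $\T_2$, but their ABox subtrees in $\A$ can be completely different. After deleting the subtree below $b'$ to form $\A'$, you must produce, from an embedding $\C_{(\T_2,\A')}\to\C_{(\T_1,\A')}$, an embedding $\C_{(\T_2,\A)}\to\C_{(\T_1,\A)}$; ``duplicating images along identified successors'' gives you targets only for the anonymous part at $b'$, not for the distinct ABox individuals underneath it (which are present in $\C_{(\T_2,\A)}$ and must be sent somewhere preserving the $\Sigma_1$-edges of $\A$, while homomorphisms must also preserve individual names). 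Dually, a non-embeddability witness sitting entirely in the subtree under $b'$ need not have any counterpart under $b$. So the quotienting argument, as stated, does not preserve the property you need in either direction; obtaining the sharp bound $|\T_2|$ requires a different construction of the witness ABox rather than a local merge of type-equivalent successors.
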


Based on Theorem~\ref{thm:second}, we can derive upper bounds for
query inseparability in $\hALC$ using tree automata techniques.
\begin{theorem}
\label{thm:lastone}
  In Horn-$\mathcal{ALC}$,
  \begin{description} 

  \item[\rm (\emph{i})] $\Theta$-rCQ entailment and inseparability of TBoxes is
    \ExpTime-complete;

  \item[\rm (\emph{ii})] $\Theta$-CQ entailment and inseparability of TBoxes is
    2\ExpTime-complete.
  \end{description}
\end{theorem}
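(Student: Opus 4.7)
\smallskip\noindent\textbf{Proof plan for Theorem~\ref{thm:lastone}.}
My plan is to decompose $\Theta$-$\mathcal{Q}$ entailment into (a)~$\Sigma_1$-ABox entailment and (b)~$\Theta$-$\mathcal{Q}$ entailment over satisfiable ABoxes, and to handle each component with tree-automata techniques that exploit Theorem~\ref{thm:second}. Since inseparability reduces to two entailment checks and since ABox entailment can be polynomially reduced to containment of ontology-mediated queries with atomic CQs $\exists x\,A(x)$---a problem known to be in \ExpTime for Horn-$\ALC$ \cite{HLSW-IJCAI16}---the heart of the matter is deciding $\Theta$-$\mathcal{Q}$ entailment over satisfiable ABoxes. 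By Theorem~\ref{thm:second}, this boils down to the following: check whether there exists a utree-shaped $\Sigma_1$-ABox \Amc of outdegree at most $|\Tmc_2|$ consistent with both TBoxes such that $\mathcal{C}_{(\Tmc_2,\Amc)}$ is \emph{not} (con-)$\Sigma_2$-homomorphically embeddable into $\mathcal{C}_{(\Tmc_1,\Amc)}$.

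The key step is to construct alternating tree automata that run on utree-shaped $\Sigma_1$-ABoxes (represented as node-labeled trees of bounded outdegree, padded with dummy roles where needed) and recognize precisely such counterexample ABoxes. For each input ABox, the automaton must build, on the fly, a representation of the two canonical models $\mathcal{C}_{(\Tmc_i,\Amc)}$ and simulate the game-theoretic criterion from Section~\ref{Sec:horngames} over their generating structures. Since each canonical model, when restricted to the witnesses attached to a single ABox individual, is a regular tree whose type is determined by the ABox type of that individual together with $\Tmc_i$, the required simulation can be precomputed in exponential time as a set of pairs $(\pi\mapsto\sigma)$ on the $2^{|\Tmc_i|}$-many types. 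Using emptiness of alternating tree automata (\ExpTime), this yields an \ExpTime procedure in the rCQ case, where con-$\Sigma_2$-homomorphisms need only cover the $\Sigma_2$-connected part reachable from ABox individuals---exactly what a tree automaton on the ABox can verify locally.

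The main obstacle is the CQ case, where one additionally needs to embed all $\Sigma_2$-disconnected witness subtrees of $\mathcal{C}_{(\Tmc_2,\Amc)}$ into $\mathcal{C}_{(\Tmc_1,\Amc)}$. These subtrees arise from witnesses generated by $\Tmc_2$ under arbitrary non-$\Sigma_2$ roles and may need to be mapped to arbitrary locations in $\mathcal{C}_{(\Tmc_1,\Amc)}$---including anonymous witness subtrees, not just ABox individuals. To handle this, I would enrich the automaton states with \emph{sets} of generator types that must be simultaneously matched at a common target node (echoing the set-valued states required for inverse roles in Section~\ref{Sec:horngames}), and additionally guess for each disconnected subtree a target type. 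The resulting state space is doubly exponential in $|\Tmc_1|+|\Tmc_2|$, giving a \twoExpTime upper bound.

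For the lower bounds, \ExpTime-hardness of rCQ entailment already transfers from the KB case (Theorem~\ref{thm:EL-ALCH}(ii)) since the ABox used there is fixed and of polynomial size, so it can be simulated inside the TBox by introducing designated concept names for the role of ABox constants. The \twoExpTime lower bound for the CQ case will require a genuine ATM reduction in the style of \cite{GhilardiLutzWolter-KR06}: one encodes the computation of an exponentially space-bounded ATM using exponentially long binary counters realized by non-$\Sigma_2$-connected witness structures, forcing counterexample ABoxes (and the homomorphisms witnessing separability) to have doubly exponential effective size. I expect the encoding to closely parallel the TBox-based constructions from Section~\ref{sect:el}, adapted so that the relevant exponential gadget lives inside an anonymous part of $\mathcal{C}_{(\Tmc_2,\Amc)}$ that only a (non-rooted) CQ can expose.
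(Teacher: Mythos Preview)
Your decomposition into ABox entailment plus entailment over satisfiable ABoxes, followed by tree automata over utree-shaped ABoxes via Theorem~\ref{thm:second}, is exactly the route the paper takes. The paper adds two technical wrinkles you do not mention: the inputs to the automata encode not only the ABox but also (annotations for) the relevant parts of the canonical models of both TBoxes, and the construction mixes alternating with non-deterministic automata to keep the complexity within the stated bounds. Your idea of precomputing the simulation game on types and then running an alternating automaton over the ABox is in the same spirit; the paper's ``bookkeeping in the input'' is an alternative way to achieve the same effect.

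There is one genuine gap in your lower-bound argument for~(\emph{i}). You cannot transfer \ExpTime-hardness from the KB case by ``simulating the ABox inside the TBox'': TBox non-entailment is an \emph{existential} statement over ABoxes, so a reduction from KB entailment with a \emph{fixed} ABox $\Amc_0$ would need to guarantee that if \emph{any} $\Sigma_1$-ABox witnesses TBox non-entailment then $\Amc_0$ already does---which does not follow from just naming the individuals of $\Amc_0$ by concept names. A clean route that does work is to invoke Theorem~\ref{thm:ceupperel} (\ExpTime-hardness of concept conservative extensions in $\EL$) together with Theorem~\ref{queryconceptreduction}(\emph{i}), which identifies $\Sigma$-concept entailment in $\EL$ with $(\Sigma,\Sigma)$-rCQ entailment; since $\EL \subseteq \hALC$, this yields the \ExpTime lower bound for~(\emph{i}). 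For~(\emph{ii}), you are right that a direct ATM reduction is needed; the paper confirms this but gives no details beyond citing~\cite{BotoevaLRWZ16}.
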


The automaton constructions are more sophisticated than those used for
proving Theorem~\ref{thm:decidability} because the ABox is not fixed.
The construction in \cite{BotoevaLRWZ16} uses traditional tree
automata whose inputs encode a tree-shaped ABox together with (parts
of) its tree-shaped canonical models for the TBoxes $\Tmc_1$ and
$\Tmc_2$. It is actually convenient to first replace
Theorem~\ref{thm:decidability} with a more fine-grained
characterization that uses simulations instead of homomorphisms and is
more operational. Achieving the upper bounds stated in
Theorem~\ref{thm:lastone} requires a careful automaton construction
using appropriate bookkeeping in the input and mixing alternating with
non-deterministic automata. The lower bound is based on an ATM
reduction.

Interestingly, the results presented above for query inseparability 
between {\sl DL-Lite} TBoxes have recently been applied to analyse containment and inseparability for 
TBoxes with declarative mappings that relate the signature of the data one
wants to query to the signature of the TBox that provides the interface for
formulating queries~\cite{DBLP:conf/kr/BienvenuR16}. We conjecture that the results
we presented for $\mathcal{EL}$ and Horn-$\mathcal{ALC}$ can also be lifted to
the extension by declarative mappings.  

We note that query inseparability between TBoxes is closely related to program expressiveness~\cite{DBLP:conf/pods/ArenasGP14} and to CQ-equivalence of schema mappings~\cite{DBLP:conf/pods/FaginKNP08,DBLP:journals/mst/PichlerSS13}. The latter
is concerned with declarative mappings from a source signature $\Sigma_{1}$
to a target signature $\Sigma_{2}$. Such mappings $\Mmc_{1}$ and $\Mmc_{2}$ are CQ-equivalent if, for any data instance in $\Sigma_{1}$, the certain answers to CQs in the signature $\Sigma_{2}$ under $\Mmc_{1}$ and $\Mmc_{2}$ coincide. The computational complexity of deciding CQ-equivalence of
schema mappings has been investigated in detail~\cite{DBLP:conf/pods/FaginKNP08,DBLP:journals/mst/PichlerSS13}. Regarding the former, translated into the language of DL the program expressive power of a TBox $\Tmc$ is the set of all triples $(\Amc,\q,\vec{a})$ such
that $\Amc$ is an ABox, $\q$ is a CQ, and $\vec{a}$ is a tuple in $\ind(\Amc)$ such that $\Tmc,\Amc\models q(\vec{a})$. It follows that two TBoxes $\Tmc_{1}$ and $\Tmc_{2}$ are $\Theta$-CQ inseparable for a pair $\Theta =(\Sigma_{1},\Sigma_{2})$ iff $\Tmc_{1}$ and $\Tmc_{2}$ have the same program expressive power.





\section{Discussion}
\label{sect:final}

We have discussed a few inseparability relations between description
logic TBoxes and KBs, focussing on model-theoretic characterizations
and deciding inseparability.  In this section, we briefly survey
three other important topics that were not covered in the main
text. (1)~We observe that many inseparability relations considered above (in particular, concept inseparability) fail to satisfy natural
robustness conditions such as robustness under replacement, and discuss
how to overcome this. (2)~Since inseparability tends to be of high
computational complexity or even undecidable, it is interesting to
develop approximation algorithms; we present a brief overview of the
state of the art. (3)~One is often not only interested in deciding
inseparability, but also in computing useful members of an equivalence
class of inseparable ontologies such as uniform interpolants and
the result of forgetting irrelevant symbols from an ontology. We
briefly survey results in this area as well.

\medskip
\noindent
{\bf Inseparability and robustness.} 
We have seen that robustness under replacement is a central property
in applications of model inseparability to ontology reuse and module
extraction. In principle, one can of course also use other
inseparability relations for these tasks. The corresponding notion of
robustness under replacement can be defined in a straightforward
way~\cite{DBLP:series/lncs/KonevLWW09,KWZ10}.
\begin{definition}\em 
Let $\Lmc$ be a DL and $\equiv_{\Sigma}$ an inseparability relation. Then $\Lmc$ is \emph{robust
under replacement} for $\equiv_{\Sigma}$ if $\Tmc_{1} \equiv_{\Sigma} \Tmc_{2}$ implies 
that $\Tmc_{1} \cup \Tmc \equiv_{\Sigma} \Tmc_{2} \cup \Tmc$ for all $\Lmc$ TBoxes $\Tmc_{1},\Tmc_{2}$
and $\Tmc$ such that ${\sf sig}(\Tmc) \cap {\sf sig}(\Tmc_{1}\cup \Tmc_{2})\subseteq \Sigma$.\!\footnote{Robustness under replacement can be defined
for KBs as well and is equally important in that case. In this short discussion, however, we only consider TBox inseparability.}
\end{definition}

Thus, robustness under replacement ensures that $\Sigma$-inseparable
TBoxes can be equivalently replaced by each other even if a new TBox
that shares with $\Tmc_{1}$ and $\Tmc_{2}$ only $\Sigma$-symbols is
added to both. This seems a useful requirement not only for TBox
re-use and module extraction, but also for versioning and
forgetting. Unfortunately, with the exception of model inseparability,
none of the inseparability relations considered in the main part of
this survey is robust under replacement for the DLs in question. The
following counterexample is a variant of examples given in
\cite{KWZ10,DBLP:conf/aaai/KonevKLSWZ11}.

\begin{example}
Suppose $\Tmc_{1}=\emptyset$, $\Tmc_{2}= \{A \sqsubseteq \exists r.B, E \sqcap B \sqsubseteq \bot\}$, and $\Sigma=\{A,E\}$.
Then $\Tmc_{1}$ and $\Tmc_{2}$ are $\Sigma$-concept inseparable in expressive DLs such as $\mathcal{ALC}$ and they are 
$\Theta$-CQ inseparable for $\Theta=(\Sigma,\Sigma)$.
However, for $\Tmc= \{ \top \sqsubseteq E\}$ the TBoxes $\Tmc_{1}\cup
\Tmc$ and $\Tmc_{2}\cup \Tmc$ are neither $\Sigma$-concept inseparable nor
$\Theta$-CQ inseparable.
\end{example}

The only DLs for which concept inseparability is robust under replacement are certain extensions of $\mathcal{ALC}$ with the universal role.
Indeed, recall that by $\Lmc^{u}$ we denote the extension of a DL $\Lmc$ with the universal role $u$. 
Assume that $\Tmc_{1}$ and $\Tmc_{2}$ are $\Sigma$-concept inseparable in $\mathcal{L}^{u}$ and let $\Tmc$ be an $\mathcal{L}^{u}$ TBox
with ${\sf sig}(\Tmc) \cap {\sf sig}(\Tmc_{1}\cup \Tmc_{2}) \subseteq \Sigma$. As $\Lmc$ extends $\mathcal{ALC}$, it is known that $\Tmc$ is logically
equivalent to a TBox of the form $\{ C \equiv \top\}$, where $C$ is an $\Lmc^{u}$ concept. 
Let $D_{0} \sqsubseteq D_{1}$ be a $\Sigma$-CI in $\mathcal{L}^{u}$. Then
\begin{eqnarray*}
\Tmc_{1}\cup \Tmc \models D_{0} \sqsubseteq D_{1} & \quad \mbox{iff}
\quad & \Tmc_{1} \models D_0 \sqcap \forall u.C \sqsubseteq D_{1}\\
                                                   & \mbox{iff} &
                                                   \Tmc_{2} \models
                                                   D_0 \sqcap \forall u.C \sqsubseteq D_{1}\\
																									 & \mbox{iff} & \Tmc_{2}\cup \Tmc \models D_{0} \sqsubseteq D_{1},
\end{eqnarray*}
where the second equivalence holds by $\Sigma$-concept inseparability of $\Tmc_{1}$ and $\Tmc_{2}$ \emph{if we assume that} 
${\sf sig}(\Tmc) \subseteq \Sigma$ (and so ${\sf sig}(C) \subseteq \Sigma$). Recall that, in the definition of robustness under replacement, 
we only require ${\sf sig}(\Tmc) \cap {\sf sig}(\Tmc_{1}\cup \Tmc_{2}) \subseteq \Sigma$,  
and so an additional step is needed for the argument to go through. This step is captured by the following definition.

\begin{definition}\em 
  Let $\Lmc$ be a DL and $\equiv_{\Sigma}$ an inseparability
  relation. Then $\Lmc$ is \emph{robust under vocabulary extensions}
  for $\equiv_{\Sigma}$ if $\Tmc_{1} \equiv_{\Sigma} \Tmc_{2}$ implies
  that $\Tmc_{1} \equiv_{\Sigma'} \Tmc_{2}$ for all $\Sigma'\supseteq
  \Sigma$ with ${\sf sig}(\Tmc_{1} \cup \Tmc_{2}) \cap
  \Sigma'\subseteq \Sigma$.
\end{definition}

Let us return to the argument above. Clearly, if $\mathcal{L}^{u}$ is
robust under vocabulary extensions for concept inseparability, then
the second equivalence is justified and we can conclude that
$\mathcal{L}^{u}$ is robust under replacement for concept
inseparability. In \cite{DBLP:series/lncs/KonevLWW09}, robustness
under vocabulary extensions is investigated for many standard DLs and
inseparability relations. In particular, the following is shown:
\begin{theorem}\label{univers}
The DLs $\mathcal{ALC}^{u}$, $\mathcal{ALCI}^{u}$,
$\mathcal{ALCQ}^{u}$, and $\mathcal{ALCQI}^{u}$ are robust under
vocabulary extensions for concept inseparability, 
and thus also robust under replacement. 
\end{theorem}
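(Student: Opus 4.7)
My plan is to prove robustness under vocabulary extensions first, and then derive robustness under replacement as a corollary using the argument already sketched in the text preceding the theorem. For the second implication: given $\Tmc_1 \equiv_\Sigma \Tmc_2$ in $\Lmc^u$ and a TBox $\Tmc$ with $\sig(\Tmc)\cap\sig(\Tmc_1\cup\Tmc_2)\subseteq\Sigma$, set $\Sigma'=\Sigma\cup\sig(\Tmc)$; then $\Sigma'\supseteq\Sigma$ with $\sig(\Tmc_1\cup\Tmc_2)\cap\Sigma'\subseteq\Sigma$, so robustness under vocabulary extensions gives $\Tmc_1 \equiv_{\Sigma'} \Tmc_2$. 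Internalize $\Tmc=\{C_i\sqsubseteq D_i\}$ as the $\Lmc^u$-concept $E_\Tmc=\prod_i(\neg C_i\sqcup D_i)$, and observe that $\Tmc_j\cup\Tmc\models D_0\sqsubseteq D_1$ iff $\Tmc_j\models (D_0\sqcap \forall u.E_\Tmc)\sqsubseteq D_1$. Since the rewritten CI is a $\Sigma'$-CI in $\Lmc^u$, the equivalence of entailment on either side follows from $\Sigma'$-concept inseparability, yielding $\Tmc_1\cup\Tmc\equiv_\Sigma\Tmc_2\cup\Tmc$.

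The core work is thus to establish robustness under vocabulary extensions. The plan is to use a bisimulation-based characterization analogous to Theorem~\ref{bisimuniform}, appropriately adapted to each DL: for $\Lmc^u$, the characterization should use \emph{global} $\Sigma$-bisimulations (reflecting that $u$ is interpreted as $\Delta\times\Delta$); for $\mathcal{ALCQI}^u$ it should combine globality with the counting and inverse-respecting variants of bisimulation referenced around Theorem~\ref{thm:ceupper1}. Suppose for contradiction $\Tmc_1\equiv_\Sigma\Tmc_2$ but some $\Sigma'$-CI $D_0\sqsubseteq D_1$ separates them, say $\Tmc_2\models D_0\sqsubseteq D_1$ while some model $\Imc$ of $\Tmc_1$ has $d\in(D_0\sqcap\neg D_1)^\Imc$. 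By the characterization there is a model $\Jmc$ of $\Tmc_2$ and $d'\in\Delta^\Jmc$ together with a global $\Sigma$-bisimulation $S$ with $(d,d')\in S$. The goal is to build a model $\Jmc'$ of $\Tmc_2$ with $d'\in(D_0\sqcap\neg D_1)^{\Jmc'}$, contradicting the separation.

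To obtain $\Jmc'$, I would copy the interpretation of every symbol in $\Sigma'\setminus\Sigma$ from $\Imc$ to $\Jmc$ along $S$, leaving the $\Sigma$-part and the symbols of $\sig(\Tmc_2)\setminus\Sigma'$ untouched. Because $\Sigma'\setminus\Sigma$ is disjoint from $\sig(\Tmc_1\cup\Tmc_2)$, this modification preserves $\Jmc$'s being a model of $\Tmc_2$. What remains is to arrange that $(\Imc,d)$ and $(\Jmc',d')$ are $\Sigma'$-bisimilar (in the appropriate sense for the DL), which forces $d'\in(D_0\sqcap\neg D_1)^{\Jmc'}$ by the invariance direction of the bisimulation characterization.

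The main obstacle lies precisely in the copying step: a single element of $\Delta^\Jmc$ may be $S$-related to many elements of $\Delta^\Imc$ that disagree on symbols in $\Sigma'\setminus\Sigma$, so no pointwise copy can be simultaneously consistent with $\Imc$ on all such symbols. I would resolve this by replacing $\Jmc$ with a suitable \emph{fibered product} along $S$: take the domain to be $S$ itself, let $\pi_1,\pi_2$ be the projections, interpret $\Sigma$-symbols coherently (possible because $S$ is a $\Sigma$-bisimulation), interpret $\Sigma'\setminus\Sigma$ symbols via $\pi_1$ into $\Imc$, and interpret any remaining symbols of $\sig(\Tmc_2)$ via $\pi_2$ into $\Jmc$. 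Using the standard fact that projections out of such a product are bisimulations of the corresponding kind, $\pi_1$ witnesses $\Sigma'$-bisimilarity with $\Imc$ and $\pi_2$ witnesses $\sig(\Tmc_2)$-bisimilarity with $\Jmc$, so the product is a model of $\Tmc_2$ satisfying $D_0\sqcap\neg D_1$ at $(d,d')$. Setting up this product cleanly for $\Lmc\in\{\mathcal{ALC},\mathcal{ALCI},\mathcal{ALCQ},\mathcal{ALCQI}\}$ is the delicate part: one has to choose $\omega$-saturated $\Imc$ and $\Jmc$ (or, equivalently, unravel into bounded-branching tree models) so that the zig/zag and number-respecting conditions transfer to the product, and one has to be careful with $\mathcal{I}$ and $\mathcal{Q}$ which demand matching inverse-successors and matching cardinalities of $\Sigma'$-successor sets, respectively.
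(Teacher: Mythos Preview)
The paper does not prove this theorem; it is stated with attribution to~\cite{DBLP:series/lncs/KonevLWW09}, so there is no in-paper argument to compare against. Your derivation of robustness under replacement from robustness under vocabulary extensions via internalisation of $\Tmc$ as $\forall u.E_\Tmc$ is correct and is exactly the argument the paper sketches in the paragraph preceding the theorem.

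For robustness under vocabulary extensions, the global-bisimulation characterisation together with a fibered product over the bisimulation $S$ is a sound route for $\mathcal{ALC}^{u}$ and $\mathcal{ALCI}^{u}$: globality of $S$ makes both projections surjective, and a direct check shows that $\pi_1$ is a $\Sigma'$-bisimulation (fresh symbols are pulled back from $\Imc$) while $\pi_2$ is a $\sig(\Tmc_2)$-bisimulation (so the product is still a model of $\Tmc_2$). For $\mathcal{ALCQ}^{u}$ and $\mathcal{ALCQI}^{u}$, however, the naive product you describe does \emph{not} make the projections graded bisimulations. Concretely, if two distinct $r$-successors $e_1,e_2$ of $d$ in $\Imc$ are both $S$-related to the same $r$-successor $e'$ of $d'$ in $\Jmc$---nothing in the definition of a graded $\Sigma$-bisimulation forbids this, since the required bijection may send $e_1,e_2$ to other elements---then $(d,d')$ acquires at least the two $r$-successors $(e_1,e'),(e_2,e')$ in the product while $d'$ has only one in $\Jmc$, and $\pi_2$ violates the count. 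The two remedies you mention are not equivalent and neither one is the fix on its own: $\omega$-saturation gives Hennessy--Milner (bisimilarity coincides with $\Lmc_\Sigma$-equivalence) but does not alter the product's cardinalities, and tree unravelings still permit the overlap just described. What is actually needed is to build the product \emph{along the witnessing bijections} supplied by the graded bisimulation: for $r\in\Sigma$ set $((d,d'),(e,e'))\in r^{P}$ iff $(d,e)\in r^{\Imc}$ and $e'=\beta_{(d,d'),r}(e)$; for roles outside $\Sigma$, first fix (using globality and choice) a single $S$-partner for every element so that the pullbacks are functional. With this refinement both projections become graded (and, where relevant, inverse-respecting) bisimulations, and your argument closes for all four logics.
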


Because of Theorem~\ref{univers}, it would be interesting to
investigate concept inseparability for DLs with the universal role and
establish, for example, the computational complexity of concept
inseparability. We conjecture that the techniques used to prove the
2\ExpTime upper bounds without the universal role can be used to
obtain 2\ExpTime upper bounds here as well.

We now consider robustness under replacement for DLs without the universal role. To simplify the discussion, we consider \emph{weak robustness
under replacement}, which preserves inseparability only if TBoxes $\Tmc$ with ${\sf sig}(\Tmc) \subseteq  \Sigma$ are added
to $\Tmc_{1}$ and $\Tmc_{2}$, respectively. It is then a separate task
to lift weak robustness under replacement to full robustness under replacement using, for example, robustness under vocabulary extensions.
It is, of course, straightforward to extend the inseparability
relations studied in this survey in a minimal way so that weak
robustness under replacement is achieved.
For example, say that two $\Lmc$ TBoxes $\Tmc_{1}$ and $\Tmc_{2}$ are \emph{strongly $\Sigma$-concept inseparable in $\Lmc$}
if, for all $\Lmc$ TBoxes $\Tmc$ with ${\sf sig}(\Tmc) \subseteq \Sigma$, we have that $\Tmc_{1} \cup \Tmc$
and $\Tmc_{2} \cup \Tmc$ are $\Sigma$-concept inseparable. Similarly, say that two $\Lmc$ TBoxes $\Tmc_{1}$ and $\Tmc_{2}$ are \emph{strongly $\Theta$-$\mathcal{Q}$-inseparable} 
if, for all $\Lmc$ TBoxes $\Tmc$ with ${\sf sig}(\Tmc) \subseteq \Sigma_{1}\cap \Sigma_{2}$, we have that $\Tmc_{1} \cup \Tmc$
and $\Tmc_{2} \cup \Tmc$ are $\Theta$-$\mathcal{Q}$ inseparable (we assume $\Theta=(\Sigma_{1},\Sigma_{2})$). 
Unfortunately, with the exception of results for the {\sl DL-Lite} family, nothing is known about the properties of the resulting inseparability relations. 
It is proved in \cite{DBLP:conf/aaai/KonevKLSWZ11} that strong $\Theta$-CQ inseparability is still in \ExpTime for $\DLcH$ if $\Sigma_{1}=\Sigma_{2}$. We conjecture that
this result still holds for arbitrary $\Theta$. A variant of strong $\Theta$-CQ inseparability is also discussed in~\cite{KWZ10} and
analyzed for $\DLc$ extended with (some) Boolean operators and unqualified number restrictions. However, the authors of~\cite{KWZ10} do not
consider CQ-inseparability as defined in this survey but inseparability with respect to generalized CQs that use atoms $C(x)$ with $C$ a $\Sigma$-concept in $\Lmc$ 
instead of a concept name in $\Sigma$. This results in a stronger notion of inseparability that is preserved under definitorial extensions and
has, for the {\sl DL-Lite} dialects considered, many of the robustness properties introduced above. It would be of interest to extend this notion of
query inseparability to DLs such as $\mathcal{ALC}$. Regarding strong concept and query inseparability, it would be interesting to investigate
its algorithmic properties for $\mathcal{EL}$ and Horn-$\mathcal{ALC}$.  


\medskip
\noindent
{\bf Approximation.}\label{sec:approximations}
We have argued throughout this survey that inseparability relations and
conservative extensions can play an important role in a variety of 
applications including ontology versioning, ontology refinement,
ontology re-use, ontology modularization, ontology mapping, knowledge base
exchange and forgetting.  One cannot help noticing, though, another
common theme: the high computational complexity of the corresponding reasoning
tasks, which can hinder the \emph{practical use} of these notions or even make
it infeasible. We now give a brief overview of methods that approximate the
notions introduced in the previous sections while incurring lower computational
costs. We will focus on modularization and logical difference.

Locality-based approximations have already been discussed in
Section~\ref{sec:locality}, where we showed how the extraction of
depleting modules can be reduced to standard ontology reasoning.
Notice that $\emptyset$-locality, in turn, can be approximated with a
simple syntactic check. Following~\cite{DBLP:series/lncs/GrauHKS09},
let $\Sigma$ be a signature. Define two sets of $\mathcal{ALCQI}$
concepts $\Cmc^\bot_\Sigma$ and $\Cmc^\top_\Sigma$ as follows:
\begin{align*}
\Cmc^\bot_\Sigma &~::=~ A^\bot \ \mid \ \lnot C^\top \ \mid \ C\sqcap C^\bot \ \mid \ \exists r^\bot.C \mid \ 
\exists r.C^\bot \ \mid \ \geq n\, r^\bot.C \ \mid \ \geq n\,r.C^\bot,\\
\Cmc^\top_\Sigma &~::=~ \lnot C^\bot \ \mid \ C_1^\top\sqcap C_2^\top,
\end{align*}
where $A^\bot\notin \Sigma$ is an atomic concept, $r$ is a role (a role name or
an inverse role) and  $C$ is a concept, $C^\bot\in\Cmc^\bot_\Sigma$,
$C^\top_i\in\Cmc^\top_\Sigma$, $i=1,2$, and $r^\bot$ is 
 $r$ or $r^-$, for $r\in\NR\setminus\Sigma$.
A CI $\alpha$ is \emph{syntactically $\bot$-local} w.r.t.\ $\Sigma$ if it is of
the form $C^\bot\sqsubseteq C$ or $C\sqsubseteq C^\top$. A TBox $\Tmc$ is $\bot$-\emph{local}
if all CIs in $\Tmc$ are $\bot$-local.
Then every TBox $\Tmc$ that is syntactically $\bot$-local w.r.t.\ a signature $\Sigma$ is 
$\emptyset$-local w.r.t.\ $\Sigma$, as shown in~\cite{DBLP:series/lncs/GrauHKS09}.
Notice that checking whether a CI is syntactically $\bot$-local can be done in
linear time. A dual notion of syntactic $\top$-locality has been
introduced in~\cite{JairGrau}. 
Both notions can be used to define $\bot$-local and $\top$-local
modules; $\top$-  and $\bot$-locality module extraction can be iterated leading
to smaller modules~\cite{DBLP:conf/dlog/SattlerSZ09}.

A comprehensive study of different locality
flavours~\cite{DBLP:conf/semweb/VescovoKPS0T13} identified that there is no
statistically significant difference in the sizes of semantic and syntactic
locality modules. In contrast, \cite{DBLP:journals/ai/KonevL0W13}
found that the difference in size between minimal modules (only available for
acyclic \EL TBoxes) and locality-based approximations can be large.  In a
separate line of research, \cite{DBLP:conf/ecai/GatensKW14} showed that
intractable depleting module approximations for unrestricted OWL ontologies
based on  reductions to QBF can also be significantly smaller, indicating a
possibility for better tractable approximations.
Reachability-based
approximations~\cite{DBLP:conf/dlog/NortjeBM13,DBLP:conf/lpar/NortjeBM13}
refine syntactic locality modules. While they are typically smaller,
self-contained and justification preserving, reachability modules are only
$\Sigma$-concept inseparable from the original TBox but not $\Sigma$-model inseparable. A variety of tractable
approximations based on notions of inseparability ranging form
classification inseparability to model inseparability can be computed
by reduction to Datalog reasoning~\cite{DBLP:journals/jair/RomeroKGH16}.

Syntactic restrictions on elements of $\cDiff(\Tmc_1,\Tmc_2)$ lead to
approximations of concept inseparability.  In \cite{HORROCKS}, the  authors consider
counterexamples of the form $A\sqsubseteq B$, $A\sqsubseteq\lnot B$,
$A\sqsubseteq\exists r.B$, $A\sqsubseteq \forall r.B$ and $r\sqsubseteq s$
only, where $A$, $B$ are $\Sigma$-concept names and $r$, $s$ are
$\Sigma$-roles, and use standard reasoners to check for entailment.  This
approach has been extended in~\cite{DBLP:conf/semweb/GoncalvesPS12a} to allow
inclusions between $\Sigma$-concepts to be constructed in accordance with some
grammar rules. In~\cite{DBLP:conf/aaai/KonevKLSWZ11}, CQ-inseparability for
$\DLcH$ is approximated by reduction to a tractable simulation check between
the canonical models. An experimental evaluation showed that this approach is
incomplete in a very small number of cases on real-world ontologies.

\medskip
\noindent {\bf Computing representatives.} Inseparability relations
are equivalence relations on classes of TBoxes. 
One is often interested not only in deciding inseparability, but also in computing useful members
of an equivalence class of inseparable ontologies such as uniform interpolants (or, equivalently, the result of forgetting irrelevant symbols from an ontology).
%
Recall from
Section~\ref{sec:inseparability} that an ontology $\mathcal{O}_{\sf
  forget}$ is the result of forgetting a signature $\Gamma$ in
$\mathcal{O}$ for an inseparability relation $\equiv$ if $\mathcal{O}$
uses only symbols in $\Sigma={\sf sig}(\mathcal{O})\setminus \Gamma$
and $\mathcal{O}$ and $\mathcal{O}_{\sf forget}$ are
$\Sigma$-inseparable for $\equiv$. Clearly, $\mathcal{O}_{\sf forget}$
can be regarded as a representation of its equivalence class under
$\equiv_{\Sigma}$. For model-inseparability, this representation is
unique up to logical equivalence while this need not be the case for other
inseparability relations. 

Forgetting has been studied extensively for various inseparability
relations. A main problems addressed in the literature is that, for
most inseparability relations and ontology languages, the result of
forgetting is not guaranteed to be expressible in the language of the
original ontology. For example, for the TBox $\Tmc=\{A \sqsubseteq \exists r.B, B
\sqsubseteq \exists r.B\}$,  there is no $\mathcal{ALC}$
TBox using only symbols from $\Sigma=\{A,r\}$ that is $\Sigma$-concept
inseparable from $\Tmc$.  This problem gives rise to three interesting
research problems: given an ontology $\mathcal{O}$ and signature
$\Sigma$, can we decide whether the result of forgetting exists in the
language of $\mathcal{O}$ and, if so, compute it? If not, can we
approximate it in a principled way?  Or can we express it in a more
powerful ontology language?  The existence and computation of uniform
interpolants for TBoxes under concept inseparability has been studied
in \cite{DBLP:conf/ijcai/KonevWW09} for acyclic $\mathcal{EL}$ TBoxes,
in \cite{DBLP:journals/ai/NikitinaR14,DBLP:conf/kr/LutzSW12} for
arbitrary $\mathcal{EL}$ TBoxes, and for $\mathcal{ALC}$ and more
expressive DLs in
\cite{DBLP:conf/ijcai/LutzW11,DBLP:conf/cade/KoopmannS14}. The
generalization to KBs has been studied in
\cite{DBLP:journals/amai/WangWTP10,DBLP:journals/ci/WangWTPA14,DBLP:conf/aaai/KoopmannS15}. Approximations
of uniform interpolants obtained by putting a bound on the role depth
of relevant concept inclusions are studied in
\cite{DBLP:conf/semweb/WangWTPA09,DBLP:conf/aaai/ZhouZ11,DBLP:journals/ci/WangWTPA14,DBLP:conf/kr/LudwigK14}. In~\cite{DBLP:conf/aaai/KoopmannS15,DBLP:conf/cade/KoopmannS14},
(a weak form of) uniform interpolants that do not exist in the
original DL are captured using fix-point operators.  The relationship
between deciding concept inseparability and deciding the existence of
uniform interpolants is investigated in
\cite{DBLP:conf/ijcai/LutzW11}. Forgetting under model inseparability
has been studied extensively in logic \cite{GabbaySchmidtSzalas08} and
more recently for DLs \cite{DBLP:conf/semweb/ZhaoS15}. Note that
the computation of universal CQ solutions in knowledge exchange
\cite{DBLP:aij-kb-exchange}  is identical to forgetting the 
signature of the original KB under $\Sigma$-CQ-inseparability.

Uniform interpolants are not the only useful representatives of
equivalence classes of inseparable ontologies. In the KB case, for
example, it is natural to ask whether for a given KB $\K$ and
signature $\Sigma$ there exists a KB $\K'$ with empty TBox that is
$\Sigma$-query inseparable from $\K$. In this case, answering a
$\Sigma$-query in $\K$ could be reduced to evaluating the query in an
ABox. Another example is TBox rewriting, which asks whether for a
given TBox $\Tmc$ in an expressive DL there exists a TBox $\Tmc'$ that
is ${\sf sig}(\Tmc)$-inseparable from $\Tmc$ in a less expressive
DL. In this case tools that are only available for the less expressive
DL but not for the expressive DL would become applicable to the
rewritten TBox. First results regarding this question have been  obtained in
\cite{rewritecons16}.


\smallskip
\noindent
{\bf Acknowledgements.} Carsten Lutz was supported by 
 ERC grant 647289. Boris Konev, Frank Wolter and Michael Zakharyaschev were supported by the UK EPSRC grants EP/M012646, EP/M012670, EP/H043594, and EP/H05099X. 

\bibliographystyle{abbrv}
\bibliography{bibliography}

\begin{thebibliography}{100}

\bibitem{Abitebouletal95}
S.~Abiteboul, R.~Hull, and V.~Vianu.
\newblock {\em Foundations of Databases}.
\newblock Addison-Wesley, 1995.

\bibitem{DBLP:aij-kb-exchange}
M.~Arenas, E.~Botoeva, D.~Calvanese, and V.~Ryzhikov.
\newblock Knowledge base exchange: The case of {OWL} 2 {QL}.
\newblock {\em Artificial Intelligence}, 238:11--62, 2016.

\bibitem{DBLP:conf/pods/ArenasGP14}
M.~Arenas, G.~Gottlob, and A.~Pieris.
\newblock Expressive languages for querying the semantic web.
\newblock In {\em Proceedings of the 33rd {ACM} {SIGMOD-SIGACT-SIGART}
  Symposium on Principles of Database Systems, PODS'14, Snowbird, UT, USA, June
  22-27, 2014}, pages 14--26, 2014.

\bibitem{ACKZ09}
A.~Artale, D.~Calvanese, R.~Kontchakov, and M.~Zakharyaschev.
\newblock The {DL}-{L}ite family and relations.
\newblock {\em Journal of Artificial Intelligence Research (JAIR)}, 36:1--69,
  2009.

\bibitem{DBLP:journals/jair/BaaderBL16}
F.~Baader, M.~Bienvenu, C.~Lutz, and F.~Wolter.
\newblock Query and predicate emptiness in ontology-based data access.
\newblock {\em Journal of Artificial Intelligence Research (JAIR)}, 56:1--59,
  2016.

\bibitem{BaBL05}
F.~Baader, S.~Brandt, and C.~Lutz.
\newblock Pushing the {EL} envelope.
\newblock In {\em Proc.\ of the 19th Int.\ Joint Conf.\ on Artificial
  Intelligence (IJCAI~2005)}, pages 364--369, 2005.

\bibitem{BCMNP03}
F.~Baader, D.~Calvanese, D.~McGuinness, D.~Nardi, and P.~F. Patel-Schneider,
  editors.
\newblock {\em The Description Logic Handbook: {T}heory, Implementation and
  Applications}.
\newblock Cambridge University Press, 2003.
\newblock (2nd edition, 2007).

\bibitem{Berger66}
R.~Berger.
\newblock The undecidability of the domino problem.
\newblock {\em Memoirs of the AMS}, 66, 1966.

\bibitem{HLSW-IJCAI16}
M.~Bienvenu, P.~Hansen, C.~Lutz, and F.~Wolter.
\newblock First order-rewritability and containment of conjunctive queries in
  {H}orn description logics.
\newblock In {\em Proc.\ of the 25th Int.\ Joint Conf.\ on Artificial
  Intelligence (IJCAI~2016)}, pages 965 -- 971, 2016.

\bibitem{DBLP:conf/kr/BienvenuR16}
M.~Bienvenu and R.~Rosati.
\newblock Query-based comparison of mappings in ontology-based data access.
\newblock In {\em Proc.\ of the 15th Int.\ Conf.\ on the Principles of
  Knowledge Representation and Reasoning (KR~2016)}, pages 197--206, 2016.

\bibitem{DBLP:journals/tods/BienvenuCLW14}
M.~Bienvenu, B.~ten Cate, C.~Lutz, and F.~Wolter.
\newblock Ontology-based data access: {A} study through {D}isjunctive
  {D}atalog, {CSP}, and {MMSNP}.
\newblock {\em ACM Trans.\ on Database Systems}, 39(4):33:1--33:44, 2014.

\bibitem{DBLP:conf/birthday/BonattiFLSW14}
P.~A. Bonatti, M.~Faella, C.~Lutz, L.~Sauro, and F.~Wolter.
\newblock Decidability of circumscribed description logics revisited.
\newblock In {\em Advances in Knowledge Representation, Logic Programming, and
  Abstract Argumentation - Essays Dedicated to Gerhard Brewka on the Occasion
  of His 60th Birthday}, pages 112--124, 2015.

\bibitem{Borgeretal97}
E.~B{\"o}rger, E.~Gr{\"a}del, and Y.~Gurevich.
\newblock {\em The Classical Decision Problem}.
\newblock Perspectives in Mathematical Logic. Springer, 1997.

\bibitem{BotoevaKRWZ16}
E.~Botoeva, R.~Kontchakov, V.~Ryzhikov, F.~Wolter, and M.~Zakharyaschev.
\newblock Games for query inseparability of description logic knowledge bases.
\newblock {\em Artificial Intelligence}, 234:78 -- 119, 2016.

\bibitem{BotoevaLRWZ16}
E.~Botoeva, C.~Lutz, V.~Ryzhikov, F.~Wolter, and M.~Zakharyaschev.
\newblock Query-based entailment and inseparability for {ALC} ontologies.
\newblock In {\em Proc.\ of the 25th Int.\ Joint Conf.\ on Artificial
  Intelligence (IJCAI~2016)}, pages 1001--1007, 2016.

\bibitem{BotoevaLRWZ16-arxiv}
E.~Botoeva, C.~Lutz, V.~Ryzhikov, F.~Wolter, and M.~Zakharyaschev.
\newblock Query-based entailment and inseparability for {ALC} ontologies
  ({F}ull {V}ersion).
\newblock CoRR Technical Report abs/1604.04164, arXiv.org e-Print archive,
  2016.
\newblock Available at \protect\url{http://arxiv.org/abs/1604.04164}.

\bibitem{ByPitt}
P.~Byers and D.~Pitt.
\newblock Conservative extensions: a cautionary note.
\newblock {\em {E}{A}{T}{C}{S}-Bulletin}, pages 196--201, 1997.

\bibitem{CalvaneseGLLR06}
D.~Calvanese, G.~De~Giacomo, D.~Lembo, M.~Lenzerini, and R.~Rosati.
\newblock Data complexity of query answering in description logics.
\newblock In {\em Proc.\ of the 10th Int.\ Conf.\ on the Principles of
  Knowledge Representation and Reasoning (KR~2006)}, pages 260--270, 2006.

\bibitem{CDLLR07}
D.~Calvanese, G.~De~Giacomo, D.~Lembo, M.~Lenzerini, and R.~Rosati.
\newblock Tractable reasoning and efficient query answering in description
  logics: The {DL-Lite} family.
\newblock {\em Journal of Automated Reasoning}, 39(3):385--429, 2007.

\bibitem{CalvaneseE007}
D.~Calvanese, T.~Eiter, and M.~Ortiz.
\newblock Answering regular path queries in expressive description logics: An
  automata-theoretic approach.
\newblock In {\em Proc.\ of the 22nd Nat.\ Conf.\ on Artificial Intelligence
  (AAAI~2007)}, pages 391--396, 2007.

\bibitem{ChangKeisler90}
C.~C. Chang and H.~J. Keisler.
\newblock {\em Model Theory}, volume~73 of {\em Studies in Logic and the
  Foundations of Mathematics}.
\newblock Elsevier, 1990.

\bibitem{ChHe12}
K.~Chatterjee and M.~Henzinger.
\newblock An {O($n^2$)} time algorithm for alternating {B{\"u}chi} games.
\newblock In {\em Proc.\ of the 23rd Annual ACM-SIAM Symposium on Discrete
  Algorithms (SODA)}, pages 1386--1399. SIAM, 2012.

\bibitem{CS01}
E.~Clarke and H.~Schlingloff.
\newblock Model checking.
\newblock In {\em Handbook of Automated Reasoning}, volume~II, chapter~24,
  pages 1635--1790. Elsevier, Amsterdam, The Netherlands, 2001.

\bibitem{ConradiWestfechtel98}
R.~Conradi and B.~Westfechtel.
\newblock Version models for software configuration management.
\newblock {\em ACM Computing Surveys (CSUR)}, 30(2):232--282, 1998.

\bibitem{JairGrau}
B.~{Cuenca Grau}, I.~Horrocks, Y.~Kazakov, and U.~Sattler.
\newblock Modular reuse of ontologies: Theory and practice.
\newblock {\em Journal of Artificial Intelligence Research (JAIR)},
  31:273--318, 2008.

\bibitem{DBLP:journals/jair/GrauM12}
B.~{Cuenca Grau} and B.~Motik.
\newblock Reasoning over ontologies with hidden content: The import-by-query
  approach.
\newblock {\em Journal of Artificial Intelligence Research (JAIR)},
  45:197--255, 2012.

\bibitem{DagostinoHollenberg2}
G.~D'Agostino and M.~Hollenberg.
\newblock Uniform interpolation, automata, and the modal $\mu$-calculus.
\newblock In {\em Advances in Modal Logic}, volume~1, 1998.

\bibitem{DBLP:conf/iclp/EiterF03}
T.~Eiter and M.~Fink.
\newblock Uniform equivalence of logic programs under the stable model
  semantics.
\newblock In {\em Proc.\ of the 19th Int.\ Conf.\ on Logic Programming
  (ICLP~2003)}, pages 224--238, 2003.

\bibitem{DBLP:journals/tcs/FaginKMP05}
R.~Fagin, P.~G. Kolaitis, R.~J. Miller, and L.~Popa.
\newblock Data exchange: semantics and query answering.
\newblock {\em Theoretical Computer Science}, 336(1):89--124, 2005.

\bibitem{DBLP:conf/pods/FaginKNP08}
R.~Fagin, P.~G. Kolaitis, A.~Nash, and L.~Popa.
\newblock Towards a theory of schema-mapping optimization.
\newblock In {\em Proc.\ of the 27th ACM SIGACT SIGMOD SIGART Symp.\ on
  Principles of Database Systems (PODS~2008)}, pages 33--42, 2008.

\bibitem{Feferman&Vaught59}
S.~Feferman and R.~L. Vaught.
\newblock The first-order properties of algebraic systems.
\newblock {\em Fundamenta Mathematicae}, 47:57--103, 1959.

\bibitem{GCAI2015:Foundations_for_the_Logical_Difference_of_EL-TBoxes}
S.~Feng, M.~Ludwig, and D.~Walther.
\newblock Foundations for the logical difference of {EL}-{TB}oxes.
\newblock In {\em Global Conference on Artificial Intelligence (GCAI 2015)},
  volume~36 of {\em EPiC Series in Computing}, pages 93--112. EasyChair, 2015.

\bibitem{French}
T.~French.
\newblock {\em Bisimulation quantifiers for modal logics}.
\newblock PhD thesis, University of Western Australia, 2006.

\bibitem{GabbaySchmidtSzalas08}
D.~M. Gabbay, R.~A. Schmidt, and A.~Sza{\l}as.
\newblock {\em Second-Order Quantifier Elimination: Foundations, Computational
  Aspects and Applications}, volume~12 of {\em Studies in Logic: Mathematical
  Logic and Foundations}.
\newblock College Publications, 2008.

\bibitem{DBLP:conf/ecai/GatensKW14}
W.~Gatens, B.~Konev, and F.~Wolter.
\newblock Lower and upper approximations for depleting modules of description
  logic ontologies.
\newblock In {\em Proc.\ of the 21st Eur.\ Conf.\ on Artificial Intelligence
  (ECAI~2014)}, volume 263, pages 345--350. {IOS} Press, 2014.

\bibitem{GhilardiLutzWolter-KR06}
S.~Ghilardi, C.~Lutz, and F.~Wolter.
\newblock Did {I} damage my ontology? {A} case for conservative extensions in
  description logic.
\newblock In {\em Proc.\ of the 10th Int.\ Conf.\ on the Principles of
  Knowledge Representation and Reasoning (KR~2006)}, pages 187--197. AAAI
  Press, 2006.

\bibitem{GLWZ}
S.~Ghilardi, C.~Lutz, F.~Wolter, and M.~Zakharyaschev.
\newblock Conservative extensions in modal logics.
\newblock In {\em Proc. of AiML 6}, pages 187--207, 2006.

\bibitem{GZ}
S.~Ghilardi and M.~Zawadowski.
\newblock Undefinability of propositional quantifiers in the modal system
  {${\rm S4}$}.
\newblock {\em Studia Logica}, 55, 1995.

\bibitem{GliHoLuSa-JAIR08}
B.~Glimm, C.~Lutz, I.~Horrocks, and U.~Sattler.
\newblock Answering conjunctive queries in the $\mathcal{SHIQ}$ description
  logic.
\newblock {\em Journal of Artificial Intelligence Research (JAIR)},
  31:150--197, 2008.

\bibitem{DBLP:conf/semweb/GoncalvesPS12a}
R.~S. Gon{\c{c}}alves, B.~Parsia, and U.~Sattler.
\newblock Concept-based semantic difference in expressive description logics.
\newblock In {\em Proc.\ of the 11th Int.\ Semantic Web Conf.\ (ISWC~2012)},
  volume 7649 of {\em Lecture Notes in Computer Science}, pages 99--115.
  Springer, 2012.

\bibitem{GorankoOtto}
V.~Goranko and M.~Otto.
\newblock Model theory of modal logic.
\newblock In Blackburn, van Benthem, and Wolter, editors, {\em Handbook of
  Modal Logic}, pages 249--330. Elsevier, 2006.

\bibitem{DBLP:conf/www/GrauHKS07}
B.~C. Grau, I.~Horrocks, Y.~Kazakov, and U.~Sattler.
\newblock Just the right amount: extracting modules from ontologies.
\newblock In {\em Proc.\ of the 16th Int.\ World Wide Web Conf.\ (WWW~2007)},
  pages 717--726. {ACM}, 2007.

\bibitem{DBLP:conf/ijcai/GrauHKS07}
B.~C. Grau, I.~Horrocks, Y.~Kazakov, and U.~Sattler.
\newblock A logical framework for modularity of ontologies.
\newblock In {\em Proc.\ of the 20th Int.\ Joint Conf.\ on Artificial
  Intelligence (IJCAI~2007)}, pages 298--303, 2007.

\bibitem{DBLP:series/lncs/GrauHKS09}
B.~C. Grau, I.~Horrocks, Y.~Kazakov, and U.~Sattler.
\newblock Extracting modules from ontologies: {A} logic-based approach.
\newblock In {\em Modular Ontologies}, volume 5445 of {\em Lecture Notes in
  Computer Science}, pages 159--186. Springer, 2009.

\bibitem{DBLP:journals/logcom/HorrocksS99}
I.~Horrocks and U.~Sattler.
\newblock A description logic with transitive and inverse roles and role
  hierarchies.
\newblock {\em J. Log. Comput.}, 9(3):385--410, 1999.

\bibitem{HuMS07}
U.~Hustadt, B.~Motik, and U.~Sattler.
\newblock Reasoning in description logics by a reduction to disjunctive
  {D}atalog.
\newblock {\em Journal of Automated Reasoning}, 39(3):351--384, 2007.

\bibitem{HORROCKS}
E.~Jimenez-Ruiz, B.~{Cuenca Grau}, I.~Horrocks, and R.~B. Llavori.
\newblock Supporting concurrent ontology development: Framework, algorithms and
  tool.
\newblock {\em Data and Knowledge Engineering}, 70(1):146--164, 2011.

\bibitem{DBLP:conf/kr/Jimenez-RuizPST16}
E.~Jim{\'{e}}nez{-}Ruiz, T.~R. Payne, A.~Solimando, and V.~A.~M. Tamma.
\newblock Limiting logical violations in ontology alignnment through
  negotiation.
\newblock In {\em Proc.\ of the 15th Int.\ Conf.\ on the Principles of
  Knowledge Representation and Reasoning (KR~2016)}, pages 217--226, 2016.

\bibitem{Kaza09}
Y.~Kazakov.
\newblock Consequence-driven reasoning for {Horn-SHIQ} ontologies.
\newblock In {\em Proc.\ of the 21st Int.\ Joint Conf.\ on Artificial
  Intelligence (IJCAI~2009)}, pages 2040--2045, 2009.

\bibitem{DBLP:conf/semweb/KharlamovHJLLPR15}
E.~Kharlamov, D.~Hovland, E.~Jim{\'{e}}nez{-}Ruiz, D.~Lanti, H.~Lie, C.~Pinkel,
  M.~Rezk, M.~G. Skj{\ae}veland, E.~Thorstensen, G.~Xiao, D.~Zheleznyakov, and
  I.~Horrocks.
\newblock Ontology based access to exploration data at {S}tatoil.
\newblock In {\em Proc.\ of the 14th Int.\ Semantic Web Conf.\ (ISWC~2015)},
  pages 93--112, 2015.

\bibitem{OntoView}
M.~C.~A. Klein, D.~Fensel, A.~Kiryakov, and D.~Ognyanov.
\newblock Ontology versioning and change detection on the web.
\newblock In {\em Knowledge Engineering and Knowledge Management: Ontologies
  and the Semantic Web}, volume 2473 of {\em Lecture Notes in Computer
  Science}, pages 247--259. Springer Verlag, Berlin/Heidelberg, Germany, 2002.

\bibitem{DBLP:conf/aaai/KonevKLSWZ11}
B.~Konev, R.~Kontchakov, M.~Ludwig, T.~Schneider, F.~Wolter, and
  M.~Zakharyaschev.
\newblock Conjunctive query inseparability of {OWL 2 QL} {TBoxes}.
\newblock In {\em Proc.\ of the 25th Nat.\ Conf.\ on Artificial Intelligence
  (AAAI~2011)}, pages 221--226. AAAI Press, 2011.

\bibitem{DBLP:journals/jair/KonevL0W12}
B.~Konev, M.~Ludwig, D.~Walther, and F.~Wolter.
\newblock The logical difference for the lightweight description logic {EL}.
\newblock {\em Journal of Artificial Intelligence Research (JAIR)},
  44:633--708, 2012.

\bibitem{CEX25}
B.~Konev, M.~Ludwig, and F.~Wolter.
\newblock Logical difference computation with {CEX}2.5.
\newblock In {\em Proc.\ of the 6th Int.\ Joint Conf.\ on Automated Reasoning
  (IJCAR~2012)}, Lecture Notes in Computer Science, Berlin/Heidelberg, Germany,
  2012. Springer.

\bibitem{DBLP:series/lncs/KonevLWW09}
B.~Konev, C.~Lutz, D.~Walther, and F.~Wolter.
\newblock Formal properties of modularisation.
\newblock In Stuckenschmidt et~al. \cite{DBLP:series/lncs/5445}, pages 25--66.

\bibitem{DBLP:journals/ai/KonevL0W13}
B.~Konev, C.~Lutz, D.~Walther, and F.~Wolter.
\newblock Model-theoretic inseparability and modularity of description logic
  ontologies.
\newblock {\em Artificial Intelligence}, 203:66--103, 2013.

\bibitem{rewritecons16}
B.~Konev, C.~Lutz, F.~Wolter, and M.~Zakharyaschev.
\newblock Conservative rewritability of description logic {TB}oxes.
\newblock In {\em Proc.\ of the 25th Int.\ Joint Conf.\ on Artificial
  Intelligence (IJCAI~2016)}, 2016.

\bibitem{KWW08}
B.~Konev, D.~Walther, and F.~Wolter.
\newblock The logical difference problem for description logic terminologies.
\newblock In {\em Proc.\ of the 4th Int.\ Joint Conf.\ on Automated Reasoning
  (IJCAR~2008)}, volume 5195 of {\em Lecture Notes in Computer Science}, pages
  259--274, Berlin/Heidelberg, Germany, 2008. Springer Verlag.

\bibitem{DBLP:conf/ijcai/KonevWW09}
B.~Konev, D.~Walther, and F.~Wolter.
\newblock Forgetting and uniform interpolation in large-scale description logic
  terminologies.
\newblock In {\em Proc.\ of the 21st Int.\ Joint Conf.\ on Artificial
  Intelligence (IJCAI~2009)}, pages 830--835, 2009.

\bibitem{WoZakr08}
R.~Kontchakov, F.~Wolter, and M.~Zakharyaschev.
\newblock Can you tell the difference between {DL-Lite} ontologies.
\newblock In {\em Proc.\ of the 11th Int.\ Conf.\ on the Principles of
  Knowledge Representation and Reasoning (KR~2008)}, pages 285--295, 2008.

\bibitem{KWZ10}
R.~Kontchakov, F.~Wolter, and M.~Zakharyaschev.
\newblock Logic-based ontology comparison and module extraction, with an
  application to {DL}-{L}ite.
\newblock {\em Artificial Intelligence}, 174:1093--1141, 2010.

\bibitem{DBLP:conf/cade/KoopmannS14}
P.~Koopmann and R.~A. Schmidt.
\newblock Count and forget: Uniform interpolation of
  {$\mathcal{SHQ}$}-ontologies.
\newblock In {\em Proc.\ of the 7th Int.\ Joint Conf.\ on Automated Reasoning
  (IJCAR~2014)}, pages 434--448, 2014.

\bibitem{DBLP:conf/aaai/KoopmannS15}
P.~Koopmann and R.~A. Schmidt.
\newblock Uniform interpolation and forgetting for $\mathcal{ALC}$ ontologies
  with {ABoxes}.
\newblock In {\em Proc.\ of the 29th Nat.\ Conf.\ on Artificial Intelligence
  (AAAI~2015)}, pages 175--181, 2015.

\bibitem{DBLP:journals/lu/KutzML10}
O.~Kutz, T.~Mossakowski, and D.~L{\"{u}}cke.
\newblock {C}arnap, {G}oguen, and the hyperontologies: Logical pluralism and
  heterogeneous structuring in ontology design.
\newblock {\em Logica Universalis}, 4(2):255--333, 2010.

\bibitem{DBLP:journals/tocl/LifschitzPV01}
V.~Lifschitz, D.~Pearce, and A.~Valverde.
\newblock Strongly equivalent logic programs.
\newblock {\em ACM Trans.\ on Computational Logic}, 2(4):526--541, 2001.

\bibitem{DBLP:conf/kr/LudwigK14}
M.~Ludwig and B.~Konev.
\newblock Practical uniform interpolation and forgetting for $\mathcal{ALC}$
  tboxes with applications to logical difference.
\newblock In {\em Proc.\ of the 14th Int.\ Conf.\ on the Principles of
  Knowledge Representation and Reasoning (KR~2014)}. {AAAI} Press, 2014.

\bibitem{DBLP:conf/ecai/Ludwig014}
M.~Ludwig and D.~Walther.
\newblock The logical difference for {$\ELH^r$}-terminologies using
  hypergraphs.
\newblock In {\em Proc.\ of the 21st Eur.\ Conf.\ on Artificial Intelligence
  (ECAI~2014)}, volume 263 of {\em Frontiers in Artificial Intelligence and
  Applications}, pages 555--560. {IOS} Press, 2014.

\bibitem{DBLP:conf/ijcai/LutzPW11}
C.~Lutz, R.~Piro, and F.~Wolter.
\newblock Description logic {TBoxes}: Model-theoretic characterizations and
  rewritability.
\newblock In {\em Proc.\ of the 22nd Int.\ Joint Conf.\ on Artificial
  Intelligence (IJCAI~2011)}, pages 983--988, Menlo Park, CA, USA, 2011. AAAI
  Press.

\bibitem{DBLP:conf/kr/LutzSW12}
C.~Lutz, I.~Seylan, and F.~Wolter.
\newblock An automata-theoretic approach to uniform interpolation and
  approximation in the description logic {E}{L}.
\newblock In {\em Proc.\ of the 13th Int.\ Conf.\ on the Principles of
  Knowledge Representation and Reasoning (KR~2012)}, pages 286--296. AAAI
  Press, 2012.

\bibitem{LutzWW07}
C.~Lutz, D.~Walther, and F.~Wolter.
\newblock Conservative extensions in expressive description logics.
\newblock In {\em Proc.\ of the 20th Int.\ Joint Conf.\ on Artificial
  Intelligence (IJCAI)}, pages 453--458, Menlo Park, CA, USA, 2007. AAAI Press.

\bibitem{DBLP:journals/jsc/LutzW10}
C.~Lutz and F.~Wolter.
\newblock Deciding inseparability and conservative extensions in the
  description logic {EL}.
\newblock {\em J.\ of Symbolic Computation}, 45(2):194--228, 2010.

\bibitem{DBLP:conf/ijcai/LutzW11}
C.~Lutz and F.~Wolter.
\newblock Foundations for uniform interpolation and forgetting in expressive
  description logics.
\newblock In {\em Proc.\ of the 22nd Int.\ Joint Conf.\ on Artificial
  Intelligence (IJCAI~2011)}, pages 989--995. IJCAI/AAAI, 2011.

\bibitem{lutz2012non}
C.~Lutz and F.~Wolter.
\newblock Non-uniform data complexity of query answering in description logics.
\newblock In {\em Proc.\ of the 13th Int.\ Conf.\ on the Principles of
  Knowledge Representation and Reasoning (KR~2012)}, 2012.

\bibitem{DBLP:conf/kr/LutzW12}
C.~Lutz and F.~Wolter.
\newblock Non-uniform data complexity of query answering in description logics.
\newblock In {\em Proc.\ of KR}. {AAAI} Press, 2012.

\bibitem{Maher}
M.~J. Maher.
\newblock Equivalences of logic programs.
\newblock In {\em Foundations of Deductive Databases and Logic Programming},
  pages 627--658. Morgan Kaufmann, 1988.

\bibitem{Maibaum1}
T.~Maibaum.
\newblock Conservative extensions, interpretations between theories and all
  that!
\newblock In {\em Proc.\ of CAAP/FASE}, LNCS. Springer Verlag, 1997.

\bibitem{Maza01}
R.~Mazala.
\newblock Infinite games.
\newblock In {\em Automata, Logics, and Infinite Games: {A} Guide to Current
  Research [outcome of a Dagstuhl seminar, February 2001]}, volume 2500 of {\em
  Lecture Notes in Computer Science}, pages 23--42. Springer, 2002.

\bibitem{expexpexplosion}
N.~Nikitina and S.~Rudolph.
\newblock {ExpExpExplosion}: Uniform interpolation in general {EL}
  terminologies.
\newblock In {\em Proc.\ of the 20th Eur.\ Conf.\ on Artificial Intelligence
  (ECAI~2012)}, pages 618--623, 2012.

\bibitem{DBLP:journals/ai/NikitinaR14}
N.~Nikitina and S.~Rudolph.
\newblock ({N}on-)succinctness of uniform interpolants of general terminologies
  in the description logic {EL}.
\newblock {\em Artif. Intell.}, 215:120--140, 2014.

\bibitem{DBLP:conf/dlog/NortjeBM13}
R.~Nortje, A.~Britz, and T.~Meyer.
\newblock Module-theoretic properties of reachability modules for {SRIQ}.
\newblock In {\em Proc.\ of the 26th Int.\ Workshop on Description Logics
  (DL~2013)}, volume 1014 of {\em {CEUR} Workshop Proceedings}, pages 868--884.
  CEUR-WS.org, 2013.

\bibitem{DBLP:conf/lpar/NortjeBM13}
R.~Nortje, K.~Britz, and T.~Meyer.
\newblock Reachability modules for the description logic $\mathcal{SRIQ}$.
\newblock In {\em Proc.\ of the 19th Int.\ Conf.\ on Logic for Programming,
  Artificial Intelligence, and Reasoning (LPAR~2013)}, volume 8312 of {\em
  Lecture Notes in Computer Science}, pages 636--652. Springer, 2013.

\bibitem{promptdiff}
N.~F. Noy and M.~A. Musen.
\newblock {PromptDiff}: A fixed-point algorithm for comparing ontology
  versions.
\newblock In {\em Proc.\ of the 18th Nat.\ Conf.\ on Artificial Intelligence
  (AAAI~2002)}, pages 744--750, Menlo Park, CA, USA, 2002. AAAI Press.

\bibitem{DBLP:journals/mst/PichlerSS13}
R.~Pichler, E.~Sallinger, and V.~Savenkov.
\newblock Relaxed notions of schema mapping equivalence revisited.
\newblock {\em Theory Comput. Syst.}, 52(3):483--541, 2013.

\bibitem{Pitts}
A.~Pitts.
\newblock On an interpretation of second-order quantification in first-order
  intuitionistic propositional logic.
\newblock {\em J.\ of Symbolic Logic}, 57, 1992.

\bibitem{PLCD*08}
A.~Poggi, D.~Lembo, D.~Calvanese, G.~De~Giacomo, M.~Lenzerini, and R.~Rosati.
\newblock Linking data to ontologies.
\newblock {\em J.\ on Data Semantics}, 10:133--173, 2008.

\bibitem{Goguen}
J.~G. R.~Diaconescu and P.~Stefaneas.
\newblock Logical support for modularisation.
\newblock In G.~Huet and G.~Plotkin, editors, {\em Logical Environments}, 1993.

\bibitem{RSDT08}
T.~Redmond, M.~Smith, N.~Drummond, and T.~Tudorache.
\newblock Managing change: An ontology version control system.
\newblock In {\em Proc.\ of the 5th Int.\ Workshop on OWL: Experiences and
  Directions (OWLED~2008)}, volume 432 of {\em CEUR Workshop Proceedings}.
  CEUR-WS.org, 2008.

\bibitem{Lin}
R.~Reiter and F.~Lin.
\newblock Forget it!
\newblock In {\em Proceedings of {A}{A}{A}{I} Fall Symposium on Relevance},
  pages 154--159, 1994.

\bibitem{Robinson71}
R.~Robinson.
\newblock Undecidability and nonperiodicity for tilings of the plane.
\newblock {\em Inventiones Mathematicae}, 12:177--209, 1971.

\bibitem{DBLP:conf/dlog/RomeroGH12}
A.~A. Romero, B.~C. Grau, and I.~Horrocks.
\newblock Modular combination of reasoners for ontology classification.
\newblock In {\em Proc.\ of the 25th Int.\ Workshop on Description Logics
  (DL~2012)}, volume 846 of {\em {CEUR} Workshop Proceedings}. CEUR-WS.org,
  2012.

\bibitem{DBLP:conf/ore/RomeroGHJ13}
A.~A. Romero, B.~C. Grau, I.~Horrocks, and E.~Jim{\'{e}}nez{-}Ruiz.
\newblock {MORe}: a modular {OWL} reasoner for ontology classification.
\newblock In {\em {ORE}}, volume 1015 of {\em {CEUR} Workshop Proceedings},
  pages 61--67. CEUR-WS.org, 2013.

\bibitem{DBLP:journals/jair/RomeroKGH16}
A.~A. Romero, M.~Kaminski, B.~C. Grau, and I.~Horrocks.
\newblock Module extraction in expressive ontology languages via {Datalog}
  reasoning.
\newblock {\em J. Artif. Intell. Res. {(JAIR)}}, 55:499--564, 2016.

\bibitem{DBLP:conf/dlog/SattlerSZ09}
U.~Sattler, T.~Schneider, and M.~Zakharyaschev.
\newblock Which kind of module should {I} extract?
\newblock In {\em Proc.\ of the 22th Int.\ Workshop on Description Logics
  (DL~2009)}, volume 477 of {\em {CEUR} Workshop Proceedings}. CEUR-WS.org,
  2009.

\bibitem{Scha94b}
A.~Schaerf.
\newblock {\em Query Answering in Concept-Based Knowledge Representation
  Systems: Algorithms, Complexity, and Semantic Issues}.
\newblock PhD thesis, Dipartimento di Informatica e Sistemistica,
  Universit{\`a} di Roma ``La Sapienza'', 1994.

\bibitem{DBLP:journals/tkde/ShvaikoE13}
P.~Shvaiko and J.~Euzenat.
\newblock Ontology matching: State of the art and future challenges.
\newblock {\em IEEE Trans.\ on Knowledge and Data Engineering}, 25(1):158--176,
  2013.

\bibitem{DBLP:conf/semweb/SolimandoJG14}
A.~Solimando, E.~Jim{\'{e}}nez{-}Ruiz, and G.~Guerrini.
\newblock Detecting and correcting conservativity principle violations in
  ontology-to-ontology mappings.
\newblock In {\em Proc.\ of the 13th Int.\ Semantic Web Conf.\ (ISWC~2014)},
  pages 1--16, 2014.

\bibitem{DBLP:series/lncs/5445}
H.~Stuckenschmidt, C.~Parent, and S.~Spaccapietra, editors.
\newblock {\em Modular Ontologies: Concepts, Theories and Techniques for
  Knowledge Modularization}, volume 5445 of {\em Lecture Notes in Computer
  Science}.
\newblock Springer, 2009.

\bibitem{DBLP:journals/jair/SuSLZ09}
K.~Su, A.~Sattar, G.~Lv, and Y.~Zhang.
\newblock Variable forgetting in reasoning about knowledge.
\newblock {\em Journal of Artificial Intelligence Research (JAIR)},
  35:677--716, 2009.

\bibitem{Veloso1}
P.~Veloso.
\newblock Yet another cautionary note on conservative extensions: a simple case
  with a computing flavour.
\newblock {\em {E}{A}{T}{C}{S}-Bulletin}, pages 188--193, 1992.

\bibitem{Veloso2}
P.~Veloso and S.~Veloso.
\newblock Some remarks on conservative extensions. {A} socratic dialog.
\newblock {\em {E}{A}{T}{C}{S}-Bulletin}, pages 189--198, 1991.

\bibitem{DBLP:conf/semweb/VescovoKPS0T13}
C.~D. Vescovo, P.~Klinov, B.~Parsia, U.~Sattler, T.~Schneider, and D.~Tsarkov.
\newblock Empirical study of logic-based modules: Cheap is cheerful.
\newblock In {\em Proc.\ of the 12th Int.\ Semantic Web Conf.\ (ISWC~2013)},
  volume 8218 of {\em Lecture Notes in Computer Science}, pages 84--100.
  Springer, 2013.

\bibitem{Visser}
A.~Visser.
\newblock Uniform interpolation and layered bisimulation.
\newblock In {\em G\"odel '96 (Brno, 1996)}, volume~6 of {\em Lecture Notes
  Logic}. Springer Verlag, 1996.

\bibitem{DBLP:conf/semweb/WangWTPA09}
K.~Wang, Z.~Wang, R.~W. Topor, J.~Z. Pan, and G.~Antoniou.
\newblock Concept and role forgetting in $\mathcal {ALC}$ ontologies.
\newblock In {\em Proc.\ of the 8th Int.\ Semantic Web Conf.\ (ISWC~2009)},
  volume 5823 of {\em Lecture Notes in Computer Science}, pages 666--681.
  Springer, 2009.

\bibitem{DBLP:journals/ci/WangWTPA14}
K.~Wang, Z.~Wang, R.~W. Topor, J.~Z. Pan, and G.~Antoniou.
\newblock Eliminating concepts and roles from ontologies in expressive
  descriptive logics.
\newblock {\em Computational Intelligence}, 30(2):205--232, 2014.

\bibitem{DBLP:journals/amai/WangWTP10}
Z.~Wang, K.~Wang, R.~W. Topor, and J.~Z. Pan.
\newblock Forgetting for knowledge bases in {DL-Lite}.
\newblock {\em Ann.\ of Mathematics and Artificial Intelligence},
  58(1-2):117--151, 2010.

\bibitem{Wilke-Automata}
T.~Wilke.
\newblock Alternating tree automata, parity games, and modal {$\mu$}-calculus.
\newblock {\em Bulletin of the Belgian Mathematical Society}, 8(2), 2001.

\bibitem{DBLP:conf/semweb/ZhaoS15}
Y.~Zhao and R.~A. Schmidt.
\newblock Concept forgetting in $\mathcal{ALCOI}$-ontologies using an ackermann
  approach.
\newblock In {\em Proc.\ of the 14th Int.\ Semantic Web Conf.\ (ISWC~2015)},
  pages 587--602, 2015.

\bibitem{DBLP:conf/aaai/ZhouZ11}
Y.~Zhou and Y.~Zhang.
\newblock Bounded forgetting.
\newblock In W.~Burgard and D.~Roth, editors, {\em Proc.\ of the 25th Nat.\
  Conf.\ on Artificial Intelligence (AAAI~2011)}. {AAAI} Press, 2011.

\end{thebibliography}


\newpage
\printindex

\end{document}